\documentclass{article} 
\usepackage{iclr2023_conference,times}


\usepackage{hyperref}
\usepackage{url}
\usepackage{amsmath}
\usepackage{amsthm}
\usepackage{amssymb}
\usepackage{bm}
\usepackage{natbib}
\usepackage{graphicx,booktabs,array, float, wrapfig,lipsum}
\usepackage{caption}
\usepackage{subcaption}


\usepackage[colorinlistoftodos]{todonotes}
\usepackage{dsfont}
\usepackage{comment}

\newtheorem{theorem}{Theorem}[section]

\newtheorem{lemma}{Lemma}[section]
\newtheorem{proposition}{Proposition}[section]
\newtheorem{definition}{Definition}[section]
\newtheorem{corollary}{Corollary}[section]
\newtheorem*{remark}{Remark}


\newcommand{\R}{\mathbb{R}}
\newcommand{\Z}{\mathbb{Z}}
\newcommand{\E}{\mathbb{E}}
\newcommand{\Sphere}{\mathbb{S}}
\newcommand{\MS}{\mathbb{MS}\left( C_{0},d\right)}
\newcommand{\Ind}{\bm{1}}
\newcommand{\norm}[1]{\left\lVert#1\right\rVert}

\newcommand{\abs}[1]{\left\vert#1\right\rvert}
\newcommand{\vp}{\varphi}
\newcommand{\tr}{\text{tr}}

\newcommand{\supp}{\text{supp}}
\newcommand{\st}{\text{ s.t.\ }}


\newcommand{\aw}{\mathbf{a}}
\newcommand{\bias}{\mathbf{b}}
\newcommand{\e}{\mathbf{e}}

\newcommand{\jj}{\mathbf{j}}
\newcommand{\kk}{\mathbf{k}}

\newcommand{\n}{\mathbf{n}}

\newcommand{\vv}{\mathbf{v}}
\newcommand{\A}{\bm{A}}
\newcommand{\B}{\bm{B}}

\newcommand{\J}{\bm{J}}

\newcommand{\V}{\mathbf{V}}
\newcommand{\W}{\mathbf{W}}
\newcommand{\Y}{\bm{Y}}
\newcommand{\M}{\bm{M}}
\newcommand{\N}{\bm{N}}
\newcommand{\X}{\bm{X}}
\newcommand{\w}{\mathbf{w}}
\newcommand{\x}{\mathbf{x}}

\newcommand{\z}{\mathbf{z}}
\newcommand{\s}{\mathbf{s}}
\newcommand{\Tr}{\text{Tr}}
\newcommand{\Eq}{\text{Eq}}
\newcommand{\GAP}{\text{GAP}}
\newcommand{\FC}{\text{FC}}
\newcommand{\kr}{\boldsymbol{k}}
\newcommand{\zero}{\mathbf{0}}
\newcommand{\one}{\vec{\bm{1}}}

\newcommand{\tbf}{\mathbf{t}}

\newcommand{\fv}[1]{{#1}}


\title{A Kernel Perspective of Skip Connections in Convolutional Networks}


\author{Daniel Barzilai, Amnon Geifman, Meirav Galun \& Ronen Basri \\
Weizmann Institute of Science\\
\texttt{\{daniel.barzilai,amnon.geifman,meirav.galun,ronen.basri\}@weizmann.ac.il}
}

%

\iclrfinalcopy 
\begin{document}

\maketitle

\begin{abstract}
Over-parameterized residual networks are amongst the most successful convolutional neural architectures for image processing. Here we study their properties through their Gaussian Process and Neural Tangent kernels. We derive explicit formulas for these kernels, analyze their spectra and provide bounds on their implied condition numbers. Our results indicate that (1) with ReLU activation, the eigenvalues of these residual kernels decay polynomially at a similar rate as the same kernels when skip connections are not used, thus maintaining a similar frequency bias; (2) however, residual kernels are more locally biased. Our analysis further shows that the matrices obtained by these residual kernels yield favorable condition numbers at finite depths than those obtained without the skip connections, enabling therefore faster convergence of training with gradient descent.
\end{abstract}

\section{Introduction}

In the past decade, deep convolutional neural network (CNN) architectures with hundreds and even thousands of layers have been utilized for various image processing tasks. Theoretical work has indicated that shallow networks may need exponentially more nodes than deep networks to achieve the same expressive power \citep{telgarsky2016benefits, poggio2017and}. A critical contribution to the utilization of deeper networks has been the introduction of Residual Networks \citep{he2016deep}.

To gain an understanding of these networks, we turn to a recent line of work that has made precise the connection between neural networks and kernel ridge regression (KRR) when the width of a network (the number of channels in a CNN) tends to infinity. In particular, for such a network $f(\x;\theta)$, KRR with respect to the corresponding Gaussian Process Kernel (GPK) $\mathcal{K}(\x,\z)=\mathbb{E}_{\theta}[f(\x;\theta) \cdot f(\z;\theta)]$ (also called Conjugate Kernel or NNGP Kernel) is equivalent to training the final layer while keeping the weights of the other layers at their initial values \citep{lee2017deep}. Furthermore, KRR with respect to the neural tangent kernel $\Theta\left(\x,\z\right)=\mathbb{E}_\theta\left[\left\langle \frac{\partial f\left(\x;\theta\right)}{\partial\theta},\frac{\partial f\left(\z;\theta\right)}{\partial\theta}\right\rangle \right]$ is equivalent to training the entire network \citep{jacot2018neural}. Here $\x$ and $\z$ represent input data items, $\theta$ are the network parameters, and expectation is computed with respect to the distribution of the initialization of the network parameters. 

We distinguish between four different models; Convolutional Gaussian Process Kernel (CGPK), Convolutional Neural Tangent Kernel (CNTK), and ResCGPK, ResCNTK for the same kernels with additional skip connections.
\citet{yang2020tensor, yang2021tensor} showed that for any architecture made up of convolutions, skip-connections, and ReLUs, in the infinite width limit the network converges almost surely to its NTK. This guarantees that sufficiently over-parameterized ResNets converge to their ResCNTK. 

\citet{lee2019wide, lee2020finite} showed that these kernels are highly predictive of finite width networks as well. Therefore, by analyzing the spectrum and behavior of these kernels at various depths, we can better understand the role of skip connections. 
Thus the question of what we can learn about skip connections through the use of these kernels begs to be asked. In this work, we aim to do precisely that. By analyzing the relevant kernels, we expect to gain information that is applicable to finite width networks. 
Our contributions include:
\begin{enumerate}
    \item A precise closed form recursive formula for the Gaussian Process and Neural Tangent Kernels of both equivariant and invariant convolutional ResNet architectures.
    \item A spectral decomposition of these kernels with normalized input and ReLU activation, showing that the eigenvalues decay polynomially with the frequency of the eigenfunctions.
    \item A comparison of eigenvalues with non-residual CNNs, showing that ResNets resemble a weighted ensemble of CNNs of different depths, and thus place a larger emphasis on nearby pixels than CNNs.
    \item An analysis of the condition number associated with the kernels by relating them to the so called double-constant kernels. We use these tools to show that skip connections speed up the training of the GPK.
\end{enumerate}
Derivations and proofs are given in the Appendix.

\section{Related Work}

The equivalence between over-parameterized neural networks and positive definite kernels was made precise in \citep{lee2017deep, jacot2018neural, allen2019convergence, lee2019wide, chizat2019lazy, yang2020tensor} amongst others. 
\citet{arora2019exact} derived NTK and GPK formulas for convolutional architectures and trained these kernels on CIFAR-10. \citet{arora2019harnessing} showed subsequently that CNTKs can outperform standard CNNs on small data tasks.

A number of studies analyzed NTK for fully connected (FC) architectures and their associated Reproducing Kernel Hilbert Spaces (RKHS). These works showed for training data drawn from a uniform distribution over the hypersphere that the eigenvalues of NTK and GPK are the spherical harmonics and with ReLU activation the eigenvalues decay polynomially with frequency \citep{bietti2020deep}.
\citet{bietti2019inductive} further derived explicit feature maps for these kernels.  \citet{geifman2020similarity} and \citet{chen2020deep} showed that these kernels share the same functions in their RKHS with the Laplace Kernel, restricted to the hypersphere. 

Recent works applied spectral analysis to kernels associated with standard convolutional architectures that include no skip connections. \citep{geifman2022spectral} characterized the eigenfunctions and eigenvalues of CGPK and CNTK. \cite{xiao2022eigenspace,cagnetta2022wide} studied CNTK with non-overlapped filters, while \cite{xiao2022eigenspace} focused on high dimensional inputs.

Formulas for NTK for residual, \emph{fully connected} networks were derived and analyzed in \citet{huang2020deep,tirer2022kernel}. They further showed that, in contrast with FC-NTK and with a particular choice of balancing parameter relating the skip and the residual connections, ResNTK does not  become degenerate as the depth tends to infinity. As we mention later in this manuscript, this result critically depends on the assumption that the last layer is \emph{not} trained.  \citet{belfer2021spectral} showed that the eigenvalues of ResNTK for fully connected architectures decay polynomially at the same rate as NTK for networks without skip connections, indicating that residual and conventional FC architectures are subject to the same frequency bias. 

In related works, \citep{du2019gradient} proved that training over-parametrized convolutional ResNets converges to a global minimum. \citep{balduzzi2017shattered,philipp2018gradients,orhan2017skip} showed that deep residual networks better address the problems of vanishing and exploding gradients compared to standard networks, as well as singularities that are present in these models. \citet{veit2016residual} made the empirical observation that ResNets behave like an ensemble of networks. This result is echoed in our proofs, which indicate that the eigenvalues of ResCNTK are made of weighted sums of eigenvalues of CNTK for an ensemble of networks of different depths.

Below we derive explicit formulas and analyze kernels corresponding to \emph{residual, convolutional} network architectures. We provide lower and upper bounds on the eigenvalues of ResCNTK and ResCGPK. Our results indicate that these residual kernels are subject to the same frequency bias as their standard convolutional counterparts. However, they further indicate that residual kernels are significantly more locally biased than non-residual kernels. Indeed, locality has recently been attributed as a main reason for the success of convolutional networks \citep{shalev2020computational, favero2021locality}. Moreover, we show that with the standard choice of constant balancing parameter used in practical residual networks, ResCGPK attains a better condition number than the standard CGPK, allowing it to train significantly more efficiently. This result is motivated by the work of \citet{lee2019wide, xiao2020disentangling} and \citet{chen2021neural}, who related between the condition number of NTK and the trainability of corresponding finite width networks.



\section{Preliminaries}

We consider mutli-channel 1-D input signals $\x\in\R^{C_0\times d}$ of length $d$ with $C_0$ channels. We use 1-D input signals to simplify notations and note that all our results can naturally be extended to 2-D signals. Let $\MS = \underset{d\text{ times }}{\underbrace{\Sphere^{ C_{0}-1}\times\ldots\times\Sphere^{ C_{0}-1}}}\subseteq\sqrt{d}\,\Sphere^{d C_0-1}$ be the multi-sphere, so $\x=(\x_1,...,\x_d)\in\MS$ iff $\forall i\in [d],\norm{\x_i}=1$. \fv{For our analysis, we assume that the input signals are distributed uniformly on the multi-sphere}.

The discrete convolution  of a filter $\w\in\R^{q}$ with a vector $\vv\in\R^{d}$ is defined as
$[\w*\vv]_{i}=\sum_{j=-\frac{q-1}{2}}^{\frac{q-1}{2}}[\w]_{j+\frac{q+1}{2}}[\vv]_{ i+j}$, where $1\leq i\leq d$.
We use circular padding, so indices $[\vv]_j$ with $j \le 0$ and $j > d$ are well defined. 

We use multi-index notation denoted by bold letters, i.e., $\n,\kk\in\mathbb{N}^d$, where $N$ is the set of  natural numbers including zero. $b_{\n},\lambda_{\kk}\in\R$ are scalars that depend on $\n,\kk$, and for $\tbf\in\R^d$ we let $\tbf^{\n}=t_1^{n_1}\cdot...\cdot t_d^{n_d}$. As is convention, we say that $\n\geq\kk$ iff $n_i\geq k_i$ for all $i\in[d]$. Thus, the power series $\sum_{\n\geq\zero}b_{\n}\tbf^{\n}$ should read $\sum_{n_1\geq0,n_2\geq0,...}b_{n_1,n_2,...}t_1^{n_1}t_2^{n_2}...$

We further use the following notation to denote sub-vectors and sub-matrices. $\forall i\in\mathbb{N}$, let $\mathcal{D}_{i}=\left( i+j\right)_{j=-\frac{q-1}{2}}^{\frac{q-1}{2}}$, so that $\left[\vv\right]_{\mathcal{D}_{i}}=\left[\vv\right]_{ i-\frac{q-1}{2}: i+\frac{q-1}{2}}$. Additionally, $\forall i,i^{'}\in\mathbb{N}$, let $\mathcal{D}_{i,i^{'}}=\mathcal{D}_{i}\times D_{i^{'}}$,
so that for a matrix $\M$ we can write: $\M_{\mathcal{D}_{i,i^{'}}}=\M_{i-\frac{q-1}{2}:i+\frac{q-1}{2},i^{'}-\frac{q-1}{2}:i^{'}-\frac{q-1}{2}}$.
We use $\left(\s_{i} \vv\right)_{j}=v_{j+i}$ to denote the cyclic shift of $\vv$ to the left by $i$ pixels.

Finally, for every kernel $\mathcal{K}:\R^d\times\R^d\to\R$ we define the normalized kernel to be $\overline{\mathcal{K}}\left(\x,\z\right)=\frac{\mathcal{K}\left(\x,\z\right)}{\sqrt{\mathcal{K}\left(\x,\x\right)\mathcal{K}\left(\z,\z\right)}}$. Note that $\overline{\mathcal{K}}\left(\x,\x\right) = 1, \forall \x \in \R^d$, and $\overline{\mathcal{K}}\in[-1,1]$.

\subsection{Convolutional ResNet}
\label{sec:conv_resnet}

We consider a residual, convolutional neural network with $L$ hidden layer (often just called ResNet). Let $\x\in\R^{C_{0} \times d}$ and $q$ be the filter size. We define the hidden layers of the Network as:
\begin{align}
f_i^{\left(0\right)}\left(\x\right)= \frac{1}{\sqrt{ C_{0}}}\sum_{j=1}^{ C_{0}}\V_{1,j,i}^{\left(0\right)}*\x_j, ~~~~~~~ g_i^{\left(1\right)}\left(\x\right) = \frac{1}{\sqrt{ C_{0}}}\sum_{j=1}^{ C_{0}}\W_{1,j,i}^{\left(1\right)}*\x_j \label{def:network0}
\end{align}
\begin{align}
f_{i}^{\left(l\right)}\left(\x\right)=f_{i}^{\left(l-1\right)}\left(\x\right)+\alpha\sqrt{\frac{c_{v}}{qC_{l}}}\sum_{j=1}^{C_{l}}\V_{:,j,i}^{\left(l\right)}*\sigma\left(g_{j}^{\left(l\right)}\left(\x\right)\right),~~~l=1,...,L,~~i=1,\ldots,C_{l}\label{def:network3}
\end{align}
\begin{align}
g_{i}^{\left(l\right)}\left(\x\right)=\sqrt{\frac{c_{w}}{qC_{l-1}}}\sum_{j=1}^{C_{l-1}}\W_{:,j,i}^{\left(l\right)}*f_{j}^{\left(l-1\right)}\left(\x\right),~~~l=2,...,L,~~i=1,\ldots,C_{l},
\label{def:network2}
\end{align}
where $C_{l}\in\mathbb{N}$ is the
number of channels in the $l$'th layer; $\sigma$ is a nonlinear activation function, which in our analysis below is the ReLU
function; $\W^{\left(l\right)}\in\R^{q\times C_{l-1}\times C_{l}},\V^{\left(l\right)}\in\R^{q\times C_{l}\times C_{l}},\W^{\left(1\right)},\V^{\left(0\right)}\in\R^{1\times C_{0}\times C_{1}}$ are the network parameters, where $\W^{\left(1\right)},\V^{\left(0\right)}$ are convolution filters of size 1, and $\V^{\left(0\right)}$ is fixed throughout training; $c_v,c_w\in\R$ are normalizing factors set commonly as $c_v=1/\mathbb{E}_{u\sim\mathcal{N}(0,1)}[\sigma(u)]$ (for ReLU $c_v=2$) and $c_w=1$; $\alpha$ is a balancing factor typically set in applications to $\alpha=1$, however previous analyses of non-covolutional kernels also considered $\alpha=L^{-\gamma}$, with $0 \le \gamma \le 1$. We will occasionally omit explicit reference to $c_v$ and $c_w$ and assume in such cases that $c_v=2$ and $c_w=1$.

As in \citet{geifman2022spectral}, we consider three options for the final layer of the network:
\[
f^{\Eq}\left(\x;\theta\right):=\frac{1}{\sqrt{C_{L}}} \W^{\Eq}f_{:,1}^{\left(L\right)}\left(\x\right)
\]
\[
f^{\Tr}\left(\x;\theta\right):=\frac{1}{\sqrt{d}\sqrt{C_{L}}}\left\langle \W^{\Tr},f^{\left(L\right)}\left(\x\right)\right\rangle
\] 
\[ 
f^{\GAP}\left(\x;\theta\right):=\frac{1}{d\sqrt{C_{L}}}\W^{\GAP}f^{\left(L\right)}\left(\x\right)\one,
\]
where $\W^{\Eq},\W^{\GAP}\in\R^{1\times C_{L}},\W^{\Tr}\in\R^{C_{L}\times d }$ and $\one = (1,\ldots,1)^{T}\in\R^{ d }$. 

$f^{\Eq}$ is fully convolutional. Therefore, applying it to all shifted versions of the input results in a network that is shift-equivariant. $f^{\Tr}$ implements a linear layer in the last layer and $f^{\GAP}$ implements a global average pooling (GAP) layer, resulting in a shift invariant network. \fv{ The three heads allow us to analyze kernels corresponding to (1) shift equivariant networks (e.g., image segmentation networks), (2) a convolutional network followed by a fully connected head, akin to AlexNet \citep{krizhevsky2017imagenet} (but with additional skip connections), and (3) a convnet followed by global average pooling, akin to ResNet \citep{he2016deep}.} 

Note that $f^{\left(l\right)}\left(\x\right)\in\R^{C_{l}\times d }$
and $g^{\left(l\right)}\left(\x\right)\in\R^{C_{l}\times d }$.
$\theta$ denote all the network parameters, which we initialize from a standard Gaussian distribution as in \citep{jacot2018neural}.

\subsection{Multi-dot product Kernels}
Following \citep{geifman2022spectral}, we call a kernel $\mathcal{K}:\MS\times\MS\to\R$ \emph{multi-dot product} if $\mathcal{K}\left(\x,\z\right)=\mathcal{K}\left(\tbf\right)$ where $\tbf=\left(\left(\x^T\z\right)_{1,1},\left(\x^T\z\right)_{2,2},\ldots,\left(\x^T\z\right)_{d,d}\right)\in [-1,1]^d$ (note the overload of notation which should be clear by context.) \fv{Under our uniform distribution assumption on the multi-sphere,} multi-dot product kernels can be decomposed as $\mathcal{K}\left(\x,\z\right)=\sum_{\kk,\jj}\lambda_{\kk} \Y_{\kk,\jj}\left(\x\right) \Y_{\kk,\jj} \left(\z\right)$, where $\kk,\jj\in\mathbb{N}^d$. $\Y_{\kk,\jj}\left(\x\right)$ (the eigenfunctions of $\mathcal{K}$) are products of spherical harmonics in $\Sphere^{C_0-1}$, $\Y_{\kk,\jj}\left(\x\right)=\prod_{i=1}^d Y_{k_{i},j_{i}}\left(\x_i\right)$ with $k_i\geq0$, $j_i\in\left[N(C_0,k_i)\right]$, where $N(C_0,k_i)$ denotes the number of harmonics of frequency $k_i$ in $\Sphere^{C_0-1}$. For $C_0=2$ these are products of Fourier series in a $d$-dimensional torus. Note that the eigenvalues $\lambda_{\kk}$ are non-negative and do not depend on $\jj$.

Using Mercer's Representation of RKHSs \citep{kanagawa2018gaussian}, we have that the RKHS $\mathcal{H}_{\mathcal{K}}$ of $\mathcal{K}$ is 
\[
\mathcal{H}_{\mathcal{K}} := \left\{f=\sum_{\kk,\jj}\alpha_{\kk,\jj}\Y_{\kk,\jj}~~\vert~~\norm{f}^2_{\mathcal{H}_{\mathcal{K}}}=\sum_{\kk,\jj}\frac{\alpha_{\kk,\jj}^2}{\lambda_{\kk}}<\infty\right\}.
\]

For multi-dot product kernels the normalized kernel simplifies to $\overline{\mathcal{K}}\left(\tbf\right)=\frac{\mathcal{K}\left(\tbf\right)}{\mathcal{K}\left(\one\right)}$, where $\one=(1,...,1)^T\in\R^d$.
$\mathcal{K}$ and $\overline{\mathcal{K}}$ thus differ by a constant, and so they share the same eigenfunctions and their  eigenvalues differ by a multiplicative constant.

\section{Kernel Derivations}

We next provide explicit formulas for ResCGPK and ResCNTK.

\subsection{ResCGPK}

Given a network $f(\x;\theta)$, the corresponding Gaussian process kernel is defined as $\mathbb{E}_{\theta}\left[f(\x;\theta)f(\z;\theta)\right]$. Below we consider the network in Sec.~\ref{sec:conv_resnet}, which can have either one of three heads, the equivariant head, trace or GAP. We denote the corresponding ResCGPK by $\mathcal{K}^{\left(L\right)}_{\Eq}$, $\mathcal{K}^{\left(L\right)}_{\Tr}$ and  $\mathcal{K}^{\left(L\right)}_{\GAP}$, where $L$ denotes the number of layers. We proceed with the following definition.
\begin{definition}
Let $\x,\z\in\R^{ C_{0}\times d}$ and $f$ be a residual network with $L$ layers. For every $1\leq l\leq L$ (and $i$ is an arbitrary choice of channel) denote by
\begin{align}
\Sigma_{j,j^{'}}^{\left(l\right)}\left(\x,\z\right):=\mathbb{E}_{\theta}\left[g_{ij}^{\left(l\right)}\left(\x\right)g_{ij^{'}}^{\left(l\right)}\left(\z\right)\right],
\end{align}
\begin{align}
\Lambda_{j,j^{'}}^{\left(l\right)}\left(\x,\z\right)=\begin{pmatrix}\Sigma_{j,j}^{\left(l\right)}\left(\x,\x\right) & \Sigma_{j,j^{'}}^{\left(l\right)}\left(\x,\z\right)\\
\Sigma_{j^{'},j}^{\left(l\right)}\left(\z,\x\right) & \Sigma_{j^{'},j^{'}}^{\left(l\right)}\left(\z,\z\right)
\end{pmatrix}
\end{align}
\begin{align}
K_{j,j^{'}}^{\left(l\right)}\left(\x,\z\right)=c_{v}c_{w}\underset{\left(u,v\right)\sim\mathcal{N}\left(0,\Lambda_{j,j^{'}}^{\left(l\right)}\left(\x,\z\right)\right)}{\mathbb{E}}\left[\sigma\left(u\right)\sigma\left(v\right)\right]
\label{eq:K}
\end{align}
\begin{align}
\dot{K}_{j,j^{'}}^{\left(l\right)}\left(\x,\z\right)=c_{v}c_{w}\underset{\left(u,v\right)\sim\mathcal{N}\left(0,\Lambda_{j,j^{'}}^{\left(l\right)}\left(\x,\z\right)\right)}{\mathbb{E}}\left[\dot{\sigma}\left(u\right)\dot{\sigma}\left(v\right)\right],
\label{eq:Kdot}
\end{align}
where $\dot{\sigma}$ is the derivative of the ReLU function expressed by the indicator $\Ind_{x\geq0}$.
\end{definition}

Our first contribution is to give an exact formula for the ResCGPK. We refer the reader to the appendix for the precise derivation and note here some of the key ideas. We give precise formulas for $\Sigma, K$ and $\dot{K}$ and prove that 
\[\mathcal{K}^{\left(L\right)}_{\Eq}\left(\x,\z\right)
= \Sigma_{1,1}^{\left(1\right)}\left(\x,\z\right)+ \frac{\alpha^{2}}{qc_{w}}\sum_{l=1}^{L}\tr\left(K_{\mathcal{D}_{1,1}}^{\left(l\right)}\left(\x,\z\right)\right).\]
This gives us the equivariant kernel, and by showing that $\mathcal{K}^{\left(L\right)}_{\Tr}\left(\x,\z\right)=\frac{1}{d}\sum_{j=1}^{ d }\mathcal{K}^{\left(L\right)}_{\Eq}\left(\s_j\x,\s_j\z\right)$ and $\mathcal{K}^{\left(L\right)}_{\GAP}\left(\x,\z\right)=\frac{1}{d^2}\sum_{j,j^{'}=1}^{ d }\mathcal{K}^{\left(L\right)}_{\Eq}\left(\s_j\x,\s_{j^{'}}\z\right)$ we obtain precise formulas for the trace and GAP kernels. 

For clarity, we give here the case of the normalized ResCGPK with multi-sphere inputs, which we prove to simplify significantly. The full derivation for arbitrary inputs is given in Appendix \ref{ap:1}.
\begin{theorem} \label{Thm:ResCGPK}
[Multi-Sphere Case]
For any $\x,\z\in \MS$ let $\tbf = \left(\left(\x^T\z\right)_{1,1},\ldots,\left(\x^T\z\right)_{d,d}\right)$ $\in[-1,1]^d$. Fixing $c_v=2,c_w=1$ and let $\kappa_1(u)=\frac{\sqrt{1-\rho^2}+\left(\pi-\cos^{-1}\left(u\right)\right)u}{\pi}$. Then,
\[
\overline{\mathcal{K}}^{\left(1\right)}_{\Eq}\left(\tbf\right) = \frac{1}{1+\alpha^2}\left(t_1 + \frac{\alpha^2}{q}\sum_{k=-\frac{q-1}{2}}^{\frac{q-1}{2}}\kappa_1\left(t_{1+k}\right)\right)
\]
\[
\overline{\mathcal{K}}^{\left(L\right)}_{\Eq}\left(\tbf\right) = \frac{1}{1+\alpha^2}\left(\overline{\mathcal{K}}^{\left(L-1\right)}_{\Eq}\left(\tbf\right) + \frac{\alpha^{2}}{q^2} \sum_{k=-\frac{q-1}{2}}^{\frac{q-1}{2}}\sum_{k^{'}=-\frac{q-1}{2}}^{\frac{q-1}{2}}\kappa_1\left(\overline{\mathcal{K}}^{\left(L-1\right)}_{\Eq}\left(\s_{k+k^{'}}\tbf\right)\right)\right).
\]
\end{theorem}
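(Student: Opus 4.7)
The plan is to prove the two displayed identities by induction on $L$, using the unnormalized recursion
\[
\mathcal{K}^{(L)}_{\Eq}(\x,\z) \;=\; \mathcal{K}^{(L-1)}_{\Eq}(\x,\z) \;+\; \frac{\alpha^{2}}{q c_w}\,\tr\bigl(K^{(L)}_{\mathcal{D}_{1,1}}(\x,\z)\bigr),
\]
which follows by telescoping the formula stated just before the theorem. The whole task then reduces to (a) simplifying the trace on the multi-sphere and (b) tracking the diagonal normalization.

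For the base case $L=1$, since the filter $\W^{(1)}$ has size one, a direct second-moment computation yields $\Sigma^{(1)}_{j,j}(\x,\z) = \langle \x_j,\z_j\rangle/C_0$, so on $\MS$ we have $\Sigma^{(1)}_{1+k,1+k}(\x,\x) = 1/C_0$ and $\Sigma^{(1)}_{1+k,1+k}(\x,\z) = t_{1+k}/C_0$. The arc-cosine (ReLU) dual-activation identity, combined with $c_v=2$ and $c_w=1$, then converts the Gaussian expectation in the definition of $K$ into $K^{(1)}_{1+k,1+k}(\x,\z) = (1/C_0)\,\kappa_1(t_{1+k})$. Plugging into the telescoped recursion and dividing by $\mathcal{K}^{(1)}_{\Eq}(\x,\x) = (1+\alpha^{2})/C_0$ gives the first stated identity.

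For the inductive step, I would proceed in three substeps. First, since $g^{(L)}$ is obtained from $f^{(L-1)}$ by a size-$q$ convolution with i.i.d.\ Gaussian weights, a standard second-moment calculation gives
\[
\Sigma^{(L)}_{j,j}(\x,\z) \;=\; \frac{c_w}{q}\sum_{k'=-(q-1)/2}^{(q-1)/2} \Xi^{(L-1)}_{j+k',\,j+k'}(\x,\z),
\]
where $\Xi^{(L-1)}_{n,n}(\x,\z) := \mathbb{E}[f^{(L-1)}_{i,n}(\x) f^{(L-1)}_{i,n}(\z)]$. Second, I would invoke shift-equivariance of the network (from circular padding) to rewrite $\Xi^{(L-1)}_{n,n}(\x,\z) = \mathcal{K}^{(L-1)}_{\Eq}(\s_{n-1}\x, \s_{n-1}\z)$, so that the double-shift $\s_{k+k'}$ structure of the theorem emerges when we specialize to $j=1+k$ and sum over $k$ in the trace. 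Third, by induction on the diagonal, $\mathcal{K}^{(L-1)}_{\Eq}(\x,\x) = (1+\alpha^{2})^{L-1}/C_0$ on the multi-sphere (shift-invariant along $\MS$), so ReLU homogeneity lets me pull out this common variance and write each $K^{(L)}_{1+k,1+k}(\x,\z)$ as $((1+\alpha^{2})^{L-1}/C_0)\,\kappa_1(\rho)$ with $\rho$ expressed through the inductive normalized kernel evaluated at $\s_{k+k'}\tbf$. Summing over $k$ to form the trace, using $\mathcal{K}^{(L)}_{\Eq}(\x,\x) = (1+\alpha^{2})^{L}/C_0$, and dividing both sides of the telescoped recursion by $(1+\alpha^2)^L/C_0$, the factors of $C_0$ and $(1+\alpha^{2})^{L-1}$ cancel and the global prefactor $1/(1+\alpha^{2})$ falls out, producing the second claimed recursion.

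The main obstacle is the careful bookkeeping of shifts: the residual block contains two size-$q$ convolutions ($\W^{(L)}$ before the nonlinearity and $\V^{(L)}$ after), and I have to verify that they combine so that the iterated shifts $\s_{k+k'}$ appear with exactly the prefactor written in the theorem. A secondary subtlety is confirming the induction claim $\mathcal{K}^{(L)}_{\Eq}(\x,\x) = (1+\alpha^{2})^L/C_0$, which relies on $\kappa_1(1)=1$ and on the diagonal of $\mathcal{K}^{(L-1)}_{\Eq}$ being invariant under the cyclic shifts $\s_{k+k'}$ on $\MS$; this is what makes the normalization constants telescope cleanly into the single factor $1/(1+\alpha^{2})$ at the front of the recursion.
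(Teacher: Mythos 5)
Your route is the same as the paper's: telescope the general formula $\mathcal{K}^{(L)}_{\Eq}=\Sigma^{(1)}_{1,1}+\frac{\alpha^2}{qc_w}\sum_l\tr(K^{(l)}_{\mathcal{D}_{1,1}})$ into a one-step recursion, compute the constant diagonal $N_L=\Sigma^{(L)}_{j,j}(\x,\x)=(1+\alpha^2)^{L-1}/C_0$ on the multi-sphere, identify $\mathbb{E}[f^{(L-1)}_{i,n}(\x)f^{(L-1)}_{i,n}(\z)]$ with $\mathcal{K}^{(L-1)}_{\Eq}(\s_{n-1}\x,\s_{n-1}\z)$ by shift equivariance, apply the arc-cosine identity, and divide by $(1+\alpha^2)^L/C_0$. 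The base case, the diagonal computation, and the normalization bookkeeping are all correct and match the paper's Lemmas on $N_L$ and on $\overline{\mathcal{K}}^{(L)}_{\Eq}(\x,\x)$.

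There is, however, one step that does not close as written, and it is hidden in your phrase ``with $\rho$ expressed through the inductive normalized kernel evaluated at $\s_{k+k'}\tbf$.'' Following your own substeps, the argument of $\kappa_1$ in $K^{(L)}_{1+k,1+k}$ is the normalized covariance $\overline{\Sigma}^{(L)}_{1+k,1+k}(\x,\z)=\frac{1}{q}\sum_{k'}\overline{\mathcal{K}}^{(L-1)}_{\Eq}(\s_{k+k'}\tbf)$, i.e.\ an \emph{average over $k'$ sitting inside} $\kappa_1$. Summing over $k$ to form the trace therefore yields $\frac{\alpha^2}{q}\sum_{k}\kappa_1\bigl(\frac{1}{q}\sum_{k'}\overline{\mathcal{K}}^{(L-1)}_{\Eq}(\s_{k+k'}\tbf)\bigr)$, whereas the theorem's display has $\frac{\alpha^2}{q^2}\sum_{k,k'}\kappa_1\bigl(\overline{\mathcal{K}}^{(L-1)}_{\Eq}(\s_{k+k'}\tbf)\bigr)$; since $\kappa_1$ is nonlinear (indeed strictly convex), these two expressions are not equal for $q>1$. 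The paper's own derivation performs exactly this interchange silently when passing from $\tr(\kappa_1(\overline{\Sigma}^{(L)}_{\mathcal{D}_{1,1}}))$ to the double sum, so you are in the same position as the authors: either the inner sum must be justified to commute with $\kappa_1$ in the intended reading of the statement, or the faithful recursion is the one with the average inside $\kappa_1$. You should make this step explicit rather than eliding it, since it is the only nontrivial point in the entire argument.
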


\subsection{ResCNTK}

For $\x,\z\in\R^{ C_{0} \times d}$ and $f(\x;\theta)$ an $L$ layer ResNet, ResCNTK is defined as 
$\mathbb{E}\left[\left\langle \frac{\partial f\left(\x;\theta\right)}{\partial\theta},\frac{\partial f\left(\z;\theta\right)}{\partial\theta}\right\rangle \right]$. Considering the three heads in Sec.~\ref{sec:conv_resnet}, we denote the corresponding kernels by $\Theta^{\left(L\right)}_{\Eq},\Theta^{\left(L\right)}_{\Tr}$ and $\Theta^{\left(L\right)}_{\GAP}$, depending on the choice of last layer. Our second contribution is providing a formula for the ResCNTK for arbitrary inputs.

\begin{theorem}\label{thm:2}
Let $\x,\z\in\R^{ C_{0}\times d}$ and $f$ be a residual network with $L$ layers. Then, the ResCNTK for $f$ has the form
\begin{align*}
\Theta^{\left(L\right)}_{\Eq}\left(\x,\z\right) = \mathcal{K}^{\left(L\right)}_{\Eq}\left(\x,\z\right) 
+ \frac{\alpha^{2}}{q}\sum_{l=1}^{L}\sum_{p=1}^{ d }P_{p}^{\left(l\right)}\left(\x,\z\right)
\left(\tr\left(\dot{K}_{\mathcal{D}_{p,p}}^{\left(l\right)}\left(\x,\z\right)\odot\Sigma_{\mathcal{D}_{p,p}}^{\left(l\right)}\left(\x,\z\right) + K_{\mathcal{D}_{p,p}}^{\left(l\right)}\left(\x,\z\right) \right) \right),
\end{align*}
where $\forall1\leq j\leq d, P_{j}^{\left(L\right)}\left(\x,\z\right)=\frac{\Ind_{j=1}}{c_w}$ and for $1\leq l\leq L-1$,
\[
P_{j}^{\left(l\right)}\left(\x,\z\right)=P_{j}^{\left(l+1\right)}\left(\x,\z\right) + \frac{\alpha^2}{q^2}\sum_{k=-\frac{q-1}{2}}^{\frac{q-1}{2}}\dot{K}_{j+k, j+k}^{\left(l+1\right)}\left(\x,\z\right)
\sum_{k^{'}=\frac{q-1}{2}}^{\frac{q-1}{2}}P_{j+k+k^{'}}^{\left(l+1\right)}\left(\x,\z\right).
\]
The Tr and GAP kernels are given by
$\Theta^{\left(L\right)}_{\Tr}\left(\x,\z\right)=\frac{1}{d}\sum_{j=1}^{d}\Theta^{\left(L\right)}_{\Eq}\left(\s_j\x,\s_j\z\right)$ and $\Theta^{\left(L\right)}_{\GAP}\left(\x,\z\right)=\frac{1}{d^2}\sum_{j,j^{'}=1}^{d^2}\Theta^{\left(L\right)}_{\Eq}\left(\s_j\x,\s_{j^{'}}\z\right)$.
\end{theorem}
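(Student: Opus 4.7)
The plan is to derive $\Theta^{(L)}_{\Eq}$ by expanding the NTK as a sum over parameter groups, $\Theta(\x,\z) = \mathbb{E}[\sum_\theta \partial_\theta f(\x)\, \partial_\theta f(\z)]$, and evaluating each contribution in the limit $C_l\to\infty$. The parameters decompose into the head weights $\W^{\Eq}$, the residual-branch filters $\{\V^{(l)}\}_{l=1}^{L}$, and the intra-block filters $\{\W^{(l)}\}_{l=1}^{L}$ (recalling that $\V^{(0)}$ is held fixed and so contributes nothing). The contribution of $\W^{\Eq}$ reproduces exactly $\mathcal{K}^{(L)}_{\Eq}(\x,\z)$ by the standard equivalence between the GPK and the last-layer-only NTK, accounting for the first term in the formula.

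For each $l$ I will evaluate the $\V^{(l)}$ and $\W^{(l)}$ contributions by the chain rule. The gradient $\partial f/\partial \V^{(l)}$ factors into a forward activation $\sigma(g^{(l)})$ feeding into the filter and a backward signal $\partial f/\partial f^{(l)}$ leaving it; by the Tensor Programs convergence result of \citet{yang2020tensor}, the forward and backward factors are asymptotically independent, so expectations of their inner products factorize. The forward kernel of $\sigma(g^{(l)})$ is precisely $K^{(l)}/(c_v c_w)$, and I will denote the (suitably normalized) backward kernel by $P^{(l)}_p(\x,\z)$. Summed over channels and the $\V^{(l)}$ filter window this gives the term $\frac{\alpha^2}{q}\sum_p P_p^{(l)}\,\tr\!\left(K^{(l)}_{\mathcal{D}_{p,p}}\right)$. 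The analogous calculation for $\W^{(l)}$ uses $\partial f/\partial g^{(l)}=(\partial f/\partial f^{(l)})\cdot \V^{(l)}\cdot \dot\sigma(g^{(l)})$ on the backward side and $f^{(l-1)}$ (whose forward kernel is $\Sigma^{(l)}$) on the forward side, which after the same decoupling produces the Hadamard product $\dot{K}^{(l)}\odot \Sigma^{(l)}$ inside the trace.

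The remaining task is to derive the recursion for $P_p^{(l)}$. At the top, the equivariant head reads off only position $1$ of $f^{(L)}$, yielding $P_p^{(L)}=\Ind_{p=1}/c_w$. To pass from $l+1$ down to $l$ I differentiate $f^{(l+1)}=f^{(l)}+\alpha\sqrt{c_v/(qC_{l+1})}\,\V^{(l+1)}*\sigma(g^{(l+1)})$ (with $g^{(l+1)}=\sqrt{c_w/(qC_l)}\,\W^{(l+1)}*f^{(l)}$) with respect to $f^{(l)}$: the identity skip transports $P_p^{(l+1)}$ unchanged, while the residual branch contributes one convolution over the $\V^{(l+1)}$ filter (index $k'$), the pointwise factor $\dot\sigma(g^{(l+1)})$ that becomes $\dot K^{(l+1)}_{j+k,j+k}$ at a common spatial position in $\x$ and $\z$, and a second convolution over the $\W^{(l+1)}$ filter (index $k$); the $\alpha^2/q^2$ prefactor arises from the residual scaling combined with the infinite-width averaging of the $\V^{(l+1)}$-products. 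This yields precisely the stated recursion.

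The principal obstacle is the careful bookkeeping of spatial convolution indices: the two nested sums $\sum_k\sum_{k'}$ in the $P$-recursion come from stacking a $\W^{(l+1)}$ convolution with a $\V^{(l+1)}$ convolution, and the trace window $\mathcal{D}_{p,p}$ on the forward side comes from pairing the $\V^{(l)}$ filter with its transpose. A secondary subtlety is rigorously justifying the Gaussian independence of the forward and backward signals across residual branches, which we resolve by appealing to \citet{yang2020tensor, yang2021tensor}. Finally, the Tr and GAP formulas are immediate: since $f^{\Tr}(\x)$ and $f^{\GAP}(\x)$ are, respectively, a sum and an average of cyclic shifts of the equivariant head applied to shifted inputs, by equivariance of $f^{(L)}$ their NTKs equal the corresponding single and double averages of $\Theta^{(L)}_{\Eq}$ over the shifts $\s_j$.
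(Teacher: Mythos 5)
Your proposal is correct and follows essentially the same route as the paper: decompose the NTK over the parameter groups $\W^{\Eq}$, $\{\V^{(l)}\}$, $\{\W^{(l)}\}$, identify the head term with $\mathcal{K}^{(L)}_{\Eq}$, factorize each remaining term into the backward kernel $P^{(l)}_p$ times the forward kernels ($K^{(l)}$ for $\V^{(l)}$, $\dot K^{(l)}\odot\Sigma^{(l)}$ for $\W^{(l)}$) via forward--backward decoupling, derive the $P$-recursion by differentiating the residual block, and reduce Tr/GAP to shifted averages of the equivariant kernel. The only cosmetic difference is that you invoke the Tensor Programs results for the decoupling where the paper argues directly that $b^{(l)}$ is uncorrelated with the forward quantities.
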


\section{Spectral Decomposition}

$\mathcal{K}^{\left(L\right)}_{\Eq}$ and $\Theta^{\left(L\right)}_{\Eq}$ are multi-dot product kernels, and therefore their eigenfunctions consist of spherical harmonic products. Next, we derive bounds on their eigenvalues. We subsequently use a result due to \citep{geifman2022spectral} to extend these to their trace and GAP versions.

\subsection{Asymptotic Bounds}

The next theorem provides upper and lower bounds on the eigenvalues of $\mathcal{K}^{\left(L\right)}_{\Eq}$ or $\Theta^{\left(L\right)}_{\Eq}$. 

\begin{theorem}\label{thm:sd}
Let $\x,\z\in\MS$, \fv{where $d$ denotes the number of pixels and $C_0$ denotes the number of input channels for each pixel}. The eigenvalues $\lambda_{\kk}$ of either $\mathcal{K}^{\left(L\right)}_{\Eq}$ or $\Theta^{\left(L\right)}_{\Eq}$ satisfy
\[
c_1\prod_{i\in\mathcal{R},k_i>0}k_i^{-(C_0+2\nu_a-3)}\leq\lambda_{\kk}\leq c_2 \prod_{i\in\mathcal{R},k_i>0}k_i^{-(C_0+2\nu_b-3)}
\]
where $\nu_a=2.5$ and $\nu_b$ is $1+\frac{3}{2d}$ for ResCGPK and $1+\frac{1}{2d}$ for ResCNTK. $c_1,c_2>0$ are constants that depend on $L$. \fv{The set $\mathcal{R}$ denotes the receptive field, defined as the set of indices of input pixels that affect the kernel output.}
\end{theorem}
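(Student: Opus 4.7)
\emph{Reduction to Taylor coefficients.} Since $\mathcal{K}^{(L)}_{\Eq}$ and $\Theta^{(L)}_{\Eq}$ are multi-dot product kernels whose eigenfunctions are products of spherical harmonics, the plan is to bound the Taylor coefficients $b_{\n}$ in the expansion $\mathcal{K}(\tbf)=\sum_{\n\geq\zero}b_{\n}\tbf^{\n}$ and then translate these bounds to eigenvalues $\lambda_{\kk}$ via a coordinate-wise multivariate Funk--Hecke identity: $\lambda_{\kk}$ is a non-negative Gegenbauer-weighted sum of those $b_{\n}$ with $\n\geq\kk$. The restriction of the product to indices in the receptive field $\mathcal{R}$ is immediate from the recursions of Theorems \ref{Thm:ResCGPK} and \ref{thm:2}: if $i\notin\mathcal{R}$ then the kernel is independent of $t_i$, forcing $\lambda_{\kk}=0$ whenever $k_i>0$.

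\emph{Driver: Taylor coefficients of the arc-cosine kernel.} The mechanism producing the polynomial decay is the known asymptotics $\mu_n \asymp n^{-5/2}$ for the Taylor coefficients of $\kappa_1$, and an analogous $n^{-3/2}$ decay for those of $\dot{\kappa}_1=\kappa_0$. Pushed through Funk--Hecke on $\Sphere^{C_0-1}$ these give per-coordinate angular eigenvalue decay of orders $k^{-(C_0+2)}$ and $k^{-C_0}$ respectively, matching the exponent form $-(C_0+2\nu-3)$ with $\nu_a=5/2$ and $\nu_b\to 1$. The difference between ResCGPK and ResCNTK in $\nu_b$ (the constants $3/(2d)$ versus $1/(2d)$) traces to the extra $\dot{K}\odot\Sigma$ contribution in Theorem \ref{thm:2}, which is governed by the slower-decaying $\kappa_0$.

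\emph{Induction on depth, lower then upper.} For the lower bound I would induct on $L$ using the skip summand in the normalized recursion of Theorem \ref{Thm:ResCGPK}: after normalization, $\overline{\mathcal{K}}^{(L)}_{\Eq}$ equals $\tfrac{1}{1+\alpha^2}\overline{\mathcal{K}}^{(L-1)}_{\Eq}$ plus a term whose Taylor expansion has non-negative coefficients (since $\kappa_1$ has non-negative Taylor coefficients on $[0,1]$), so every eigenvalue of $\overline{\mathcal{K}}^{(L-1)}_{\Eq}$ persists up to a factor $\tfrac{1}{1+\alpha^2}$, and the base case $L=1$ reduces to an explicit mixture of $t_1$ and shifted $\kappa_1(t_{1+k})$ for which the lower bound $c_1\prod_{i:k_i>0}k_i^{-(C_0+2)}$ is direct. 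For the upper bound I would expand $\kappa_1\!\left(\overline{\mathcal{K}}^{(L-1)}_{\Eq}(\s_{k+k'}\tbf)\right)=\sum_{n}\mu_n \bigl(\overline{\mathcal{K}}^{(L-1)}_{\Eq}\bigr)^n$ and use a combinatorial composition bound on Taylor coefficients: because $\overline{\mathcal{K}}^{(L-1)}_{\Eq}(\one)=1$ with non-negative coefficients, the $\n$-th coefficient of the composition is dominated by $\mu_{|\n|}$ times a finite combinatorial factor, and this inductive bound converts via Funk--Hecke into the required eigenvalue upper bound, with the dependence of $c_2$ on $L$ coming from iterating the recursion.

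\emph{Main obstacle.} The delicate step is producing the product form $\prod_{i\in\mathcal{R}, k_i>0}k_i^{-(C_0+2\nu_b-3)}$ from the essentially scalar polynomial decay of $\kappa_1$ and $\kappa_0$. Distributing the single-variable $n^{-5/2}$ (resp.\ $n^{-3/2}$) decay across potentially many active coordinates inside a filter window, while still obtaining a per-coordinate factor of $k_i^{-(C_0+2\nu_b-3)}$, requires a Gegenbauer expansion in each coordinate separately with coordinate-wise tight lower bounds; the unavoidable slack is exactly what is absorbed into the $O(1/d)$ correction in $\nu_b$. Carefully tracking which coordinate each power $k_i$ corresponds to through the cyclic shifts $\s_{k+k'}$, and showing the combinatorial composition bound is tight enough coordinate by coordinate, is the technical heart of the argument.
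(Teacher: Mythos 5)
Your overall architecture matches the paper's: both reduce to bounding the Taylor coefficients $b_{\n}$ of the kernel as a power series in $\tbf$ and then invoke the machinery of \citet{geifman2022spectral} to convert coefficient bounds into eigenvalue bounds, and your lower-bound induction (keep the skip/linear term of the composition at each layer, so that $\tilde b_{\n}$ persists up to a constant factor, driven by the $n^{-5/2}$ decay of the $\kappa_1$ coefficients) is essentially the paper's Lemma \ref{lem:b1}, which formalizes "keep only the $n=1$ term" via the ordinary Bell polynomials $\hat B_{\n,1}(\tilde\bias)=\tilde b_{\n}$.

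The gap is in your upper bound. You claim that because $\overline{\mathcal{K}}^{(L-1)}_{\Eq}(\one)=1$ with non-negative coefficients, the $\n$-th coefficient of $\kappa_1\circ\overline{\mathcal{K}}^{(L-1)}_{\Eq}$ is dominated by $\mu_{|\n|}$ times a finite combinatorial factor. This does not hold as stated: since $\kappa_1(0)\neq 0$ and the inner series has a nonzero constant term, every order $n$ of the outer expansion contributes to every coefficient of the composition (the Bell-polynomial sum $\sum_n \mu_n\hat B_{\n,n}$ does not truncate at $n=|\n|$), and the low-order terms such as $\mu_1\tilde b_{\n}$ are already of the target size rather than of size $\mu_{|\n|}$; controlling the full sum by direct composition estimates is exactly what is hard. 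Moreover, even a bound on each individual $b_{\n}$ by $|\n|^{-5/2}$ would not yield the product form $\prod_i n_i^{-3/(2d)-1}$; one needs control of the aggregate $\sum_{|\n|=n}b_{\n}$. The paper sidesteps both issues with a different idea: setting all $t_i=u$ collapses $\overline{\mathcal{K}}^{(L)}_{\Eq}$ (and the NTK correction $\kr^{(L)}$) onto the \emph{fully connected} residual kernels of \citet{huang2020deep,belfer2021spectral} (Lemma \ref{lem:sdfc}), whose singular expansions near $u=1$ ($1-t+\Theta(t^{3/2})$ for the GPK, $1+c_1(1-t)^{1/2}+o((1-t)^{1/2})$ for the NTK part) give, via the Flajolet--Sedgewick transfer theorem, the exact aggregate asymptotics $\sum_{|\n|=n}b_{\n}=a_n\sim n^{-2.5}$ (resp.\ $n^{-1.5}$); non-negativity then bounds each $b_{\n}$ by $a_{|\n|}$, and a dedicated lemma of \citet{geifman2022spectral} distributes this over coordinates to produce the $\prod_i n_i^{-\nu_b}$ form with the $O(1/d)$ correction you correctly identified as the delicate point. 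So the missing ingredient is precisely the collapse-to-1D-plus-singularity-analysis step; the composition bound you propose in its place would not go through.
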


We note that these bounds are identical, up to constants, to those obtained with convolutional networks that do not include skip connections \citep{geifman2022spectral}, although the proof for the case of ResNet is more complex. Overall, the theorem shows that over-parameterized ResNets are biased toward low-frequency functions. In particular, with input distributed uniformly on the multi-sphere, the time required to train such a network to fit an eigenfunction with gradient descent (GD) is inversely proportional to the respective eigenvalue \citep{basri2020frequency}. Consequently, training a network to fit a high frequency function is polynomially slower than training the network to fit a low frequency function. Note, however, that the rate of decay of the eigenvalues depends on the number of pixels over which the target function has high frequencies. Training a target function whose high frequencies are concentrated in a few pixels is much faster than if the same frequencies are spread over many pixels. This can be seen in Figure~\ref{fig:eigen}, which shows for a target function of frequency $k$ in $m$ pixels, that the exponent (depicted by the slope of the lines) grows with $m$. The same behaviour is seen when the skip connections are removed. This is different from fully connected architectures, in which the decay rate of the eigenvalues depends on the dimension of the input and is invariant to the pixel spread of frequencies, see \citep{geifman2022spectral} for a comparison of the eigenvalue decay  for standard CNNs and FC architectures.

\subsection{Locality bias and Ensemble Behavior}
To better understand the difference between ResNets and vanilla CNNs we next turn to a fine-grained analysis of the decay. 
Consider an $l$-layer CNN that is identical to our ResNet but with the skip connections removed. Let $p_i^{(l)}$ be the number of paths from input pixel $i$ to the output in \fv{the corresponding CGPK, or equivalently, the number of paths in the same CNN but in which there is only one channel in each node.}. 

\begin{theorem} \label{thm:eigen}
For both $\Theta^{(L)}_{\Eq}$ or $\mathcal{K}^{(L)}_{\Eq}$ there exist scalars $A>1$ and $c_l>0$ \st letting $c_{\kk,l}=c_l\prod_{i=1}^d A^{\min(p_i^{(l)}, k_i)}$ for every $1\leq l\leq L$ and $c_{\kk}=\sum_{l=1}^L c_{\kk,l}$, it holds that
\[
\lambda_{\kk} \geq c_{\kk}\underset{k_i>0}{\prod_{i=1}^d}k_i^{-C_0-2}.
\]
\end{theorem}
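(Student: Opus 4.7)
The strategy is to exploit the sum-over-layers decomposition of the residual kernels provided by Theorem \ref{Thm:ResCGPK} (for $\mathcal{K}^{(L)}_{\Eq}$) and Theorem \ref{thm:2} (for $\Theta^{(L)}_{\Eq}$). Both formulas write the residual kernel as a pointwise sum of one contribution per layer $l\in\{1,\ldots,L\}$. Each summand is itself a multi-dot product kernel on $\MS$, so all summands are simultaneously diagonalized in the basis $\{\Y_{\kk,\jj}\}$ of spherical harmonic products; consequently, the eigenvalue $\lambda_{\kk}$ of the total kernel is the sum of the eigenvalues $\lambda^{(l)}_{\kk}$ of the individual terms. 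This reduces the theorem to showing that, for each $l$, the $l$-th summand contributes at least $c_l\prod_i A^{\min(p_i^{(l)},k_i)}\prod_{k_i>0}k_i^{-C_0-2}$ to the $\kk$-th eigenvalue.

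Next, I would compare the $l$-th summand against the CGPK of a \emph{non-residual} depth-$l$ convolutional network of the same filter size $q$. The $l$-th term in Theorem \ref{Thm:ResCGPK} is $\frac{\alpha^2}{qc_w}\tr(K^{(l)}_{\mathcal{D}_{1,1}})$, where $K^{(l)}=c_v c_w\,\E[\sigma(u)\sigma(v)]$ with $(u,v)\sim\mathcal{N}(0,\Lambda^{(l)})$, and $\Lambda^{(l)}$ is built from the ResNet pre-activation kernel $\Sigma^{(l)}$. Because the ResNet recursion for $\Sigma^{(l)}$ differs from the plain-CNN recursion only by adding skip-connection contributions of the form $\alpha^2 K^{(j)}/(qc_w)$ with $j<l$, and because each such added term is PSD, I would first argue by induction that $\Sigma^{(l)}_{\text{Res}}\succeq\Sigma^{(l)}_{\text{CNN}}$ in the multi-dot product sense (non-negative spherical harmonic coefficients on the difference). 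Then, expanding the arc-cosine function $\kappa_1$ around $1$ via its known Taylor expansion with non-negative coefficients (as used in \citet{bietti2020deep,geifman2022spectral}), monotonicity propagates through, giving $K^{(l)}_{\text{Res}}\succeq K^{(l)}_{\text{CNN}}$ and hence the corresponding inequality for the $l$-th summand versus the depth-$l$ CGPK.

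With this comparison in hand, I would invoke the spectral lower bound for depth-$l$ non-residual CGPK established by \citet{geifman2022spectral}, which is exactly of the form $c_l\prod_i A^{\min(p_i^{(l)},k_i)}\prod_{k_i>0}k_i^{-C_0-2}$; the factor $A^{\min(p_i^{(l)},k_i)}$ arises from the combinatorial count of input-to-output paths through pixel $i$ that contribute to the $k_i$-th harmonic mode, saturating at $k_i$ because higher harmonics cannot exceed the multiplicity of the convolutional paths. Summing these per-layer bounds yields $c_{\kk}=\sum_l c_{\kk,l}$ and the claimed inequality. For the ResCNTK case, Theorem \ref{thm:2} adds the terms $\frac{\alpha^2}{q}P_p^{(l)}\tr(\dot K^{(l)}_{\mathcal{D}_{p,p}}\odot\Sigma^{(l)}_{\mathcal{D}_{p,p}}+K^{(l)}_{\mathcal{D}_{p,p}})$; each of these is again PSD (the Hadamard product of PSD kernels and $P_p^{(l)}\ge 0$), so one only needs to show the same comparison for the $K^{(l)}$ part, which already appears, and then keep the $\dot K\odot\Sigma$ term as an additional non-negative contribution.

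The main obstacle is the comparison step in the second paragraph: the ResNet's $\Sigma^{(l)}$ differs from the plain-CNN $\Sigma^{(l)}$ \emph{inside} the nonlinearity at every later layer, so the dominance must be pushed through the recursion rather than asserted at a single level. Handling this cleanly requires (i) verifying that the normalization by $1/(1+\alpha^2)$ in Theorem \ref{Thm:ResCGPK} does not destroy the PSD comparison (one can rescale and absorb $\alpha$-dependent constants into the unspecified $c_l$), and (ii) ensuring that $\kappa_1$ preserves non-negativity of spherical harmonic coefficients of the difference, which is where the Taylor expansion of $\kappa_1$ with positive coefficients around a suitable anchor is essential. Once these two ingredients are in place, the rest is a direct invocation of the Geifman et al.\ bound and summation over $l$.
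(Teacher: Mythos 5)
Your high-level plan matches the paper's: both exploit the fact that the residual kernel decomposes into a sum of per-layer multi-dot-product terms with common eigenfunctions, both trace the factor $A^{\min(p_i^{(l)},k_i)}$ to the path-counting lemmas of \citet{geifman2022spectral}, and both handle ResCNTK by observing that $\Theta^{(L)}_{\Eq}-\mathcal{K}^{(L)}_{\Eq}$ has non-negative Taylor coefficients. The divergence is in how the $l$-th summand is bounded, and there your proposed mechanism has a genuine gap.

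The gap is in the comparison step you yourself flag. You propose to show $\overline{\Sigma}^{(l)}_{\text{Res}}\succeq \overline{\Sigma}^{(l)}_{\text{CNN}}$ coefficient-wise ``up to constants that can be absorbed into $c_l$'' and then push this through $\kappa_1$ by positivity of its Taylor coefficients. But the constant you lose sits \emph{inside} the argument of $\kappa_1$: the normalized residual pre-activation kernel is a convex combination in which the fully-nested (plain-CNN) term carries weight roughly $\left(\tfrac{\alpha^2}{1+\alpha^2}\right)^{l-1}<1$, so discarding the skip terms leaves you lower-bounding $[\tbf^{\n}]\,\kappa_1\!\left(c\,g\right)$ with $c<1$ and $g(\one)=1$. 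Since $\kappa_1$ is analytic on $(-1,1)$ with its branch-point singularity only at $\pm1$, the composition $\kappa_1(c\,g)$ is analytic at $\tbf=\one$ and its Taylor coefficients decay \emph{exponentially} in $|\n|$, not polynomially; equivalently, $[\tbf^{\n}]\kappa_1(cg)=\sum_n a_n c^n[\tbf^{\n}]g^n$ picks up an order-dependent damping $c^n$ rather than a single multiplicative constant. For frequencies with $k_i>p_i^{(l)}$ the target bound $c_l A^{p_i^{(l)}}k_i^{-C_0-2}$ decays only polynomially in $k_i$, so this exponential loss cannot be absorbed into $A$ or $c_l$. The entire polynomial decay rate $n^{-2.5}$ rests on the argument of $\kappa_1$ reaching exactly $1$ at $\tbf=\one$, i.e., on \emph{not} throwing away any of the mass of the convex combination.

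The paper circumvents this by never comparing to the plain CGPK at all. It unrolls the recursion into a ``stride-$q$'' kernel over an enlarged index set $R_L$, so that the $l$-th layer's contribution appears as an exact sub-sum $\left(\tfrac{\alpha^2}{1+\alpha^2}\right)^{L-l+1}\sum_{\tilde{\n}\in M_l(\n)}[\tilde{\tbf}^{\tilde{\n}}]\kr_2^{(l)}$ of the Taylor coefficients of the residual kernel itself, with $\kappa_1$ always evaluated at the exactly-normalized residual sub-kernel (coefficients summing to $1$). The selection of terms happens only in the \emph{outer} Bell-polynomial expansion (as in Lemma~\ref{lem:b1}), where the cost is a true constant, and the $A^{\min(p_i^{(l)},k_i)}$ factor then comes from the purely combinatorial aggregation over $M_l(\n)$ in \citet{geifman2022spectral}[Lemmas C.4, C.5] followed by the coefficient-to-eigenvalue conversion of Lemma C.7. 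To repair your argument you would need to replace the wholesale domination $\Sigma_{\text{Res}}\succeq c\,\Sigma_{\text{CNN}}$ by a term-selection argument of this kind that keeps the normalization intact inside every nonlinearity.
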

The constant $c_{\kk}$ differs significantly from that of CNTK (without skip connections) which takes the form $c_{\kk,L}$ \citep{geifman2022spectral}. In particular, notice that the constants in the ResCNTK are (up to scale factors) the sum of the constants of the CNTK at depths $1,\ldots,L$. Thus, a major contribution of the paper is providing theoretical justification for the following result, observed empirically in \citep{veit2016residual}: \emph{over-parameterized ResNets act like a weighted ensemble of CNNs of various depths}. In particular, information from smaller receptive fields is propagated through the skip connections, resulting in larger eigenvalues for frequencies that correspond to smaller receptive fields.

Figure \ref{fig:eigen} shows the eigenvalues computed numerically for various frequencies, for both the CGPK and ResCGPK. Consistent with our results, eigenfunctions with high frequencies concentrated in a few pixels, e.g., $\kk=(k,0,0,0)$ have larger eigenvalues than those with frequencies spread over more pixels, e.g., $\kk=(k,k,k,k)$. See appendix \ref{app:exp_eig_decay} for implementation details.

Figure~\ref{fig:erfs} shows the effective receptive field (ERF) induced by ResCNTK compared to that of a network and to the kernel and network with the skip connections removed. The ERF is defined to be $\partial f^{\Eq}(\x;\theta)/\partial \x$ for ResNet \citep{luo2016understanding} \fv{and $\Theta^{\Eq}(\x,\x)/\partial\x$ for ResCNTK. A similar calculation is applied to CNN and CNTK}. We see that residual networks and their kernels give rise to an increased locality bias (more weight at the center of the receptive field (for the equivariant architecture) or to nearby pixels (at the trace and GAP architectures).


\begin{figure}[tb]
     \centering
     \begin{subfigure}[b]{0.45\textwidth}
         \centering
         \includegraphics[width=\textwidth]{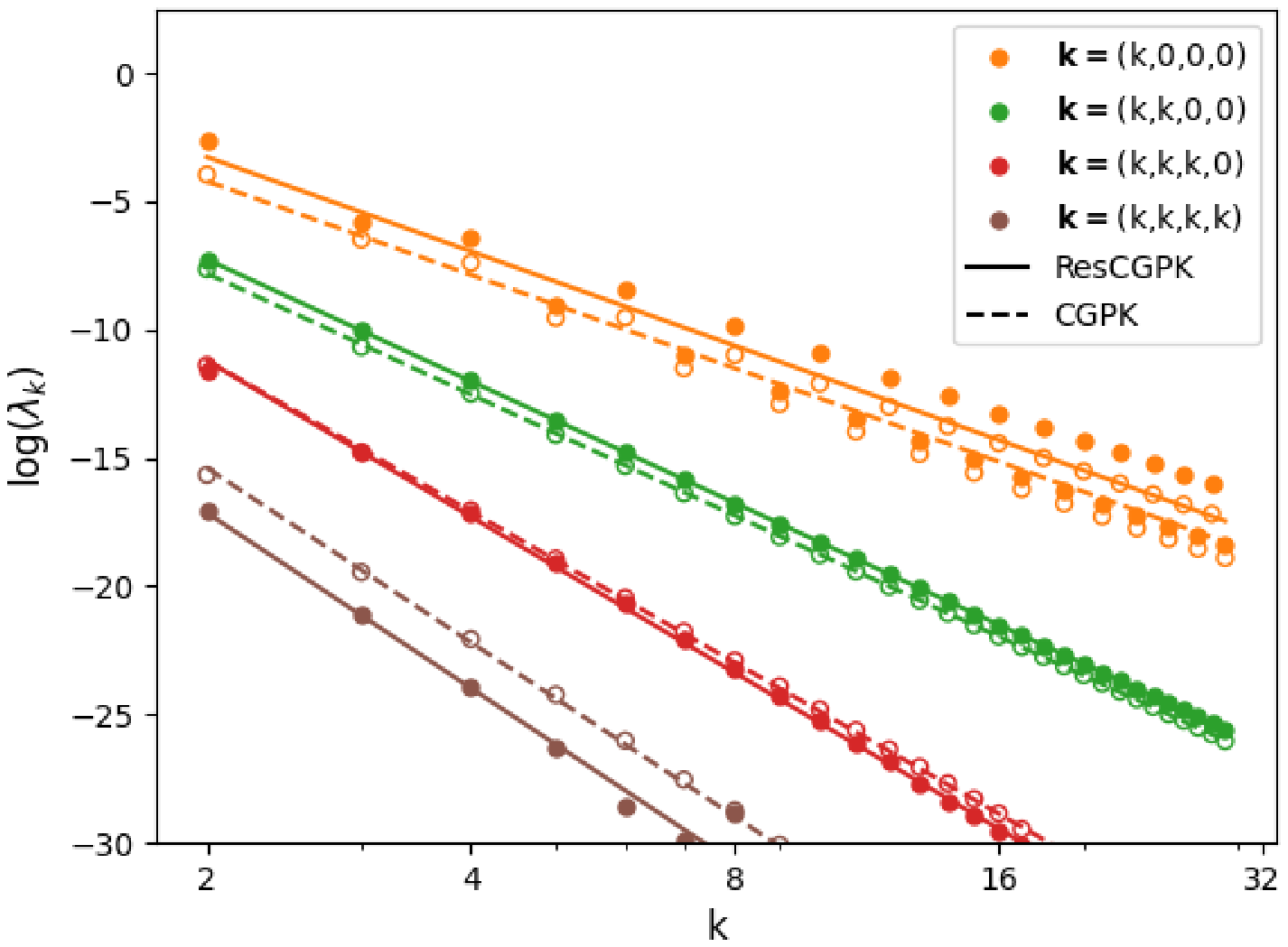}
     \end{subfigure}
     \hfill
     \begin{subfigure}[b]{0.45\textwidth}
         \centering
         \includegraphics[width=\textwidth]{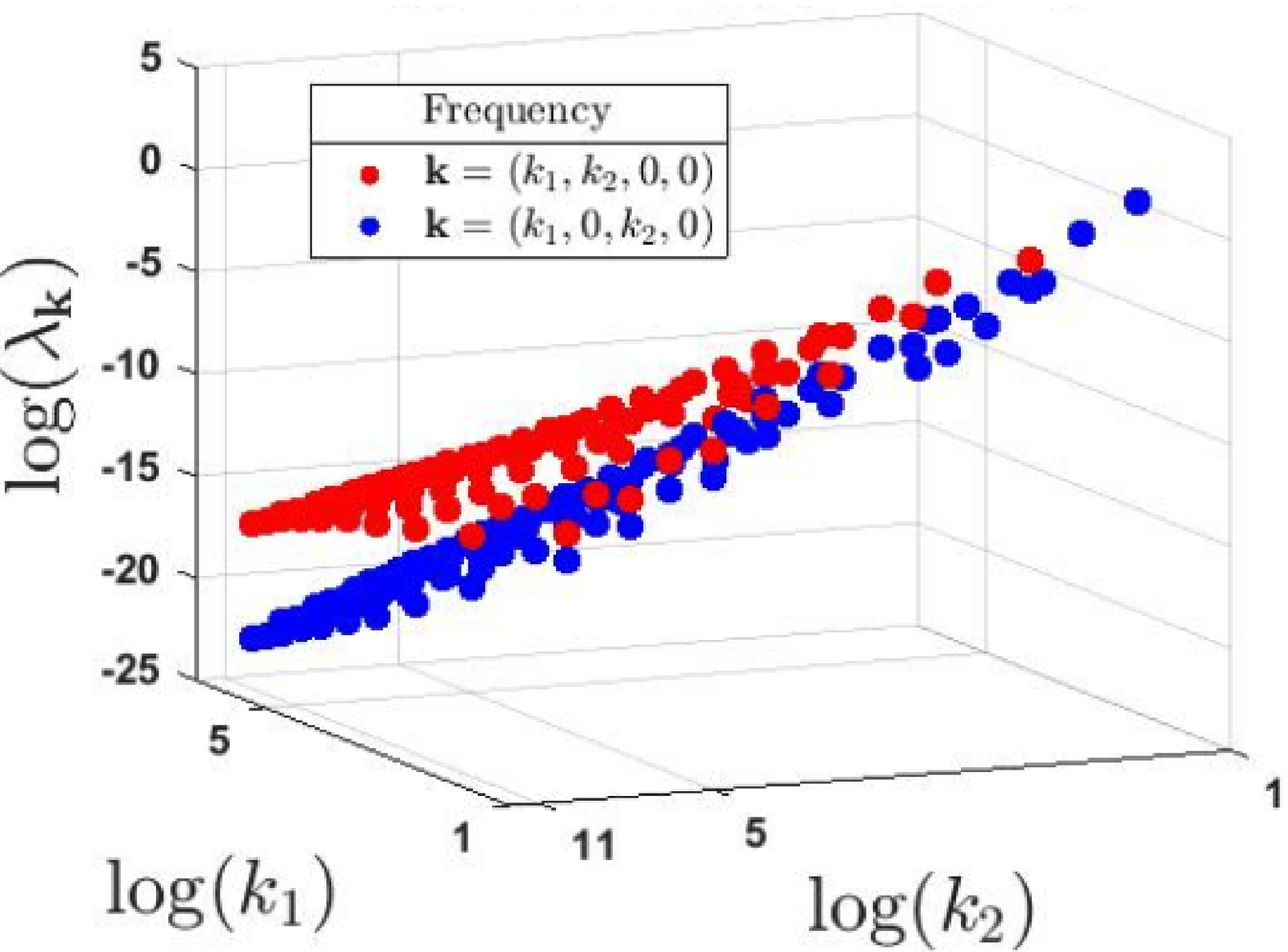}
     \end{subfigure}
        \caption{Left: The eigenvalues of ResCGPK (filled circles and solid lines) computed numerically for various eigenfunctions, compared to those of CGPK (empty circles and dashed lines). Here $L=3$, $q=2$, $d=4$ and the output head is the equivariant one. The slopes (respectively, -5.25, -6.8, -8.5, -9.8 for CGPK and -5.3, -6.84, -8.77, -9.85 for ResCGPK) approximate the exponent for each pattern. Notice that the slope increases for eigenfunctions involving more pixels. Right: Additional eigenvalues of ResCGPK. Notice that the eigenvalues for eigenfunction involving nearby pixels (in red) are larger compared to one involving farther pixels (in blue).}
        \label{fig:eigen}
\end{figure}

\begin{figure}[tb]
     \centering
     \begin{subfigure}[b]{0.14\textwidth}
         \centering
         \includegraphics[width=\textwidth]{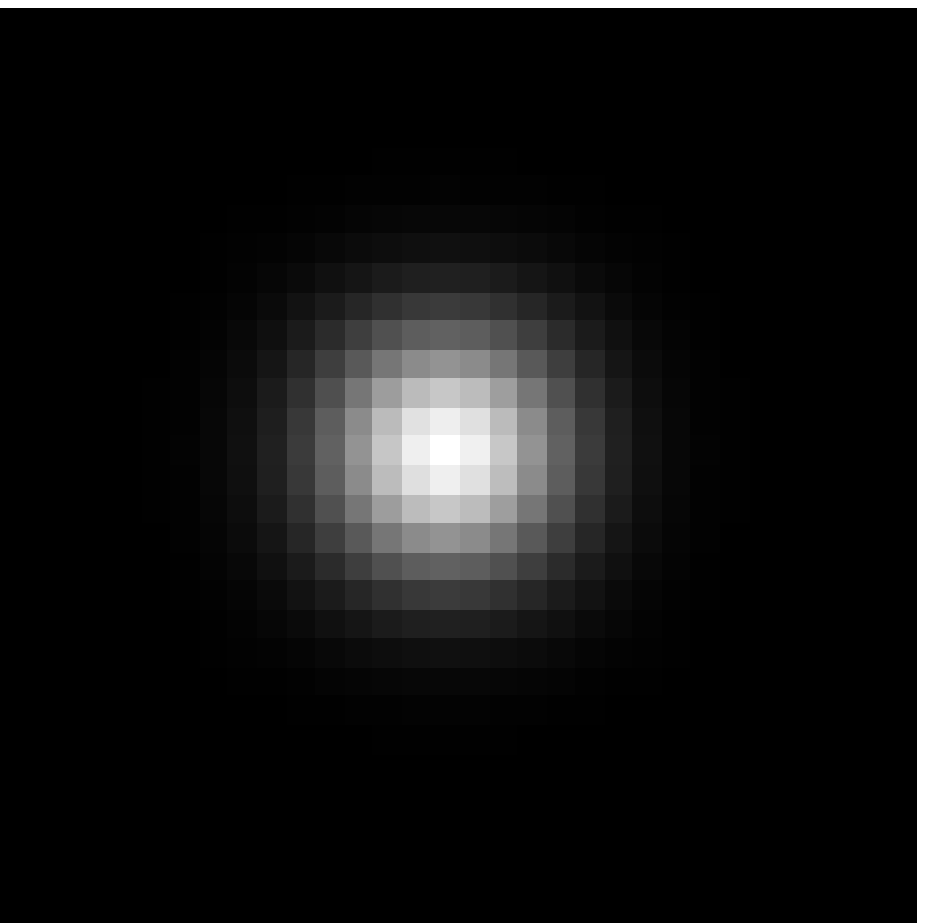}
         \caption{ResCNTK}
     \end{subfigure}
     \hfill
     \begin{subfigure}[b]{0.14\textwidth}
         \centering
         \includegraphics[width=\textwidth]{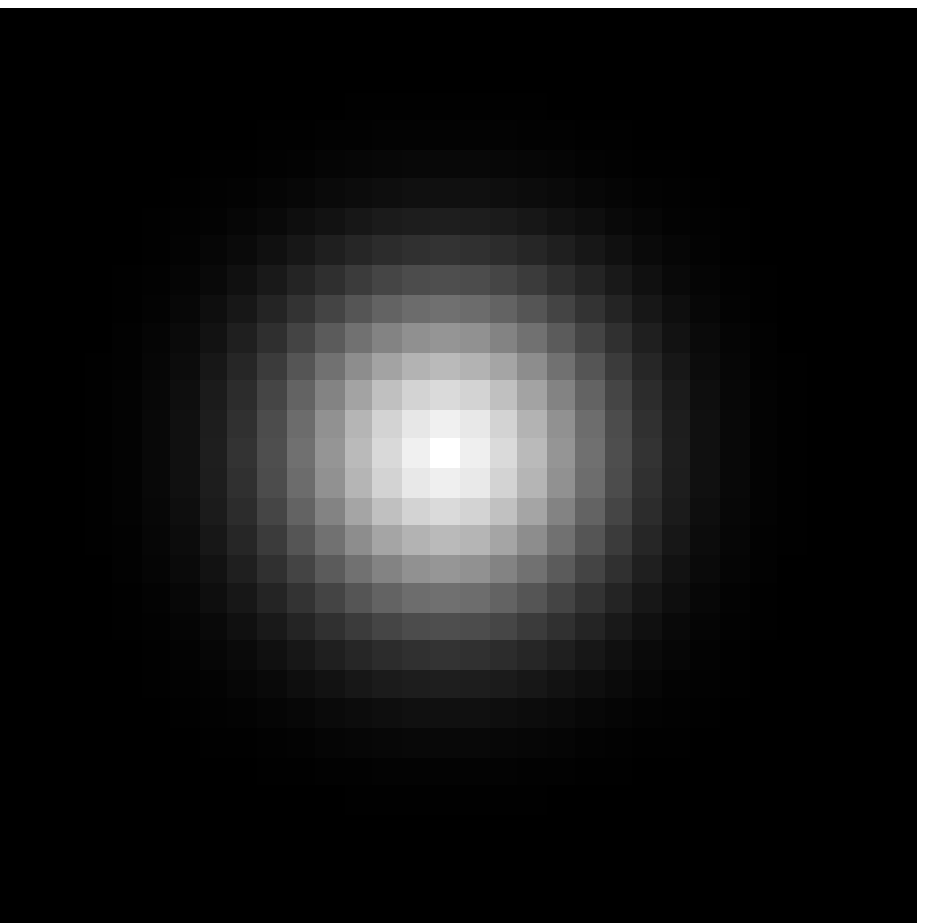}
         \caption{CNTK}
     \end{subfigure}
     \hfill
     \begin{subfigure}[b]{0.14\textwidth}
         \centering
         \includegraphics[width=\textwidth]{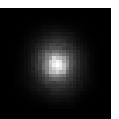}
         \caption{ResNet}
     \end{subfigure}
     \hfill
     \begin{subfigure}[b]{0.14\textwidth}
     \centering
     \includegraphics[width=\textwidth]{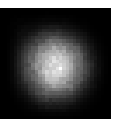}
         \caption{CNN}
     \end{subfigure}
     \hfill
    \begin{subfigure}[b]{0.07\textwidth}
    \centering
    \includegraphics[width=\textwidth,height=0.1\textheight]{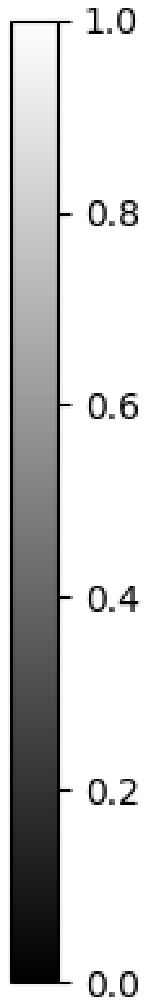}
    \end{subfigure}
        \caption{The effective receptive field of ResCNTK (left) compared to that of actual ResNet and to CNTK and CNN (i.e., no skip connections). We followed \protect\citep{luo2016understanding} in computing the ERF, where the networks are first trained on CIFAR-10. All values are re-scaled to the [0,1] interval. We used $L=8$ in all cases.}
        \label{fig:erfs}
\end{figure}

\subsection{Extension to $f^{\Tr}$ and $f^{\GAP}$}

Using \citep{geifman2022spectral}[Thm.\ 3.7], we can extend our analysis of equivariant kernels to trace and GAP kernels. In particular, for ResCNTK, the eigenfunctions of the trace kernel are a product of spherical harmonics. In addition, let $\lambda_{\kk}$ denote the eigenvalues of $\Theta_{\Eq}^{(L)}$, then the eigenvalues of $\Theta_{\Tr}^{(L)}$ are $\lambda_{\kk}^{\Tr}=\frac{1}{d}\sum_{i=0}^{d-1}\lambda_{\s_i\kk}$, i.e., average over all shifts of the frequency vector $\kk$. This implies that for the trace kernel, the eigenvalues (but not the eigenfunctions) are invariant to shift. For the GAP kernel, the eigenfunctions are $\frac{1}{\sqrt{d}}\sum_{i=0}^{d-1}\Y_{\s_i\kk,\s_i\jj}$, i.e., scaled shifted sums of spherical harmonic products. These eigenfunctions are shift invariant and generally span all shift invariant functions. The eigenvalues of the GAP kernel are identical to those of the Trace kernel. The eigenfunctions and eigenvalues of the trace and GAP ResCGPK are determined in a similar way. Finally, we note that the eigenvalues for the trace and GAP kernels decay at the same rate as their equivariant counterparts, and likewise they are biased in frequency and in locality. Moreover, while the equivariant kernel is biased to prefer functions that depend on the center of the receptive field (position biased), the trace and GAP kernels are biased to prefer functions that depend on nearby pixels.

\section{Stability at Large Depths}

\subsection{Decaying Balancing Parameter $\alpha$}

We next investigate the effects of skip connections in very deep networks. Here the setting of balancing parameter $\alpha$ between the skip and residual connection  \eqref{def:network3} plays a critical role. Previous work on residual, non-convolutional kernels \citet{huang2020deep,belfer2021spectral} proposed to use a balancing parameter of the form $\alpha=L^{-\gamma}$ for $\gamma\in(0.5,1]$, arguing that a decaying $\alpha$ contributes to the stability of the kernel for very deep architectures. However, below we prove that in this setting as the depth $L$ tends to infinity, ResCNTK converges to a simple dot-product, $\kk(\x,\z)=\x^T\z$, corresponding to a 1-layer, linear neural network, which may be considered degenerate. We subsequently further elaborate on the connection between this result and previous work and provide a more comprehensive discussion in Appendix \ref{app:depth_disc}.

\begin{theorem}\label{thm:decay}
Suppose $\alpha=L^{-\gamma}$ with $\gamma\in (0.5,1]$. Then, for any $\tbf\in [-1,1]^d$ it holds that $\overline{\Theta}_{\Eq}^{\left(L\right)}\left(\tbf\right)\underset{L\to\infty}{\longrightarrow} \overline{\Sigma}_{1,1}^{\left(1\right)}\left(\tbf\right)=t_1$ and likewise $\overline{\mathcal{K}}_{\Eq}^{\left(L\right)}\left(\tbf\right)\underset{L\to\infty}{\longrightarrow} \overline{\Sigma}_{1,1}^{\left(1\right)}\left(\tbf\right)=t_1$.
\end{theorem}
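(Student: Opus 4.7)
The plan is to establish both limits by telescoping on the depth recursion, exploiting the hypothesis $\alpha^2 = L^{-2\gamma}$ with $\gamma>1/2$, which makes the accumulated per-layer correction $L\alpha^2 = L^{1-2\gamma}$ vanish as $L\to\infty$. I will handle the ResCGPK case first and then reduce the ResCNTK case to it.

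For ResCGPK, set $\phi_L(\tbf) := \overline{\mathcal{K}}^{(L)}_{\Eq}(\tbf)$ and $\beta := \alpha^2/(1+\alpha^2)$. Rewriting the recursion in Theorem \ref{Thm:ResCGPK} gives
\[
\phi_L(\tbf) = (1-\beta)\,\phi_{L-1}(\tbf) + \frac{\beta}{q^2}\sum_{k,k'}\kappa_1\bigl(\phi_{L-1}(\s_{k+k'}\tbf)\bigr).
\]
Since $\phi_{L-1}$ and $\kappa_1$ both take values in $[-1,1]$, one gets $|\phi_L(\tbf) - \phi_{L-1}(\tbf)| \le 2\beta$ uniformly in $\tbf$. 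Telescoping yields $|\phi_L(\tbf) - \phi_1(\tbf)| \le 2(L-1)\beta$. A direct computation from the $L=1$ formula gives $\phi_1(\tbf) - t_1 = \beta\bigl(\tfrac{1}{q}\sum_k \kappa_1(t_{1+k}) - t_1\bigr)$, so $|\phi_1(\tbf) - t_1| \le 2\beta$. Combining, $|\phi_L(\tbf) - t_1| \le 2L\beta = O(L^{1-2\gamma}) \to 0$, which establishes the GPK limit.

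For ResCNTK, I split $\Theta^{(L)}_{\Eq} = \mathcal{K}^{(L)}_{\Eq} + R_L$ using Theorem \ref{thm:2} and aim to show $R_L(\tbf)/\Theta^{(L)}_{\Eq}(\one) \to 0$ uniformly in $\tbf$. The backward recursion for $P_j^{(l)}$, with the uniform bound $\dot{K} \le c_v c_w$, implies $\max_j P_j^{(l)} \le (1+C\alpha^2)^{L-l} \le e^{CL\alpha^2} \to 1$, so the $P$'s are uniformly bounded. The same $(1+C\alpha^2)^L$ accumulation bounds $\Sigma^{(l)}, K^{(l)}, \dot K^{(l)}$ uniformly in $l$ on the multi-sphere. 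Since $R_L$ is a sum of $Ld$ terms, each an $\alpha^2$-scaled product of these bounded quantities, $|R_L(\tbf)| = O(Ld\alpha^2) = O(L^{1-2\gamma})$. The denominator $\Theta^{(L)}_{\Eq}(\one) \ge \mathcal{K}^{(L)}_{\Eq}(\one)$ stays bounded away from zero (the normalized kernel equals $1$ at the diagonal), so $\overline{\Theta}^{(L)}_{\Eq}(\tbf) - \overline{\mathcal{K}}^{(L)}_{\Eq}(\tbf) \to 0$ uniformly, and combining with the ResCGPK limit gives $\overline{\Theta}^{(L)}_{\Eq}(\tbf) \to t_1$.

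The main obstacle is verifying that $\Sigma^{(l)}, K^{(l)}, \dot{K}^{(l)}$ and the back-propagated weights $P_j^{(l)}$ genuinely stay uniformly bounded in $L$, since residual updates $f^{(l)} = f^{(l-1)} + \alpha\,(\ldots)$ naively inflate variances by a factor $(1+\alpha^2)$ per layer. The resolution is the same mechanism that drives the ResCGPK telescoping: with $\alpha^2 = L^{-2\gamma}$ and $\gamma>1/2$ one has $(1+\alpha^2)^L \to 1$, so multiplicative accumulations across depth collapse. Propagating these bounds carefully through the backward recursion for $P_j^{(l)}$ is the routine but technical step that ties the argument together.
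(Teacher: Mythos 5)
Your proposal is correct and follows essentially the same route as the paper: telescope the normalized ResCGPK recursion to get $|\overline{\mathcal{K}}^{(L)}_{\Eq}(\tbf)-t_1|=O(L\alpha^2)=O(L^{1-2\gamma})$, then show the NTK--GPK gap is also $O(L\alpha^2)$ and vanishes after normalization since $\mathcal{K}^{(L)}_{\Eq}(\one)=(1+\alpha^2)^L/C_0$ stays bounded away from zero. The only cosmetic difference is that you bound the correction term $R_L$ summand-by-summand via the recursions for $P_j^{(l)}$ and $\Sigma^{(l)}$, whereas the paper evaluates $\Theta^{(L)}_{\Eq}(\one)-\mathcal{K}^{(L)}_{\Eq}(\one)$ in closed form and invokes maximality of the positive-definite kernel at $\tbf=\one$; both yield the same $O(L^{1-2\gamma})$ rate.
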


Clearly, this limit kernel, which corresponds to a linear network with no hierarchical features if undesired. A comparison to the previous work of \citet{huang2020deep,belfer2021spectral}, which addressed residual kernels for fully connected architectures, is due here. This previous work proved that FC-ResNTK converges when $L$ tends to infinity to a two-layer FC-NTK. They however made the additional assumption that the top-most layer is not trained. This assumption turned out to be critical to their result -- training the last layer yields a result analogous to ours, namely, that as $L$ tends to infinity FC-ResNTK converges to a simple dot product. Similarly, if we consider ResCNTK in which we do not train the last layer we will get that the limit kernel is the CNTK corresponding to a two-layer convolutional neural network. However, while a two-layer FC-NTK is universal, the set of functions produced by a two-layer CNTK is very limited; therefore, this limit kernel is also not desired. We conclude that the standard setting of $\alpha=1$ is preferable for convolutional architectures.




\subsection{The Condition Number of the ResCGPK Matrix with $\alpha=1$}

Next we investigate the properties of ResCGPK when the balancing factor is set to $\alpha=1$. For ResCGPK and CGPK and any training distribution, we use \emph{double-constant matrices} \citep{o2021double} to bound the condition numbers of their kernel matrices. We further show that with any depth, the lower bound for ResCGPK matrices is lower than that of CGPK matrices (and show empirically that these bounds are close to the actual condition numbers). \fv{\citet{lee2019wide, xiao2020disentangling, chen2021neural} argued that a smaller condition number of the NTK matrix implies that training the corresponding neural network with GD convergences faster.} Our analysis therefore indicates that GD with ResCGPK should generally be faster than GD with CGPK. This phenomenon may partly explain the advantage of residual networks over standard convolutional architectures. 


\fv{Recall that the condition number of a matrix $\A\succeq 0$ is defined as $\rho(\A):=\lambda_{\max}/\lambda_{\min}$}. Consider an $n \times n$ \emph{double-constant matrix} $\B_{\tilde{b},b}$ that includes $\tilde b$ in the diagonal entries and $b$ in each off-diagonal entry. 
The eigenvalues of $\B_{\tilde{b},b}$ are $\lambda_1=\tilde{b}-b+nb$ and $\lambda_2=...=\lambda_n=\tilde{b}-b$. Suppose $\tilde{b}=1,0<b\leq 1$, then $\B_{1,b}$ is positive semi-definite and its condition number is $\rho(\B_{1,b})=1+\frac{nb}{1-b}$. This condition number diverges when either $b=1$ or $n$ tends to infinity.
The following lemma relates the condition numbers of kernel matrices with that of double-constant matrices.

\begin{lemma} \label{lemma:dcb}
Let $\A\in\R^{n\times n}$ ($n\geq 2)$ be a normalized kernel matrix with $\sum_{i\neq j}\A_{ij}\geq 0$. Let $\B(\A)=\B_{1,b}$ with $b=\frac{1}{n(n-1)}\sum_{i\neq j}\A_{ij}$ and $\epsilon=\sup_{i}\sum_{j\neq i}\abs{\A_{ij}-\B(\A)_{ij}}$. Then,
\begin{enumerate}
    \item $\rho\left(\B(\A)\right) \leq \rho(\A)$.
    \item  If $\epsilon<\lambda_{\min}(\B(\A))$ then $\rho(\A) \leq \frac{\lambda_{\max}(\B(\A))+\epsilon}{\lambda_{\min}(\B(\A))-\epsilon} $,
\end{enumerate}
where $\lambda_{\max}$ and $\lambda_{\min}$ denote the maximal and minimal eigenvalues of $\B(\A)$.
\end{lemma}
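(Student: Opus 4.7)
The plan is to prove both parts by exploiting that $\A$ and $\B(\A)$ share the same diagonal (all ones) and the same \emph{total} off-diagonal mass, so that $\A - \B(\A)$ is a symmetric, zero-diagonal perturbation whose row sums are controlled by $\epsilon$. For reference I will use that $\B(\A)=\B_{1,b}$ has spectrum $\lambda_{\max}(\B(\A))=1+(n-1)b$ (eigenvector $\mathbf{1}$) and $\lambda_{\min}(\B(\A))=1-b$ (with multiplicity $n-1$, on $\mathbf{1}^\perp$), and that $b\ge 0$ by the hypothesis $\sum_{i\neq j}\A_{ij}\ge 0$, while $b\le 1$ since a normalized PSD kernel has $|\A_{ij}|\le 1$.

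For part (1), I would use the Rayleigh and trace characterizations of extreme eigenvalues. First, computing
\[
\mathbf{1}^{T}\A\,\mathbf{1} \;=\; n + \sum_{i\neq j}\A_{ij} \;=\; n\bigl(1+(n-1)b\bigr),
\]
the Rayleigh quotient at $\mathbf{1}$ gives $\lambda_{\max}(\A)\ge 1+(n-1)b=\lambda_{\max}(\B(\A))$. Second, since $\tr(\A)=n$ and one eigenvalue is at least $1+(n-1)b$, the remaining $n-1$ eigenvalues sum to at most $(n-1)(1-b)$, so $\lambda_{\min}(\A)\le 1-b=\lambda_{\min}(\B(\A))$. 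Combining the two inequalities,
\[
\rho(\A)\;=\;\frac{\lambda_{\max}(\A)}{\lambda_{\min}(\A)}\;\ge\;\frac{1+(n-1)b}{1-b}\;=\;\rho(\B(\A)).
\]
(If $\lambda_{\min}(\A)=0$ the claim is vacuous.)

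For part (2), I would apply Weyl's inequality to $\A=\B(\A)+(\A-\B(\A))$. The matrix $\A-\B(\A)$ is symmetric with zero diagonal, and by the definition of $\epsilon$ its maximum absolute row sum satisfies $\|\A-\B(\A)\|_\infty\le \epsilon$. For a symmetric matrix the spectral norm is bounded by the $\infty$-norm (Gershgorin, since all Gershgorin discs are centered at $0$ with radius at most $\epsilon$), hence $\|\A-\B(\A)\|_2\le \epsilon$. Weyl's inequalities then give
\[
\lambda_{\max}(\A)\le \lambda_{\max}(\B(\A))+\epsilon,\qquad \lambda_{\min}(\A)\ge \lambda_{\min}(\B(\A))-\epsilon,
\]
and under the hypothesis $\epsilon<\lambda_{\min}(\B(\A))$ the denominator is positive, so dividing yields the desired upper bound on $\rho(\A)$.

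The argument is essentially routine linear algebra, so there is no serious obstacle; the only mildly subtle point is checking that the direction of the Rayleigh/trace bounds in part (1) is not reversed, which is precisely why the hypothesis $\sum_{i\ne j}\A_{ij}\ge 0$ (equivalently $b\ge 0$) is needed to know that the $\mathbf{1}$-direction gives the \emph{largest} eigenvalue of $\B(\A)$ and that the trace argument then pins down the smallest.
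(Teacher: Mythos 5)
Your proof is correct. Part (2) is essentially identical to the paper's argument: both treat $\A-\B(\A)$ as a symmetric zero-diagonal perturbation, bound its eigenvalues by $\epsilon$ via Gershgorin, and conclude with Weyl's inequalities. Part (1), however, takes a genuinely different route. The paper invokes an external result (\citet{marsli2015bounds}, Thm.~4.4) giving $\rho(\A)\ge \gamma_2(\A)/\gamma_1(\A)$ for certain row-sum/trace quantities $\gamma_1,\gamma_2$, and then identifies those quantities with $\lambda_{\min}(\B(\A))$ and $\lambda_{\max}(\B(\A))$ under the hypotheses on $\A$. You instead reprove the needed special case from scratch: the Rayleigh quotient at $\mathbf{1}$ gives $\lambda_{\max}(\A)\ge 1+(n-1)b$, and the trace identity $\tr(\A)=n$ then forces the remaining $n-1$ eigenvalues to average at most $1-b$, so $\lambda_{\min}(\A)\le 1-b$. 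This is more elementary and self-contained, at the cost of being specific to the unit-diagonal situation (which is all that is needed here); the paper's citation buys generality it never uses. Your handling of the degenerate cases ($\lambda_{\min}(\A)=0$, and implicitly $b=1$, where $\lambda_{\min}(\A)\le 1-b=0$ together with positive semi-definiteness forces both condition numbers to be infinite) is also sound.
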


The following theorem uses double-constant matrices to compare kernel matrices produced by ResCGPK and those produced by CGPK with no skip connections.

\begin{theorem}\text{ }\label{thm:cond}
Let $\bar{K}_{\text{ResCGPK}}^{(L)}$ and $\bar{K}_{\text{CGPK}}^{(L)}$ respectively denote kernel matrices for the normalized trace kernels ResCGPK and CGPK of depth $L$. Let $\B\left(K\right)$ be a double-constant matrix defined for a matrix $K$ as in Lemma~\ref{lemma:dcb}. Then,
\begin{enumerate}
    \item $\norm{\bar{K}_{\text{ResCGPK}}^{(L)} - \B\left(\bar{K}_{\text{ResCGPK}}^{(L)}\right)}_1 \underset{L\to\infty}{\longrightarrow}0$ and $\norm{\bar{K}_{\text{CGPK}}^{(L)} - \B\left(\bar{K}_{\text{CGPK}}^{(L)}\right)}_1 \underset{L\to\infty}{\longrightarrow}0$.
    \item $\rho\left(\B\left(\bar{K}_{\text{ResCGPK}}^{(L)}\right)\right) \underset{L\to\infty}{\longrightarrow}\infty$ and $\rho\left(\B\left(\bar{K}_{\text{CGPK}}^{(L)}\right)\right) \underset{L\to\infty}{\longrightarrow}\infty$.
    \item $\exists L_0\in\mathbb{N} \st \forall L\geq L_0$, $\rho\left(\B\left(\bar{K}_{\text{ResCGPK}}^{(L)}\right) \right) < \rho\left(\B\left(\bar{K}_{\text{CGPK}}^{(L)}\right) \right)$.
\end{enumerate}
\end{theorem}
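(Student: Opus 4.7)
The plan is to handle the three parts in sequence, working from the recursive formula for $\overline{\mathcal{K}}^{(L)}_{\Eq}$ in Theorem~\ref{Thm:ResCGPK} (with $\alpha=1$), the analogous CGPK recursion obtained by removing the skip term, and the shift-averaging identity $\overline{\mathcal{K}}_{\Tr}^{(L)}(\x,\z)=\frac{1}{d}\sum_{j=1}^{d}\overline{\mathcal{K}}_{\Eq}^{(L)}(\s_j\x,\s_j\z)$. Throughout, the number of training points $n$ is fixed while the depth $L$ is the asymptotic variable.

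For part~1, I would first show that for any two distinct inputs $\x,\z\in\MS$, the entry $\overline{\mathcal{K}}^{(L)}_{\Eq}(\x,\z)\to 1$ as $L\to\infty$ for both the residual and non-residual kernels. This rests on three elementary properties of $\kappa_1$ on $[-1,1]$: it is strictly increasing, it satisfies $\kappa_1(u)>u$ on $[-1,1)$, and $1$ is its unique fixed point. Iteration of the CGPK recursion $u\mapsto \frac{1}{q^2}\sum_{k,k'}\kappa_1(\cdot)$ thus drives each off-diagonal to $1$; and since the ResCGPK update is the convex combination $u\mapsto\frac{1}{2}(u+\frac{1}{q^2}\sum_{k,k'}\kappa_1(\cdot))$, it inherits the same unique attractor. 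Shift-averaging preserves convergence, so every off-diagonal entry of $\bar K^{(L)}_{\Tr}$ tends to $1$. Because the training set is finite, this forces both $b_L\to 1$ and $\max_{i\neq j}|\bar K^{(L)}_{ij}-b_L|\to 0$, yielding $\norm{\bar K^{(L)}-\B(\bar K^{(L)})}_1\to 0$. Part~2 follows immediately from $b_L\to 1^-$ and the closed-form $\rho(\B_{1,b})=1+nb/(1-b)$.

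The substance of the theorem is part~3, which compares the rates at which $b_L^{\mathrm{ResCGPK}}$ and $b_L^{\mathrm{CGPK}}$ approach $1$. I would carry out the local analysis by expanding $\kappa_1$ near $u=1$: writing $u=1-\epsilon$ and expanding $\sqrt{1-u^2}$ and $\arccos(u)$ in powers of $\sqrt{\epsilon}$ gives
\[
\kappa_1(1-\epsilon) = 1 - \epsilon + c\,\epsilon^{3/2} + O(\epsilon^{2}), \qquad c = \tfrac{2\sqrt{2}}{3\pi} > 0.
\]
Setting $\epsilon^{\mathrm{CGPK}}_L = 1 - b_L^{\mathrm{CGPK}}$ and $\epsilon^{\mathrm{Res}}_L = 1 - b_L^{\mathrm{Res}}$, the scalar CGPK recursion yields $\epsilon^{\mathrm{CGPK}}_L = \epsilon^{\mathrm{CGPK}}_{L-1} - c\,(\epsilon^{\mathrm{CGPK}}_{L-1})^{3/2} + O((\epsilon^{\mathrm{CGPK}}_{L-1})^2)$, whereas the $\frac{1}{2}(u+\kappa_1(u))$ structure of the residual recursion exactly halves the decrement: $\epsilon^{\mathrm{Res}}_L = \epsilon^{\mathrm{Res}}_{L-1} - \tfrac{c}{2}(\epsilon^{\mathrm{Res}}_{L-1})^{3/2} + O((\epsilon^{\mathrm{Res}}_{L-1})^2)$. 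A continuous-time comparison with the ODE $\dot\epsilon = -\tilde c\,\epsilon^{3/2}$, whose solution satisfies $\epsilon(L)\sim 4/(\tilde c L)^2$, then gives $\epsilon^{\mathrm{Res}}_L \sim 4\,\epsilon^{\mathrm{CGPK}}_L$ at large $L$; in particular $\epsilon^{\mathrm{Res}}_L > \epsilon^{\mathrm{CGPK}}_L$ for all $L \geq L_0$. Since $b\mapsto 1+nb/(1-b)$ is strictly increasing on $[0,1)$, this delivers part~3.

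The main obstacle is transferring this scalar analysis to the genuine convolutional recursion, in which $\bar K^{(L)}_{ij}$ depends on \emph{shifted} off-diagonals $\bar K^{(L-1)}_{\sigma(i,j)}$ so that distinct off-diagonals interact and the trace kernel involves an additional average over cyclic shifts of the inputs. I plan to address this by sandwiching $\epsilon^{\max}_L=\max_{i\neq j}(1-\bar K^{(L)}_{ij})$ and $\epsilon^{\min}_L=\min_{i\neq j}(1-\bar K^{(L)}_{ij})$ between two scalar recursions of the form above, invoking the monotonicity of $\kappa_1$, and using the uniform convergence from part~1 to show that $\epsilon^{\max}_L/\epsilon^{\min}_L$ remains bounded. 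This legitimizes replacing the average $b_L$ by a scalar surrogate in the leading-order analysis and reduces part~3 to the scalar comparison sketched above.
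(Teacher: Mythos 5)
Your parts~1 and~2 follow essentially the paper's route: the paper's Lemma~\ref{lemma:id1} also drives every off-diagonal entry to $1$ using that $\kappa_1$ is increasing with unique fixed point $1$ and $\kappa_1(u)>u$ on $[-1,1)$; the only difference is that the paper handles the coupling between shifted entries by passing immediately to the shift-average and applying Jensen's inequality (convexity of $\kappa_1$), which reduces the vector recursion to the scalar inequality $a_L\geq a_{L-1}+\tfrac{\alpha^2}{1+\alpha^2}(\kappa_1(a_{L-1})-a_{L-1})$. For part~3, however, you take a genuinely different route. The paper proves an \emph{exact, all-$L$} inequality (Lemma~\ref{lem:k_conv}): by induction on $L$, using only monotonicity of $\kappa_1$ and the convex-combination structure of the residual update, it shows $\overline{\mathcal{K}}_{\text{CGPK-Tr}}^{(L)}(\tbf)-\overline{\mathcal{K}}_{\Tr}^{(L)}(\tbf)\geq\sum_{l=1}^{L}\frac{\mu^{(l)}(\tbf)-\mu^{(l-1)}(\tbf)}{(1+\alpha^2)^{L-l+1}}>0$ for $\tbf\neq\one$, so $b^{\mathrm{CGPK}}_L>b^{\mathrm{Res}}_L$ for every $L$, and $L_0$ is only needed to make the off-diagonal entries non-negative so that $b\mapsto 1+nb/(1-b)$ applies and is increasing. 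Your local expansion $\kappa_1(1-\epsilon)=1-\epsilon+\tfrac{2\sqrt2}{3\pi}\epsilon^{3/2}+O(\epsilon^2)$ (your sign is the correct one) and the ODE comparison giving $\epsilon^{\mathrm{Res}}_L\sim 4\,\epsilon^{\mathrm{CGPK}}_L$ is more quantitative -- it predicts the asymptotic factor between the two condition numbers -- but it only yields the inequality asymptotically and at the cost of a delicate transfer step that the paper's global induction avoids entirely.

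That transfer step is where your proposal has a genuine gap. Sandwiching $\epsilon^{\max}_L$ and $\epsilon^{\min}_L$ between scalar recursions and showing that $\epsilon^{\max}_L/\epsilon^{\min}_L$ \emph{remains bounded} is not enough to conclude. If the ratio is only bounded by some constant $C>1$, the sandwich degrades the effective coefficient in the $\epsilon^{3/2}$ term by a factor $C^{3/2}$ in one direction and $C^{-3/2}$ in the other, so the resulting two-sided asymptotics for $\epsilon^{\mathrm{Res}}_L$ and $\epsilon^{\mathrm{CGPK}}_L$ overlap unless $C$ is close enough to $1$ (roughly $C^{6}<4$) to preserve the factor-$4$ separation; and establishing even boundedness -- let alone $\epsilon^{\max}_L/\epsilon^{\min}_L\to1$ -- for the coupled convolutional recursion is itself a nontrivial claim you have not argued. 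The clean repair is the paper's: work from the start with the shift-averaged quantity (which is exactly what $b_L$ is for the trace kernel), use convexity of $\kappa_1$ to control the average of $\kappa_1$ by $\kappa_1$ of the average, and run a direct induction comparing the residual and non-residual recursions, which gives a strict inequality for every depth without any asymptotic expansion.
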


\begin{figure}[t]
     \centering
     \includegraphics[width=0.38\textwidth]{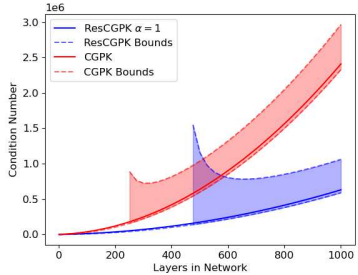}
    \caption{The condition number of ResCGPK-Tr (solid blue) as a function of depth compared to that of CGPK-Tr (solid red) and the corresponding lower and upper bounds (dashed lines) computed with $n=100$.
    }
\label{fig:condition_num}
\end{figure}

The theorem establishes that, while the condition numbers of both $\B\left(\bar{K}_{ResCGPK}^{(L)}\right)$ and $\B\left(\bar{K}_{\text{CGPK}}^{(L)}\right)$ diverge as $L \to \infty$, the condition number of $\B\left(\bar{K}_{ResCGPK}^{(L)}\right)$ is smaller than that of $\B\left(\bar{K}_{\text{CGPK}}^{(L)}\right)$ for all $L>L_0$. ($L_0$ is the minimal $L$ \st the entries of the double constant matrices are non-negative. We notice in practice that $L_0 \approx 2$.) We can therefore use Lemma \ref{lemma:dcb} to derive approximate bounds for the condition numbers obtained with ResCGPK and CGPK. Figure~\ref{fig:condition_num} indeed shows that the condition number of the CGPK matrix diverges faster than that of ResCGPK and is significantly larger at any finite depth $L$. The approximate bounds, particularly the lower bounds, closely match the actual condition numbers produced by the kernels. (We note that with training sampled from a uniform distribution on the multi-sphere, the upper bound can be somewhat improved. In this case, the constant vector is the eigenvector of maximal eigenvalue for both $\A$ and $\B(\A)$, and thus the rows of $\A$ sum to the same value, yielding $\rho(\A) \leq \frac{\lambda_{\max}(\B(\A))}{\lambda_{\min}(\B(\A))-\epsilon}$ with $\epsilon=\frac{1}{n} \norm{\A-\B(\A)}_1$. We used this upper bound in our plot in Figure~\ref{fig:condition_num}.)

To the best of our knowledge, this is the first paper that establishes a relationship between skip connections and the condition number of the kernel matrix.

\section{Conclusion}

We derived formulas for the Gaussian process and neural tangent kernels associated with convolutional residual networks, analyzed their spectra, and provided bounds on their implied condition numbers. Our results indicate that over-parameterized residual networks are subject to both frequency and locality bias, and that they can be trained faster than standard convolutional networks. In future work, we hope to gain further insight by tightening our bounds. We further intend to apply our analysis of the condition number of kernel matrices to characterize the speed of training in various other architectures.

\section*{Acknowledgement}
This research was partially supported by the Israeli Council for Higher Education (CHE) via the Weizmann Data Science Research Center and by research grants from the Estate of Tully and Michele Plesser and the Anita James Rosen Foundation.

\bibliography{iclr2023_conference}

\begin{thebibliography}{43}
\providecommand{\natexlab}[1]{#1}
\providecommand{\url}[1]{\texttt{#1}}
\expandafter\ifx\csname urlstyle\endcsname\relax
  \providecommand{\doi}[1]{doi: #1}\else
  \providecommand{\doi}{doi: \begingroup \urlstyle{rm}\Url}\fi

\bibitem[Allen-Zhu et~al.(2019)Allen-Zhu, Li, and Song]{allen2019convergence}
Zeyuan Allen-Zhu, Yuanzhi Li, and Zhao Song.
\newblock A convergence theory for deep learning via over-parameterization.
\newblock In \emph{International Conference on Machine Learning}, pp.\
  242--252. PMLR, 2019.

\bibitem[Arora et~al.(2019{\natexlab{a}})Arora, Du, Hu, Li, Salakhutdinov, and
  Wang]{arora2019exact}
Sanjeev Arora, Simon~S Du, Wei Hu, Zhiyuan Li, Russ~R Salakhutdinov, and
  Ruosong Wang.
\newblock On exact computation with an infinitely wide neural net.
\newblock \emph{Advances in Neural Information Processing Systems}, 32,
  2019{\natexlab{a}}.

\bibitem[Arora et~al.(2019{\natexlab{b}})Arora, Du, Li, Salakhutdinov, Wang,
  and Yu]{arora2019harnessing}
Sanjeev Arora, Simon~S Du, Zhiyuan Li, Ruslan Salakhutdinov, Ruosong Wang, and
  Dingli Yu.
\newblock Harnessing the power of infinitely wide deep nets on small-data
  tasks.
\newblock \emph{arXiv preprint arXiv:1910.01663}, 2019{\natexlab{b}}.

\bibitem[Balduzzi et~al.(2017)Balduzzi, Frean, Leary, Lewis, Ma, and
  McWilliams]{balduzzi2017shattered}
David Balduzzi, Marcus Frean, Lennox Leary, JP~Lewis, Kurt Wan-Duo Ma, and
  Brian McWilliams.
\newblock The shattered gradients problem: If resnets are the answer, then what
  is the question?
\newblock In \emph{International Conference on Machine Learning}, pp.\
  342--350. PMLR, 2017.

\bibitem[Basri et~al.(2020)Basri, Galun, Geifman, Jacobs, Kasten, and
  Kritchman]{basri2020frequency}
Ronen Basri, Meirav Galun, Amnon Geifman, David Jacobs, Yoni Kasten, and Shira
  Kritchman.
\newblock Frequency bias in neural networks for input of non-uniform density.
\newblock In \emph{International Conference on Machine Learning}, pp.\
  685--694. PMLR, 2020.

\bibitem[Belfer et~al.(2021)Belfer, Geifman, Galun, and
  Basri]{belfer2021spectral}
Yuval Belfer, Amnon Geifman, Meirav Galun, and Ronen Basri.
\newblock Spectral analysis of the neural tangent kernel for deep residual
  networks.
\newblock \emph{arXiv preprint arXiv:2104.03093}, 2021.

\bibitem[Bietti \& Bach(2020)Bietti and Bach]{bietti2020deep}
Alberto Bietti and Francis Bach.
\newblock Deep equals shallow for relu networks in kernel regimes.
\newblock \emph{arXiv preprint arXiv:2009.14397}, 2020.

\bibitem[Bietti \& Mairal(2019)Bietti and Mairal]{bietti2019inductive}
Alberto Bietti and Julien Mairal.
\newblock On the inductive bias of neural tangent kernels.
\newblock \emph{Advances in Neural Information Processing Systems}, 32, 2019.

\bibitem[Cagnetta et~al.(2022)Cagnetta, Favero, and Wyart]{cagnetta2022wide}
Francesco Cagnetta, Alessandro Favero, and Matthieu Wyart.
\newblock How wide convolutional neural networks learn hierarchical tasks.
\newblock \emph{arXiv preprint arXiv:2208.01003}, 2022.

\bibitem[Chen \& Xu(2020)Chen and Xu]{chen2020deep}
Lin Chen and Sheng Xu.
\newblock Deep neural tangent kernel and laplace kernel have the same rkhs.
\newblock \emph{arXiv preprint arXiv:2009.10683}, 2020.

\bibitem[Chen et~al.(2021)Chen, Gong, and Wang]{chen2021neural}
Wuyang Chen, Xinyu Gong, and Zhangyang Wang.
\newblock Neural architecture search on imagenet in four gpu hours: A
  theoretically inspired perspective.
\newblock \emph{arXiv preprint arXiv:2102.11535}, 2021.

\bibitem[Chizat et~al.(2019)Chizat, Oyallon, and Bach]{chizat2019lazy}
Lenaic Chizat, Edouard Oyallon, and Francis Bach.
\newblock On lazy training in differentiable programming.
\newblock \emph{Advances in Neural Information Processing Systems}, 32, 2019.

\bibitem[Cho \& Saul(2009)Cho and Saul]{cho2009kernel}
Youngmin Cho and Lawrence Saul.
\newblock Kernel methods for deep learning.
\newblock \emph{Advances in neural information processing systems}, 22, 2009.

\bibitem[Daniely et~al.(2016)Daniely, Frostig, and Singer]{daniely2016toward}
Amit Daniely, Roy Frostig, and Yoram Singer.
\newblock Toward deeper understanding of neural networks: The power of
  initialization and a dual view on expressivity.
\newblock \emph{Advances in neural information processing systems}, 29, 2016.

\bibitem[Du et~al.(2019)Du, Lee, Li, Wang, and Zhai]{du2019gradient}
Simon Du, Jason Lee, Haochuan Li, Liwei Wang, and Xiyu Zhai.
\newblock Gradient descent finds global minima of deep neural networks.
\newblock In \emph{International conference on machine learning}, pp.\
  1675--1685. PMLR, 2019.

\bibitem[Favero et~al.(2021)Favero, Cagnetta, and Wyart]{favero2021locality}
Alessandro Favero, Francesco Cagnetta, and Matthieu Wyart.
\newblock Locality defeats the curse of dimensionality in convolutional
  teacher-student scenarios.
\newblock \emph{Advances in Neural Information Processing Systems},
  34:\penalty0 9456--9467, 2021.

\bibitem[Flajolet \& Sedgewick(2009)Flajolet and
  Sedgewick]{flajolet2009analytic}
Philippe Flajolet and Robert Sedgewick.
\newblock \emph{Analytic combinatorics}.
\newblock cambridge University press, 2009.

\bibitem[Geifman et~al.(2020)Geifman, Yadav, Kasten, Galun, Jacobs, and
  Ronen]{geifman2020similarity}
Amnon Geifman, Abhay Yadav, Yoni Kasten, Meirav Galun, David Jacobs, and Basri
  Ronen.
\newblock On the similarity between the laplace and neural tangent kernels.
\newblock \emph{Advances in Neural Information Processing Systems},
  33:\penalty0 1451--1461, 2020.

\bibitem[Geifman et~al.(2022)Geifman, Galun, Jacobs, and
  Basri]{geifman2022spectral}
Amnon Geifman, Meirav Galun, David Jacobs, and Ronen Basri.
\newblock On the spectral bias of convolutional neural tangent and gaussian
  process kernels.
\newblock \emph{arXiv preprint arXiv:2203.09255}, 2022.

\bibitem[He et~al.(2016)He, Zhang, Ren, and Sun]{he2016deep}
Kaiming He, Xiangyu Zhang, Shaoqing Ren, and Jian Sun.
\newblock Deep residual learning for image recognition.
\newblock In \emph{Proceedings of the IEEE conference on computer vision and
  pattern recognition}, pp.\  770--778, 2016.

\bibitem[Huang et~al.(2020)Huang, Wang, Tao, and Zhao]{huang2020deep}
Kaixuan Huang, Yuqing Wang, Molei Tao, and Tuo Zhao.
\newblock Why do deep residual networks generalize better than deep feedforward
  networks?---a neural tangent kernel perspective.
\newblock \emph{Advances in neural information processing systems},
  33:\penalty0 2698--2709, 2020.

\bibitem[Jacot et~al.(2018)Jacot, Gabriel, and Hongler]{jacot2018neural}
Arthur Jacot, Franck Gabriel, and Cl{\'e}ment Hongler.
\newblock Neural tangent kernel: Convergence and generalization in neural
  networks.
\newblock \emph{Advances in neural information processing systems}, 31, 2018.

\bibitem[Kanagawa et~al.(2018)Kanagawa, Hennig, Sejdinovic, and
  Sriperumbudur]{kanagawa2018gaussian}
Motonobu Kanagawa, Philipp Hennig, Dino Sejdinovic, and Bharath~K
  Sriperumbudur.
\newblock Gaussian processes and kernel methods: A review on connections and
  equivalences.
\newblock \emph{arXiv preprint arXiv:1807.02582}, 2018.

\bibitem[Krizhevsky et~al.(2017)Krizhevsky, Sutskever, and
  Hinton]{krizhevsky2017imagenet}
Alex Krizhevsky, Ilya Sutskever, and Geoffrey~E Hinton.
\newblock Imagenet classification with deep convolutional neural networks.
\newblock \emph{Communications of the ACM}, 60\penalty0 (6):\penalty0 84--90,
  2017.

\bibitem[Lee et~al.(2017)Lee, Bahri, Novak, Schoenholz, Pennington, and
  Sohl-Dickstein]{lee2017deep}
Jaehoon Lee, Yasaman Bahri, Roman Novak, Samuel~S Schoenholz, Jeffrey
  Pennington, and Jascha Sohl-Dickstein.
\newblock Deep neural networks as gaussian processes.
\newblock \emph{arXiv preprint arXiv:1711.00165}, 2017.

\bibitem[Lee et~al.(2019)Lee, Xiao, Schoenholz, Bahri, Novak, Sohl-Dickstein,
  and Pennington]{lee2019wide}
Jaehoon Lee, Lechao Xiao, Samuel Schoenholz, Yasaman Bahri, Roman Novak, Jascha
  Sohl-Dickstein, and Jeffrey Pennington.
\newblock Wide neural networks of any depth evolve as linear models under
  gradient descent.
\newblock \emph{Advances in neural information processing systems}, 32, 2019.

\bibitem[Lee et~al.(2020)Lee, Schoenholz, Pennington, Adlam, Xiao, Novak, and
  Sohl-Dickstein]{lee2020finite}
Jaehoon Lee, Samuel Schoenholz, Jeffrey Pennington, Ben Adlam, Lechao Xiao,
  Roman Novak, and Jascha Sohl-Dickstein.
\newblock Finite versus infinite neural networks: an empirical study.
\newblock \emph{Advances in Neural Information Processing Systems},
  33:\penalty0 15156--15172, 2020.

\bibitem[Luo et~al.(2016)Luo, Li, Urtasun, and Zemel]{luo2016understanding}
Wenjie Luo, Yujia Li, Raquel Urtasun, and Richard Zemel.
\newblock Understanding the effective receptive field in deep convolutional
  neural networks.
\newblock \emph{Advances in neural information processing systems}, 29, 2016.

\bibitem[Marsli(2015)]{marsli2015bounds}
Rachid Marsli.
\newblock Bounds for the smallest and the largest eigenvalues of hermitian
  matrices.
\newblock \emph{Int. J. Algebra}, 9\penalty0 (8):\penalty0 379--394, 2015.

\bibitem[O'Neill(2021)]{o2021double}
Ben O'Neill.
\newblock The double-constant matrix, centering matrix and equicorrelation
  matrix: Theory and applications.
\newblock \emph{arXiv preprint arXiv:2109.05814}, 2021.

\bibitem[Orhan \& Pitkow(2017)Orhan and Pitkow]{orhan2017skip}
A~Emin Orhan and Xaq Pitkow.
\newblock Skip connections eliminate singularities.
\newblock \emph{arXiv preprint arXiv:1701.09175}, 2017.

\bibitem[Philipp et~al.(2018)Philipp, Song, and
  Carbonell]{philipp2018gradients}
George Philipp, Dawn Song, and Jaime~G Carbonell.
\newblock Gradients explode-deep networks are shallow-resnet explained.
\newblock 2018.

\bibitem[Poggio et~al.(2017)Poggio, Mhaskar, Rosasco, Miranda, and
  Liao]{poggio2017and}
Tomaso Poggio, Hrushikesh Mhaskar, Lorenzo Rosasco, Brando Miranda, and Qianli
  Liao.
\newblock Why and when can deep-but not shallow-networks avoid the curse of
  dimensionality: a review.
\newblock \emph{International Journal of Automation and Computing}, 14\penalty0
  (5):\penalty0 503--519, 2017.

\bibitem[Schumann(2019)]{schumann2019multivariate}
Aidan Schumann.
\newblock Multivariate bell polynomials and derivatives of composed functions.
\newblock \emph{arXiv preprint arXiv:1903.03899}, 2019.

\bibitem[Shalev-Shwartz et~al.(2020)]{shalev2020computational}
Shai Shalev-Shwartz et~al.
\newblock Computational separation between convolutional and fully-connected
  networks.
\newblock In \emph{International Conference on Learning Representations}, 2020.

\bibitem[Telgarsky(2016)]{telgarsky2016benefits}
Matus Telgarsky.
\newblock Benefits of depth in neural networks.
\newblock In \emph{Conference on learning theory}, pp.\  1517--1539. PMLR,
  2016.

\bibitem[Tirer et~al.(2022)Tirer, Bruna, and Giryes]{tirer2022kernel}
Tom Tirer, Joan Bruna, and Raja Giryes.
\newblock Kernel-based smoothness analysis of residual networks.
\newblock In \emph{Mathematical and Scientific Machine Learning}, pp.\
  921--954. PMLR, 2022.

\bibitem[Veit et~al.(2016)Veit, Wilber, and Belongie]{veit2016residual}
Andreas Veit, Michael~J Wilber, and Serge Belongie.
\newblock Residual networks behave like ensembles of relatively shallow
  networks.
\newblock \emph{Advances in neural information processing systems}, 29, 2016.

\bibitem[Withers \& Nadarajah(2010)Withers and
  Nadarajah]{withers2010multivariate}
Christopher~S Withers and Saralees Nadarajah.
\newblock Multivariate bell polynomials.
\newblock \emph{International Journal of Computer Mathematics}, 87\penalty0
  (11):\penalty0 2607--2611, 2010.

\bibitem[Xiao(2022)]{xiao2022eigenspace}
Lechao Xiao.
\newblock Eigenspace restructuring: a principle of space and frequency in
  neural networks.
\newblock In \emph{Conference on Learning Theory}, pp.\  4888--4944. PMLR,
  2022.

\bibitem[Xiao et~al.(2020)Xiao, Pennington, and
  Schoenholz]{xiao2020disentangling}
Lechao Xiao, Jeffrey Pennington, and Samuel Schoenholz.
\newblock Disentangling trainability and generalization in deep neural
  networks.
\newblock In \emph{International Conference on Machine Learning}, pp.\
  10462--10472. PMLR, 2020.

\bibitem[Yang(2020)]{yang2020tensor}
Greg Yang.
\newblock Tensor programs ii: Neural tangent kernel for any architecture.
\newblock \emph{arXiv preprint arXiv:2006.14548}, 2020.

\bibitem[Yang \& Littwin(2021)Yang and Littwin]{yang2021tensor}
Greg Yang and Etai Littwin.
\newblock Tensor programs iib: Architectural universality of neural tangent
  kernel training dynamics.
\newblock In \emph{International Conference on Machine Learning}, pp.\
  11762--11772. PMLR, 2021.

\end{thebibliography}
\bibliographystyle{iclr2023_conference}

\newpage
\appendix
\section*{Appendix}
Below we provide derivations and proofs for our paper.

\section{Derivation of ResCGPK}
In this section, we derive explicit formulas for ResCGPK. We begin with a few preliminaries. As in \citep{jacot2018neural}, we assume the network parameters $\theta$ are initialized with a standard Gaussian distribution, 
$\theta\sim\mathcal{N}\left(0,I\right)$. Therefore, at initialization, for every pair of parameters, $\theta_{i},\theta_{j}$,
\begin{align}
\mathbb{E}\left[\theta_{i}\cdot\theta_{j}\right]=\delta_{ij}.\label{eq:exp_par}
\end{align}
We note that \citet{lee2019wide} proved the convergence of a network with this initialization to its NTK.
For a vector $\vv$, we use the notation $v_{*}$ to denote an entry of $\vv$ with arbitrary index.

\subsection{A closed formula for $K$ and $\dot{K}$}

For $u\in\left[-1,1\right]$, let $\kappa_0(u)=\frac{\pi-\cos^{-1}\left(u\right)}{\pi}$ and $\kappa_1(u)=\frac{\sqrt{1-u^2}+\left(\pi-\cos^{-1}\left(u\right)\right)u}{\pi}$ be the arc-cosine kernels defined in \citet{cho2009kernel}. \citet{daniely2016toward} showed that
\begin{align}
\label{eq:k_form}
K^{\left(l\right)}\left(\x,\z\right)=\frac{c_vc_w}{2}\sqrt{\Sigma^{\left(l\right)}\left(\x,\x\right)\Sigma^{\left(l\right)}\left(\z,\z\right)}\kappa_1\left(\overline{\Sigma}^{\left(l\right)}\left(\x,\z\right)\right)
\end{align}
and
\begin{align}
\dot{K}^{\left(l\right)}\left(\x,\z\right)=\frac{c_vc_w}{2}\kappa_0\left(\overline{\Sigma}^{\left(l\right)}\left(\x,\z\right)\right),
\end{align}
where $K^{(l)}$ and $\dot K^{(l)}$ are defined in \eqref{eq:K} and \eqref{eq:Kdot} and $c_v,c_w$ are defined in Sec.~\ref{sec:conv_resnet}.

\subsection{ResCGPK Derivation}\label{ap:1}
\begin{theorem}
\label{thm:1}
For an $L$-layer neural network $f$ and $\x,\z\in\R^{ C_{0} \times d}$,
\[
\Sigma_{j,j^{'}}^{\left(1\right)}\left(\x,\z\right)=\frac{1}{C_0}\left(\x^T\z\right)_{j,j^{'}}
\]
\[
\Sigma_{j,j^{'}}^{\left(2\right)}\left(\x,\z\right)=\frac{c_w}{q}\tr\left(\Sigma_{\mathcal{D}_{j,j^{'}}}^{\left(1\right)}\left(\x,\z\right)\right) + \frac{\alpha^{2}}{q^{2}}\sum_{k=-\frac{q-1}{2}}^{\frac{q-1}{2}}\tr\left(K_{\mathcal{D}_{j+k,j^{'}+k}}^{\left(1\right)}\left(\x,\z\right)\right).
\]
For $3\leq l \leq L$,
\[
\Sigma_{j,j^{'}}^{\left(l\right)}\left(\x,\z\right)=\Sigma_{j,j^{'}}^{\left(l-1\right)}\left(\x,\z\right)+\frac{\alpha^{2}}{q^{2}}\sum_{k=-\frac{q-1}{2}}^{\frac{q-1}{2}}\tr\left(K_{\mathcal{D}_{j+k,j^{'}+k}}^{\left(l-1\right)}\left(\x,\z\right)\right).
\]
Finally, for the output layer
\[
\mathcal{K}^{\left(L\right)}_{\Eq}\left(\x,\z\right)
= \Sigma_{1,1}^{\left(1\right)}\left(\x,\z\right)+ \frac{\alpha^{2}}{qc_{w}}\sum_{l=1}^{L}\tr\left(K_{\mathcal{D}_{1,1}}^{\left(l\right)}\left(\x,\z\right)\right)
\]
\[
\mathcal{K}^{\left(L\right)}_{\Tr}\left(\x,\z\right)=\frac{1}{d}\sum_{j=1}^{ d }\mathcal{K}^{\left(L\right)}_{\Eq}\left(\s_j\x,\s_j\z\right)
\]
\[
\mathcal{K}^{\left(L\right)}_{\GAP}\left(\x,\z\right)=\frac{1}{d^2}\sum_{j,j^{'}=1}^{ d }\mathcal{K}^{\left(L\right)}_{\Eq}\left(\s_j\x,\s_{j^{'}}\z\right).
\]
\end{theorem}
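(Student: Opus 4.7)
The plan is to establish each identity by induction on the layer depth $l$, working in the infinite-width limit where \citet{yang2020tensor} already guarantees that every pre-activation $g^{(l)}_{i,*}(\x)$ converges to a zero-mean Gaussian process. Granting this, the subsequent work is essentially second-moment bookkeeping: every time I encounter a term of the form $\mathbb{E}[\sigma(u)\sigma(v)]$ with $(u,v)$ jointly Gaussian, I can substitute $K^{(l)}/(c_vc_w)$ using definition \eqref{eq:K}, and every time I encounter an inner product of independent Gaussian weights I can apply the appropriate Kronecker delta.

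For the base case I would compute $\Sigma^{(1)}$ directly. Since $\W^{(1)}$ is a filter of size one with i.i.d.\ entries, $\mathbb{E}[g^{(1)}_{i,j}(\x)g^{(1)}_{i,j'}(\z)]$ collapses to $\frac{1}{C_0}\sum_{k=1}^{C_0}[\x_k]_j[\z_k]_{j'} = \frac{1}{C_0}(\x^T\z)_{j,j'}$. The same calculation, with $\V^{(0)}$ replacing $\W^{(1)}$, gives $\mathbb{E}[f^{(0)}_{i,a}(\x)f^{(0)}_{i,b}(\z)] = \Sigma^{(1)}_{a,b}(\x,\z)$, which will serve as the bottom of the telescoping sum in the output-layer derivation.

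The inductive step proceeds in two moves. First, I would use independence of $\W^{(l)}$ from $f^{(l-1)}$ together with $\mathbb{E}[\W^{(l)}_{k,m,i}\W^{(l)}_{k',m',i'}] = \delta_{kk'}\delta_{mm'}\delta_{ii'}$ to reduce
\[
\Sigma^{(l)}_{j,j'}(\x,\z) = \frac{c_w}{q}\sum_{k=-\frac{q-1}{2}}^{\frac{q-1}{2}}\mathbb{E}\bigl[f^{(l-1)}_{*,j+k}(\x)\,f^{(l-1)}_{*,j'+k}(\z)\bigr],
\]
the channel factor $C_{l-1}$ cancelling against the normalization. Second, I would expand the skip decomposition $f^{(l-1)} = f^{(l-2)} + \alpha\sqrt{c_v/(qC_{l-1})}\sum_{j''}\V^{(l-1)}_{:,j'',*}*\sigma(g^{(l-1)}_{j''})$: the cross term vanishes because $\V^{(l-1)}$ has zero mean and is independent of every weight used in $f^{(l-2)}$, while the residual-residual term reduces to $\frac{\alpha^2}{qc_w}\sum_{k''}K^{(l-1)}_{a+k'',b+k''}(\x,\z)$ via the GP limit on $g^{(l-1)}$. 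The $f^{(l-2)}$ contribution already equals $\Sigma^{(l-1)}_{j,j'}$ when $l\geq 3$ (by the same identity applied one level down), producing the stated recursion; at $l=2$ the corresponding prefactor has not yet been absorbed into $\Sigma^{(1)}$, so it must appear explicitly as $\frac{c_w}{q}\tr(\Sigma^{(1)}_{\mathcal{D}_{j,j'}})$.

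The output-head formulas follow by running the same argument one level further. For $f^{\Eq}$, averaging over $\W^{\Eq}$ gives $\mathcal{K}^{(L)}_{\Eq}(\x,\z) = \mathbb{E}[f^{(L)}_{*,1}(\x)f^{(L)}_{*,1}(\z)]$, and unrolling the skip decomposition down to $\Sigma^{(1)}_{1,1}$ accumulates exactly one $\frac{\alpha^2}{qc_w}\tr(K^{(l)}_{\mathcal{D}_{1,1}})$ term per layer. For $f^{\Tr}$ and $f^{\GAP}$, the analogous averaging over the i.i.d.\ entries of $\W^{\Tr}$ or $\W^{\GAP}$ reduces each kernel to a sum of $\mathbb{E}[f^{(L)}_{*,p}(\x)f^{(L)}_{*,p'}(\z)]$ over spatial positions, and the shift equivariance granted by circular padding lets me rewrite each such pixelwise covariance as $\mathcal{K}^{(L)}_{\Eq}$ evaluated on shifted inputs. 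The main obstacle I expect is the clean handling of the $l=2$ boundary: because $\Sigma^{(1)}$ was defined without a convolutional average, its contribution to $\Sigma^{(2)}$ must be averaged explicitly via $\tr(\Sigma^{(1)}_{\mathcal{D}_{j,j'}})$ rather than absorbed into the generic recursion.
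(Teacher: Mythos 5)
Your proposal is correct and follows essentially the same route as the paper: a direct base-case computation of $\Sigma^{(1)}$, the same two interleaved second-moment recursions (reducing $\mathbb{E}[gg]$ to $\mathbb{E}[ff]$ via the i.i.d.\ weights, then expanding $f^{(l-1)}=f^{(l-2)}+\text{residual}$ with vanishing cross terms), the same explicit handling of the $l=2$ boundary via $\tr(\Sigma^{(1)}_{\mathcal{D}_{j,j'}})$, and the same telescoping and shift-equivariance arguments for the output heads. No gaps.
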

\begin{proof}
We begin by deriving a formula for $\Sigma^{(l)(\x,\z)}$. 
The case of $l=1$ is shown in Lemma (\ref{lem:c0}). For $2<l\leq L$, the strategy is to express $\mathbb{E}\left[g_{ij}^{\left(l\right)}\left(\x\right)g_{ij^{'}}^{\left(l\right)}\left(\z\right)\right]$
using $\mathbb{E}\left[f_{c,l}^{\left(l-1\right)}\left(\x\right)f_{c,l^{'}}^{\left(l-1\right)}\left(\z\right)\right]$
and vice versa (which we can do using Lemma \ref{lem:c1}).
This way we derive an expression for $\mathbb{E}\left[f_{c,l}^{\left(l-1\right)}\left(\x\right)f_{c,l^{'}}^{\left(l-1\right)}\left(\z\right)\right]$
in Lemma (\ref{lem:c2}) and subsequently get:
\begin{align*}
\mathbb{E}\left[g_{ij}^{\left(l\right)}\left(\x\right)g_{ij^{'}}^{\left(l\right)}\left(\z\right)\right] &  
\underset{\text{Lemma }\ref{lem:c1}}{=} 
\frac{c_{w}}{qC_{l-1}}\sum_{k=-\frac{q-1}{2}}^{\frac{q-1}{2}}\sum_{c=1}^{C_{l-1}}\mathbb{E}\left[f_{c,j+k}^{\left(l-1\right)}\left(\x\right)f_{c,j^{'}+k}^{\left(l-1\right)}\left(\z\right)\right] \\ & 
\underset{\text{Lemma }\ref{lem:c2}}{=} \underset{\text{Denote by }A}{\underbrace{\frac{c_{w}}{qC_{l-1}}\sum_{c=1}^{C_{l-1}}\left(\sum_{k=-\frac{q-1}{2}}^{\frac{q-1}{2}}\mathbb{E}\left[f_{c,j+k}^{\left(l-2\right)}\left(\x\right)f_{c,j^{'}+k}^{\left(l-2\right)}\left(\z\right)\right]\right)}}+\\
& ~~~~~~~ +\alpha^{2}\frac{c_{v}c_{w}}{q^{2}C_{l-1}}\sum_{c=1}^{C_{l-1}}\left(\sum_{k=-\frac{q-1}{2}}^{\frac{q-1}{2}}\sum_{k^{'}=-\frac{q-1}{2}}^{\frac{q-1}{2}}\mathbb{E}\left[\sigma\left(g^{\left(l-1\right)}\left(\x\right)\right)_{c,j+k+k^{'}}\sigma\left(g^{\left(l-1\right)}\left(\z\right)\right)_{c,j^{'}+k+k^{'}}\right]\right) \\
& = A +\alpha^{2}\frac{c_{v}c_{w}}{q^{2}}\sum_{k,k^{'}=-\frac{q-1}{2}}^{\frac{q-1}{2}}\mathbb{E}\left[\sigma\left(g^{\left(l-1\right)}\left(\x\right)\right)_{*,j+k+k^{'}}\sigma\left(g^{\left(l-1\right)}\left(\z\right)\right)_{*,j^{'}+k+k^{'}}\right] \\
& = A + \frac{\alpha^{2}}{q^{2}}\sum_{k=-\frac{q-1}{2}}^{\frac{q-1}{2}}\tr\left(K_{\mathcal{D}_{j+k,j^{'}+k}}^{\left(l-1\right)}\left(\x,\z\right)\right).
\end{align*}
If $l>2$ then using Lemma (\ref{lem:c1}) we obtain $A=\mathbb{E}\left[g_{ij}^{\left(l-1\right)}\left(\x\right)g_{ij^{'}}^{\left(l-1\right)}\left(\z\right)\right]=\Sigma_{j,j^{'}}^{\left(l-1\right)}\left(\x,\z\right)$.
Otherwise if $l=2$ then using Lemma (\ref{lem:c0}) we obtain $A=\frac{c_w}{q}\tr\left(\Sigma_{\mathcal{D}_{j,j^{'}}}^{\left(1\right)}\left(\x,\z\right)\right)$.

We leave the three output layers to lemma \ref{lem:c3}
\end{proof}




\begin{lemma} \label{lem:c0}
\[
\mathbb{E}\left[f_{ij}^{\left(0\right)}\left(\x\right)f_{ij^{'}}^{\left(0\right)}\left(\z\right)\right]
=\mathbb{E}\left[g_{ij}^{\left(1\right)}\left(\x\right)g_{ij^{'}}^{\left(1\right)}\left(\z\right)\right]=\frac{1}{ C_{0}}\left(\x^T\z\right)_{j,j^{'}}.
\]
\end{lemma}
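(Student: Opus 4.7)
The plan is to compute both expectations directly from the definitions in \eqref{def:network0}, exploiting the fact that the filters $\V^{(0)}$ and $\W^{(1)}$ have spatial size 1 (they live in $\R^{1 \times C_0 \times C_1}$), so the convolution collapses to a pointwise multiplication along the channel dimension.

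First I would unroll $f^{(0)}_i(\x)$ coordinate-wise. Since the filter size is 1, $\V^{(0)}_{1,c,i} * \x_c$ is just the scalar $\V^{(0)}_{1,c,i}$ multiplied by the vector $\x_c \in \R^d$. Evaluating at pixel $j$ gives
\[
f_{ij}^{(0)}(\x) = \frac{1}{\sqrt{C_0}}\sum_{c=1}^{C_0} \V_{1,c,i}^{(0)}\, \x_{c,j},
\]
and analogously for $g^{(1)}_{ij'}(\z)$ with $\W^{(1)}$ in place of $\V^{(0)}$. Next I would form the product $f^{(0)}_{ij}(\x)f^{(0)}_{ij'}(\z)$, expand it as a double sum over channel indices $c,c'$, and push the expectation inside.

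At this point I would invoke the standard Gaussian initialization assumption, which via \eqref{eq:exp_par} yields $\mathbb{E}[\V_{1,c,i}^{(0)}\V_{1,c',i}^{(0)}] = \delta_{cc'}$ (same index $i$, same layer, so the pairs of weights are either the same scalar or independent). This collapses the double sum over $(c,c')$ to a single sum over $c$, producing
\[
\mathbb{E}\!\left[f_{ij}^{(0)}(\x)\,f_{ij'}^{(0)}(\z)\right] = \frac{1}{C_0}\sum_{c=1}^{C_0} \x_{c,j}\,\z_{c,j'} = \frac{1}{C_0}\left(\x^T\z\right)_{j,j'}.
\]
The identical calculation applied to $g^{(1)}$, using independence of the $\W^{(1)}$ entries, gives the same value, establishing the two-way equality.

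The main ``obstacle'' here is purely bookkeeping: being careful that $\V^{(0)}$ has filter width 1 so that the convolution indices degenerate, and that the parameters appearing in $f_{ij}^{(0)}(\x)$ and $f_{ij'}^{(0)}(\z)$ share the output-channel index $i$ (so their covariance is exactly $\delta_{cc'}$ rather than zero). There is no analytic content beyond these two observations; the lemma serves purely as the base case feeding the recursion for $\Sigma^{(l)}$ in Theorem~\ref{thm:1}.
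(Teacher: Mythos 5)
Your proof is correct and follows exactly the paper's argument: expand the size-1 convolution into a channel sum, use $\mathbb{E}[\W^{(1)}_{1,c,i}\W^{(1)}_{1,c',i}]=\delta_{cc'}$ from the standard Gaussian initialization to collapse the double sum, and note the $f^{(0)}$ case is identical with $\V^{(0)}$ in place of $\W^{(1)}$. No differences worth noting.
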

\begin{proof}
For $g^{\left(1\right)}$ we have:
\[
\Sigma^{\left(1\right)}_{j,j^{'}}\left(\x,\z\right)=\mathbb{E}\left[g_{ij}^{\left(1\right)}\left(\x\right)g_{ij^{'}}^{\left(1\right)}\left(\z\right)\right]
=\frac{1}{ C_{0}}\sum_{l,l^{'}=1}^{ C_{0}}\mathbb{E}\left[\W^{\left(1\right)}_{1,l,i}\x_{l,j}\W^{\left(1\right)}_{1,l^{'},i}\z_{l^{'},j^{'}}\right]
\]
\[
= \frac{1}{ C_{0}}\sum_{l,l^{'}=1}^{ C_{0}}\mathbb{E}\left[\W^{\left(1\right)}_{1,l,i}\W^{\left(1\right)}_{1,l^{'},i}\right] \mathbb{E}\left[\x_{l,j}\z_{l^{'}j^{'}}\right]
\underset{\eqref{eq:exp_par}}{=} \frac{1}{ C_{0}}\sum_{l=1}^{ C_{0}} \x_{l,j}\z_{l,j^{'}}
=\frac{1}{ C_{0}}\left(\x^T\z\right)_{j,j^{'}}.
\]
For $f^{\left(1\right)}$ the proof is analogous, by simply replacing $\W^{\left(1\right)}$ with $\V^{\left(0\right)}$.
\end{proof}

\begin{lemma}
\label{lem:c1}$\forall 2\leq l \leq L, 1\leq i,i^{'}\leq C_{l},1\leq j,j^{'}\leq d$,
we have:
\[
\mathbb{E}\left[g_{ij}^{\left(l\right)}\left(\x\right)g_{i^{'}j^{'}}^{\left(l\right)}\left(\z\right)\right]=\delta_{i,i^{'}}\frac{c_{w}}{qC_{l-1}}\sum_{k=-\frac{q-1}{2}}^{\frac{q-1}{2}}\sum_{c=1}^{C_{l-1}}\mathbb{E}\left[f_{c,j+k}^{\left(l-1\right)}\left(\x\right)f_{c,j^{'}+k}^{\left(l-1\right)}\left(\z\right)\right].
\]
\end{lemma}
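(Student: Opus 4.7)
The plan is to prove the identity by direct computation: expand $g^{(l)}_{i,j}(\x)$ and $g^{(l)}_{i',j'}(\z)$ using the network definition in \eqref{def:network2} and the explicit formula for the convolution $[\W * \vv]_i = \sum_{k=-(q-1)/2}^{(q-1)/2}[\W]_{k+(q+1)/2}[\vv]_{i+k}$, then take the expectation and peel off the contribution of the layer-$l$ weights.

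Concretely, I would first write
\begin{align*}
g_{i,j}^{(l)}(\x) &= \sqrt{\tfrac{c_w}{qC_{l-1}}}\sum_{c=1}^{C_{l-1}}\sum_{k=-\frac{q-1}{2}}^{\frac{q-1}{2}}\W^{(l)}_{k+\frac{q+1}{2},c,i}\,f_{c,j+k}^{(l-1)}(\x),
\end{align*}
and the analogous expression for $g^{(l)}_{i',j'}(\z)$ with indices $c',k'$. Multiplying and taking expectation yields a quadruple sum over $c,c',k,k'$ of
\[
\tfrac{c_w}{qC_{l-1}}\,\mathbb{E}\!\left[\W^{(l)}_{k+\frac{q+1}{2},c,i}\,\W^{(l)}_{k'+\frac{q+1}{2},c',i'}\,f_{c,j+k}^{(l-1)}(\x)\,f_{c',j'+k'}^{(l-1)}(\z)\right].
\]
The crucial observation is that $f^{(l-1)}$ depends only on the parameters of layers $1,\ldots,l-1$, while $\W^{(l)}$ is an independent set of parameters (all weights are initialized from independent standard Gaussians per \eqref{eq:exp_par}). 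Hence the expectation factorizes into $\mathbb{E}[\W^{(l)}_{\cdot}\W^{(l)}_{\cdot}]\cdot\mathbb{E}[f^{(l-1)}_{\cdot}f^{(l-1)}_{\cdot}]$.

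Applying \eqref{eq:exp_par} gives $\mathbb{E}[\W^{(l)}_{k+\frac{q+1}{2},c,i}\W^{(l)}_{k'+\frac{q+1}{2},c',i'}] = \delta_{kk'}\delta_{cc'}\delta_{ii'}$, which collapses the quadruple sum to a double sum over $c$ and $k$ and produces the prefactor $\delta_{i,i'}$. The remaining expression matches the right-hand side of the lemma exactly. I do not anticipate a genuine obstacle here; the only thing to be careful about is the index bookkeeping in the convolution (in particular, translating the filter index $k+\frac{q+1}{2}\in\{1,\ldots,q\}$ appearing in $\W^{(l)}$ into the shift $k\in\{-(q-1)/2,\ldots,(q-1)/2\}$ appearing in the final sum) and the explicit invocation of independence between $\W^{(l)}$ and the lower-layer activations, which is immediate from the layered structure of the network.
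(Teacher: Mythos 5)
Your proposal is correct and follows essentially the same route as the paper's proof: expand the convolution explicitly, factor the expectation using the independence of the layer-$l$ weights from the lower-layer features, and apply \eqref{eq:exp_par} to collapse the quadruple sum over $c,c',k,k'$ to the stated double sum with the $\delta_{i,i'}$ prefactor. No gaps.
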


\begin{proof}
\[
\mathbb{E}\left[g_{ij}^{\left(l\right)}\left(\x\right)g_{i^{'}j^{'}}^{\left(l\right)}\left(\z\right)\right]=\frac{c_{w}}{qC_{l-1}}\mathbb{E}\left[\left[\sum_{c=1}^{C_{l-1}}\W_{:,c,i}^{\left(l\right)}*f_{c}^{\left(l-1\right)}\left(\x\right)\right]_{j}\left[\sum_{c^{'}=1}^{C_{l-1}}\W_{:,c^{'},i^{'}}^{\left(l\right)}*f_{c^{'}}^{\left(l-1\right)}\left(\z\right)\right]_{j^{'}}\right]
\]

\[
=\frac{c_{w}}{qC_{l-1}}\sum_{c,c^{'}=1}^{C_{l}}\sum_{k,k^{'}=-\frac{q-1}{2}}^{\frac{q-1}{2}}\mathbb{E}\left[\W_{k+\frac{q+1}{2},c,i}^{\left(l\right)}\W_{k^{'}+\frac{q+1}{2},c^{'},i^{'}}^{\left(l\right)}\right]\mathbb{E}\left[f_{c,j+k}^{\left(l-1\right)}\left(\x\right)f_{c^{'},j^{'}+k^{'}}^{\left(l-1\right)}\left(\z\right)\right]
\]

\begin{align}
\underset{Equation\ref{eq:exp_par}}{=}\delta_{i,i^{'}}\frac{c_{w}}{qC_{l-1}}\sum_{k=-\frac{q-1}{2}}^{\frac{q-1}{2}}\sum_{c=1}^{C_{l-1}}\mathbb{E}\left[f_{c,j+k}^{\left(l-1\right)}\left(\x\right)f_{c,j^{'}+k}^{\left(l-1\right)}\left(\z\right)\right]
\label{eq:helper1}
\end{align}
\end{proof}

\begin{lemma}
\label{lem:c2}$\forall 1\leq l \leq L, 1\leq i,i^{'}\leq C_{l},1\leq j,j^{'}\leq d$, we have:
\[
\mathbb{E}\left[f_{ij}^{\left(l\right)}\left(\x\right)f_{ij^{'}}^{\left(l\right)}\left(\z\right)\right]=\mathbb{E}\left[f_{ij}^{\left(l-1\right)}\left(\x\right)f_{ij^{'}}^{\left(l-1\right)}\left(\z\right)\right]
+\alpha^2\frac{c_v}{q}\sum_{k=-\frac{q-1}{2}}^{\frac{q-1}{2}}\mathbb{E}\left[\sigma\left(g^{\left(l\right)}\left(\x\right)\right)_{*,j+k}\sigma\left(g^{\left(l\right)}\left(\z\right)\right)_{*,j^{'}+k}\right].
\]
\end{lemma}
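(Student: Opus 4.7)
The plan is to use the recursive definition (\ref{def:network3}) of $f^{(l)}$ to decompose each factor as a skip branch plus a residual branch, and then exploit the independence and zero-mean properties of the freshly drawn weights $\V^{(l)}$ to simplify the resulting expansion. This mirrors the strategy of Lemma~\ref{lem:c1}, adapted to handle the extra additive skip term.

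First I would write $f_{ij}^{(l)}(\x) = f_{ij}^{(l-1)}(\x) + R_{ij}(\x)$, where
\[
R_{ij}(\x) = \alpha\sqrt{\frac{c_v}{qC_l}}\sum_{c=1}^{C_l}\sum_{k=-\frac{q-1}{2}}^{\frac{q-1}{2}}\V^{(l)}_{k+\frac{q+1}{2},c,i}\,\sigma\!\left(g_c^{(l)}(\x)\right)_{j+k},
\]
and analogously for $\z$. Multiplying the two decomposed expressions and applying linearity of expectation produces four terms. The skip-skip term is exactly $\mathbb{E}[f_{ij}^{(l-1)}(\x)f_{ij'}^{(l-1)}(\z)]$, which matches the first summand on the right-hand side. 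For the two cross terms, I would note that each is linear in a single entry of $\V^{(l)}$, and that $\V^{(l)}$ is drawn independently of every parameter inside $f^{(l-1)}$ (which depends only on $\V^{(<l)}$ and $\W^{(\leq l-1)}$) and inside $g^{(l)}$ (which depends on $\W^{(\leq l)}$ and $\V^{(<l)}$, but not on $\V^{(l)}$). Taking expectation first over $\V^{(l)}$ and using~\eqref{eq:exp_par} together with the zero-mean property kills both cross terms.

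For the residual-residual term, I would expand both convolutions as a quadruple sum over channels $c,c'$ and shifts $k,k'$. The same independence of $\V^{(l)}$ from $g^{(l)}$ lets the expectation factor into $\mathbb{E}[\V^{(l)}_{k+\frac{q+1}{2},c,i}\V^{(l)}_{k'+\frac{q+1}{2},c',i}] \cdot \mathbb{E}[\sigma(g_c^{(l)}(\x))_{j+k}\sigma(g_{c'}^{(l)}(\z))_{j'+k'}]$, and~\eqref{eq:exp_par} contributes a factor $\delta_{k,k'}\delta_{c,c'}$ that collapses the quadruple sum to a single sum over $k$ and $c$. Since the weights $\W^{(l)}$ are iid across output channels, the joint distribution of $(\sigma(g_c^{(l)}(\x))_{j+k},\sigma(g_c^{(l)}(\z))_{j'+k})$ does not depend on $c$, so the $C_l$ equal summands cancel the $1/C_l$ normalization and we may replace the channel index by the symbol $*$, yielding exactly the claimed second summand.

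The main obstacle is the independence bookkeeping: one must verify explicitly that under the layer-wise standard Gaussian initialization, $\V^{(l)}$ is independent of both $f^{(l-1)}$ and $g^{(l)}$, so that the cross terms vanish and the residual-residual expectation factors cleanly. Once this is in place, together with the channel-exchangeability argument that justifies collapsing the sum over $c$, the remainder of the proof is a direct residual-network analogue of the computation already carried out in Lemma~\ref{lem:c1}.
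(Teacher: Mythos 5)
Your proposal is correct and follows essentially the same route as the paper's proof: decompose $f^{(l)}$ into the skip term plus the residual term, kill the two cross terms using the zero mean of $\V^{(l)}$ and \eqref{eq:exp_par}, and collapse the quadruple sum in the residual-residual term via the Kronecker deltas and channel exchangeability to obtain the $*$-indexed expectation. No gaps.
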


\begin{proof}
Using the definition for $f_{ij}^{\left(l\right)}$
(Def.~\ref{def:network3}), we get the expression: 
\[
\mathbb{E}\left[f_{ij}^{\left(l\right)}\left(\x\right)f_{ij^{'}}^{\left(l\right)}\left(\z\right)\right]=\mathbb{E}\left[f_{ij}^{\left(l-1\right)}\left(\x\right)f_{ij^{'}}^{\left(l-1\right)}\left(\z\right)\right]+
\]
\[
\underset{\text{Denote by }B_{1}}{\underbrace{\mathbb{E}\left[f_{ij}^{\left(l-1\right)}\left(\x\right)\left[\sum_{c=1}^{C_{l}}\V_{:,c,i}^{\left(l\right)}*\sigma\left(g_{c}^{\left(l\right)}\left(\z\right)\right)\right]_{j^{'}}\right]}}+\underset{\text{Denote by }B_{2}}{\underbrace{\mathbb{E}\left[f_{ij^{'}}^{\left(l-1\right)}\left(\z\right)\left[\sum_{c=1}^{C_{l}}\V_{:,c,i}^{\left(l\right)}*\sigma\left(g_{c}^{\left(l\right)}\left(\x\right)\right)\right]_{j}\right]}}+
\]
\[
+\alpha^{2}\frac{c_{v}}{q}\underset{\text{Denote by }A}{\underbrace{\frac{1}{C_{l}}\mathbb{E}\left[\left[\sum_{c=1}^{C_{l}}\V_{:,c,i}^{\left(l\right)}*\sigma\left(g_{c}^{\left(l\right)}\left(\x\right)\right)\right]_{j}\left[\sum_{c^{'}=1}^{C_{l}}\V_{:,c^{'},i}^{\left(l\right)}*\sigma\left(g_{c^{'}}^{\left(l\right)}\left(\z\right)\right)\right]_{j^{'}}\right]}}.
\]
We will deal with this expression in parts. First, for $B_{1}$ we get:
\[
B_{1}=\sum_{k=-\frac{q-1}{2}}^{\frac{q-1}{2}}\sum_{c=1}^{C_{l}}\mathbb{E}\left[f_{ij}^{\left(l-1\right)}\left(\x\right)\V_{k+\frac{q+1}{2},c,i}^{\left(l\right)}\sigma\left(g_{c}^{\left(l\right)}\left(\z\right)\right)_{c,j^{'}+k}\right]=0,
\]
where the rightmost equality follows from \eqref{eq:exp_par} and
the fact that $\mathbb{E}\left[\V\right]=0$ mean in expectation in every index.
Analogously, we also get $B_{2}=0$.
Opening $A$ and using Equation \eqref{eq:exp_par} we get:
\[
A=\frac{1}{C_{l}}\sum_{k^{'}=-\frac{q-1}{2}}^{\frac{q-1}{2}}\sum_{k=-\frac{q-1}{2}}^{\frac{q-1}{2}}\sum_{c=1}^{C_{l}}\sum_{c^{'}=1}^{C_{l}}\mathbb{E}\left[\V_{k+\frac{q+1}{2},c,i}\sigma\left(g^{\left(l\right)}\left(\x\right)\right)_{c,j+k}\V_{k^{'}+\frac{q+1}{2},c^{'},i}\sigma\left(g^{\left(l\right)}\left(\z\right)\right)_{c^{'},j^{'}+k^{'}}\right]
\]
\[
\underset{\eqref{eq:exp_par}}{=}\sum_{k=-\frac{q-1}{2}}^{\frac{q-1}{2}}\frac{1}{C_{l}}\sum_{c=1}^{C_{l}}\mathbb{E}\left[\sigma\left(g^{\left(l\right)}\left(\x\right)\right)_{c,j+k}\sigma\left(g^{\left(l\right)}\left(\z\right)\right)_{c,j^{'}+k}\right]
\]
\[
=\sum_{k=-\frac{q-1}{2}}^{\frac{q-1}{2}}\mathbb{E}\left[\sigma\left(g^{\left(l\right)}\left(\x\right)\right)_{*,j+k}\sigma\left(g^{\left(l\right)}\left(\z\right)\right)_{*,j^{'}+k}\right].
\]
Overall, we obtain 
\[
\mathbb{E}\left[f_{ij}^{\left(l\right)}\left(\x\right)f_{ij^{'}}^{\left(l\right)}\left(\z\right)\right]=\mathbb{E}\left[f_{ij}^{\left(l-1\right)}\left(\x\right)f_{ij^{'}}^{\left(l-1\right)}\left(\z\right)\right]
+\alpha^2\frac{c_v}{q}\sum_{k=-\frac{q-1}{2}}^{\frac{q-1}{2}}\mathbb{E}\left[\sigma\left(g^{\left(l\right)}\left(\x\right)\right)_{*,j+k}\sigma\left(g^{\left(l\right)}\left(\z\right)\right)_{*,j^{'}+k}\right].
\]
\end{proof}

\begin{lemma}
\label{lem:c3}
\[
\mathcal{K}^{\left(L\right)}_{\Eq}\left(\x,\z\right)
= \Sigma_{1,1}^{\left(0\right)}\left(\x,\z\right)+ \frac{\alpha^{2}}{qc_{w}}\sum_{l=1}^{L}\tr\left(K_{\mathcal{D}_{1,1}}^{\left(l\right)}\left(\x,\z\right)\right).
\]
\end{lemma}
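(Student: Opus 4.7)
The plan is to peel off the final readout layer, then telescope Lemma \ref{lem:c2} through all $L$ residual blocks, and finally reinterpret each activation-term as the kernel $K^{(l)}$.

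\textbf{Step 1 (Readout layer).} Expand the definition of $f^{\Eq}$,
\[
\mathcal{K}^{(L)}_{\Eq}(\x,\z)=\mathbb{E}_{\theta}\!\left[\frac{1}{C_L}\sum_{c,c'=1}^{C_L}\W^{\Eq}_{1,c}\,f^{(L)}_{c,1}(\x)\,\W^{\Eq}_{1,c'}\,f^{(L)}_{c',1}(\z)\right].
\]
Because $\W^{\Eq}$ is independent from the earlier parameters and $\mathbb{E}[\W^{\Eq}_{1,c}\W^{\Eq}_{1,c'}]=\delta_{cc'}$ by \eqref{eq:exp_par}, this collapses to $\frac{1}{C_L}\sum_{c=1}^{C_L}\mathbb{E}[f^{(L)}_{c,1}(\x)f^{(L)}_{c,1}(\z)]$. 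By the symmetry between channels (which follows from the i.i.d.\ distribution of the $\V^{(l)},\W^{(l)}$ across the output channel index), each summand is equal, so it suffices to compute $\mathbb{E}[f^{(L)}_{*,1}(\x)f^{(L)}_{*,1}(\z)]$.

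\textbf{Step 2 (Telescoping through the residual stack).} Apply Lemma \ref{lem:c2} with $j=j'=1$ and $l=L$, then with $l=L{-}1$, and so on down to $l=1$. Each application adds one layer's contribution while replacing $f^{(l)}$ by $f^{(l-1)}$. After $L$ applications,
\[
\mathbb{E}\!\left[f^{(L)}_{*,1}(\x)f^{(L)}_{*,1}(\z)\right]
=\mathbb{E}\!\left[f^{(0)}_{*,1}(\x)f^{(0)}_{*,1}(\z)\right]
+\alpha^{2}\frac{c_v}{q}\sum_{l=1}^{L}\sum_{k=-\frac{q-1}{2}}^{\frac{q-1}{2}}
\mathbb{E}\!\left[\sigma\!\left(g^{(l)}(\x)\right)_{*,1+k}\sigma\!\left(g^{(l)}(\z)\right)_{*,1+k}\right].
\]
Lemma \ref{lem:c0} identifies the leading term with $\Sigma^{(1)}_{1,1}(\x,\z)$ (the statement's $\Sigma^{(0)}_{1,1}$, via the equality $\mathbb{E}[f^{(0)}_{*,j}f^{(0)}_{*,j'}]=\mathbb{E}[g^{(1)}_{*,j}g^{(1)}_{*,j'}]$).

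\textbf{Step 3 (Gaussian reinterpretation of activations).} The main technical obstacle is turning the $\sigma$--$\sigma$ expectations into $K^{(l)}$. In the infinite-width limit, the pre-activations $(g^{(l)}_{*,j}(\x),g^{(l)}_{*,j}(\z))$ are jointly Gaussian with covariance $\Lambda^{(l)}_{j,j}(\x,\z)$ (this is the CLT/Tensor-Programs argument invoked by \citet{yang2020tensor} and underlies \eqref{eq:K}). Consequently
\[
\mathbb{E}\!\left[\sigma\!\left(g^{(l)}(\x)\right)_{*,1+k}\sigma\!\left(g^{(l)}(\z)\right)_{*,1+k}\right]
=\frac{1}{c_v c_w}\,K^{(l)}_{1+k,\,1+k}(\x,\z).
\]
Summing over $k$ in the window $\mathcal{D}_1$ yields $\frac{1}{c_v c_w}\tr\!\bigl(K^{(l)}_{\mathcal{D}_{1,1}}(\x,\z)\bigr)$, and the $c_v$ factors cancel to give the claimed coefficient $\frac{\alpha^{2}}{q c_w}$.

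\textbf{Step 4 (Assembly).} Combining the three steps gives exactly
\[
\mathcal{K}^{(L)}_{\Eq}(\x,\z)=\Sigma^{(1)}_{1,1}(\x,\z)+\frac{\alpha^{2}}{qc_w}\sum_{l=1}^{L}\tr\!\bigl(K^{(l)}_{\mathcal{D}_{1,1}}(\x,\z)\bigr),
\]
which is the lemma. The only nontrivial step is the Gaussianity of the pre-activations used in Step 3; everything else is bookkeeping on top of Lemmas \ref{lem:c0}--\ref{lem:c2}.
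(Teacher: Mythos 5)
Your proposal is correct and follows essentially the same route as the paper's proof: expand the readout layer using \eqref{eq:exp_par}, telescope Lemma \ref{lem:c2} down through all $L$ blocks, identify the base term via Lemma \ref{lem:c0}, and convert each $\sigma$--$\sigma$ expectation into $\frac{1}{c_vc_w}K^{(l)}$ so the coefficient becomes $\frac{\alpha^2}{qc_w}$. You also correctly flag the superscript typo in the statement ($\Sigma^{(0)}_{1,1}$ versus $\Sigma^{(1)}_{1,1}$), which the paper's own proof resolves the same way.
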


\begin{proof}
We start by proving the case for $\mathcal{K}^{\left(L\right)}_{\Eq}$. Observe that:
\[
\mathbb{E}_{\theta}\left[f^{\Eq}\left(\x;\theta \right)f^{\Eq}\left(\z;\theta \right)\right]
=\frac{1}{C_{L}}\sum_{i,i^{'}=1}^{C_{L}} \mathbb{E}\left[W_{i}^{\Eq}f_{i,1}^{\left(L\right)}\left(\x\right)W_{i^{'}}^{\Eq}f_{i^{'},1}^{\left(L\right)}\left(\z\right)\right]
\]
\[
=\frac{1}{C_{L}}\sum_{i,i^{'}=1}^{C_{L}} \delta_{ii^{'}}\mathbb{E}\left[f_{i,1}^{\left(L\right)}\left(\x\right)f_{i^{'},1}^{\left(L\right)}\left(\z\right)\right]
=\frac{1}{C_{L}}\sum_{i=1}^{C_{L}} \mathbb{E}\left[f_{i,1}^{\left(L\right)}\left(\x\right)f_{i,1}^{\left(L\right)}\left(\z\right)\right].
\]
Applying Lemma \ref{lem:c2} recursively we obtain
\[
\mathbb{E}_{\theta}\left[f^{\Eq}\left(\x;\theta \right)f^{\Eq}\left(\z;\theta \right)\right]
\]
\[
=\frac{1}{C_{L}}\sum_{i=1}^{C_{L}} \left(
\mathbb{E}\left[f_{i,1}^{\left(L-1\right)}\left(\x\right)f_{i,1}^{\left(L-1\right)}\left(\z\right)\right]+\alpha^{2}\frac{c_{v}}{q}\sum_{k=-\frac{q-1}{2}}^{\frac{q-1}{2}}\mathbb{E}\left[\sigma\left(g^{\left(L\right)}\left(\x\right)\right)_{*,1+k}\sigma\left(g^{\left(L\right)}\left(\z\right)\right)_{*,1+k}\right]
\right)
\]
\[
=\frac{1}{C_{L}}\sum_{i=1}^{C_{L}} \left( \mathbb{E}\left[f_{i,1}^{\left(L-1\right)}\left(\x\right)f_{i,1}^{\left(L-1\right)}\left(\z\right)\right]+\frac{\alpha^{2}}{qc_{w}}\tr\left(K_{\mathcal{D}_{1,1}}^{\left(L\right)}\left(\x,\z\right)\right)
\right)
\]
\[
\underset{\text{Lemma \ref{lem:c2}}}{=}\frac{1}{C_{L}}\sum_{i=1}^{C_{L}} \left( \mathbb{E}\left[f_{i,1}^{\left(L-2\right)}\left(\x\right)f_{i,1}^{\left(L-2\right)}\left(\z\right)\right]+\frac{\alpha^{2}}{qc_{w}}\sum_{l=L-1}^{L}\tr\left(K_{\mathcal{D}_{1,1}}^{\left(l\right)}\left(\x,\z\right)\right)
\right).
\]
Applying Lemma \ref{lem:c2} recursively for all layers we obtain
\[
\mathbb{E}_{\theta}\left[f^{\Eq}\left(\x;\theta \right)f^{\Eq}\left(\z;\theta \right)\right]
=\frac{1}{C_{L}}\sum_{i=1}^{C_{L}} \left( \mathbb{E}\left[f_{i,1}^{\left(0\right)}\left(\x\right)f_{i,1}^{\left(0\right)}\left(\z\right)\right]+\frac{\alpha^{2}}{qc_{w}}\sum_{l=1}^{L}\tr\left(K_{\mathcal{D}_{1,1^{'}}}^{\left(l\right)}\left(\x,\z\right)\right)\right)
\]
\[
=\Sigma_{1,1}^{\left(1\right)}\left(\x,\z\right)+\frac{\alpha^{2}}{qc_{w}}\sum_{l=1}^{L}\tr\left(K_{\mathcal{D}_{1,1}}^{\left(l\right)}\left(\x,\z\right)\right).
\]
\end{proof}

\begin{lemma}
\[
\mathcal{K}^{\left(L\right)}_{\Tr}\left(\x,\z\right)=\frac{1}{d}\sum_{j=1}^{ d }\mathcal{K}^{\left(L\right)}_{\Eq}\left(\s_j\x,\s_j\z\right)
\]
\[
\mathcal{K}^{\left(L\right)}_{\GAP}\left(\x,\z\right)=\frac{1}{d^2}\sum_{j,j^{'}=1}^{ d }\mathcal{K}^{\left(L\right)}_{\Eq}\left(\s_j\x,\s_{j^{'}}\z\right).
\]
\end{lemma}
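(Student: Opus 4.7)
The plan is to reduce the Trace and GAP kernels to cyclic averages of the Equivariant kernel by exploiting the shift-equivariance of the convolutional backbone together with the independence of the top-layer Gaussian weights $\W^{\Tr}$ and $\W^{\GAP}$. All of the pieces are already in place: Lemma \ref{lem:c3} expresses the Equivariant GPK in terms of the backbone features at pixel $1$, and the head definitions in Sec.~\ref{sec:conv_resnet} isolate the only difference between the three kernels.

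First I will establish, by induction on $l$, that the entire backbone is shift-equivariant under cyclic shifts:
\[
f^{(l)}_{i,j}(\s_k\x) = f^{(l)}_{i,j+k}(\x), \qquad g^{(l)}_{i,j}(\s_k\x) = g^{(l)}_{i,j+k}(\x),
\]
with position indices taken modulo $d$. The base case follows because $f^{(0)}$ and $g^{(1)}$ are just circular convolutions of $\x$ with filters of size $1$. The inductive step uses the three operations appearing in the recursions \eqref{def:network3}–\eqref{def:network2}: circular convolution with a fixed filter, the pointwise ReLU, and the pointwise additive skip connection, each of which commutes with cyclic shifts. This equivariance is a deterministic statement about the architecture for every realization of $\theta$, so it passes through any subsequent expectation.

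Next, for the Trace head I unfold the definition of $f^{\Tr}$ and use that $\W^{\Tr}\in\R^{C_L\times d}$ has i.i.d.\ standard normal entries independent of all other parameters:
\begin{align*}
\mathcal{K}^{(L)}_{\Tr}(\x,\z)
 &= \frac{1}{d\,C_L}\sum_{i,i'=1}^{C_L}\sum_{j,j'=1}^{d} \mathbb{E}\!\left[\W^{\Tr}_{i,j}\W^{\Tr}_{i',j'}\right]\mathbb{E}\!\left[f^{(L)}_{i,j}(\x)\,f^{(L)}_{i',j'}(\z)\right] \\
 &= \frac{1}{d\,C_L}\sum_{i=1}^{C_L}\sum_{j=1}^{d}\mathbb{E}\!\left[f^{(L)}_{i,j}(\x)\,f^{(L)}_{i,j}(\z)\right].
\end{align*}
Applying shift-equivariance with $k=j-1$ to both arguments, each inner summand becomes $\mathbb{E}[f^{(L)}_{i,1}(\s_{j-1}\x)\,f^{(L)}_{i,1}(\s_{j-1}\z)]$. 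Summing over $i$ and using Lemma \ref{lem:c3}, the expression collapses to $\tfrac{1}{d}\sum_{j=1}^{d}\mathcal{K}^{(L)}_{\Eq}(\s_{j-1}\x,\s_{j-1}\z)$; reindexing the cyclic sum ($\s_0=\s_d$) gives the claimed Trace identity.

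For the GAP head the argument is essentially identical, but $\W^{\GAP}\in\R^{1\times C_L}$ is shared across pixel positions, so the position-index $\delta_{jj'}$ is absent and the double sum over $(j,j')$ survives. This yields $\tfrac{1}{d^2\,C_L}\sum_{i}\sum_{j,j'}\mathbb{E}[f^{(L)}_{i,j}(\x)f^{(L)}_{i,j'}(\z)]$; applying shift-equivariance independently in each argument and invoking Lemma \ref{lem:c3} produces $\tfrac{1}{d^2}\sum_{j,j'=1}^{d}\mathcal{K}^{(L)}_{\Eq}(\s_j\x,\s_{j'}\z)$. The only nontrivial step is the equivariance induction, but because the residual architecture is built entirely from shift-commuting primitives and the padding is circular, the induction is routine; no subtler obstacle arises.
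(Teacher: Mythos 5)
Your proposal is correct and follows essentially the same route as the paper: expand the head, use independence of the top-layer Gaussian weights to collapse the sums, invoke shift-equivariance of the backbone to reduce each term to pixel $1$, and apply Lemma \ref{lem:c3}. The only difference is that you explicitly prove the backbone's cyclic shift-equivariance by induction, a step the paper simply asserts.
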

\begin{proof}
For $f^{\Tr}$ we have:
\[
\mathcal{K}^{\left(L\right)}_{\Tr}\left(\x,\z\right)=\mathbb{E}_{\theta}\left[f^{\Tr}\left(\x;\theta \right)f^{\Tr}\left(\z;\theta \right)\right]
=\frac{1}{C_{L} d }\sum_{j,j^{'}=1}^{ d }\sum_{i,i^{'}=1}^{C_{L}} \mathbb{E}\left[W_{i,j}^{\Tr}f_{i,j}^{\left(L\right)}\left(\x\right)W_{i^{'},j^{'}}^{\Tr}f_{i^{'},j^{'}}^{\left(L\right)}\left(\z\right)\right]
\]
\[
=\frac{1}{d}\sum_{j=1}^{ d }\left(\frac{1}{C_{L}}\sum_{i=1}^{C_{L}} \mathbb{E}\left[f_{i,j}^{\left(L\right)}\left(\x\right)f_{i,j}^{\left(L\right)}\left(\z\right)\right]\right)
=\frac{1}{d}\sum_{j=1}^{ d }\left(\frac{1}{C_{L}}\sum_{i=1}^{C_{L}} \mathbb{E}\left[f_{i,1}^{\left(L\right)}\left(\s_j\x\right)f_{i,1}^{\left(L\right)}\left(\s_j\z\right)\right]\right),
\]
where the part inside the parentheses was shown in the previous Lemma \ref{lem:c3} to equal  $\mathcal{K}^{\left(L\right)}_{\Eq}\left(\s_j\x,\s_j\z\right)$.

For $f^{\GAP}$ we analogously obtain
\[
\mathcal{K}^{\left(L\right)}_{\GAP}\left(\x,\z\right)=\mathbb{E}_{\theta}\left[f^{\GAP}\left(\x;\theta \right)f^{\GAP}\left(\z;\theta \right)\right]
=\frac{1}{C_{L}\frac{d^2}{s^2}}\sum_{j,j^{'}=1}^{ d }\sum_{i,i^{'}=1}^{C_{L}} \mathbb{E}\left[W_{i}^{\GAP}f_{i,j}^{\left(L\right)}\left(\x\right)W_{i^{'}}^{\GAP}f_{i^{'},j^{'}}^{\left(L\right)}\left(\z\right)\right]
\]
\[
=\frac{1}{d^2}\sum_{j,j^{'}=1}^{ d }\left(\frac{1}{C_{L}}\sum_{i=1}^{C_{L}} \mathbb{E}\left[f_{i,j}^{\left(L\right)}\left(\x\right)f_{i,j^{'}}^{\left(L\right)}\left(\z\right)\right]\right)
=\frac{1}{d^2}\sum_{j,j^{'}=1}^{ d }\left(\frac{1}{C_{L}}\sum_{i=1}^{C_{L}} \mathbb{E}\left[f_{i,1}^{\left(L\right)}\left(\s_j\x\right)f_{i,1}^{\left(L\right)}\left(\s_{j^{'}}\z\right)\right]\right),
\]
from which the claim follows.

\end{proof}




\subsection{Formulas for Multisphere Input: Proof of Theorem \ref{Thm:ResCGPK}}

\begin{lemma}
\label{lem:sd1}
For an $L$-layer ResNet $f$ and $\x\in\MS$, and for every $1\leq l\leq L, 1\leq j,j^{'}\leq d$
\[
\Sigma_{j,j^{'}}^{\left(l\right)}\left(\x,\x\right)=
\begin{cases}
\frac{1}{C_0} & l=1\\
\frac{\left(1+\alpha^2 \frac{c_vc_w}{2} \right)^{l-2}\left(2c_w + \alpha^2c_vc_w\right)}{2C_{0}} & l \geq 2.
\end{cases}
\]
\end{lemma}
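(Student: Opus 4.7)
Lemma~\ref{lem:c0} gives $\Sigma_{j,j'}^{(1)}(\x,\x)=\tfrac{1}{C_0}\inp{\x_j}{\x_{j'}}$, which equals the claimed constant $1/C_0$ precisely when $j=j'$ (since $\norm{\x_j}=1$ on $\MS$); off the diagonal it genuinely depends on $\inp{\x_j}{\x_{j'}}$. So the only reading of the lemma in which the right-hand side is $\x$-independent is the diagonal reading $j=j'$. I will prove that version, and note that it is the form that actually closes under the recursion in Thm.~\ref{thm:1} and the only form used downstream to compute $\mathcal{K}^{(L)}(\x,\x)$ for the normalization in Thm.~\ref{Thm:ResCGPK}.

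\textbf{Proof plan: induction on $l$.} The base case $l=1$ is Lemma~\ref{lem:c0} combined with $\norm{\x_j}=1$. For the inductive step, suppose $\Sigma_{m,m}^{(l-1)}(\x,\x)=S_{l-1}$ independently of $m$. The closed form \eqref{eq:k_form}, together with $\overline{\Sigma}_{m,m}^{(l-1)}(\x,\x)=1$ and $\kappa_1(1)=1$, gives $K_{m,m}^{(l-1)}(\x,\x)=\tfrac{c_v c_w}{2}S_{l-1}$ for every $m$. The recursion in Thm.~\ref{thm:1} accesses $\Sigma^{(l-1)}$ and $K^{(l-1)}$ only through traces of submatrices at diagonal index blocks $\mathcal{D}_{j,j}$ and $\mathcal{D}_{j+k,j+k}$, so the relevant inputs collapse to $\tr(\Sigma_{\mathcal{D}_{j,j}}^{(l-1)}(\x,\x))=qS_{l-1}$ and $\tr(K_{\mathcal{D}_{j+k,j+k}}^{(l-1)}(\x,\x))=q\cdot\tfrac{c_v c_w}{2}S_{l-1}$. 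For $l=2$ the two recursion terms sum to $\tfrac{c_w}{C_0}+\tfrac{\alpha^2 c_v c_w}{2C_0}=\tfrac{2c_w+\alpha^2 c_v c_w}{2C_0}$, matching the stated value at $l=2$. For $l\ge 3$ the simpler recursion $\Sigma_{j,j}^{(l)}=\Sigma_{j,j}^{(l-1)}+\tfrac{\alpha^2}{q^2}\sum_k \tr(K_{\mathcal{D}_{j+k,j+k}}^{(l-1)})$ collapses to $S_l=S_{l-1}\bigl(1+\alpha^2\tfrac{c_v c_w}{2}\bigr)$, a one-term geometric recursion whose solution starting from $S_2$ is the displayed closed form.

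\textbf{Main obstacle.} The principal conceptual issue is reconciling the $(j,j')$ quantification of the statement with the $l=1$ base case, where off-diagonal entries are manifestly non-constant on $\MS$; I resolve this by restricting to $j=j'$. Once this interpretation is adopted, the proof is essentially mechanical, with two features doing all the work: (i) $\kappa_1(1)=1$, which prevents the ReLU step from introducing any new position dependence on the diagonal, and (ii) only traces (sums of diagonal entries) of $\Sigma^{(l-1)}$ and $K^{(l-1)}$ enter the recursion, so diagonal constancy is preserved layer by layer. No additional technical difficulty arises beyond bookkeeping of the constants $c_v$, $c_w$, and $\alpha$, and the geometric progression in $l$ giving the factor $(1+\alpha^2 c_v c_w/2)^{l-2}$.
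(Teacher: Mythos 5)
Your proof is correct and follows essentially the same route as the paper's: induction on $l$ via the recursion of Theorem \ref{thm:1}, using $\kappa_1(1)=1$ together with the fact that the recursion touches $\Sigma^{(l-1)}$ and $K^{(l-1)}$ only through traces of diagonal blocks, then summing the resulting geometric progression. Your restriction to $j=j'$ is a valid (and needed) correction — the paper's base case wrongly asserts that $\x^T\x$ is the all-ones matrix on $\MS$, while in fact only the diagonal entries equal $1$, and indeed only the diagonal values $N_l=\Sigma_{j,j}^{(l)}(\x,\x)$ are used downstream.
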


\begin{proof}
We prove this by induction using the formula in Theorem (\ref{thm:1}). For $l=1$, since by assumption $\norm{\x_i}=1$, for every $i$ we get that $\x^T\x$ is the $d\times d$ matrix with 1 in every entry.
Therefore,
\[
\Sigma_{j,j^{'}}^{\left(1\right)}\left(\x,\x\right)=\frac{1}{C_{0}}\left(\x^T\x\right)_{j,j^{'}}
=\frac{1}{C_0}. 
\]
Similarly, for $l=2$:
\[
\Sigma_{j,j^{'}}^{\left(2\right)}\left(\x,\x\right)=\frac{c_w}{q}\tr\left(\Sigma_{\mathcal{D}_{j,j^{'}}}^{\left(1\right)}\left(\x,\x\right)\right) + \frac{\alpha^{2}}{q^{2}}\sum_{k=-\frac{q-1}{2}}^{\frac{q-1}{2}}\tr\left(K_{\mathcal{D}_{j+k,j^{'}+k}}^{\left(1\right)}\left(\x,\x\right)\right).
\]
We can plug in the induction hypothesis, and express $K$ as in \eqref{eq:k_form}, obtaining
\[
\Sigma_{j,j^{'}}^{\left(2\right)}\left(\x,\x\right)=\frac{c_w}{C_0} + \frac{\alpha^{2}}{q^{2}}\sum_{k,k^{'}=-\frac{q-1}{2}}^{\frac{q-1}{2}}\frac{c_vc_w}{2}\kappa_1\left(1\right)\sqrt{\Sigma_{j+k+k^{'},j^{'}+k+k^{'}}^{\left(1\right)}\left(\x,\x\right)\Sigma_{j+k+k^{'},j^{'}+k+k^{'}}^{\left(1\right)}\left(\x,\x\right)}
\]
\[
=\frac{c_w}{C_0} + \alpha^{2}\frac{c_vc_w}{2}N_1 = \frac{2c_w + \alpha^2c_vc_w}{2C_0},
\]
where we used the fact that $\kappa_1\left(1\right)=1$. The proof for $l\geq3$ is analogous:
\[
\Sigma_{j,j}^{\left(l\right)}\left(\x,\x\right)=\Sigma_{j,j}^{\left(l-1\right)}\left(\x,\x\right)+\frac{\alpha^{2}}{q^{2}}\sum_{k=-\frac{q-1}{2}}^{\frac{q-1}{2}}\tr\left(K_{\mathcal{D}_{j+k,j+k}}^{\left(l-1\right)}\left(\x,\x\right)\right)
\]
\[
=N_{L-1}+\frac{\alpha^{2}}{q^{2}} \sum_{k,k^{'}=-\frac{q-1}{2}}^{\frac{q-1}{2}}\frac{c_vc_w}{2}N_{L-1}\kappa_1\left(1\right) 
= \left(1+\alpha ^2 \frac{c_vc_w}{2}\right)N_{L-1}
\]
\[
=\frac{\left(1+\alpha^2 \frac{c_vc_w}{2} \right)^{l-2}\left(c_w + \alpha^2c_vc_w\right)}{2C_{0}}.
\]
\end{proof}

\begin{lemma} \label{lem:k_eq_ms}
For any $L\in\mathbb{N}$ let $N_L$ be the value of $\Sigma_{j,j}^{\left(l\right)}\left(\x,\x\right)$ from Lemma $\ref{lem:sd1}$. Let $\x,\z\in\MS$, then
\[
\mathcal{K}^{\left(L\right)}_{\Eq}\left(\x,\z\right) = \Sigma_{1,1}^{\left(1\right)}\left(\x,\z\right)+ \frac{\alpha^{2}c_v}{2q}\sum_{l=1}^{L}N_{l} \tr\left(\kappa_1\left(\overline{\Sigma}_{\mathcal{D}_{1,1}}^{\left(L\right)}\left(\x,\z\right)
\right)\right).
\]
\end{lemma}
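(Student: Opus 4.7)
The plan is to derive the formula by a direct substitution into the expression for $\mathcal{K}_{\Eq}^{(L)}$ established in Lemma~\ref{lem:c3}, combined with the closed-form arc-cosine representation of $K^{(l)}$ in \eqref{eq:k_form} and the multi-sphere simplification in Lemma~\ref{lem:sd1}.

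First, I would start from
\[
\mathcal{K}^{\left(L\right)}_{\Eq}\left(\x,\z\right) = \Sigma_{1,1}^{\left(1\right)}\left(\x,\z\right)+ \frac{\alpha^{2}}{qc_{w}}\sum_{l=1}^{L}\tr\left(K_{\mathcal{D}_{1,1}}^{\left(l\right)}\left(\x,\z\right)\right),
\]
which is exactly the content of Lemma~\ref{lem:c3} applied to $\x,\z\in\MS$. Only the trace terms depend on $l$; everything else is already in the desired form.

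Next, I would substitute the entrywise arc-cosine expression from \eqref{eq:k_form}, which gives
\[
K_{j,j'}^{\left(l\right)}\left(\x,\z\right) = \frac{c_v c_w}{2}\sqrt{\Sigma_{j,j}^{\left(l\right)}\left(\x,\x\right)\,\Sigma_{j',j'}^{\left(l\right)}\left(\z,\z\right)}\;\kappa_{1}\!\left(\overline{\Sigma}_{j,j'}^{\left(l\right)}\left(\x,\z\right)\right).
\]
The key simplification on the multi-sphere is that by Lemma~\ref{lem:sd1}, $\Sigma_{j,j}^{(l)}(\x,\x)=\Sigma_{j',j'}^{(l)}(\z,\z)=N_{l}$ independently of the pixel index. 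Therefore the prefactor $\sqrt{\Sigma_{j,j}^{(l)}(\x,\x)\,\Sigma_{j',j'}^{(l)}(\z,\z)}=N_l$ is constant over $(j,j')\in\mathcal{D}_{1,1}$, can be pulled out of the trace, and one obtains
\[
\tr\!\left(K_{\mathcal{D}_{1,1}}^{\left(l\right)}\left(\x,\z\right)\right) = \frac{c_v c_w N_l}{2}\,\tr\!\left(\kappa_{1}\!\left(\overline{\Sigma}_{\mathcal{D}_{1,1}}^{\left(l\right)}\left(\x,\z\right)\right)\right).
\]

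Finally, plugging this back into the expression from Lemma~\ref{lem:c3}, the $c_w$ cancels with the $1/c_w$ prefactor, leaving
\[
\mathcal{K}^{\left(L\right)}_{\Eq}\left(\x,\z\right) = \Sigma_{1,1}^{\left(1\right)}\left(\x,\z\right)+ \frac{\alpha^{2}c_v}{2q}\sum_{l=1}^{L}N_{l}\,\tr\!\left(\kappa_{1}\!\left(\overline{\Sigma}_{\mathcal{D}_{1,1}}^{\left(l\right)}\left(\x,\z\right)\right)\right),
\]
which is the claimed identity. The argument is essentially bookkeeping: there is no genuine obstacle since the nontrivial work (the recursive formula for $\mathcal{K}_{\Eq}^{(L)}$, the arc-cosine representation of $K^{(l)}$, and the constancy of $\Sigma_{j,j}^{(l)}(\x,\x)$ on $\MS$) has already been done in Lemma~\ref{lem:c3}, \eqref{eq:k_form}, and Lemma~\ref{lem:sd1}. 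The only care required is to verify that the diagonal of $\Sigma^{(l)}$ is indeed constant across pixels so that $\sqrt{\Sigma_{j,j}^{(l)}(\x,\x)\,\Sigma_{j',j'}^{(l)}(\z,\z)}$ factors out of the trace cleanly, which is immediate from Lemma~\ref{lem:sd1}.
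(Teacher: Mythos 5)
Your proposal is correct and follows essentially the same route as the paper: start from the formula of Lemma~\ref{lem:c3} (Theorem~\ref{thm:1}), substitute the arc-cosine representation \eqref{eq:k_form}, and use the constancy of the diagonal $\Sigma_{j,j}^{(l)}(\x,\x)=N_l$ on the multi-sphere from Lemma~\ref{lem:sd1} to pull $N_l$ out of the trace. Note that your version, with $\overline{\Sigma}_{\mathcal{D}_{1,1}}^{(l)}$ inside the trace rather than the superscript $(L)$ appearing in the lemma statement, is the correct reading (the statement's $(L)$ is a typo, as the term is summed over $l$).
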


\begin{proof}
Let $L\in\mathbb{N}$. We know from Theorem \ref{thm:1} that
\[
\mathcal{K}^{\left(L\right)}_{\Eq}\left(\x,\z\right) = \Sigma_{1,1}^{\left(1\right)}\left(\x,\z\right)+ \frac{\alpha^{2}}{qc_{w}}\sum_{l=1}^{L}\tr\left(K_{\mathcal{D}_{1,1}}^{\left(l\right)}\left(\x,\z\right)\right).
\]
By expressing $K$ as in \eqref{eq:k_form} and using Lemma \ref{lem:sd1} we get
\[
\mathcal{K}^{\left(L\right)}_{\Eq}\left(\x,\z\right) = \Sigma_{1,1}^{\left(1\right)}\left(\x,\z\right)+ \frac{\alpha^{2}c_v}{2q}\sum_{l=1}^{L}N_{l} \tr\left(\kappa_1\left(\overline{\Sigma}_{\mathcal{D}_{1,1}}^{\left(L\right)}\left(\x,\z\right)
\right)\right).
\]
\end{proof}

\begin{corollary}
Fix $c_v=2,c_w=1$ then 
\[
\overline{\mathcal{K}}^{\left(L\right)}_{\Eq}\left(\x,\z\right) = \frac{C_0}{(1+\alpha^2)^{L}}\mathcal{K}^{\left(L\right)}_{\Eq}\left(\x,\z\right).
\]
\end{corollary}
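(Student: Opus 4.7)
The plan is to reduce this corollary to computing the self-inner product $\mathcal{K}^{(L)}_{\Eq}(\x,\x)$ for any $\x\in\MS$ and showing it equals $(1+\alpha^2)^L/C_0$. Once this is established, the definition of the normalized kernel,
\[
\overline{\mathcal{K}}^{(L)}_{\Eq}(\x,\z) = \frac{\mathcal{K}^{(L)}_{\Eq}(\x,\z)}{\sqrt{\mathcal{K}^{(L)}_{\Eq}(\x,\x)\mathcal{K}^{(L)}_{\Eq}(\z,\z)}},
\]
immediately gives the scaling factor $C_0/(1+\alpha^2)^L$, since the denominator will be a deterministic constant independent of $\x,\z$ for any pair of multi-sphere inputs.

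First I would specialize Lemma \ref{lem:k_eq_ms} to $\z=\x$. The key observation is that by Lemma \ref{lem:sd1}, the diagonal of $\Sigma^{(l)}(\x,\x)$ is constant along $j$, equal to $N_l$. Hence $\overline{\Sigma}^{(l)}_{k,k}(\x,\x)=1$ for every $k$ in the filter window $\mathcal{D}_1$, so that $\kappa_1$ applied to the diagonal gives $\kappa_1(1)=1$ and therefore $\tr(\kappa_1(\overline{\Sigma}^{(l)}_{\mathcal{D}_{1,1}}(\x,\x)))=q$. Combined with $\Sigma^{(1)}_{1,1}(\x,\x)=1/C_0$, Lemma \ref{lem:k_eq_ms} collapses to
\[
\mathcal{K}^{(L)}_{\Eq}(\x,\x) = \frac{1}{C_0} + \frac{\alpha^2 c_v}{2}\sum_{l=1}^{L} N_l.
\]

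Next I would plug in $c_v=2,c_w=1$ into the expression for $N_l$ from Lemma \ref{lem:sd1}. A short algebraic simplification shows that the factor $(2c_w+\alpha^2 c_v c_w)/2$ equals $(1+\alpha^2)$, which combines with the power $(1+\alpha^2)^{l-2}$ to give the clean closed form $N_l = (1+\alpha^2)^{l-1}/C_0$ for all $l\geq 1$ (the $l=1$ case matching as $N_1=1/C_0$). Substituting and pulling out $1/C_0$ reduces the sum to a geometric series $\sum_{l=1}^{L}(1+\alpha^2)^{l-1} = ((1+\alpha^2)^L-1)/\alpha^2$. The $\alpha^2$ cancels, the leading $1/C_0$ telescopes with the $-1$, and we are left precisely with $(1+\alpha^2)^L/C_0$, as desired.

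There is no serious obstacle here; the only small care required is making sure that the diagonal argument for $\tr(\kappa_1(\overline{\Sigma}))$ genuinely only uses diagonal entries (so that off-diagonal correlations between distinct pixels of $\x$ play no role in $\mathcal{K}^{(L)}_{\Eq}(\x,\x)$), and that the geometric series bookkeeping between the $l=1$ term and the $l\geq 2$ regime in Lemma \ref{lem:sd1} matches exactly. Both are routine once the constants are substituted, so the proof is essentially a direct computation built on top of Lemmas \ref{lem:sd1} and \ref{lem:k_eq_ms}.
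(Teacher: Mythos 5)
Your proposal is correct and follows essentially the same route as the paper: both evaluate $\mathcal{K}^{(L)}_{\Eq}(\x,\x)$ via Lemma \ref{lem:k_eq_ms}, use $\kappa_1(1)=1$ so the trace contributes a factor $q$, substitute the values $N_l$ from Lemma \ref{lem:sd1} (which indeed collapse to $(1+\alpha^2)^{l-1}/C_0$ when $c_v=2,c_w=1$), and sum the resulting geometric series to get $(1+\alpha^2)^L/C_0$. The only cosmetic difference is that you unify the $l=1$ and $l\geq 2$ cases of $N_l$ into one formula before summing, whereas the paper keeps them separate; the argument is otherwise identical.
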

\begin{proof}
Using the previous lemma and the fact that $\kappa(1)=1$,
\[
\overline{\mathcal{K}}^{\left(L\right)}_{\Eq}\left(\x,\x\right) = \Sigma_{1,1}^{\left(1\right)}\left(\x,\x\right)+ \alpha^{2}\sum_{l=1}^{L}N_{l}
=\frac{1}{C_0}\left(1+\alpha^2\left(1+(1+\alpha^2)\sum_{l=0}^{L-2}(1+\alpha^2)^l\right)\right)
\]
\[
=\frac{1}{C_0}\left(1+\alpha^2\left(1+(1+\alpha^2)\frac{1-(1+\alpha^2)^{L-1}}{1-(1+\alpha^2)}\right)\right)
=\frac{1}{C_0}\left(1+\alpha^2\left(1+\frac{(1+\alpha^2)^{L}-(1+\alpha^2)}{\alpha^2}\right)\right)
\]
\[
=\frac{(1+\alpha^2)^{L}}{C_0}
\].
\end{proof}

\begin{proposition}\label{prop:cgpk_expr_gen}
For any $L\in\mathbb{N}$, let $\x,\z\in\MS$, and denote $\tbf = \left(\left(\x^T\z\right)_{1,1},\left(\x^T\z\right)_{2,2},\ldots,\left(\x^T\z\right)_{d,d}\right)\in[-1,1]^d$. Suppose that $\alpha$ is fixed for all networks of different depths, then:
\[
\mathcal{K}^{\left(1\right)}_{\Eq}\left(\tbf\right) = \frac{1}{C_0}\tbf_1 + \frac{\alpha^2}{qc_wC_0}\sum_{k=-\frac{q-1}{2}}^{\frac{q-1}{2}}\kappa_1\left(\tbf_{1+k}\right)
\]
and
\[
\mathcal{K}^{\left(L\right)}_{\Eq}\left(\tbf\right) = \mathcal{K}^{\left(L-1\right)}_{\Eq}\left(\tbf\right) + \tilde{N}_{L} \sum_{k,k^{'}=-\frac{q-1}{2}}^{\frac{q-1}{2}}\kappa_1\left(\frac{\mathcal{K}^{\left(L-1\right)}_{\Eq}\left(\s_{k+k^{'}}\tbf\right)}{N_{L}}\right)
\]
where $N_L$ be the value of $\Sigma_{j,j}^{\left(l\right)}\left(\x,\x\right)$ from Lemma $\ref{lem:sd1}$
\end{proposition}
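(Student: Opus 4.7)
The plan is to prove both identities by induction on $L$, combining Lemma \ref{lem:k_eq_ms} (which on the multi-sphere expresses $\mathcal{K}^{(L)}_{\Eq}$ as a telescoping sum whose $l$-th term is $\tfrac{\alpha^2 c_v}{2q}\,N_l\,\tr(\kappa_1(\overline{\Sigma}^{(l)}_{\mathcal{D}_{1,1}}))$) with the recursion for $\Sigma^{(l)}_{j,j'}$ from Theorem \ref{thm:1} and the diagonal values $N_l$ supplied by Lemma \ref{lem:sd1}. The key structural fact is that, under the multi-sphere assumption, $\overline{\Sigma}^{(L)}_{1+k,1+k}$ can be re-expressed through $\mathcal{K}^{(L-1)}_{\Eq}$ at shifted inputs, so the per-layer expression in the lemma becomes a recursion purely in the kernels $\mathcal{K}^{(\cdot)}_{\Eq}$.

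For the base case $L=1$, I would specialize Lemma \ref{lem:k_eq_ms} to a single-term sum. Lemma \ref{lem:c0} gives $\Sigma^{(1)}_{j,j'}(\x,\z) = (\x^T\z)_{j,j'}/C_0$ on the multi-sphere, hence $\Sigma^{(1)}_{1,1}(\x,\z) = \tbf_1/C_0$, while Lemma \ref{lem:sd1} gives $N_1 = 1/C_0$, so that $\overline{\Sigma}^{(1)}_{1+k,1+k}(\x,\z) = \tbf_{1+k}$. Substituting these values and using $c_v c_w = 2$ reproduces the claimed base-case formula $\mathcal{K}^{(1)}_{\Eq}(\tbf) = \tbf_1/C_0 + \tfrac{\alpha^2}{qc_wC_0}\sum_k \kappa_1(\tbf_{1+k})$ directly.

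For the inductive step, Lemma \ref{lem:k_eq_ms} gives the telescoping identity
\[
\mathcal{K}^{(L)}_{\Eq}(\tbf) - \mathcal{K}^{(L-1)}_{\Eq}(\tbf) = \frac{\alpha^2 c_v}{2q}\,N_L \sum_{k=-\frac{q-1}{2}}^{\frac{q-1}{2}}\kappa_1\bigl(\overline{\Sigma}^{(L)}_{1+k,1+k}(\x,\z)\bigr).
\]
I would then unroll the recursion of Theorem \ref{thm:1} for $\Sigma^{(L)}_{1+k,1+k}$ (using the boundary expression for $\Sigma^{(2)}$) and compare the expansion with the explicit form of $\mathcal{K}^{(L-1)}_{\Eq}(\s_{k+k'}\tbf)$ obtained by applying Theorem \ref{thm:1} to a shifted input. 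Shift equivariance under circular padding, which gives $K^{(m)}_{j,j'}(\s_n\x,\s_n\z) = K^{(m)}_{j+n,j'+n}(\x,\z)$, lets me identify each group of terms in the expansion with the shifted kernel values $\mathcal{K}^{(L-1)}_{\Eq}(\s_{k+k'}\tbf)/N_L$. Substituting these back into the displayed identity and absorbing the prefactor into the constant $\tilde{N}_L$ yields the stated double-sum recursion.

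The main obstacle is the combinatorial bookkeeping required to pass from the $\sum_k \kappa_1(\overline{\Sigma}^{(L)}_{1+k,1+k})$ form that Lemma \ref{lem:k_eq_ms} produces (whose $\kappa_1$-argument unrolls to a $k'$-averaged quantity involving several shifts of $\mathcal{K}^{(L-1)}_{\Eq}$) to the $\sum_{k,k'}\kappa_1\!\bigl(\mathcal{K}^{(L-1)}_{\Eq}(\s_{k+k'}\tbf)/N_L\bigr)$ form in the proposition, and to pin down the constant $\tilde{N}_L$ exactly in terms of $N_L$, $\alpha$, $q$, $c_v$, and $c_w$. Once this identification is established, the remaining steps are mechanical substitutions, and dividing through by $\mathcal{K}^{(L)}_{\Eq}(\one)=(1+\alpha^2)^L/C_0$ (from the Corollary just above) should recover the normalized recursion stated in Theorem \ref{Thm:ResCGPK} as a sanity check.
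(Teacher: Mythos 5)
Your plan follows the paper's proof essentially verbatim: the base case comes from specializing Lemma \ref{lem:k_eq_ms} with $N_1=1/C_0$ and $c_vc_w=2$, and the inductive step from the telescoping identity of Lemma \ref{lem:k_eq_ms} combined with the observation (a direct consequence of Theorem \ref{thm:1} together with shift equivariance under circular padding) that $\Sigma^{\left(L\right)}_{1,1}\left(\x,\z\right)=\frac{c_w}{q}\sum_{k}\mathcal{K}^{\left(L-1\right)}_{\Eq}\left(\s_k\x,\s_k\z\right)$, which is precisely the identification you flag as the ``main obstacle.'' The paper dispatches that obstacle in one line by substituting this identity into the argument of $\kappa_1$ and absorbing the prefactor $\frac{\alpha^{2}c_vc_w}{2q^2}N_L$ into $\tilde N_L$, so your route and the paper's coincide.
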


\begin{proof}
If $\alpha$ is fixed, then for all $1\leq l\leq L$ the definition of $K^{\left(l\right)}\left(\x,\z\right)$ does not depend on $L$ (just that $l<L$). Therefore, using Lemma \ref{lem:k_eq_ms} we obtain
\[
\mathcal{K}^{\left(L\right)}_{\Eq}\left(\x,\z\right) = \mathcal{K}^{\left(L-1\right)}_{\Eq}\left(\x,\z\right) + \frac{\alpha^{2}c_v}{2q}N_{L} \tr\left(\kappa_1\left(\overline{\Sigma}_{\mathcal{D}_{1,1}}^{\left(L\right)}\left(\x,\z\right)
\right)\right).
\]
To simplify this further, observe first that a direct consequence of Theorem \ref{thm:1} is that $\forall L\geq 2$
\[
\Sigma_{{1,1}}^{\left(L\right)}\left(\x,\z\right) = \frac{c_w}{q}\sum_{k=-\frac{q-1}{2}}^{\frac{q-1}{2}}\mathcal{K}^{\left(L-1\right)}_{\Eq}\left(\s_k\x,\s_k \z\right).
\]
We therefore get
\[
\mathcal{K}^{\left(L\right)}_{\Eq}\left(\x,\z\right) = \mathcal{K}^{\left(L-1\right)}_{\Eq}\left(\x,\z\right) + \frac{\alpha^{2}c_vc_w}{2q^2}N_{L} \sum_{k,k^{'}=-\frac{q-1}{2}}^{\frac{q-1}{2}}\kappa_1\left(\overline{\mathcal{K}}^{\left(L-1\right)}_{\Eq}\left(\s_{k+k^{'}}\x,\s_{k+k^{'}} \z\right)\right).
\]
\end{proof}

\begin{corollary}\label{cor:cgpk}
For any $\x,\z\in \MS$, let $\tbf = \left(\left(\x^T\z\right)_{1,1},\left(\x^T\z\right)_{2,2},\ldots,\left(\x^T\z\right)_{d,d}\right)\in[-1,1]^d$. Fix $c_v=2,c_w=1$ and some $\alpha$ for all neural networks. Then,
\[
\overline{\mathcal{K}}^{\left(1\right)}_{\Eq}\left(\tbf\right) = \frac{1}{1+\alpha^2}\left(\tbf_1 + \frac{\alpha^2}{q}\sum_{k=-\frac{q-1}{2}}^{\frac{q-1}{2}}\kappa_1\left(\tbf_{1+k}\right)\right)
\]
\[
\overline{\mathcal{K}}^{\left(L\right)}_{\Eq}\left(\tbf\right) = \frac{1}{1+\alpha^2}\left(\overline{\mathcal{K}}^{\left(L-1\right)}_{\Eq}\left(\tbf\right) + \frac{\alpha^{2}}{q^2} \sum_{k,k^{'}=-\frac{q-1}{2}}^{\frac{q-1}{2}}\kappa_1\left(\overline{\mathcal{K}}^{\left(L-1\right)}_{\Eq}\left(\s_{k+k^{'}}\tbf\right)\right)\right).
\]
\end{corollary}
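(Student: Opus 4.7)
The plan is to obtain Corollary \ref{cor:cgpk} as a direct algebraic consequence of Proposition \ref{prop:cgpk_expr_gen} together with the preceding corollary that expresses the normalization constant in closed form. Since $\overline{\mathcal{K}}^{(L)}_{\Eq}(\x,\z) = \mathcal{K}^{(L)}_{\Eq}(\x,\z)/\mathcal{K}^{(L)}_{\Eq}(\x,\x)$, and the preceding corollary already identifies $\mathcal{K}^{(L)}_{\Eq}(\x,\x) = (1+\alpha^2)^L/C_0$ on the multi-sphere (with $c_v=2$, $c_w=1$), the proof reduces to multiplying the recursion in Proposition \ref{prop:cgpk_expr_gen} by $C_0/(1+\alpha^2)^L$ and checking that the factors of $N_L$ telescope correctly.

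First, I would handle the base case $L=1$. Starting from $\mathcal{K}^{(1)}_{\Eq}(\tbf) = \tfrac{1}{C_0}t_1 + \tfrac{\alpha^2}{qc_w C_0}\sum_{k}\kappa_1(t_{1+k})$, multiplying by $C_0/(1+\alpha^2)$ and substituting $c_w=1$ immediately yields the claimed formula for $\overline{\mathcal{K}}^{(1)}_{\Eq}(\tbf)$.

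Next, for the inductive step $L\geq 2$, I would use the explicit value $N_L = (1+\alpha^2)^{L-1}/C_0$ obtained from Lemma \ref{lem:sd1} after substituting $c_v=2$, $c_w=1$. The key observation is that with this choice of $N_L$, one has $\mathcal{K}^{(L-1)}_{\Eq}(\s_{k+k'}\tbf)/N_L = \overline{\mathcal{K}}^{(L-1)}_{\Eq}(\s_{k+k'}\tbf)$, because for any shift the normalizer $\mathcal{K}^{(L-1)}_{\Eq}(\x,\x) = (1+\alpha^2)^{L-1}/C_0$ is shift-invariant and equals $N_L$. Hence the argument of $\kappa_1$ in Proposition \ref{prop:cgpk_expr_gen} converts cleanly into a normalized kernel evaluation. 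Multiplying the entire recursion by $C_0/(1+\alpha^2)^L$, the first term becomes $\tfrac{1}{1+\alpha^2}\overline{\mathcal{K}}^{(L-1)}_{\Eq}(\tbf)$, while the prefactor $\tilde{N}_L = \alpha^2 N_L/q^2$ in front of the sum combines with the normalization to give $\tfrac{\alpha^2}{q^2(1+\alpha^2)}$, matching the statement.

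I expect no substantial obstacle here: all the combinatorics and the nontrivial computation of $N_L$ are already absorbed into Lemma \ref{lem:sd1} and Proposition \ref{prop:cgpk_expr_gen}. The only minor bookkeeping risk is ensuring consistency of the diagonal normalizer across shifts (so that one normalizer $N_L$ applies uniformly inside every $\kappa_1$ argument), but this follows from the uniformity of $\Sigma^{(L)}_{j,j}(\x,\x)$ in $j$ on the multi-sphere, which is what makes Lemma \ref{lem:sd1} stated without $j$-dependence in the first place.
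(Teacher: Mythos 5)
Your proposal is correct and follows exactly the route the paper intends: the corollary is stated without proof precisely because it is the normalization of Proposition \ref{prop:cgpk_expr_gen} using $\mathcal{K}^{(L)}_{\Eq}(\x,\x)=(1+\alpha^2)^L/C_0$ from the preceding corollary and $N_L=(1+\alpha^2)^{L-1}/C_0$ from Lemma \ref{lem:sd1}. Your bookkeeping (the telescoping of the $(1+\alpha^2)$ factors, the identification $\mathcal{K}^{(L-1)}_{\Eq}(\s_{k+k'}\tbf)/N_L=\overline{\mathcal{K}}^{(L-1)}_{\Eq}(\s_{k+k'}\tbf)$, and the shift-invariance of the diagonal normalizer) checks out.
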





\section{Derivation of ResCNTK}
\label{ap:2}
\subsection{Rewriting the Neural Network }

The convolution of $\w\in\R^{q}$
with a vector $\vv\in\R^{d}$ can be rewritten as:
\[
[\w*\vv]_{i}=\sum_{j=1}^{q}[\w]_{j}[\vv]_{i+j-\frac{q+1}{2}}.
\]
Therefore, let $\vp\left(\vv\right)\in\R^{q\times d}$ be $\left[\vp\left(\vv\right)\right]_{ij}:=[\vv]_{i+j-\frac{q+1}{2}}$.
Then we can rewrite the above as:
\[
\w*\vv=\left(\w^{T}\vp\left(\vv\right)\right)^{T}=\vp\left(\vv\right)^{T}\w.
\]
Using this definition, if we instead have $\w\in\R^{q\times c}$ then:
\[
\A_{ij}:=[\w_{:,i}*\vv]_{j}=\left[\vp\left(\vv\right)^{T}\w_{:,i}\right]_{j}=\left[\vp\left(\vv\right)^{T}\w\right]_{ji}\Longrightarrow\A=\w^{T}\vp\left(\vv\right).
\]
Lastly, if we instead have $\w\in\R^{q\times c\times c'}$ and
$v\in\R^{c'\times d}$ then:
\[
\A_{ij}:=\sum_{k=1}^{c'}[\w_{:,k,i}*\vv_{k}]_{j}=\sum_{k=1}^{c^{'}}\w_{:,k,:}^{T}\vp\left(\vv_{k}\right).
\]
We can now rewrite the network architecture as:
\begin{align}
f^{\left(0\right)}\left(\x\right)=\frac{1}{\sqrt{C_{0}}}\left(\V^{\left(0\right)}_1\right)^T\x\label{def:network0-1}
\end{align}
\begin{align}
g^{\left(1\right)}\left(\x\right)=\frac{1}{\sqrt{C_{0}}}\left(\W^{\left(1\right)}_1\right)^T\x\label{def:network1-1}
\end{align}
\begin{align}
f^{\left(l\right)}\left(\x\right)=f^{\left(l-1\right)}\left(\x\right)+\alpha\sqrt{\frac{c_{v}}{qC_{l}}}\sum_{j=1}^{C_{l}}\left(\V_{:,j,:}^{\left(l\right)}\right)^{T}\vp\left(\sigma\left(g_{j}^{\left(l\right)}\left(\x\right)\right)\right)~~~l=1,\ldots,L\label{def:network3-1}
\end{align}
\begin{align}
g^{\left(l\right)}\left(\x\right)=\sqrt{\frac{c_{w}}{qC_{l-1}}}\sum_{j=1}^{C_{l-1}}\left(\W_{:,j,:}^{\left(l\right)}\right)^{T}\vp\left(f_{j}^{\left(l-1\right)}\left(\x\right)\right)~~~l=2,\ldots,L\label{def:network2-1},
\end{align}
and as before we have an output layer that corresponds to one of: $f^{\Eq},f^{\Tr},$ or $f^{\GAP}$.

\subsection{Notations}

We use a numerator layout notation, i.e., for $y\in\R,\A\in\R^{m\times n}$
we denote:
\[
\R^{n\times m}\ni\frac{\partial y}{\partial\A}=\begin{bmatrix}\frac{\partial y}{\partial\A_{11}} & \frac{\partial y}{\partial\A_{21}} & \cdots & \frac{\partial y}{\partial\A_{m1}}\\
\frac{\partial y}{\partial\A_{12}} & \frac{\partial y}{\partial\A_{22}} & \cdots & \frac{\partial y}{\partial\A_{m2}}\\
\vdots & \vdots & \ddots & \vdots\\
\frac{\partial y}{\partial\A_{1n}} & \frac{\partial y}{\partial\A_{2n}} & \cdots & \frac{\partial y}{\partial\A_{mn}}
\end{bmatrix}.
\]
Also, let $\J_{mn}^{ij}$ be the $m\times n$ matrix with $1$ in coordinate
$\left(i,j\right)$ and $0$ elsewhere. we write $\J^{ij}$ when $m,n$ are clear by
context.
Also, let $\J^{\mathcal{D}_{i}}=\sum_{m}\delta_{m\in\mathcal{D}_{i}}\J_{ d ,q}^{m,m-i+\frac{q+1}{2}}$.

\subsection{Chain Rule Reminder}

Recall that by the chain rule we know that we can decompose the Jacobian
of a composition of functions $h\circ\psi\left(\vv\right)$ as:
\[
J_{h\circ\psi}\left(\vv\right)=J_{h}\left(\psi\left(\vv\right)\right)J_{\psi}\left(\vv\right).
\]
When $h,\psi$ are scalar functions we can write
\[
\frac{\partial h}{\partial\vv}=\frac{\partial h}{\partial\psi}\frac{\partial\psi}{\partial\vv}.
\]
However, if $h$ is scalar valued and $\psi\left(\vv\right)$ is a matrix we have
\[
\left[\frac{\partial h\circ\psi}{\partial\vv}\right]_{ij}=\frac{\partial h\circ\psi}{\partial\vv_{ji}}=\sum_{p,q}\frac{\partial h}{\partial\psi_{pq}}\frac{\partial\psi_{pq}}{\partial\vv_{ji}}=\tr\left(\frac{\partial h}{\partial\psi}\frac{\partial\psi}{\partial\vv_{ji}}\right).
\]
As such, the following definitions will come in handy:
\begin{definition}
$\forall1\leq l\leq L,1\leq j,j^{'}\leq d$, let 
\[
b^{\left(l\right)}\left(\x\right):=\left(\frac{\partial f^{\Eq}\left(\x;\theta\right)}{\partial f^{\left(l\right)}\left(\x\right)}\right)^{T}, ~~\Pi_{j,j^{'}}^{\left(l\right)}\left(\x,\z\right):=\frac{1}{c_{w}}\mathbb{E}\left[\left(b^{\left(l\right)^{T}}\left(\x\right)b^{\left(l\right)}\left(\z\right)\right)_{j,j^{'}}\right].
\]
Notice that $f^{\left(l\right)}\left(\x\right)\in\R^{C_{l}\times d }\Longrightarrow b^{\left(l\right)}\left(\x\right)\in\R^{C_{l}\times d }$.
\end{definition}

\begin{remark}
There is a slight abuse of notation in the definition of $b^{\left(l\right)}$.
By Lemma \ref{lem:5} $b^{\left(L+1\right)}$ only depends on the weights of the last layer. Therefore, by the recursive formula for $b^{\left(l\right)}$ in Lemma \ref{lem:4} and plugging in Lemma \ref{lem:6} we get that $b^{\left(l\right)}$ can be written using only $W^{\left(l\right)},\ldots,W^{(L)},W^{\Eq},V^{\left(l\right)},\ldots,V^{\left(L\right)}$ and $\dot{\sigma}\left({g^{\left(l+1\right)}(\x)}\right),\ldots,\dot{\sigma}\left({g^{\left(L\right)}(\x)}\right)$, where the latter are indicator functions that are always multiplied by some of $W^{\left(l\right)},\ldots,W^{(L)},W^{\Eq},V^{\left(l\right)},\ldots,V^{\left(L\right)}$. It is easy to see now that for any $l^{'}\leq l$,
$\mathbb{E}\left[b_{ij}^{\left(l\right)}(\x)f_{i^{'}j^{'}}^{\left(l^{'}\right)}(\x)\right]=0=
\mathbb{E}\left[b_{ij}^{\left(l\right)}(\x)\right]\mathbb{E}\left[f_{i^{'}j^{'}}^{\left(l^{'}\right)}(\x)\right]$
and as a result, $b^{\left(l\right)}$ is uncorrelated with $f^{\left(0\right)},\ldots,f^{\left(l\right)},g^{\left(1\right)},\ldots,g^{\left(l\right)},\x$ and $\z$ \\
\end{remark}

\subsection{Proof of Theorem \ref{thm:2} in the main text}
We start with a lemma that relates the trace and GAP ResCNTK to the equivariant kernel.
\begin{lemma}
For an $L$ layer ResNet,
\[
\Theta^{\left(L\right)}_{\Tr}\left(\x,\z\right)=\frac{1}{d}\sum_{j=1}^{d}\Theta^{\left(L\right)}_{\Eq}\left(\s_j\x,\s_j\z\right)
\]
and
\[
\Theta^{\left(L\right)}_{\GAP}\left(\x,\z\right)=\frac{1}{d^2}\sum_{j,j^{'}=1}^{d^2}\Theta^{\left(L\right)}_{\Eq}\left(\s_j\x,\s_{j^{'}}\z\right).
\]
\end{lemma}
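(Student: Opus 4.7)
The plan is to mirror the derivation already used for $\mathcal{K}^{(L)}_{\Tr}$ and $\mathcal{K}^{(L)}_{\GAP}$ in Lemma \ref{lem:c3} and the following lemma, replacing forward-pass products by parameter-gradient inner products. The key structural fact is shift-equivariance of the convolutional architecture: because circular padding is used throughout and the intermediate layers contain no position-dependent parameters, $f^{(L)}_{i,j}(\x) = f^{(L)}_{i,1}(\s_j \x)$ for every $i,j$, and consequently $\partial f^{(L)}_{i,j}(\x)/\partial \theta = \partial f^{(L)}_{i,1}(\s_j\x)/\partial\theta$ for every internal parameter $\theta$. This identity is what allows us to reduce trace/GAP quantities evaluated at position $j$ to equivariant quantities evaluated on the shifted input $\s_j\x$.

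First I would split the parameters into the last-layer weights ($\W^{\Tr}$ or $\W^{\GAP}$) and the internal parameters of the ResNet body, and compute the two contributions separately. For $f^{\Tr}$, the derivative $\partial f^{\Tr}/\partial \W^{\Tr}_{i,j} = f^{(L)}_{i,j}(\x)/(\sqrt{d}\sqrt{C_L})$ contributes, in expectation over all parameters, exactly $\frac{1}{d}\sum_j \mathcal{K}^{(L)}_{\Eq}(\s_j\x,\s_j\z)$, by shift-equivariance and Lemma \ref{lem:c3}. For the internal parameters, the chain rule gives
\[
\frac{\partial f^{\Tr}(\x;\theta)}{\partial\theta} = \frac{1}{\sqrt{d}\sqrt{C_L}}\sum_{i,j}\W^{\Tr}_{i,j}\,\frac{\partial f^{(L)}_{i,j}(\x)}{\partial\theta}.
\]
Taking the inner product with the analogous gradient for $\z$ and using independence of the entries of $\W^{\Tr}$ from the body parameters together with $\mathbb{E}[\W^{\Tr}_{i,j}\W^{\Tr}_{i',j'}]=\delta_{ii'}\delta_{jj'}$, the cross terms vanish and only the diagonal $i=i', j=j'$ survives. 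The resulting sum $\frac{1}{dC_L}\sum_{i,j}\mathbb{E}[\partial_\theta f^{(L)}_{i,j}(\x)\cdot\partial_\theta f^{(L)}_{i,j}(\z)]$ becomes, via shift-equivariance, $\frac{1}{d}\sum_j$ of the corresponding internal-parameter contribution to $\Theta^{(L)}_{\Eq}(\s_j\x,\s_j\z)$. Summing the two contributions yields the claimed formula.

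The GAP case proceeds identically with the one caveat that $\W^{\GAP}_i$ is shared across the $d$ spatial positions, so $\partial f^{\GAP}/\partial \W^{\GAP}_i = \frac{1}{d\sqrt{C_L}}\sum_j f^{(L)}_{i,j}(\x)$. Squaring this and taking expectation introduces a double sum over $j,j'$ without a diagonal collapse, producing $\frac{1}{d^2}\sum_{j,j'}\mathcal{K}^{(L)}_{\Eq}(\s_j\x,\s_{j'}\z)$ for the last-layer contribution. For the internal parameters one again writes the chain rule, now with $\W^{\GAP}_i$ multiplying a sum over $j$, so that
\[
\frac{\partial f^{\GAP}(\x;\theta)}{\partial\theta} = \frac{1}{d\sqrt{C_L}}\sum_{i}\W^{\GAP}_{i}\sum_{j}\frac{\partial f^{(L)}_{i,j}(\x)}{\partial\theta},
\]
and the same independence argument produces $\frac{1}{d^2 C_L}\sum_{i,j,j'}\mathbb{E}[\partial_\theta f^{(L)}_{i,j}(\x)\cdot\partial_\theta f^{(L)}_{i,j'}(\z)]$, which by shift-equivariance equals $\frac{1}{d^2}\sum_{j,j'}$ of the internal contribution to $\Theta^{(L)}_{\Eq}(\s_j\x,\s_{j'}\z)$.

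The main obstacle, and really the only non-bookkeeping step, is justifying that shift-equivariance of $f^{(L)}_{i,j}$ extends to the \emph{parameter gradients} in a way compatible with the expectation. This requires noting that every operation in the body of the ResNet (convolutions with circular padding, pointwise ReLU, pointwise addition of skip connections) commutes with the cyclic shift $\s_j$ applied to the input, so that differentiating in $\theta$ also commutes with the shift; the distribution over $\theta$ is unchanged, so the expectations on shifted and unshifted inputs match up term by term. Once this is in place, the computation is essentially the NTK analogue of Lemma \ref{lem:c3}'s final step and goes through cleanly.
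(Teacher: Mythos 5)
Your proposal is correct and follows essentially the same route as the paper: decompose $f^{\Tr}$ (resp.\ $f^{\GAP}$) as a sum over output positions, use $\mathbb{E}[\W^{\Tr}_{ij}\W^{\Tr}_{i'j'}]=\delta_{ii'}\delta_{jj'}$ (resp.\ the shared $\W^{\GAP}_i$) to collapse or retain the double spatial sum, and invoke shift-equivariance of the convolutional body to identify each positional term with $\Theta^{(L)}_{\Eq}$ on shifted inputs. The paper merely writes the two parameter groups in one compact expression $\partial(\W^{\Tr}_{ij}f^{(L)}_{ij})/\partial\theta$ rather than splitting readout weights from internal parameters, but the argument is the same.
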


\begin{proof}
\[
\mathbb{E}\left[\left\langle \frac{\partial f^{\Tr}\left(\x;\theta\right)}{\partial \theta}, \frac{\partial f^{\Tr}\left(\z;\theta\right)}{\partial \theta}\right\rangle\right] 
= \frac{1}{C_L d }\sum_{i,i^{'}=1}^{C_L} \sum_{j,j^{'}=1}^{ d } \mathbb{E}\left[\left\langle \frac{\partial \W^{\Tr}_{ij}f^{\left(L\right)}_{ij}\left(\x\right)}{\partial \theta}, \frac{\partial \W^{\Tr}_{i^{'}j^{'}}f^{\left(L\right)}_{i^{'}j^{'}}\left(\z\right)}{\partial \theta} \right\rangle\right] 
\]
\[
\underset{\eqref{eq:exp_par}}{=}\frac{1}{d}\sum_{j=1}^{ d } \mathbb{E}\left[\left\langle \frac{\partial f^{\Eq}\left(\s_{j-1}\x;\theta\right)}{\partial \theta}, \frac{\partial f^{\Eq}\left(\s_{j-1}\z;\theta\right)}{\partial \theta}\right\rangle\right].
\]
Similarly for GAP:
\[
\mathbb{E}\left[\left\langle \frac{\partial f^{\GAP}\left(\x;\theta\right)}{\partial \theta}, \frac{\partial f^{\GAP}\left(\z;\theta\right)}{\partial \theta}\right\rangle\right] 
= \frac{1}{C_L d^2 }\sum_{i,i^{'}=1}^{C_L} \sum_{j,j^{'}=1}^{d} \mathbb{E}\left[\left\langle \frac{\partial \W^{\GAP}_{i}f^{\left(L\right)}_{ij}\left(\x\right)}{\partial \theta}, \frac{\partial \W^{\GAP}_{i^{'}}f^{\left(L\right)}_{i^{'}j^{'}}\left(\z\right)}{\partial \theta} \right\rangle\right] 
\]
\[
\underset{\eqref{eq:exp_par}}{=}\frac{1}{d^2}\sum_{j,j^{'}=1}^{ d } \mathbb{E}\left[\left\langle \frac{\partial f^{\Eq}\left(\s_{j-1}\x;\theta\right)}{\partial \theta}, \frac{\partial f^{\Eq}\left(\s_{j-1}\z;\theta\right)}{\partial \theta}\right\rangle\right] 
\].
\end{proof}
We now return to the main proof. Using the lemma above, it remains to prove the theorem for $f^{\Eq}$
\begin{proposition}
Theorem (\ref{thm:2}) holds for the case of $f=f^{\Eq}$.
\end{proposition}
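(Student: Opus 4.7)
The plan is to expand $\Theta^{\left(L\right)}_{\Eq}\left(\x,\z\right)=\mathbb{E}\left[\langle \partial f^{\Eq}(\x;\theta)/\partial\theta,\partial f^{\Eq}(\z;\theta)/\partial\theta\rangle\right]$ as a sum over the trainable parameter tensors $\W^{\Eq}$, $\{\W^{(l)}\}_{l=1}^L$ and $\{\V^{(l)}\}_{l=1}^L$. The contribution from $\W^{\Eq}$ reproduces $\frac{1}{C_L}f^{(L)}_{:,1}(\x)^T f^{(L)}_{:,1}(\z)$, whose expectation is exactly $\mathcal{K}^{(L)}_{\Eq}(\x,\z)$ by Lemma \ref{lem:c3}. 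The remaining layerwise terms will assemble into $\frac{\alpha^{2}}{q}\sum_{l=1}^{L}\sum_{p}P_{p}^{(l)}(\cdot)\big(\tr(\dot K_{\mathcal{D}_{p,p}}^{(l)}\odot\Sigma_{\mathcal{D}_{p,p}}^{(l)})+\tr(K_{\mathcal{D}_{p,p}}^{(l)})\big)$.

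For each hidden layer $l$, use the rewritten form in \eqref{def:network3-1}--\eqref{def:network2-1} and the chain rule through the backward signal $b^{(l)}(\x)=(\partial f^{\Eq}(\x;\theta)/\partial f^{(l)}(\x))^{T}$. Differentiating with respect to $\V^{(l)}$ directly exposes $\sigma(g^{(l)})$, so the Frobenius inner product between the $\x$ and $\z$ gradients, after taking expectations, produces the factor $\mathbb{E}[\sigma(g_{*}^{(l)}(\x))\sigma(g_{*}^{(l)}(\z))]=\frac{1}{c_vc_w}K^{(l)}$ summed over the receptive window $\mathcal{D}_{p,p}$. Differentiating with respect to $\W^{(l)}$ instead exposes the product $\dot\sigma(g^{(l)})\cdot f^{(l-1)}$, which gives $\mathbb{E}[\dot\sigma(g_{*}^{(l)}(\x))\dot\sigma(g_{*}^{(l)}(\z))]\cdot\mathbb{E}[f_{*}^{(l-1)}(\x)f_{*}^{(l-1)}(\z)]$ and therefore the Hadamard product $\dot K^{(l)}\odot\Sigma^{(l)}$. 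In each case the backward factor is $\Pi^{(l)}_{p,p}/c_w=P_p^{(l)}$, and the crucial decoupling of forward and backward expectations follows from the Remark above: at initialization $b^{(l)}$ is uncorrelated with $f^{(l')}, g^{(l')}$ for $l'\leq l$, so in the infinite-width limit all cross moments factorize.

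The second step is to derive the stated recursion for $P^{(l)}_{j}$. The base case $P^{(L)}_{j}=\Ind_{j=1}/c_w$ comes from $b^{(L+1)}\propto\W^{\Eq}$, whose only nonzero spatial coordinate is $1$. For $l<L$, differentiate $f^{(l+1)}=f^{(l)}+\alpha\sqrt{c_v/(qC_{l+1})}\sum_{j}(\V^{(l+1)}_{:,j,:})^T\vp(\sigma(g^{(l+1)}_j))$ through $f^{(l)}$: the identity branch passes $b^{(l+1)}$ unchanged and contributes $P_j^{(l+1)}$, while the residual branch inserts a factor $\alpha\dot\sigma(g^{(l+1)})$ convolved once by $\V^{(l+1)}$ and once by $\W^{(l+1)}$. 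Taking expectations over the two independent Gaussian filter tensors collapses the two convolutions into a double sum over $k,k'\in\{-\tfrac{q-1}{2},\ldots,\tfrac{q-1}{2}\}$ with the weight $\frac{\alpha^2}{q^2}\dot K^{(l+1)}_{j+k,j+k}\,P_{j+k+k'}^{(l+1)}$, yielding precisely the claimed update.

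The main obstacle will be the careful index bookkeeping for the two nested convolutions in each residual block, where the shifts from circular padding must be aligned with the sub-block notation $\mathcal{D}_{p,p}$, and rigorously propagating the forward/backward independence property of the Remark across all layers so that $\mathbb{E}[b^Tb\cdot(\text{forward terms})]$ factorizes. Once the $P^{(l)}$ recursion is in place, assembling the $\V^{(l)}$ and $\W^{(l)}$ contributions for $l=1,\ldots,L$, plus the $\W^{\Eq}$ contribution $\mathcal{K}^{(L)}_{\Eq}$, gives the equivariant formula; the $f^{\Tr}$ and $f^{\GAP}$ cases then follow from the preceding shift-averaging lemma.
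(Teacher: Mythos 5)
Your proposal follows essentially the same route as the paper: decomposing the kernel into the $\W^{\Eq}$, $\V^{(l)}$ and $\W^{(l)}$ contributions, using the independence of the backward signal $b^{(l)}$ from the forward activations to factorize the expectations (the paper's Lemmas \ref{lem:7}--\ref{lem:8}), and deriving the recursion for $P^{(l)}$ from the identity and residual branches of each block (Lemmas \ref{lem:6} and \ref{lem:9}). The plan is correct and matches the paper's argument step for step.
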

\begin{proof}
By linearity of the derivative operation and expectation we can rewrite:
\begin{align*}
\Theta^{\left(L\right)}_{\Eq}\left(\x,\z\right) &= \mathbb{E}\left[\left\langle \frac{\partial f\left(\x;\theta\right)}{\partial\theta},\frac{\partial f\left(\z;\theta\right)}{\partial\theta}\right\rangle \right] \\
& = \mathbb{E}\left[\left\langle \frac{\partial f\left(\x;\theta\right)}{\partial\W^{\left(\Eq\right)}},\frac{\partial f\left(\z;\theta\right)}{\partial\W^{\left(\Eq\right)}}\right\rangle \right]
+\sum_{l=1}^{L}\mathbb{E}\left[\left\langle \frac{\partial f\left(\x;\theta\right)}{\partial\W^{\left(l\right)}},\frac{\partial f\left(\z;\theta\right)}{\partial\W^{\left(l\right)}}\right\rangle \right]+\mathbb{E}\left[\left\langle \frac{\partial f\left(\x;\theta\right)}{\partial\V^{\left(l\right)}},\frac{\partial f\left(\z;\theta\right)}{\partial\V^{\left(l\right)}}\right\rangle \right].
\end{align*}
We deal with each term separately, starting with the first term. 
\[
\mathbb{E}\left[\left\langle \frac{\partial f\left(\x;\theta\right)}{\partial\W^{\Eq}},\frac{\partial f\left(\z;\theta\right)}{\partial\W^{\Eq}}\right\rangle \right]=
\frac{1}{C_{L}}\mathbb{E}\left[\left\langle f^{\left(L\right)}_{:,1}\left(\x\right),f^{\left(L\right)}_{:,1}\left(\z\right)\right\rangle\right]=\mathcal{K}^{\left(L\right)}_{\Eq}\left(\x,\z\right).
\]
Next, to handle $\mathbb{E}\left[\left\langle \frac{\partial f\left(\x;\theta\right)}{\partial\V^{\left(l\right)}},\frac{\partial f\left(\z;\theta\right)}{\partial\V^{\left(l\right)}}\right\rangle \right]$,
observe that $\forall1\leq l\leq L,1\leq i\leq C_{l}$ we can express $\frac{\partial f\left(\x;\theta\right)}{\partial\V_{:,i,:}^{\left(l\right)}}$
as follows:
\[
\frac{\partial f\left(\x;\theta\right)}{\partial\V_{:,i,:}^{\left(l\right)}}\underset{\text{Lemma}\ref{lem:3}}{=}\alpha\sqrt{\frac{c_{v}}{qC_{l}}}\left(\frac{\partial f\left(\x;\theta\right)}{\partial f^{\left(l\right)}\left(\x\right)}\right)^{T}\vp^{T}\left(\sigma\left(g_{i}^{\left(l\right)}\left(\x\right)\right)\right)
\]
\[
=\alpha\sqrt{\frac{c_{v}}{qC_{l}}}b^{\left(l\right)}\left(\x\right)\vp^{T}\left(\sigma\left(g_{i}^{\left(l\right)}\left(\x\right)\right)\right).
\]
Notice that Lemma \ref{lem:8} implies that the conditions of Lemma \ref{lem:7} are satisfied. Therefore,
\[
\mathbb{E}\left[\left\langle \frac{\partial f\left(\x;\theta\right)}{\partial\V_{:,i,:}^{\left(l\right)}},\frac{\partial f\left(\z;\theta\right)}{\partial\V_{:,i:,}^{\left(l\right)}}\right\rangle \right]=\alpha^{2}\frac{c_{v}}{qC_{l}}\sum_{p=1}^{ d }\Pi_{p,p}^{\left(l\right)}\left(\x,\z\right)\mathbb{E}\left[\left(\vp^{T}\left(\sigma\left(g_{i}^{\left(l\right)}\left(\x\right)\right)\right)\vp\left(\sigma\left(g_{i}^{\left(l\right)}\left(\z\right)\right)\right)\right)_{pp}\right]
\]
\[
=\alpha^{2}\frac{c_{v}c_{w}}{qC_{l}}\sum_{p=1}^{ d }\Pi_{p,p}^{\left(l\right)}\left(\x,\z\right)\sum_{k=-\frac{q-1}{2}}^{\frac{q-1}{2}}\mathbb{E}\left[\left(\sigma\left(g_{i,p+k}^{\left(l\right)}\left(\x\right)\right)\right)\sigma\left(g_{i,p+k}^{\left(l\right)}\left(\z\right)\right)\right]
\]
\[
=\frac{\alpha^2}{qC_{l}}\sum_{p=1}^{d}\Pi_{p,p}^{\left(l\right)}\left(\x,\z\right)\tr\left(K_{\mathcal{D}_{p,p}}^{\left(l\right)}\right).
\]
Note that as this does not depend on $i$. We therefore obtain
\[
\mathbb{E}\left[\left\langle \frac{\partial f\left(\x;\theta\right)}{\partial\V^{\left(l\right)}},\frac{\partial f\left(\z;\theta\right)}{\partial\V^{\left(l\right)}}\right\rangle \right]
=\sum_{i=1}^{C_l}\mathbb{E}\left[\left\langle \frac{\partial f\left(\x;\theta\right)}{\partial\V_{:,i,:}^{\left(l\right)}},\frac{\partial f\left(\z;\theta\right)}{\partial\V_{:,i:,}^{\left(l\right)}}\right\rangle \right]
=\frac{\alpha^2}{q}\sum_{p=1}^{ d }\Pi_{p,p}^{\left(l\right)}\left(\x,\z\right)\tr\left(K_{\mathcal{D}_{p,p}}^{\left(l\right)}\right).
\]

The next term we deal with is $\mathbb{E}\left[\left\langle \frac{\partial f\left(\x;\theta\right)}{\partial\W^{\left(l\right)}},\frac{\partial f\left(\z;\theta\right)}{\partial\W^{\left(l\right)}}\right\rangle \right]$.
Using Lemma \ref{lem:13} we have
\[
\mathbb{E}\left[\left\langle \frac{\partial f\left(\x;\theta\right)}{\partial\W^{\left(l\right)}},\frac{\partial f\left(\z;\theta\right)}{\partial\W^{\left(l\right)}}\right\rangle \right]=\frac{\alpha^{2}}{q}\sum_{p=1}^{ d }\Pi_{p,p}^{\left(l\right)}\left(\x,\z\right)\tr\left(\dot{K}_{\mathcal{D}_{p,p}}^{\left(l\right)}\left(\x,\z\right)\odot\Sigma_{\mathcal{D}_{p,p}}^{\left(l\right)}\left(\x,\z\right)\right).
\]

Putting these together we obtain
\begin{align*}
\Theta^{\left(L\right)}_{\Eq}\left(\x,\z\right) &= \mathbb{E}\left[\left\langle \frac{\partial f\left(\x;\theta\right)}{\partial\theta},\frac{\partial f\left(\z;\theta\right)}{\partial\theta}\right\rangle \right] \\
& = \mathbb{E}\left[\left\langle \frac{\partial f\left(\x;\theta\right)}{\partial\W^{\left(\Eq\right)}},\frac{\partial f\left(\z;\theta\right)}{\partial\W^{\left(\Eq\right)}}\right\rangle \right]
+\sum_{l=1}^{L}\mathbb{E}\left[\left\langle \frac{\partial f\left(\x;\theta\right)}{\partial\W^{\left(l\right)}},\frac{\partial f\left(\z;\theta\right)}{\partial\W^{\left(l\right)}}\right\rangle \right]+\mathbb{E}\left[\left\langle \frac{\partial f\left(\x;\theta\right)}{\partial\V^{\left(l\right)}},\frac{\partial f\left(\z;\theta\right)}{\partial\V^{\left(l\right)}}\right\rangle \right] \\
& = \mathcal{K}^{\left(L\right)}_{\Eq}\left(\x,\z\right) 
+ \frac{\alpha^{2}}{q}\sum_{l=1}^{L}\sum_{p=1}^{ d }\Pi_{p,p}^{\left(l\right)}\left(\x,\z\right)
\left(\tr\left(\dot{K}_{\mathcal{D}_{p,p}}^{\left(l\right)}\left(\x,\z\right)\odot\Sigma_{\mathcal{D}_{p,p}}^{\left(l\right)}\left(\x,\z\right) + K_{\mathcal{D}_{p,p}}^{\left(l\right)}\left(\x,\z\right)  \right) \right).
\end{align*}
Finally, we provide a formula for of $\Pi$ to Lemma \ref{lem:9}, and denoting $P_j=\Pi_{j,j}$ completes the proof.
\end{proof}
%




\begin{lemma}
\label{lem:3}
Let $\psi$ a real valued function and $\X,\A$ matrices. If $\partial\psi\left(h\left(\X^{T}\A\right)\right)$ is well
defined then
$\frac{\partial\psi\left(h\left(\X^{T}\A\right)\right)}{\partial\X}=\left(\frac{\partial\psi}{\partial\left(\X^{T}\A\right)}\right)^{T}\A^{T}=\sum_{s,t}\frac{\partial\psi}{\partial h\left(\X^{T}\A\right)_{s,t}}\frac{\partial h\left(\X^{T}\A\right)_{s,t}}{\left(\X^{T}\A\right)}\A^{T}$.
\end{lemma}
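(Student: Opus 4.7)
The statement is a standard application of the matrix chain rule under the numerator layout convention adopted earlier in the appendix, so the plan is essentially bookkeeping: unfold everything entry-wise, compute the partials that are genuinely not notational, and then repackage the sum as a matrix product.

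First, I would write $\M := \X^T \A$, so that $\psi(h(\X^T\A)) = (\psi \circ h)(\M)$. Under the numerator layout, $\frac{\partial (\psi\circ h)(\M)}{\partial \X}$ has entry $(i,j)$ equal to $\frac{\partial (\psi\circ h)(\M)}{\partial \X_{j,i}}$. Applying the ordinary multivariate chain rule through $\M$ yields
\[
\frac{\partial (\psi\circ h)(\M)}{\partial \X_{j,i}} \;=\; \sum_{s,t}\frac{\partial (\psi\circ h)}{\partial \M_{s,t}}\,\frac{\partial \M_{s,t}}{\partial \X_{j,i}}.
\]
The only genuine computation is $\frac{\partial \M_{s,t}}{\partial \X_{j,i}}$: since $\M_{s,t} = \sum_k \X_{k,s}\A_{k,t}$, this derivative equals $\delta_{s,i}\A_{j,t}$. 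Plugging this in collapses the $s$-sum and leaves $\sum_t \frac{\partial (\psi\circ h)}{\partial \M_{i,t}}\A_{j,t}$, which is exactly the $(j,i)$-entry (equivalently, the $(i,j)$ entry of the transpose) of $\bigl(\frac{\partial (\psi\circ h)}{\partial \M}\bigr)^T \A^T$. This establishes the first equality of the lemma.

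For the second equality, I would apply the chain rule once more, this time from $\psi$ through $h(\M)$ to $\M$. Entry-wise,
\[
\frac{\partial (\psi\circ h)(\M)}{\partial \M_{s,t}} \;=\; \sum_{p,q}\frac{\partial \psi}{\partial h(\M)_{p,q}}\,\frac{\partial h(\M)_{p,q}}{\partial \M_{s,t}},
\]
and reassembling over $(s,t)$ gives the compact form $\sum_{s,t}\frac{\partial \psi}{\partial h(\X^T\A)_{s,t}}\frac{\partial h(\X^T\A)_{s,t}}{\partial (\X^T\A)}$ appearing in the statement. Multiplying on the right by $\A^T$ as in step one produces the claimed expression.

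The only real obstacle is keeping the layout conventions straight: the numerator layout places the shape of $\frac{\partial y}{\partial \A}$ as the transpose of $\A$, which is why a transpose appears around $\frac{\partial \psi}{\partial (\X^T\A)}$ and why the Kronecker $\delta_{s,i}$ (rather than $\delta_{s,j}$) is the one that survives. Once those indices are handled carefully, the proof reduces to recognizing that the surviving sum $\sum_t (\cdot)_{i,t}\A_{j,t}$ is precisely a row--row product, i.e.\ $(\cdot)\,\A^T$ in matrix form.
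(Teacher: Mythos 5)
Your proof is correct and follows essentially the same route as the paper's: unfold the derivative entrywise under the numerator layout, compute $\partial(\X^{T}\A)_{s,t}/\partial\X_{j,i}=\delta_{s,i}\A_{j,t}$, collapse the Kronecker delta, and recognize the surviving sum $\sum_{t}\frac{\partial\psi}{\partial(\X^{T}\A)_{i,t}}\A_{j,t}$ as an entry of $\left(\frac{\partial\psi}{\partial(\X^{T}\A)}\right)^{T}\A^{T}$, with the second equality being one more application of the chain rule through $h$. The only blemish is the parenthetical calling that sum the $(j,i)$-entry of the product when it is in fact its $(i,j)$-entry (which is exactly what the identification requires), a harmless index slip that does not affect the argument.
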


\begin{proof}
First, by the linearity of derivatives we get that
\[
\frac{\partial\left(\X^{T}\A\right)_{ij}}{\partial\X_{nm}}=\sum_{k}\frac{\partial\X_{ki}\A_{kj}}{\partial\X_{nm}}=\delta_{im}\A_{nj}.
\]
Using the chain rule we get
\[
\left[\frac{\partial\psi}{\partial\X}\right]_{mn}=\sum_{i,j}\frac{\partial\psi}{\partial\left(\X^{T}\A\right)_{ij}}\frac{\partial\left(\X^{T}\A\right)_{ij}}{\partial\X_{nm}}=\sum_{j}\frac{\partial\psi}{\partial\left(\X^{T}\A\right)_{mj}}\A_{nj}
\]
\[
=\sum_{j}\A_{nj}\left[\frac{\partial\psi}{\partial\left(\X^{T}\A\right)}\right]_{jm}=\left[\left(\A\frac{\partial\psi}{\partial\left(\X^{T}\A\right)}\right)^{T}\right]_{nm}
\]
\[
\Longrightarrow\frac{\partial\psi}{\partial\X}=\left(\frac{\partial\psi}{\partial\left(\X^{T}\A\right)}\right)^{T}\A^{T}=\sum_{s,t}\frac{\partial\psi}{\partial h\left(\X^{T}\A\right)_{s,t}}\frac{\partial h\left(\X^{T}\A\right)_{s,t}}{\left(\X^{T}\A\right)}\A^{T}.
\]
\end{proof}
\begin{lemma}
\label{lem:4}$\forall2\leq l\leq L$,
\[
b^{\left(l-1\right)}\left(\x\right)=\sum_{m=1}^{C_{l-1}}\sum_{n=1}^{d}b_{mn}^{\left(l\right)}\left(\x\right)\left(\frac{\partial f_{mn}^{\left(l\right)}\left(\x\right)}{\partial f^{\left(l-1\right)}\left(\x\right)}\right)^{T}.
\]
\end{lemma}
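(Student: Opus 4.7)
\textbf{Proof proposal for Lemma \ref{lem:4}.}
My plan is to derive the identity as a direct application of the multivariable chain rule, after making a clean observation about the dependency structure of the network. The key observation is that $f^{\Eq}(\x;\theta)$ depends on $f^{(l-1)}(\x)$ only through $f^{(l)}(\x)$. Indeed, from \eqref{def:network3} and \eqref{def:network2} we have $g^{(l)}(\x) = g^{(l)}(f^{(l-1)}(\x))$ and $f^{(l)}(\x) = f^{(l-1)}(\x) + \alpha\sqrt{c_v/(qC_l)}\sum_j (\V^{(l)}_{:,j,:})^T \vp(\sigma(g^{(l)}_j(\x)))$, so $f^{(l)}$ captures both the skip-connection contribution and the residual contribution of $f^{(l-1)}$. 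For layers $l+1,\ldots,L$, the input $f^{(l-1)}$ enters only via $f^{(l)}$, and hence so does $f^{\Eq}$. This is what lets us factor the derivative through $f^{(l)}$.

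Next I would apply the scalar chain rule coordinate-wise. For each $1\leq a\leq C_{l-1}$ and $1\leq b\leq d$,
\[
\frac{\partial f^{\Eq}(\x;\theta)}{\partial f^{(l-1)}_{ab}(\x)} = \sum_{m=1}^{C_l}\sum_{n=1}^{d} \frac{\partial f^{\Eq}(\x;\theta)}{\partial f^{(l)}_{mn}(\x)}\cdot \frac{\partial f^{(l)}_{mn}(\x)}{\partial f^{(l-1)}_{ab}(\x)}.
\]
Under the numerator layout, the matrix on the left assembles into $\partial f^{\Eq}/\partial f^{(l-1)}(\x) \in \R^{d\times C_{l-1}}$, the scalar factor $\partial f^{\Eq}/\partial f^{(l)}_{mn}$ equals the $(m,n)$ entry of $b^{(l)}(\x)$ by the definition of $b^{(l)}(\x) = (\partial f^{\Eq}/\partial f^{(l)}(\x))^T$, and $\partial f^{(l)}_{mn}/\partial f^{(l-1)}(\x)$ is itself a $d\times C_{l-1}$ matrix. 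Summing yields
\[
\frac{\partial f^{\Eq}(\x;\theta)}{\partial f^{(l-1)}(\x)} = \sum_{m=1}^{C_l}\sum_{n=1}^{d} b^{(l)}_{mn}(\x)\,\frac{\partial f^{(l)}_{mn}(\x)}{\partial f^{(l-1)}(\x)}.
\]
Taking transposes of both sides and recognizing $b^{(l-1)}(\x) = (\partial f^{\Eq}/\partial f^{(l-1)}(\x))^T$ gives exactly the claimed identity.

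The proof is essentially a tautological application of the chain rule, so there is no substantive obstacle; the only thing to be careful about is the numerator-layout bookkeeping, namely that a scalar derivative with respect to a matrix lives in the transpose space and that the transpose distributes over the sum (each $b^{(l)}_{mn}$ being a scalar). Everything else is routine. I note that this lemma is precisely the abstract chain-rule step that is then combined with the explicit formula for $\partial f^{(l)}_{mn}/\partial f^{(l-1)}$ (via Lemma \ref{lem:3} and the residual update) to unroll $b^{(l-1)}$ recursively in the later lemmas used in the proof of Theorem \ref{thm:2}.
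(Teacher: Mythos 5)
Your proof is correct and is essentially identical to the paper's: both are a direct coordinate-wise application of the chain rule through $f^{(l)}$, followed by transposition under the numerator layout. Your added remark that $f^{\Eq}$ depends on $f^{(l-1)}$ only through $f^{(l)}$ (because the skip connection is absorbed into $f^{(l)}$) is a justification the paper leaves implicit, and your use of $C_l$ rather than the paper's $C_{l-1}$ as the channel summation range is the correct indexing for entries of $f^{(l)}\in\R^{C_l\times d}$.
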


\begin{proof}
By the definition of $b^{\left(l-1\right)}\left(\x\right)$ we have
\[
b^{\left(l-1\right)}\left(\x\right)=\left(\frac{\partial f\left(\x;\theta\right)}{\partial f^{\left(l-1\right)}\left(\x\right)}\right)^{T}=\left(\sum_{m=1}^{C_{l-1}}\sum_{n=1}^{ d }\frac{\partial f\left(\x;\theta\right)}{\partial f_{mn}^{\left(l\right)}\left(\x\right)}\frac{\partial f_{mn}^{\left(l\right)}\left(\x\right)}{\partial f^{\left(l-1\right)}\left(\x\right)}\right)^{T}
\]
\[
=\sum_{m=1}^{C_{l-1}}\sum_{n=1}^{ d }b_{mn}^{\left(l\right)}\left(\x\right)\left(\frac{\partial f_{mn}^{\left(l\right)}\left(\x\right)}{\partial f^{\left(l-1\right)}\left(\x\right)}\right)^{T}.
\]
\end{proof}
\begin{lemma}
\label{lem:5} 
\[
\mathbb{E}\left[\left({b^{\left(L\right)}}^{T}\left(\x\right)b^{\left(L\right)}\left(\z\right)\right)_{j,j^{'}}\right]=\Ind_{j=j^{'}=1}.
\]
\end{lemma}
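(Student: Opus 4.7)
The plan is a direct computation, since $b^{(L)}$ depends only on the final-layer weights $\W^{\Eq}$ and has a very simple form.

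First, I would unpack the definition of $b^{(L)}(\x)$. From $f^{\Eq}(\x;\theta)=\tfrac{1}{\sqrt{C_L}}\W^{\Eq}f_{:,1}^{(L)}(\x)$, every component $f_{i,j}^{(L)}(\x)$ with $j\neq 1$ has zero contribution to $f^{\Eq}$, and for $j=1$ the derivative is $\tfrac{1}{\sqrt{C_L}}\W^{\Eq}_i$. Thus
\[
b^{(L)}_{i,j}(\x) \;=\; \frac{\partial f^{\Eq}(\x;\theta)}{\partial f^{(L)}_{i,j}(\x)} \;=\; \frac{1}{\sqrt{C_L}}\,\W^{\Eq}_i\,\Ind_{j=1},
\]
which importantly does not depend on $\x$ at all.

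Next, I would form the inner product explicitly:
\[
\bigl({b^{(L)}}^{T}(\x)\,b^{(L)}(\z)\bigr)_{j,j'} \;=\; \sum_{i=1}^{C_L} b^{(L)}_{i,j}(\x)\,b^{(L)}_{i,j'}(\z) \;=\; \frac{1}{C_L}\,\Ind_{j=1}\,\Ind_{j'=1}\sum_{i=1}^{C_L}\bigl(\W^{\Eq}_i\bigr)^{2}.
\]
Taking expectation and invoking the initialization identity \eqref{eq:exp_par}, i.e.\ $\mathbb{E}[(\W^{\Eq}_i)^2]=1$, the sum collapses:
\[
\mathbb{E}\!\left[\bigl({b^{(L)}}^{T}(\x)\,b^{(L)}(\z)\bigr)_{j,j'}\right] \;=\; \frac{1}{C_L}\cdot C_L \cdot \Ind_{j=1}\Ind_{j'=1} \;=\; \Ind_{j=j'=1}.
\]

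There is no real obstacle here: the only subtlety worth emphasizing is noting that the equivariant head involves only the first spatial coordinate of $f^{(L)}$, which is what produces the $\Ind_{j=1}\Ind_{j'=1}$ factor, and that $b^{(L)}$ is independent of the input, so the $\x$ vs.\ $\z$ distinction plays no role at this base case. This base case is what makes the subsequent recursion for $\Pi^{(l)}_{j,j'}$ (and hence the $P^{(l)}_j$ appearing in Theorem \ref{thm:2}) initialize with $P^{(L)}_j=\Ind_{j=1}/c_w$, matching the formula in the theorem.
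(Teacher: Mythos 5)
Your proof is correct and follows essentially the same route as the paper: compute $b^{(L)}_{:,j}=\delta_{j,1}\tfrac{1}{\sqrt{C_L}}\W^{\Eq}$ directly from the definition of the equivariant head, then sum over channels and use $\mathbb{E}[(\W^{\Eq}_i)^2]=1$. No gaps; your explicit remark that $b^{(L)}$ is independent of the input is a nice touch but does not change the argument.
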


\begin{proof}
\[
b_{:,j}^{\left(L\right)}\left(\x\right)=\frac{1}{\sqrt{C_{L}}}\left(\frac{\partial}{\partial f_{:,j}^{\left(L\right)}\left(\x\right)}W^{\Eq}f_{:,1}^{\left(L\right)}\left(\x\right) \right)^{T}=\delta_{j,1}\frac{1}{\sqrt{C_{L}}}W^{\Eq}.
\]
Therefore,
\[
\mathbb{E}\left[\left({b^{\left(L\right)}}^{T}\left(\x\right)b^{\left(L\right)}\left(\z\right)\right)_{j,j^{'}}\right]=\sum_{k=1}^{C_{L}}\mathbb{E}\left[b_{kj}^{\left(L\right)}\left(\x\right)b_{kj^{'}}^{\left(L\right)}\left(\z\right)\right]=\frac{\delta_{j,1}\delta_{j^{'},1}}{C_{L} d }\sum_{k=1}^{C_{L}}\mathbb{E}\left[W_{k}^{\Eq}W_{k^{'}}^{\Eq}\right]=\delta_{j,1}\delta_{j^{'},1}.
\]
\end{proof}

\begin{lemma}
\label{lem:6}$\forall2\leq l\leq L$,
\[
\frac{\partial f_{mn}^{\left(l\right)}\left(\x\right)}{\partial f^{\left(l-1\right)}\left(\x\right)}=\J^{nm}+\frac{\alpha}{q}\sqrt{\frac{c_{w}c_{v}}{C_{l}C_{l-1}}}\sum_{j=1}^{C_{l}}\sum_{k=1}^{q}\V_{k,j,m}^{\left(l\right)}\dot{\sigma}\left(g_{j,n+k-\frac{q+1}{2}}^{\left(l\right)}\left(\x\right)\right)\J^{\mathcal{D}_{n+k-\frac{q+1}{2}}}\W_{:,:,j}^{\left(l\right)}.
\]
\end{lemma}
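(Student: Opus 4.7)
The plan is to obtain the identity by a direct chain rule computation on the recursive definition of $f^{(l)}$ in \eqref{def:network3}, distinguishing the skip term from the residual branch. Concretely, I would first expand
\[
f^{(l)}_{mn}(\x) = f^{(l-1)}_{mn}(\x) + \alpha\sqrt{\frac{c_v}{qC_l}}\sum_{j=1}^{C_l}\sum_{k=1}^{q}\V^{(l)}_{k,j,m}\,\sigma\!\left(g^{(l)}_{j,\,n+k-\frac{q+1}{2}}(\x)\right),
\]
using the rewritten convolution $(\V^{(l)}_{:,j,:})^T \vp(\sigma(g^{(l)}_j(\x)))$ from Sec.~\ref{ap:2}, and read off the $(m,n)$ entry explicitly.

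Then I would differentiate term by term. For the skip contribution $f^{(l-1)}_{mn}(\x)$, the Jacobian with respect to the matrix $f^{(l-1)}(\x)\in\R^{C_{l-1}\times d}$ is, under our numerator-layout convention, exactly the single-entry matrix $\J^{nm}$. For the residual contribution, I apply the scalar chain rule through $\sigma$, producing the factor $\dot\sigma(g^{(l)}_{j,n+k-\frac{q+1}{2}}(\x))$, and then I need $\partial g^{(l)}_{j,p}(\x)/\partial f^{(l-1)}(\x)$ for $p=n+k-\frac{q+1}{2}$.

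The key intermediate step, and the only one requiring care, is evaluating this Jacobian of $g^{(l)}$. Expanding \eqref{def:network2-1},
\[
g^{(l)}_{j,p}(\x) = \sqrt{\frac{c_w}{qC_{l-1}}}\sum_{i=1}^{C_{l-1}}\sum_{k'=1}^{q}\W^{(l)}_{k',i,j}\,[f^{(l-1)}_i(\x)]_{p+k'-\frac{q+1}{2}},
\]
so $\partial g^{(l)}_{j,p}/\partial f^{(l-1)}_{i,p'}$ is nonzero only when $p'\in\mathcal{D}_p$, where it equals $\sqrt{c_w/(qC_{l-1})}\,\W^{(l)}_{p'-p+\frac{q+1}{2},i,j}$. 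Reassembling these entries into the $d\times C_{l-1}$ Jacobian matrix is precisely what the indicator-placement matrix $\J^{\mathcal{D}_p}=\sum_m \delta_{m\in\mathcal{D}_p}\J^{m,m-p+\frac{q+1}{2}}_{d,q}$ encodes: it injects the rows of $\W^{(l)}_{:,:,j}\in\R^{q\times C_{l-1}}$ into the appropriate rows of a $d\times C_{l-1}$ matrix, yielding $\partial g^{(l)}_{j,p}/\partial f^{(l-1)}(\x) = \sqrt{c_w/(qC_{l-1})}\,\J^{\mathcal{D}_p}\W^{(l)}_{:,:,j}$.

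Substituting this back and setting $p=n+k-\frac{q+1}{2}$, the residual-branch Jacobian becomes $\frac{\alpha}{q}\sqrt{\frac{c_vc_w}{C_lC_{l-1}}}\sum_{j,k}\V^{(l)}_{k,j,m}\,\dot\sigma(g^{(l)}_{j,n+k-\frac{q+1}{2}}(\x))\,\J^{\mathcal{D}_{n+k-\frac{q+1}{2}}}\W^{(l)}_{:,:,j}$, and adding the skip term $\J^{nm}$ gives the claimed formula. The main obstacle I anticipate is purely bookkeeping: reconciling the numerator layout (so that the Jacobian of a scalar with respect to a $C_{l-1}\times d$ matrix lives in $\R^{d\times C_{l-1}}$) with the multi-index convolution and making sure the placement matrix $\J^{\mathcal{D}_p}$ is indexed with the correct shift; no nontrivial analytic estimate is needed.
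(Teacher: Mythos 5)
Your proposal is correct and follows essentially the same route as the paper's proof: a direct chain-rule expansion of the recursion for $f^{(l)}_{mn}$, with the skip term yielding $\J^{nm}$ and the residual branch handled by differentiating through $\sigma$ and identifying the entrywise Jacobian $\partial g^{(l)}_{j,p}/\partial f^{(l-1)}_{i,p'}=\delta_{p'\in\mathcal{D}_p}\sqrt{c_w/(qC_{l-1})}\,\W^{(l)}_{p'-p+\frac{q+1}{2},i,j}$ with the placement matrix $\J^{\mathcal{D}_p}\W^{(l)}_{:,:,j}$. The bookkeeping of indices and prefactors in your sketch matches the paper's computation.
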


\begin{proof}
by the definition of $f^{\left(l\right)}$ we have
\[
f^{\left(l\right)}\left(\x\right)=f^{\left(l-1\right)}\left(\x\right)+\frac{\alpha}{q}\sqrt{\frac{c_{w}c_{v}}{C_{l}C_{l-1}}}\sum_{j=1}^{C_{l}}\V_{:,j,:}^{\left(l\right)^{T}}\vp\left(\sigma\left(g_{j}^{\left(l\right)}\left(\x\right)\right)\right).
\]
Taking a derivative of $f_{mn}^{\left(l\right)}$ w.r.t.\ $f^{\left(l-1\right)}$ we obtain
\[
\frac{\partial f_{mn}^{\left(l\right)}\left(\x\right)}{\partial f^{\left(l-1\right)}\left(\x\right)}=\J^{nm}+\frac{\alpha}{q}\sqrt{\frac{c_{w}c_{v}}{C_{l}C_{l-1}}}\sum_{j=1}^{C_{l}}\frac{\partial}{\partial f^{\left(l-1\right)}\left(\x\right)}\underset{\text{denote by }y_{j}}{\underbrace{\left(\left(\V_{:,j,:}^{\left(l\right)}\right)^{T}\vp\left(\sigma\left(g_{j}^{\left(l\right)}\left(\x\right)\right)\right)\right)_{mn}}}.
\]
To simplify this, notice first that the derivative
can be expressed as
\[
\frac{\partial}{\partial f^{\left(l-1\right)}\left(\x\right)}y_{j}=\sum_{k=1}^{q}\V_{k,j,m}^{\left(l\right)}\frac{\partial}{\partial f^{\left(l-1\right)}\left(\x\right)}\vp_{kn}\left(\sigma\left(g_{j}^{\left(l\right)}\left(\x\right)\right)\right)=\sum_{k=1}^{q}\V_{k,j,m}^{\left(l\right)}\frac{\partial}{\partial f^{\left(l-1\right)}\left(\x\right)}\sigma\left(g_{j,n+k-\frac{q+1}{2}}^{\left(l\right)}\left(\x\right)\right).
\]
Using the chain rule we can express the derivative of $\sigma\left(g_{j,n+k-\frac{q+1}{2}}^{\left(l\right)}\left(\x\right)\right)$
as follows:
\[
\left[\frac{\sigma\left(g_{j,n+k-\frac{q+1}{2}}^{\left(l\right)}\left(\x\right)\right)}{\partial f^{\left(l-1\right)}\left(\x\right)}\right]_{m^{'}n^{'}}=\dot{\sigma}\left(g_{j,n+k-\frac{q+1}{2}}^{\left(l\right)}\left(\x\right)\right)\cdot\frac{\partial\sum_{j^{'}=1}^{C_{l-1}}\left\langle\W_{:,j^{'},j}^{\left(l\right)},f_{j^{'},\mathcal{D}_{n+k-\frac{q+1}{2}}}^{\left(l-1\right)}\left(\x\right)\right\rangle}{\partial f_{n^{'}m^{'}}^{\left(l-1\right)}\left(\x\right)}
\]
\[
=\dot{\sigma}\left(g_{j,n+k-\frac{q+1}{2}}^{\left(l\right)}\left(\x\right)\right)\sum_{j^{'}=1}^{C_{l-1}}\delta_{n^{'}j^{'}}\Ind_{m^{'}\in\mathcal{D}_{n+k-\frac{q+1}{2}}}\W_{m^{'}-n-k+q+1,j^{'},j}^{\left(l\right)}
\]
\[
=\dot{\sigma}\left(g_{j,n+k-\frac{q+1}{2}}^{\left(l\right)}\left(\x\right)\right)\Ind_{m^{'}\in\mathcal{D}_{n+k-\frac{q+1}{2}}}\W_{m^{'}-n-k+q+1,j^{'},j}^{\left(l\right)}.
\]
\[
=\frac{\partial g_{j,n+k-\frac{q+1}{2}}^{\left(l\right)}\left(\x\right)}{\partial f^{\left(l-1\right)}\left(\x\right)}=\dot{\sigma}\left(g_{j,n+k-\frac{q+1}{2}}^{\left(l\right)}\left(\x\right)\right)\J^{\mathcal{D}_{n+k-\frac{q+1}{2}}}\W_{:,:,j}^{\left(l\right)}
\]
In summary we obtain
\[
\frac{\partial f_{mn}^{\left(l\right)}\left(\x\right)}{\partial f^{\left(l-1\right)}\left(\x\right)}=\J^{nm}+\frac{\alpha}{q}\sqrt{\frac{c_{w}c_{v}}{C_{l}C_{l-1}}}\sum_{j=1}^{C_{l}}\sum_{k=1}^{q}\V_{k,j,m}^{\left(l\right)}\dot{\sigma}\left(g_{j,n+k-\frac{q+1}{2}}^{\left(l\right)}\left(\x\right)\right)\J^{\mathcal{D}_{n+k-\frac{q+1}{2}}}\W_{:,:,j}^{\left(l\right)}.
\]
\end{proof}
\begin{lemma}
\label{lem:7}
For any two matrices $\M,\M^{'}\in\R^{m\times s}$ and $\N,\N^{'}\in\R^{s\times n}$, if $\M^{T}\M^{'}$ is uncorrelated with $\N\N^{{'}^{T}}$, and for every
$k\neq k^{'}$ either $\mathbb{E}\left[\left(\M^{T}\M^{'}\right)_{kk^{'}}\right]=0$
or $\mathbb{E}\left[\left(\N\N^{{'}^{T}}\right)_{k^{'}k}\right]=0$, then
\[
\mathbb{E}\left[\left\langle \M\N,\M^{'}\N^{'}\right\rangle \right]=\sum_{p=1}^{s}\mathbb{E}\left[\left(\M^{T}\M^{'}\right)_{p,p}\right]\mathbb{E}\left[\left(\N^{T}\N^{'}\right)_{p,p}\right]=\mathbb{E}\left[\tr\left(\M^{T}\M^{'}\odot \N^{T}\N^{'}\right)\right].
\]
\end{lemma}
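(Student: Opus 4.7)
The plan is to reduce the Frobenius inner product to a sum indexed by the $s$ common columns/rows shared by $\M$ and $\N$, at which point the two hypotheses apply entry-wise.

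The main reduction I would establish first --- either by a direct index computation or by $\langle\M\N,\M'\N'\rangle=\tr((\M\N)^{T}\M'\N')=\tr((\M^{T}\M')(\N'\N^{T}))$ followed by $(\N'\N^{T})_{qp}=(\N\N'^{T})_{pq}$ --- is
\[
\langle \M\N,\M'\N'\rangle \;=\; \sum_{p,q=1}^{s}(\M^{T}\M')_{pq}\,(\N\N'^{T})_{pq}.
\]
Both factors on the right are $s\times s$ matrices sharing the same $(p,q)$ indexing, which is exactly what is needed for the hypotheses to apply.

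Taking expectations term-by-term and invoking the uncorrelatedness hypothesis yields $\mathbb{E}[(\M^{T}\M')_{pq}(\N\N'^{T})_{pq}]=\mathbb{E}[(\M^{T}\M')_{pq}]\,\mathbb{E}[(\N\N'^{T})_{pq}]$. The second hypothesis then annihilates every off-diagonal ($p\neq q$) summand because one of the two factors has zero expectation, leaving
\[
\mathbb{E}\bigl[\langle\M\N,\M'\N'\rangle\bigr] \;=\; \sum_{p=1}^{s}\mathbb{E}\bigl[(\M^{T}\M')_{pp}\bigr]\,\mathbb{E}\bigl[(\N\N'^{T})_{pp}\bigr],
\]
which is the first asserted equality. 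Reapplying uncorrelatedness in reverse on each diagonal pair $p$ to recombine the product of expectations, pulling the sum inside the expectation, and invoking the identity $\sum_{p}(\M^{T}\M')_{pp}(\N\N'^{T})_{pp}=\tr(\M^{T}\M'\odot\N\N'^{T})$ gives the second equality.

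The entire argument is direct linear algebra together with two clean applications of the hypotheses; there is no substantive obstacle. The only subtlety is bookkeeping: the Hadamard factor contracted with $\M^{T}\M'$ must be the $s\times s$ matrix $\N\N'^{T}$ (so that the summation over $p\in\{1,\ldots,s\}$ and the trace are well-defined), which should be read as the intended meaning of the written statement.
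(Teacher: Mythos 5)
Your proof is correct and follows essentially the same route as the paper's: both reduce $\mathbb{E}[\langle\M\N,\M'\N'\rangle]$ to the double sum $\sum_{p,p'}\mathbb{E}[(\M^{T}\M')_{pp'}]\,\mathbb{E}[(\N\N'^{T})_{p'p}]$ via the uncorrelatedness hypothesis (you via trace cyclicity, the paper via direct index expansion, a purely cosmetic difference) and then use the second hypothesis to discard the off-diagonal terms. Your closing remark that the Hadamard factor must be read as the $s\times s$ matrix $\N\N'^{T}$ rather than the literal $\N^{T}\N'$ in the statement is a correct and worthwhile observation.
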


\begin{proof}
Following the definition of an inner product we have
\[
\mathbb{E}\left[\left\langle \M\N,\M^{'}\N^{'}\right\rangle \right]=\mathbb{E}\left[\sum_{i=1}^{m}\sum_{j=1}^{n}\left(\M\N\right)_{ij}\left(\M^{'}\N^{'}\right)_{ij}\right]=\mathbb{E}\left[\sum_{i=1}^{m}\sum_{j=1}^{n}\left(\sum_{p=1}^{s}\M_{i,p}\N_{p,j}\right)\left(\sum_{p^{'}=1}^{s}\M_{i,p^{'}}^{'}\N_{p^{'}j}^{'}\right)\right]
\]
\[
=\mathbb{E}\left[\sum_{i=1}^{m}\sum_{j=1}^{n}\sum_{p=1}^{s}\sum_{p^{'}=1}^{s}\M_{i,p}\M_{ip^{'}}^{'}\N_{p,j}\N_{p^{'}j}^{'}\right]\underset{\text{uncorrelated}}{=}\sum_{p=1}^{s}
\sum_{p^{'}=1}^{s}\mathbb{E}\left[\left(\M^{T}\M^{'}\right)_{p,p^{'}}\right]\mathbb{E}\left[\left(\N\N^{{'}^{T}}\right)_{p^{'}p}\right]
\]
\[
\underset{0\text{ when }p\neq p^{'}}{=}\sum_{p=1}^{s}\mathbb{E}\left[\left(\M^{T}\M^{'}\right)_{p,p}\right]\mathbb{E}\left[\left(\N\N^{{'}^{T}}\right)_{p,p}\right]=\mathbb{E}\left[\tr\left(\M^{T}\M^{'}\odot \N^{T}\N^{'}\right)\right].
\]
\end{proof}
\begin{lemma}
\label{lem:8}
$\forall1\leq l\leq L-1,1\leq k,k^{'}\leq C_{l},1\leq j,j^{'}\leq d$,
it holds that
\[
\mathbb{E}\left[b_{kj}^{\left(l\right)}\left(\x\right)b_{k^{'}j^{'}}^{\left(l\right)}\left(\z\right)\right]=
\delta_{kk^{'}}\delta_{jj^{'}}\left(\mathbb{E}\left[b_{kj}^{\left(l+1\right)}\left(\x\right)b_{kj}^{\left(l+1\right)}\left(\z\right)\right] + \frac{\alpha^2c_w}{q^2C_{l}}\tr\left(
\left(\sum_{p=-\frac{q-1}{2}}^{\frac{q-1}{2}}\Pi_{\mathcal{D}_{j+p,j+p}}^{\left(l+1\right)}\left(\x,\z\right)\right)\odot\dot{K}_{\mathcal{D}_{j,j}}^{\left(l+1\right)}\left(\x,\z\right)\right)\right).
\]
\end{lemma}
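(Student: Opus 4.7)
The plan is to proceed by downward induction on $l$, with Lemma \ref{lem:5} serving as the base case at $l=L$. For the inductive step, I would apply Lemma \ref{lem:4} at depth $l+1$ and substitute the explicit expression for $\partial f_{mn}^{(l+1)}/\partial f^{(l)}$ from Lemma \ref{lem:6}. This decomposes
\[
b^{(l)}_{kj}(\x) \;=\; b^{(l+1)}_{kj}(\x) \;+\; R^{(l)}_{kj}(\x),
\]
where the first piece arises from the $\J^{nm}$ contribution (whose transpose has a $1$ at $(k,j)$ precisely when $m=k$, $n=j$) and $R^{(l)}_{kj}(\x)$ collects the residual piece: a sum over the indices $(m,n,c,k')$ of $b^{(l+1)}_{mn}(\x)\,\V^{(l+1)}_{k',c,m}\,\dot{\sigma}(g^{(l+1)}_{c,\,n+k'-\frac{q+1}{2}}(\x))\,\W^{(l+1)}_{j-n-k'+q+1,\,k,\,c}$ multiplied by an indicator $\Ind_{j\in\mathcal{D}_{n+k'-\frac{q+1}{2}}}$ and a prefactor $\tfrac{\alpha}{q}\sqrt{c_w c_v/(C_{l+1} C_l)}$.

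Expanding $b^{(l)}_{kj}(\x)\,b^{(l)}_{k'j'}(\z)$ gives four cross terms: skip-skip, skip-res, res-skip, and res-res. Each mixed term is linear in the layer-$(l+1)$ weights $\V^{(l+1)}$ and $\W^{(l+1)}$, which are zero mean and (by the remark preceding the lemma) enter $b^{(l+1)}$ only through ReLU indicators, not as explicit factors; a short argument conditioning on the forward pass through layer $l+1$ therefore kills both mixed terms in expectation. The skip-skip term directly yields $\mathbb{E}[b^{(l+1)}_{kj}(\x)b^{(l+1)}_{k'j'}(\z)]$, which by the inductive hypothesis at level $l+1$ carries the structure $\delta_{kk'}\delta_{jj'}\,\mathbb{E}[b^{(l+1)}_{kj}(\x)b^{(l+1)}_{kj}(\z)]$.

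For the residual-residual term, I would use $\mathbb{E}[\V^{(l+1)}_{a,b,c}\V^{(l+1)}_{a',b',c'}]=\delta_{aa'}\delta_{bb'}\delta_{cc'}$ and the analogous identity for $\W^{(l+1)}$. The $\V$-contraction forces equality of the filter offset, channel, and output-channel indices; the $\W$-contraction produces $\delta_{kk'}$ and the spatial constraint $j-s=j'-s'$ (where $s,s'$ are the reparameterized position variables). The channel sum over $c=1,\ldots,C_{l+1}$ contributes a factor $C_{l+1}$ that combines with $(\tfrac{\alpha}{q})^2\tfrac{c_w c_v}{C_{l+1} C_l}$ to give the claimed $\tfrac{\alpha^2 c_w}{q^2 C_l}$ once the $\tfrac{1}{c_v c_w}$ from the $\dot{K}$ definition is absorbed. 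After the change of variables $r=s-j$, the remaining double sum over $(p,r)\in[-\tfrac{q-1}{2},\tfrac{q-1}{2}]^2$ rewrites exactly as $\tr\bigl((\sum_p \Pi^{(l+1)}_{\mathcal{D}_{j+p,j+p}})\odot \dot K^{(l+1)}_{\mathcal{D}_{j,j}}\bigr)$.

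The main obstacle is producing the second Kronecker factor $\delta_{jj'}$ in the residual-residual piece. The $\W$-contraction by itself only yields the weaker constraint $j-s=j'-s'$, which does not imply $j=j'$; moreover, $\dot{K}^{(l+1)}_{s,s'}$ is generally nonzero off the diagonal. The delta must come instead from the diagonality of $\Pi^{(l+1)}$, i.e.\ from the fact that $\mathbb{E}[b^{(l+1)}_{m,n}(\x)b^{(l+1)}_{m,n'}(\z)]$ vanishes for $n\neq n'$. This is precisely the induction hypothesis at level $l+1$, so the argument is a genuine two-level bootstrap: the conclusion $\delta_{jj'}$ at level $l$ is unlocked by the same conclusion already having been established at level $l+1$, with Lemma \ref{lem:5} (which shows $\mathbb{E}[b^{(L)}_{kj}(\x)b^{(L)}_{k'j'}(\z)]=\Ind_{j=j'=1}$) anchoring the induction at $l=L$.
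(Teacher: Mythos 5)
Your proposal is correct and follows essentially the same route as the paper's proof: downward induction anchored at Lemma \ref{lem:5}, decomposition of $b^{(l)}$ via Lemmas \ref{lem:4} and \ref{lem:6} into the skip ($\J^{nm}$) and residual contributions, vanishing of the mixed terms by the zero mean of $\V^{(l+1)},\W^{(l+1)}$, and Gaussian contraction of the weights in the residual-residual term. In particular, your observation that the second factor $\delta_{jj'}$ is \emph{not} delivered by the $\W$-contraction alone but requires the level-$(l+1)$ diagonality of $\mathbb{E}[b^{(l+1)}_{mn}b^{(l+1)}_{m'n'}]$ to pin $n=n'$ is exactly the step the paper invokes ("by induction ... we can assume that if $m\neq m'$ or $n\neq n'$ then the correlation vanishes").
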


\begin{proof}
By Lemma \ref{lem:4} we have
\[
\mathbb{E}\left[b_{kj}^{\left(l\right)}\left(\x\right)b_{k^{'}j^{'}}^{\left(l\right)}\left(\z\right)\right]=\mathbb{E}\left[\sum_{m=1}^{C_{l}}\sum_{n=1}^{ d }b_{mn}^{\left(l+1\right)}\left(\x\right)\frac{\partial f_{mn}^{\left(l+1\right)}\left(\x\right)}{\partial f_{kj}^{\left(l\right)}},\sum_{m^{'}=1}^{C_{l}}\sum_{n^{'}=1}^{ d }b_{m^{'}n^{'}}^{\left(l+1\right)}\left(\z\right)\frac{\partial f_{m^{'}n^{'}}^{\left(l+1\right)}\left(\z\right)}{\partial f_{k^{'}j^{'}}^{\left(l\right)}}\right]
\]
\[
=\sum_{m=1}^{C_{l}}\sum_{n=1}^{ d }\sum_{m^{'}=1}^{C_{l}}\sum_{n^{'}=1}^{ d }\mathbb{E}\left[b_{mn}^{\left(l+1\right)}\left(\x\right)b_{m^{'}n^{'}}^{\left(l+1\right)}\left(\z\right)\frac{\partial f_{mn}^{\left(l+1\right)}\left(\x\right)}{\partial f_{kj}^{\left(l\right)}}\frac{\partial f_{m^{'}n^{'}}^{\left(l+1\right)}\left(\z\right)}{\partial f_{k^{'}j^{'}}^{\left(l\right)}}\right].
\]
Now as $b^{\left(l+1\right)}\left(\x\right)$ is uncorrelated with
$\frac{\partial f_{mn}^{\left(l+1\right)}\left(\x\right)}{\partial f_{kj}^{\left(l\right)}}$
we get
\[
=\sum_{m=1}^{C_{l}}\sum_{n=1}^{ d }\sum_{m^{'}=1}^{C_{l}}\sum_{n^{'}=1}^{ d }\mathbb{E}\left[b_{mn}^{\left(l+1\right)}\left(\x\right)b_{m^{'}n^{'}}^{\left(l+1\right)}\left(\z\right)\right]\mathbb{E}\left[\frac{\partial f_{mn}^{\left(l+1\right)}\left(\x\right)}{\partial f_{kj}^{\left(l\right)}}\frac{\partial f_{m^{'}n^{'}}^{\left(l+1\right)}\left(\z\right)}{\partial f_{k^{'}j^{'}}^{\left(l\right)}}\right].
\]
By induction (where the base case is Lemma \ref{lem:5}), we can assume that if $m\neq m'$ or $n\neq n'$
then $\mathbb{E}\left[b_{mn}^{\left(l+1\right)}\left(\x\right)b_{m'n'}^{\left(l+1\right)}\left(\x\right)\right]=0$.
We therefore obtain
\[
\mathbb{E}\left[b_{kj}^{\left(l\right)}\left(\x\right)b_{k^{'}j^{'}}^{\left(l\right)}\left(\x\right)\right]=\sum_{m=1}^{C_{l}}\sum_{n=1}^{ d }\mathbb{E}\left[b_{mn}^{\left(l+1\right)}\left(\x\right)b_{mn}^{\left(l+1\right)}\left(\z\right)\right]\mathbb{E}\left[\frac{\partial f_{mn}^{\left(l+1\right)}\left(\x\right)}{\partial f_{kj}^{\left(l\right)}}\frac{\partial f_{mn}^{\left(l+1\right)}\left(\z\right)}{\partial f_{k^{'}j^{'}}^{\left(l\right)}}\right].
\]

It remains to calculate $\mathbb{E}\left[\frac{\partial f_{mn}^{\left(l+1\right)}\left(\x\right)}{\partial f_{kj}^{\left(l\right)}}\frac{\partial f_{mn}^{\left(l+1\right)}\left(\z\right)}{\partial f_{k^{'}j^{'}}^{\left(l\right)}}\right]$.
Lemma \ref{lem:6} states that
\begin{align*}
\frac{\partial f_{mn}^{\left(l+1\right)}\left(\x\right)}{\partial f_{kj}^{\left(l\right)}\left(\x\right)}
& =\left[\frac{\partial f_{mn}^{\left(l+1\right)}\left(\x\right)}{\partial f^{\left(l\right)}\left(\x\right)}\right]_{jk}\\
& = \underset{\text{Denote by }A_{kj}^{mn}}{\underbrace{\delta_{km}\delta_{jn}}}
+\underset{\text{Denote by }C}{\underbrace{\frac{\alpha}{q}\sqrt{\frac{c_{w}c_{v}}{C_{l}C_{l+1}}}}}\underset{B_{kj}^{mn}\left(\x\right)}{\underbrace{\sum_{s=1}^{C_{l+1}}
\sum_{p=1}^{q}\V_{p,s,m}^{\left(l+1\right)}\dot{\sigma}\left(g_{s,p+n-\frac{q+1}{2}}^{\left(l+1\right)}\left(\x\right)\right)\Ind_{j\in\mathcal{D}_{n+p-\frac{q-1}{2}}}\W_{j-n-p+q+1,k,s}^{\left(l+1\right)}}}.
\end{align*}
Using this notation we have:
\[
\mathbb{E}\left[\frac{\partial f_{mn}^{\left(l+1\right)}\left(\x\right)}{\partial f_{kj}^{\left(l\right)}}\frac{\partial f_{mn}^{\left(l+1\right)}\left(\z\right)}{\partial f_{k^{'}j^{'}}^{\left(l\right)}}\right]=A_{kj}^{mn}A_{k^{'}j^{'}}^{mn}+C\left(A_{kj}^{mn}\mathbb{E}\left[B_{k^{'}j^{'}}^{mn}\right]+A_{k^{'}j^{'}}^{mn}\mathbb{E}\left[B_{kj}^{mn}\right]\right)+C^{2}\mathbb{E}\left[B_{kj}^{mn}\left(\x\right)B_{k^{'}j^{'}}^{mn}\left(\z\right)\right].
\]
We will deal with each term separately. First, we consider the first term:
\[
A_{kj}^{mn}A_{k^{'}j^{'}}^{mn}=\delta_{km}\delta_{jn}\delta_{k^{'}m}\delta_{j^{'}n}=\delta_{kk^{'}m}\delta_{jj^{'}n},
\]
where we use the notation $\delta_{kk'm}=\begin{cases}
1 & k=k'=m\\
0 & \mathrm{otherwise}
\end{cases}$ and likewise for $\delta_{jj'n}$.
For the second term, since $\V,\W$ are $0$ meaned i.i.d Gaussians (Equation (\ref{eq:exp_par})),
and $\dot{\sigma}$ is the indicator function we have $\mathbb{E}\left[B_{kj}^{mn}\left(\x\right)\right]=\mathbb{E}\left[B_{k'j'}^{mn}\left(\z\right)\right]=0$, Therefore,
\[
C\left(A_{kj}^{mn}\mathbb{E}\left[B_{k^{'}j^{'}}^{mn}\left(\z\right)\right]+A_{k^{'}j^{'}}^{mn}\mathbb{E}\left[B_{kj}^{mn}\left(\x\right)\right]\right)=0.
\]
For the last term, by definition
\[
B_{kj}^{mn}\left(\x\right)=\sum_{s=1}^{C_{l+1}}\sum_{p=1}^{q}\V_{p,s,m}^{\left(l+1\right)}\dot{\sigma}\left(g_{s,n+p-\frac{q+1}{2}}^{\left(l+1\right)}\left(\x\right)\right)\Ind_{j\in\mathcal{D}_{n+p-\frac{q+1}{2}}}\W_{j-n-p+q+1,k,s}^{\left(l+1\right)}
\]
and
\[
B_{k^{'}j^{'}}^{mn}\left(\z\right)=\sum_{s^{'}=1}^{C_{l+1}}\sum_{p^{'}=1}^{q}\V_{p^{'},s^{'},m}^{\left(l+1\right)}\dot{\sigma}\left(g_{s^{'},n+p^{'}-\frac{q+1}{2}}^{\left(l+1\right)}\left(\z\right)\right)\Ind_{j^{'}\in\mathcal{D}_{n+p^{'}-\frac{q+1}{2}}}\W_{j^{'}-n-p^{'}+q+1,k^{'},s^{'}}^{\left(l+1\right)}.
\]
From \eqref{eq:exp_par}, $\mathbb{E}\left[\V_{p,s,m}^{\left(l+1\right)}\V_{p',s',m}^{\left(l+1\right)}\right]=\delta_{p,p'}\delta_{s,s'}$,
and since they are uncorrelated with $\W^{\left(l+1\right)}$ and $g^{\left(l+1\right)}$ then
\[
\mathbb{E}\left[B_{kj}^{mn}\left(\x\right)B_{k^{'}j^{'}}^{mn}\left(\z\right)\right]=
\]
\\
\[
=C^{2}\sum_{s=1}^{C_{l+1}}\sum_{p=1}^{q}\mathbb{E}\left[\dot{\sigma}\left(g_{s,n+p-\frac{q+1}{2}}^{\left(l+1\right)}\left(\x\right)\right)\W_{j-n-p+q+1,k,s}^{\left(l+1\right)}\dot{\sigma}\left(g_{s,p+n-\frac{q+1}{2}}^{\left(l+1\right)}\left(\z\right)\right)\W_{j'-n-p+q+1,k',s}^{\left(l+1\right)}\right]\Ind_{j,j'\in\mathcal{D}_{n+p-\frac{q+1}{2}}}
\]
\[
=\delta_{kk'}\delta_{jj'}C^{2}\sum_{s=1}^{C_{l+1}}\sum_{p=1}^{q}\mathbb{E}\left[\dot{\sigma}\left(g_{s,n+p-\frac{q+1}{2}}^{\left(l+1\right)}\left(\x\right)\right)\dot{\sigma}\left(g_{s,n+p-\frac{q+1}{2}}^{\left(l+1\right)}\left(\z\right)\right)\right]\Ind_{j,j'\in\mathcal{D}_{n+p-\frac{q+1}{2}}}
\]
\[
=\delta_{kk'}\delta_{jj'}\frac{\alpha^2}{q^2C_{l}}\sum_{p=1}^{q}\dot{K}_{n+p-\frac{q+1}{2},n+p-\frac{q+1}{2}}^{\left(l+1\right)}\left(\x,\z\right)\Ind_{j,j'\in\mathcal{D}_{n+p-\frac{q+1}{2}}}
\]
Overall,
\[
\mathbb{E}\left[b_{kj}^{\left(l\right)}\left(\x\right)b_{k'j'}^{\left(l\right)}\left(\z\right)\right]=
\]
\[
=\sum_{m=1}^{C_{l}}\sum_{n=1}^{d}\mathbb{E}\left[b_{mn}^{\left(l+1\right)}\left(\x\right)b_{mn}^{\left(l+1\right)}\left(\z\right)\right]\left(\delta_{kk'm}\delta_{jj'n}+\delta_{kk'}\delta_{jj'}\frac{\alpha^2}{q^2C_{l}}\sum_{p=1}^{q}\dot{K}_{n+p-\frac{q+1}{2},n+p-\frac{q+1}{2}}^{\left(l+1\right)}\left(\x,\z\right)\Ind_{j,j'\in\mathcal{D}_{n+p-\frac{q+1}{2}}}\right)
\]
\[
=\delta_{kk'}\delta_{jj'}\left(\mathbb{E}\left[b_{kj}^{\left(l+1\right)}\left(\x\right)b_{kj}^{\left(l+1\right)}\left(\z\right)\right] + \frac{\alpha^2}{q^2C_{l}}\sum_{n=1}^{d}\sum_{p=1}^{q}\Pi_{n,n}^{\left(l+1\right)}\left(\x,\z\right)\dot{K}_{n+p-\frac{q+1}{2},n+p-\frac{q+1}{2}}^{\left(l+1\right)}\left(\x,\z\right)\Ind_{j,j'\in\mathcal{D}_{n+p-\frac{q+1}{2}}}\right).
\]
Now observe that
\[
j\in\mathcal{D}_{n+p-\frac{q+1}{2}}\Longleftrightarrow n+p-q\leq j \leq n+p-1 \Longleftrightarrow j-p+1\leq n \leq j - p + q.
\]
So we can rewrite the above as
\[
\mathbb{E}\left[b_{kj}^{\left(l\right)}\left(\x\right)b_{k'j'}^{\left(l\right)}\left(\z\right)\right]=
\]
\[
=\delta_{kk'}\delta_{jj'}\left(\mathbb{E}\left[b_{kj}^{\left(l+1\right)}\left(\x\right)b_{kj}^{\left(l+1\right)}\left(\z\right)\right] + \frac{\alpha^2c_w}{q^2C_{l}}\sum_{p=1}^{q}\sum_{n=j-p+1}^{j-p+q}\Pi_{n,n}^{\left(l+1\right)}\left(\x,\z\right)\dot{K}_{n+p-\frac{q+1}{2},n+p-\frac{q+1}{2}}^{\left(l+1\right)}\left(\x,\z\right)\right)
\]
\[
=\delta_{kk'}\delta_{jj'}\left(\mathbb{E}\left[b_{kj}^{\left(l+1\right)}\left(\x\right)b_{kj}^{\left(l+1\right)}\left(\z\right)\right] + \frac{\alpha^2c_w}{q^2C_{l}}\sum_{p=1}^{q}\tr\left(\Pi_{\mathcal{D}_{j-p+\frac{q+1}{2},j-p+\frac{q+1}{2}}}^{\left(l+1\right)}\left(\x,\z\right)\odot\dot{K}_{\mathcal{D}_{j,j}}^{\left(l+1\right)}\left(\x,\z\right)\right)\right)
\]
\[
=\delta_{kk'}\delta_{jj'}\left(\mathbb{E}\left[b_{kj}^{\left(l+1\right)}\left(\x\right)b_{kj}^{\left(l+1\right)}\left(\z\right)\right] + \frac{\alpha^2c_w}{q^2C_{l}}\tr\left(
\left(\sum_{p=-\frac{q-1}{2}}^{\frac{q-1}{2}}\Pi_{\mathcal{D}_{j+p,j+p}}^{\left(l+1\right)}\left(\x,\z\right)\right)\odot\dot{K}_{\mathcal{D}_{j,j}}^{\left(l+1\right)}\left(\x,\z\right)\right)\right).
\]
\end{proof}

\begin{lemma}\label{lem:9}  
$\forall1\leq j,j^{'}\leq d$,
\[
\Pi_{j,j^{'}}^{\left(L\right)}\left(\x,\z\right)=\frac{1}{dc_w}\Ind_{j=j^{'}=1},
\]
and $\forall1\leq l\leq L-1,1\leq j,j^{'}\leq d$,
it holds that:
\[
\Pi_{j,j^{'}}^{\left(l\right)}\left(\x,\z\right)=\delta_{jj^{'}}\left(\Pi_{j,j}^{\left(l+1\right)}\left(\x,\z\right) + \frac{\alpha^2}{q^2}\tr\left(
\left(\sum_{p=\frac{q-1}{2}}^{\frac{q-1}{2}}\Pi_{\mathcal{D}_{j+p,j+p}}^{\left(l+1\right)}\left(\x,\z\right)\right)\odot\dot{K}_{\mathcal{D}_{j,j}}^{\left(l+1\right)}\left(\x,\z\right)\right)\right).
\]
\end{lemma}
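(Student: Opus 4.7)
\medskip

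\noindent\textbf{Proof proposal.} The lemma is really a bookkeeping consequence of the two preparatory results already established (Lemmas~\ref{lem:5} and~\ref{lem:8}), together with the definition $\Pi_{j,j'}^{(l)}(\x,\z)=\frac{1}{c_w}\mathbb{E}\!\left[(b^{(l)T}(\x)\,b^{(l)}(\z))_{j,j'}\right]=\frac{1}{c_w}\sum_{k=1}^{C_l}\mathbb{E}\!\left[b^{(l)}_{kj}(\x)\,b^{(l)}_{kj'}(\z)\right]$. The plan is therefore to handle the top layer by direct substitution, and then derive the recursion by summing the per-channel identity from Lemma~\ref{lem:8} over the $C_l$ channels.

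For the base case $l=L$, I plug Lemma~\ref{lem:5} into the definition of $\Pi$: since $\mathbb{E}[(b^{(L)T}(\x)\,b^{(L)}(\z))_{j,j'}]=\Ind_{j=j'=1}$, dividing by $c_w$ immediately yields the desired value. (The per-channel independence encoded in the $\delta_{kk'}$ that will be used inductively in Lemma~\ref{lem:8} also needs to be checked at $l=L$; this follows from the explicit formula $b^{(L)}_{:,j}(\x)=\delta_{j,1}\frac{1}{\sqrt{C_L}}W^{\Eq}$ derived in the proof of Lemma~\ref{lem:5}, since distinct entries of $W^{\Eq}$ are independent centered Gaussians.)

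For the inductive step $1\le l\le L-1$, I start from Lemma~\ref{lem:8}, which gives
\[
\mathbb{E}\!\left[b^{(l)}_{kj}(\x)\,b^{(l)}_{k'j'}(\z)\right]=\delta_{kk'}\delta_{jj'}\left(\mathbb{E}\!\left[b^{(l+1)}_{kj}(\x)\,b^{(l+1)}_{kj}(\z)\right]+\frac{\alpha^2 c_w}{q^2 C_l}\,\tr\!\left(\Bigl(\sum_{p=-\frac{q-1}{2}}^{\frac{q-1}{2}}\Pi^{(l+1)}_{\mathcal{D}_{j+p,j+p}}(\x,\z)\Bigr)\odot\dot{K}^{(l+1)}_{\mathcal{D}_{j,j}}(\x,\z)\right)\right).
\]
Setting $k'=k$ (the $\delta_{kk'}$ kills the off-diagonal contributions), summing over $k=1,\dots,C_l$, and dividing by $c_w$, the first term on the right-hand side aggregates into $c_w\,\Pi_{j,j}^{(l+1)}(\x,\z)$, while the second term is channel-independent so its sum contributes a factor $C_l$ that exactly cancels the $\frac{1}{C_l}$ prefactor. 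This yields precisely the claimed recursion, with the shared $\delta_{jj'}$ pulled out.

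Given that the non-trivial calculation (the expectation of the product of two backpropagated derivatives, including the careful cancellation of the cross terms involving $A_{kj}^{mn}\mathbb{E}[B_{k'j'}^{mn}]$ and the identification of the receptive field sum via the equivalence $j\in\mathcal{D}_{n+p-(q+1)/2}\Leftrightarrow n\in\{j-p+1,\ldots,j-p+q\}$) is already carried out in Lemma~\ref{lem:8}, the only remaining work is the summation over channels and the substitution into the definition of $\Pi$. Thus there is no real obstacle; the care required is simply to track the factors of $C_l$ and $c_w$ and to confirm that the induction hypothesis of Lemma~\ref{lem:8} ($\mathbb{E}[b^{(l+1)}_{mn}(\x)\,b^{(l+1)}_{m'n'}(\z)]=0$ unless $m=m'$ and $n=n'$) propagates from the base case $l+1=L$ to lower layers, which follows inductively from the recursion itself.
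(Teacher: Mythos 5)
Your proof is correct and follows essentially the same route as the paper's: the top layer is handled by substituting Lemma~\ref{lem:5} into the definition of $\Pi$, and the recursion is obtained by taking the diagonal $k'=k$ of Lemma~\ref{lem:8}, summing over channels, and dividing by $c_w$ so that the $C_l$ and $c_w$ factors cancel as you describe. The only discrepancy is the factor $1/d$ in the stated value of $\Pi^{(L)}_{j,j'}$, which your argument (correctly) does not produce — that factor is an internal inconsistency of the paper itself, since Lemma~\ref{lem:5} and Theorem~\ref{thm:2} both give $\Ind_{j=j'=1}/c_w$ without it.
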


\begin{proof}
First, from Lemma \ref{lem:5} we know that:
\[
\frac{1}{c_w}\mathbb{E}\left[\left(b^{\left(L\right)^{T}}\left(\x\right)b^{\left(L\right)}\left(\z\right)\right)_{j,j'}\right]=\frac{1}{dc_w}\delta_{j,j'}.
\]
Now let $1\leq l \leq L-1$. Using lemma \ref{lem:8} we get that
\[
\frac{1}{c_{w}}\mathbb{E}\left[\left(b^{\left(l\right)^{T}}\left(\x\right)b^{\left(l\right)}\left(\z\right)\right)_{j,j'}\right]=\frac{1}{c_{w}}\sum_{k=1}^{C_{l}}\mathbb{E}\left[b_{kj}^{\left(l\right)}\left(\x\right)b_{kj'}^{\left(l\right)}\left(\z\right)\right]
\]
\[
=\sum_{k=1}^{C_{l}}\delta_{jj'}\frac{1}{c_{w}}\left(\mathbb{E}\left[b_{kj}^{\left(l+1\right)}\left(\x\right)b_{kj}^{\left(l+1\right)}\left(\z\right)\right] + \delta_{kk'}\delta_{jj'}\frac{\alpha^2c_w}{q^2C_{l}}\tr\left(
\left(\sum_{p=-\frac{q-1}{2}}^{\frac{q-1}{2}}\Pi_{\mathcal{D}_{j+p,j+p}}^{\left(l+1\right)}\left(\x,\z\right)\right)\odot\dot{K}_{\mathcal{D}_{j,j}}^{\left(l+1\right)}\left(\x,\z\right)\right)\right)
\]
\[
=\delta_{jj'}\left(\Pi_{j,j}^{\left(l+1\right)}\left(\x,\z\right) + \frac{\alpha^2}{q^2}\tr\left(
\left(\sum_{p=\frac{q-1}{2}}^{\frac{q-1}{2}}\Pi_{\mathcal{D}_{j+p,j+p}}^{\left(l+1\right)}\left(\x,\z\right)\right)\odot\dot{K}_{\mathcal{D}_{j,j}}^{\left(l+1\right)}\left(\x,\z\right)\right)\right).
\]
\end{proof}
\begin{lemma}
\label{lem:13}
$\forall1\leq l\leq L$,
\[
\mathbb{E}\left[\left\langle \frac{\partial f\left(\x;\theta\right)}{\partial\W^{\left(l\right)}},\frac{\partial f\left(\z;\theta\right)}{\partial\W^{\left(l\right)}}\right\rangle \right]=\frac{\alpha^{2}}{q}\sum_{n=1}^{d}\Pi_{nn}^{\left(l\right)}\left(\x,\z\right)\tr\left(\dot{K}_{\mathcal{D}_{n,n}}^{\left(l\right)}\left(\x,\z\right)\odot\Sigma_{\mathcal{D}_{n,n}}^{\left(l\right)}\left(\x,\z\right)\right).
\]
\end{lemma}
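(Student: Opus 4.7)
The plan is to apply the chain rule through $g_i^{(l)}$, exploit independence/uncorrelatedness of $\V^{(l)}$ and $b^{(l)}$ from the remaining random variables, and then reorganise the sums so that the result matches a trace of the Hadamard product $\dot K^{(l)}\odot\Sigma^{(l)}$.

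First I differentiate explicitly. Since $\W^{(l)}_{k,m,i}$ enters $f^{\Eq}$ only through $g_i^{(l)}$, combining the chain rule with Lemma \ref{lem:6} gives
\begin{align*}
\frac{\partial f^{\Eq}(\x;\theta)}{\partial \W^{(l)}_{k,m,i}} &= \alpha\sqrt{\tfrac{c_vc_w}{q^2 C_l C_{l-1}}}\sum_n \dot\sigma\!\left(g_{i,n}^{(l)}(\x)\right) f^{(l-1)}_{m,\, n+k-\frac{q+1}{2}}(\x) \sum_{i',p'} b^{(l)}_{i',\, n-p'+\frac{q+1}{2}}(\x)\,\V^{(l)}_{p',i,i'}.
\end{align*}
Writing the analogous expression for $\z$, taking the Frobenius inner product, summing over $k,m,i$ and taking expectation yields a large sum with $\V^{(l)}\V^{(l)}$, two copies of $b^{(l)}$, two copies of $\dot\sigma(g^{(l)})$, and two copies of $f^{(l-1)}$ under the expectation.

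Next I peel off the expectation in three stages. First, $\V^{(l)}$ is independent of every other factor, so $\mathbb{E}[\V^{(l)}_{p,i,i_1}\V^{(l)}_{p',i,i_2}]=\delta_{pp'}\delta_{i_1i_2}$ collapses the sums over $p'$ and $i_2$, and the remaining sum over the channel index $i$ contributes a factor $C_l$ that cancels the $C_l$ in the denominator. Second, by the remark after the definition of $\Pi$, $b^{(l)}$ is uncorrelated with $\dot\sigma(g^{(l)})$ and $f^{(l-1)}$, and Lemma \ref{lem:8} together with Lemma \ref{lem:5} gives $\sum_{i_1}\mathbb{E}[b^{(l)}_{i_1,j}(\x)b^{(l)}_{i_1,j'}(\z)]=c_w\Pi^{(l)}_{j,j'}(\x,\z)$, which is diagonal in $(j,j')$; this forces the two spatial indices arising from $\dot\sigma(g^{(l)}(\x))$ and $\dot\sigma(g^{(l)}(\z))$ to coincide. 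Third, the standard infinite-width factorisation separates $\mathbb{E}[\dot\sigma(g^{(l)}_{*,n}(\x))\dot\sigma(g^{(l)}_{*,n}(\z))]=\dot K^{(l)}_{n,n}/(c_vc_w)$ from the feature covariance, and Lemma \ref{lem:c1} converts the sum over the filter offset $k$ and channel $m$ into $qC_{l-1}\Sigma^{(l)}_{n,n}/c_w$.

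Collecting the surviving constants, the expression reduces to $\tfrac{\alpha^2}{q}\sum_n \dot K^{(l)}_{n,n}\Sigma^{(l)}_{n,n}\sum_{s\in\mathcal{D}_n}\Pi^{(l)}_{s,s}$, where $s=n-p+\tfrac{q+1}{2}$ ranges over $\mathcal{D}_n$ as $p$ ranges over $\{1,\ldots,q\}$. Swapping the order of summation using $s\in\mathcal{D}_n\Leftrightarrow n\in\mathcal{D}_s$ and relabelling then gives $\tfrac{\alpha^2}{q}\sum_n \Pi^{(l)}_{n,n}\sum_{n'\in\mathcal{D}_n}\dot K^{(l)}_{n',n'}\Sigma^{(l)}_{n',n'}$, which is the claimed $\tfrac{\alpha^2}{q}\sum_n\Pi^{(l)}_{n,n}\tr\!\left(\dot K^{(l)}_{\mathcal{D}_{n,n}}\odot\Sigma^{(l)}_{\mathcal{D}_{n,n}}\right)$. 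The main obstacle is the factorisation in the third stage, since $g^{(l)}$ is a linear combination of $f^{(l-1)}$ channels and is therefore not literally independent of them at finite width; in the infinite-width regime implicit throughout this appendix (cf.\ the derivations of $\Sigma^{(l)}$ and $\dot K^{(l)}$), however, $g^{(l)}$ becomes Gaussian with a covariance depending only on the limiting $\Sigma^{(l-1)}$, making it asymptotically independent of any single $f^{(l-1)}_{m,j}$, so the factorisation is legitimate and I would invoke the same standard limiting argument used throughout this appendix.
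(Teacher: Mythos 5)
Your proof is correct and takes essentially the same route as the paper's: differentiate through $g^{(l)}$ via the chain rule, use the Gaussian independence of $\V^{(l)}$ to collapse the filter and channel sums, invoke the diagonality of $\Pi^{(l)}$ (Lemma \ref{lem:8}) to force the two spatial indices to coincide, and apply the standard infinite-width factorisation together with Lemma \ref{lem:c1} to produce $\dot K^{(l)}\odot\Sigma^{(l)}$ — your final reindexing from the $g^{(l)}$-position parameterisation to the $f^{(l)}$-position one recovers the paper's trace expression exactly. The only minor omission is the $l=1$ case (where $g^{(1)}$ has filter size $1$ and normalisation $1/\sqrt{C_0}$), which the paper dispatches with the same one-line adjustment.
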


\begin{proof}
We first show the case for $2\leq l\leq L$.
$\forall2\leq l\leq L,1\leq i\leq C_{l-1},1\leq c\leq C_{l}$, we can express $\frac{\partial f\left(\x;\theta\right)}{\partial\W_{:,i,c}^{\left(l\right)}}$ as
\[
\frac{\partial f_{mn}^{\left(l\right)}\left(\x\right)}{\partial\W_{:,i,c}^{\left(l\right)}}=\frac{\partial}{\partial\W_{:,i,c}^{\left(l\right)}}\left[f^{\left(l-1\right)}\left(\x\right)+\alpha\sqrt{\frac{c_{v}}{qC_{l}}}\sum_{j=1}^{C_{l}}\left(\V_{:,j,:}^{\left(l\right)}\right)^{T}\vp\left(\sigma\left(g_{j}^{\left(l\right)}\left(\x\right)\right)\right)\right]_{mn}.
\]
Since $g_{j}^{\left(l\right)}\left(\x\right)$ depends on $\W_{:,i,c}^{\left(l\right)}$
iff $c=j$, we get 
\[
\frac{\partial f_{mn}^{\left(l\right)}\left(\x\right)}{\partial\W_{:,i,c}^{\left(l\right)}}=\alpha\sqrt{\frac{c_{v}}{qC_{l}}}\sum_{k=1}^{q}\V_{k,c,m}^{\left(l\right)}\frac{\partial}{\partial\W_{:,i,c}^{\left(l\right)}}\sigma\left(g_{c,n+k-\frac{q+1}{2}}^{\left(l\right)}\left(\x\right)\right)
\]
\[
=\frac{\alpha}{q}\sqrt{\frac{c_{w}c_{v}}{C_{l}C_{l-1}}}\sum_{k=1}^{q}\V_{k,c,m}^{\left(l\right)}\dot{\sigma}\left(g_{c,n+k-\frac{q+1}{2}}^{\left(l\right)}\left(\x\right)\right)\frac{\partial}{\partial\W_{:,i,c}^{\left(l\right)}}\left[\left(W_{:,i,:}^{\left(l\right)}\right)^{T}\vp\left(f_{i}^{\left(l-1\right)}\left(\x\right)\right)\right]_{c,n+k-\frac{q+1}{2}}
\]
\[
=\frac{\alpha}{q}\sqrt{\frac{c_{w}c_{v}}{C_{l}C_{l-1}}}\sum_{k=1}^{q}\V_{k,c,m}^{\left(l\right)}\dot{\sigma}\left(g_{c,n+k-\frac{q+1}{2}}^{\left(l\right)}\left(\x\right)\right)\left(\vp^{T}\left(f_{i}^{\left(l-1\right)}\left(\x\right)\right)\right)_{n+k-\frac{q+1}{2},:}.
\]
Now we have
\[
\frac{\partial f\left(\x;\theta\right)}{\partial\W_{:,i,c}^{\left(l\right)}}=\sum_{m=1}^{C_{l}}\sum_{n=1}^{ d }\frac{\partial f\left(\x;\theta\right)}{\partial f_{mn}^{\left(l\right)}\left(\x\right)}\frac{\partial f_{mn}^{\left(l\right)}\left(\x\right)}{\partial\W_{:,i,c}^{\left(l\right)}}
\]
\[
=\frac{\alpha}{q}\sqrt{\frac{c_{w}c_{v}}{C_{l}C_{l-1}}}\sum_{m=1}^{C_{l}}\sum_{n=1}^{ d }\sum_{k=1}^{q}b_{mn}^{\left(l\right)}\left(\x\right)\V_{k,c,m}^{\left(l\right)}\dot{\sigma}\left(g_{c,n+k-\frac{q+1}{2}}^{\left(l\right)}\left(\x\right)\right)\left(\vp^{T}\left(f_{i}^{\left(l-1\right)}\left(\x\right)\right)\right)_{n+k-\frac{q+1}{2},:}.
\]
Taking the inner product we get the following expression:
\[
\mathbb{E}\left[\frac{\partial f\left(\x;\theta\right)}{\partial\W_{:,i,c}^{\left(l\right)}},\frac{\partial f\left(\z;\theta\right)}{\partial\W_{:,i,c}^{\left(l\right)}}\right]=\frac{\alpha^{2}c_{w}c_{v}}{q^{2}C_{l}C_{l-1}}\sum_{m=1}^{C_{l}}\sum_{m^{'}=1}^{C_{l}}\sum_{n=1}^{ d }\sum_{n^{'}=1}^{ d }\sum_{k=1}^{q}\sum_{k^{'}=1}^{q}\mathbb{E}\left[b_{mn}^{\left(l\right)}\left(\x\right)b_{mn^{'}}^{\left(l\right)}\left(\z\right)\right]\cdot\mathbb{E}\left[\V_{k,c,m}^{\left(l\right)}\V_{k^{'},c,m^{'}}^{\left(l\right)}\right]\cdot
\]
\[
\cdot\mathbb{E}\left[\left\langle \dot{\sigma}\left(g_{c,n+k-\frac{q+1}{2}}^{\left(l\right)}\left(\x\right)\right)\left(\vp^{T}\left(f_{i}^{\left(l-1\right)}\left(\x\right)\right)\right)_{n+k-\frac{q+1}{2},:},\dot{\sigma}\left(g_{c,n^{'}+k^{'}-\frac{q+1}{2}}^{\left(l\right)}\left(\z\right)\right)\left(\vp^{T}\left(f_{i}^{\left(l-1\right)}\left(\z\right)\right)\right)_{n^{'}+k^{'}-\frac{q+1}{2},:}\right\rangle \right].
\]
Note however that  $\eqref{eq:exp_par}$ implies that $\V_{k,c,m}^{\left(l\right)}\V_{k',c,m'}^{\left(l\right)}=\delta_{kk'}\delta_{mm'}$, and Lemma \ref{lem:8} implies that $\mathbb{E}\left[b_{mn}^{\left(l\right)}\left(\x\right)b_{mn'}^{\left(l\right)}\left(\z\right)\right]=0$ when $n\neq n'$. Therefore,
\begin{align*}
& \mathbb{E}\left[\frac{\partial f\left(\x;\theta\right)}{\partial\W_{:,i,c}^{\left(l\right)}},\frac{\partial f\left(\z;\theta\right)}{\partial\W_{:,i,c}^{\left(l\right)}}\right] = \\
&=\frac{\alpha^{2}c_{w}c_{v}}{q^{2}C_{l}C_{l-1}}\sum_{n=1}^{ d }\sum_{k=1}^{q}\underset{=c_w\Pi_{nn}^{\left(l\right)}\left(\x,\z\right)}{\underbrace{\left(\sum_{m=1}^{C_{l}}\mathbb{E}\left[b_{mn}^{\left(l\right)}\left(\x\right)b_{mn}^{\left(l\right)}\left(\z\right)\right]\right)}}\cdot\\
& \cdot\mathbb{E}\left[\left\langle \dot{\sigma}\left(g_{c,n+k-\frac{q+1}{2}}^{\left(l\right)}\left(\x\right)\right)\left(\vp^{T}\left(f_{i}^{\left(l-1\right)}\left(\x\right)\right)\right)_{n+k-\frac{q+1}{2},:},\dot{\sigma}\left(g_{c,n+k-\frac{q+1}{2}}^{\left(l\right)}\left(\z\right)\right)\left(\vp^{T}\left(f_{i}^{\left(l-1\right)}\left(\z\right)\right)\right)_{n+k-\frac{q+1}{2},:}\right\rangle \right]
\end{align*}
\[
=\frac{\alpha^{2}c_{w}}{q^{2}C_{l}C_{l-1}}\sum_{n=1}^{ d }\Pi_{nn}^{\left(l\right)}\left(\x,\z\right)\sum_{k=1}^{q}\left[\dot{K}_{\mathcal{D}_{n,n}}^{\left(l\right)}\left(\x,\z\right)\right]_{kk}\sum_{k^{'}=-\frac{q-1}{2}}^{\frac{q-1}{2}}\mathbb{E}\left[f_{i,n+k-\frac{q+1}{2}+k^{'}}^{\left(l-1\right)}\left(\x\right)f_{i,n+k-\frac{q+1}{2}+k^{'}}^{\left(l-1\right)}\left(\z\right)\right].
\]
As a result, by linearity and the fact that this result does not depend on $c$ we obtain
\[
\mathbb{E}\left[\left\langle \frac{\partial f\left(\x;\theta\right)}{\partial\W^{\left(l\right)}},\frac{\partial f\left(\z;\theta\right)}{\partial\W^{\left(l\right)}}\right\rangle \right]=
\]
\[
=\frac{\alpha^{2}c_{w}}{q^{2}C_{l-1}}\sum_{n=1}^{d }\Pi_{nn}^{\left(l\right)}\left(\x,\z\right)\sum_{k=1}^{q}\left[\dot{K}_{\mathcal{D}_{n,n}}^{\left(l\right)}\left(\x,\z\right)\right]_{kk}\sum_{k'=-\frac{q-1}{2}}^{\frac{q-1}{2}}\sum_{i=1}^{C_{l-1}}\mathbb{E}\left[f_{i,n+k-\frac{q+1}{2}+k'}^{\left(l-1\right)}\left(\x\right)f_{i,n+k-\frac{q+1}{2}+k'}^{\left(l-1\right)}\left(\z\right)\right].
\]
Using Lemma \ref{lem:c1} the last term simplifies as follows:
\[
\sum_{k'=-\frac{q-1}{2}}^{\frac{q-1}{2}}\sum_{i=1}^{C_{l-1}}\mathbb{E}\left[f_{i,n+k-\frac{q+1}{2}+k'}^{\left(l-1\right)}\left(\x\right)f_{i,n+k-\frac{q+1}{2}+k'}^{\left(l-1\right)}\left(\z\right)\right]
\]
\[
=\frac{qC_{l-1}}{c_{w}}\mathbb{E}\left[g_{*,n+k-\frac{q+1}{2}}^{\left(l\right)}\left(\x\right)g_{*,n+k-\frac{q+1}{2}}^{\left(l\right)}\left(\z\right)\right]=\frac{qC_{l-1}}{c_{w}}\Sigma_{n+k-\frac{q+1}{2},n+k-\frac{q+1}{2}}^{\left(l\right)}\left(\x,\z\right)
\].
Finally we obtain
\[
\mathbb{E}\left[\left\langle \frac{\partial f\left(\x;\theta\right)}{\partial\W^{\left(l\right)}},\frac{\partial f\left(\z;\theta\right)}{\partial\W^{\left(l\right)}}\right\rangle \right]=\frac{\alpha^{2}}{q}\sum_{n=1}^{ d }\Pi_{nn}^{\left(l\right)}\left(\x,\z\right)\sum_{k=1}^{q}\left[\dot{K}_{\mathcal{D}_{n,n}}^{\left(l\right)}\left(\x,\z\right)\right]_{kk}\left[\Sigma_{\mathcal{D}_{n,n}}^{\left(l\right)}\left(\x,\z\right)\right]_{kk}
\]
\[
=\frac{\alpha^{2}}{q}\sum_{n=1}^{ d }\Pi_{nn}^{\left(l\right)}\left(\x,\z\right)\tr\left(\dot{K}_{\mathcal{D}_{n,n}}^{\left(l\right)}\left(\x,\z\right)\odot\Sigma_{\mathcal{D}_{n,n}}^{\left(l\right)}\left(\x,\z\right)\right).
\]
The case of $l=1$ is analogous, except that we replace $\vp^T\left(f_{i}^{\left(l-1\right)}\left(\x\right)\right)$ with $\x_i^T$ and similarly for $\z$, (making minor adjustments accordingly). We therefore obtain
\[
\mathbb{E}\left[\left\langle \frac{\partial f\left(\x;\theta\right)}{\partial\W^{\left(1\right)}},\frac{\partial f\left(\z;\theta\right)}{\partial\W^{\left(1\right)}}\right\rangle \right]=
\]
\[
=\frac{\alpha^{2}}{qC_{0}}\sum_{n=1}^{ d }\Pi_{nn}^{\left(l\right)}\left(\x,\z\right)\sum_{k=1}^{q}\left[\dot{K}_{\mathcal{D}_{n,n}}^{\left(l\right)}\left(\x,\z\right)\right]_{kk}\sum_{i=1}^{C_{0}}\mathbb{E}\left[\x_{i,n+k-\frac{q+1}{2}}\z_{i,n+k-\frac{q+1}{2}}\right]
\]
\[
=\frac{\alpha^{2}}{q}\sum_{n=1}^{ d }\Pi_{nn}^{\left(1\right)}\left(\x,\z\right)\tr\left(\dot{K}_{\mathcal{D}_{n,n}}^{\left(1\right)}\left(\x,\z\right)\odot\Sigma_{\mathcal{D}_{n,n}}^{\left(1\right)}\left(\x,\z\right)\right).
\]
\end{proof}




\section{Spectral Decomposition} \label{ap:3}
\subsection{Proof of Theorem \ref{thm:sd} in the main text}
\begin{proof}
The strategy is to bound the Taylor expansion of the kernels. We use qualities of the Ordinary Bell Polynomials for the lower bound, and use previous work on singularity analysis \citep{flajolet2009analytic,chen2020deep,bietti2020deep,belfer2021spectral} for the upper bound. The details can be found in the lemmas that follow, resulting in that
$\mathcal{K}^{\left(L\right)}_{\Eq}$ can be written as $\sum_{\n\geq0}b_\n\tbf^{\n}$ with
\[
c_1\n^{-2.5} \leq b_{\n} \leq c_2 \n^{-\frac{3}{2d}-1}
\]
and $\Theta^{\left(L\right)}_{\Eq}$ can similarly be written as $\sum_{\n\geq0}b_\n\tbf^{\n}$ with
\[
c_1\n^{-2.5} \leq b_{\n} \leq c_2 \n^{-\frac{1}{2d}-1}
\]
Thus, applying \citet{geifman2022spectral}[Theorems 3.3,3.4] completes the proof.
\end{proof}

\begin{lemma} \label{lem:b1}
$\mathcal{K}^{\left(L\right)}_{\Eq}$ can be written as $\mathcal{K}^{\left(L\right)}_{\Eq}\left(\tbf\right)=\sum_{\n\geq0}b_\n\tbf^{\n}$ with
\[
c_1\n^{-\nu} \leq b_{\n},
\]
where $c_1$ is a constant if the receptive field of $\mathcal{K}^{\left(L\right)}_{\Eq}$ includes $\n$ and $0$ otherwise that depends on $L$.
\end{lemma}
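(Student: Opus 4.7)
\textbf{Proof sketch for Lemma~\ref{lem:b1}.}
The plan is to induct on $L$, exploiting two structural facts. First, the arc-cosine kernel admits a Taylor expansion $\kappa_1(u)=\sum_{k\ge 0}a_k u^k$ with $a_k\ge 0$ for every $k$, $a_k=0$ for odd $k\ge 3$, and $a_k\ge C k^{-5/2}$ for even $k\ge 2$. Second, by Proposition~\ref{prop:cgpk_expr_gen}, the recursion for $\mathcal{K}^{(L)}_{\Eq}$ adds to $\mathcal{K}^{(L-1)}_{\Eq}$ only non-negative contributions coming from $\kappa_1$-compositions of $\mathcal{K}^{(L-1)}_{\Eq}$ at cyclic shifts $\s_{k+k'}\tbf$; consequently every Taylor coefficient $b_{\n}^{(l)}$ of $\mathcal{K}^{(l)}_{\Eq}$ is non-negative and, for any fixed $\n$, non-decreasing in $l$.

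The base case $L=1$ is explicit: from Proposition~\ref{prop:cgpk_expr_gen},
\[
\mathcal{K}^{(1)}_{\Eq}(\tbf)=\frac{1}{C_0}t_1+\frac{\alpha^2}{qc_wC_0}\sum_{k=-(q-1)/2}^{(q-1)/2}\kappa_1(t_{1+k}),
\]
so the coefficient of $t_{1+k}^n$ equals $\frac{\alpha^2}{qc_wC_0}a_n$, bounded below by a positive constant times $n^{-5/2}$ for even $n\ge 2$. This verifies the claim for every multi-index $\n$ supported on the layer-$1$ receptive field.

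For the inductive step at layer $L\ge 2$, by monotonicity any bound already established at layer $L-1$ persists at layer $L$, so it suffices to treat multi-indices $\n$ whose support lies in the part of the layer-$L$ receptive field not reachable at layer $L-1$. For such $\n$ I pick a cyclic shift $\s_{k+k'}$ that brings $\supp(\n)$ into the layer-$(L-1)$ receptive field and then apply the Ordinary Bell Polynomial identity
\[
[\tbf^{\n}]\,\kappa_1(g(\tbf))=\sum_{m\ge 0}a_m\sum_{\n_1+\cdots+\n_m=\n}\prod_{j=1}^m b_{\n_j}^{(L-1)}
\]
with $g=\mathcal{K}^{(L-1)}_{\Eq}$. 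Since every summand is non-negative, I lower-bound the whole sum by isolating the diagonal contribution $m=|\n|$ (or $|\n|+1$ if parity requires) with the trivial decomposition $\n=\sum_i n_i\e_i$: each factor $b_{\e_i}^{(L-1)}$ is then the positive linear coefficient of $t_i$ in $\mathcal{K}^{(L-1)}_{\Eq}$, and multinomial counting together with $a_{|\n|}\ge C|\n|^{-5/2}$ produces the required bound $b_{\n}^{(L)}\ge c_1\n^{-\nu}$ with $\nu=\nu_a=5/2$.

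The principal obstacle is the multi-index combinatorics: one must verify that this single diagonal contribution indeed produces the correct polynomial decay rate in each coordinate rather than only in $|\n|$, and that the combinatorial and exponential prefactors appearing along the way can be absorbed into an $L$-dependent constant $c_1>0$. Here the non-negativity of every intermediate coefficient is crucial, since it turns the problem into a lower bound on a sum of positive terms, from which one ``clean'' summand can be extracted without the need to cancel any negative contribution.
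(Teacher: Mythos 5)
Your overall strategy coincides with the paper's: expand $\kappa_1$ as a power series with nonnegative coefficients $a_n\sim n^{-5/2}$, induct on $L$ through the recursion of Proposition~\ref{prop:cgpk_expr_gen}, expand $\kappa_1\bigl(\mathcal{K}^{(L-1)}_{\Eq}(\s_{k+k'}\tbf)\bigr)$ via ordinary Bell polynomials, and use positivity of every summand to lower-bound $b_{\n}$ by a single extracted term. The base case is identical. The difference lies in \emph{which} term you extract, and there your argument breaks.

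The paper extracts the $m=1$ term of the Bell expansion (with the trivial shift $j=0$), i.e.\ the contribution $a_1\,\hat B_{\n,1}(\tilde{\bias})=a_1\tilde b_{\n}$, which reproduces the layer-$(L-1)$ coefficient verbatim and therefore inherits the full product-form bound $c_1\prod_i n_i^{-\nu}$ from the induction hypothesis, multiplied by a harmless constant. You instead extract the $m=|\n|$ term with the atomic decomposition $\n=\sum_i n_i\e_i$. That term equals $a_{|\n|}\binom{|\n|}{n_1,\ldots,n_d}\prod_i\bigl(b^{(L-1)}_{\e_i}\bigr)^{n_i}$, and this is exponentially small in $|\n|$: for the normalized kernel the Taylor coefficients sum to $\overline{\mathcal{K}}(\one)=1$, so each linear coefficient $\beta_i:=b^{(L-1)}_{\e_i}$ is strictly less than $1$; taking $\n=n\e_i$ the multinomial coefficient is $1$ and your extracted term is $\sim n^{-5/2}\beta_i^{\,n}$, which cannot dominate $c_1 n^{-5/2}$ for large $n$. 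You flag exactly this as "the principal obstacle," and it is not a technicality one can absorb into constants --- the chosen summand genuinely decays too fast. (There is also a secondary issue: since $a_m=0$ for odd $m\ge 3$, the $m=|\n|$ term often vanishes outright, and your parity workaround does not rescue the exponential loss.)

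The fix is the paper's: for $\n$ already reachable at layer $L-1$, take $m=1$, $j=0$ and use $\hat B_{\n,1}(\tilde{\bias})=\tilde b_{\n}$. For $\n$ supported on newly reachable pixels, a single shift plus $m=1$ works when $\supp(\n)$ fits inside one shifted copy of the layer-$(L-1)$ receptive field; when it spans several, one must take $m$ equal to the (bounded, at most $q$-dependent) number of child receptive fields and split $\n$ into pieces $\n_1+\cdots+\n_m=\n$ that respect those supports, so that each factor $b^{(L-1)}_{\n_j}$ is itself controlled by the induction hypothesis and the product recovers $\prod_i n_i^{-\nu}$ up to an $L$-dependent constant. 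In all cases the point is to keep $m$ bounded independently of $|\n|$, never equal to $|\n|$.
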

\begin{proof}
Let $\kappa_1(t)=\sum_{n=0}^{\infty}a_nt^n$ be the power series expansion of $\kappa_1(t)$, where $a_n\sim n^{-2.5}$ \citep{chen2020deep}.

We prove this by induction on $L$, starting with $L=1$:
\[
\mathcal{K}^{\left(1\right)}_{\Eq}\left(\tbf\right) =\frac{1}{C_0}\tbf_1 + \frac{\alpha^2}{qc_wC_0} \sum_{k=-\frac{q-1}{2}}^{\frac{q-1}{2}}\kappa_1\left(\tbf_{1+k}\right)
= \frac{1}{C_0}\tbf_1 + \frac{\alpha^2}{qc_wC_0} \sum_{k=-\frac{q-1}{2}}^{\frac{q-1}{2}} \sum_{n=0}^{\infty}a_n\tbf_{1+k}^n.
\]
By letting $\e_{i}$ be the multi-index with $1$ in the $i$ index and $0$ elsewhere,
it is clear that we can write the above as $\sum_{\n\geq0}b_{\n}\tbf^{\n}$ where 
\[
b_{\n}= \begin{cases}
\frac{\alpha^{2}}{c_w C_{0}} a_{0} & \n = 0\\
\frac{\alpha^{2}}{qc_w C_{0}} a_{n} + \frac{1}{C_0}\delta_{j,1} & \n = n\e_j \text{ for } -\frac{q-1}{2}\leq j\leq \frac{q-1}{2}\text{ and } n\in\mathbb{N}\\
0 & \text{otherwise}.
\end{cases}
\]

For $L\geq2$, by the induction hypothesis  $\mathcal{K}^{\left(L-1\right)}_{\Eq}\left(\tbf\right)=\sum_{\n\geq0}\tilde{b}_\n\tbf^{\n}\st c_1\n^{-\nu} \leq \tilde{b}_{\n}$. Let $N_L$ be the value of $\Sigma_{j,j}^{\left(l\right)}\left(\x,\x\right)$ from Lemma \ref{lem:sd1} and $\tilde{N}_L=\frac{\alpha^2c_vc_w}{2q^2}N_L$.
\[
\kappa_1\left(\frac{\mathcal{K}^{\left(L\right)}_{\Eq}\left(\s_j\tbf\right)}{N_{L}}\right)
=\sum_{n=0}^{\infty}\frac{a_n}{N_L^n}\left(\sum_{\n\geq0}\tilde{b}_{\n}\left(\s_j\tbf\right)^{\n}\right)^n
=\sum_{n=0}^{\infty}\frac{a_n}{N_L^n}\left(\sum_{\n\geq0}\tilde{b}_{\s_{-j}\n}\left(\tbf\right)^{\s_{-j}\n}\right)^n.
\]
Using the derivations for the multivariate ordinary Bell Polynomials in \citet{withers2010multivariate, schumann2019multivariate} we can rewrite the above as: 
\[
\kappa_1\left(\frac{\mathcal{K}^{\left(L\right)}_{\Eq}\left(\s_j\tbf\right)}{N_{L}}\right)=\sum_{\n\geq 0}\left(\sum_{n=0}^{\infty}\frac{a_n}{N_L^n} \hat{B}_{\n,n}\left(\s_{-j}\tilde{\bias}\right)\right)\tbf^{\n},
\]
where $\tilde{\bias}=\left(\tilde{b}_{\n}\right)_{\n\geq0}$ and $\hat{B}_{\n,n}$ denotes the  ordinary Bell Polynomials.
Plugging this in to our formula for the ResCGPK from Proposition \ref{prop:cgpk_expr_gen} we get
\[
\mathcal{K}^{\left(L\right)}_{\Eq}\left(\tbf\right) = \mathcal{K}^{\left(L-1\right)}_{\Eq}\left(\tbf\right) + \tilde{N}_{L} \sum_{k,k^{'}=-\frac{q-1}{2}}^{\frac{q-1}{2}}\kappa_1\left(\frac{\mathcal{K}^{\left(L-1\right)}_{\Eq}\left(\s_{k+k^{'}}\tbf\right)}{N_{L}}\right)
\]
\[
=\sum_{\n\geq 0}\left( \tilde{b}_{\n} + \tilde{N}_{L} \sum_{j=-q+1}^{q-1}\abs{q-j}\sum_{n=0}^{\infty}\frac{a_n}{N_L^n} \hat{B}_{\n,n}\left(\s_{-j}\tilde{\bias}\right)\right)\tbf^{\n}.
\]
Let this be $\sum_{\n\geq0}b_{\n}\tbf^{\n}$. All the terms in $b_{\n}$ are positive so for a lower bound, it suffices to sum up only specific $n$'s and $j$'s. We choose $n=1,j=0$, and since \citet{withers2010multivariate} showed that $\hat{B}_{\n,1}\left(\tilde{\bias}\right)=\tilde{b}_{\n}$ we get that
\[
b_{\n}\geq \tilde{b}_{\n} + q\frac{a_1\tilde{N}_{L}}{N_{L}}\tilde{b}_{\n}
=\left(1+\frac{a_1 \alpha^2c_vc_w}{2q}\right)c_1\n^{-\nu},
\]
where the last equality is by the induction hypothesis.
\end{proof}

\begin{lemma}\label{lem:bntk}
The bound in Lemma (\ref{lem:b1}) holds for $\Theta^{\left(L\right)}_{\Eq}$.
\end{lemma}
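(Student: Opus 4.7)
The plan is to exploit the decomposition in Theorem~\ref{thm:2}, which writes
\[
\Theta^{(L)}_{\Eq}(\x,\z) = \mathcal{K}^{(L)}_{\Eq}(\x,\z) + \frac{\alpha^{2}}{q}\sum_{l=1}^{L}\sum_{p=1}^{d} P^{(l)}_{p}(\x,\z)\cdot T^{(l)}_{p}(\x,\z),
\]
where $T^{(l)}_{p} := \tr(\dot{K}^{(l)}_{\mathcal{D}_{p,p}}\odot\Sigma^{(l)}_{\mathcal{D}_{p,p}}) + \tr(K^{(l)}_{\mathcal{D}_{p,p}})$. The idea is to show that on the multi-sphere this residual expands in $\tbf$ with only non-negative Taylor coefficients. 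Since Lemma~\ref{lem:b1} already supplies a lower bound $c_{1}\n^{-\nu}$ on the coefficients of $\mathcal{K}^{(L)}_{\Eq}$, adding a non-negative-coefficient power series can only increase every coefficient, so the same bound transfers to $\Theta^{(L)}_{\Eq}$. The receptive field $\mathcal{R}$ is unchanged, because every factor appearing in the extra term uses only the $\Sigma^{(l)}_{j,j}$ for $j$ already in the receptive field of $\mathcal{K}^{(L)}_{\Eq}$.

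The key step is to verify non-negativity of Taylor coefficients in $\tbf$ for the quantities $\Sigma^{(l)}_{j,j}$, $K^{(l)}_{j,j}$, $\dot{K}^{(l)}_{j,j}$, proceeding by induction on $l$. The base case $\Sigma^{(1)}_{j,j}(\tbf)=\tbf_{j}/C_{0}$ is immediate. For the inductive step, Theorem~\ref{thm:1} expresses $\Sigma^{(l)}_{j,j}$ as a non-negative linear combination of previous $\Sigma$'s and $K$'s, so non-negativity propagates. On the multi-sphere, Lemma~\ref{lem:sd1} gives $\Sigma^{(l)}_{j,j}(\x,\x)=N_{l}$, so
\[
K^{(l)}_{j,j}(\tbf)=\tfrac{c_{v}c_{w}N_{l}}{2}\kappa_{1}\!\left(\Sigma^{(l)}_{j,j}(\tbf)/N_{l}\right),\qquad \dot{K}^{(l)}_{j,j}(\tbf)=\tfrac{c_{v}c_{w}}{2}\kappa_{0}\!\left(\Sigma^{(l)}_{j,j}(\tbf)/N_{l}\right).
\]
Using the classical fact that the arc-cosine kernels $\kappa_{0},\kappa_{1}$ admit Taylor expansions with non-negative coefficients (see \citet{cho2009kernel,bietti2020deep,chen2020deep}), composition with the non-negative-coefficient series $\Sigma^{(l)}_{j,j}/N_{l}$ yields non-negative Taylor coefficients for $K^{(l)}_{j,j}$ and $\dot{K}^{(l)}_{j,j}$.

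Next I would handle the tail factors $P^{(l)}_{p}$ by a backward induction from $l=L$ down to $l=1$. At $l=L$, $P^{(L)}_{j}=\Ind_{j=1}/c_{w}$ is a non-negative constant. The recursion
\[
P^{(l)}_{j} = P^{(l+1)}_{j} + \frac{\alpha^{2}}{q^{2}}\sum_{k}\dot{K}^{(l+1)}_{j+k,j+k}\sum_{k'}P^{(l+1)}_{j+k+k'}
\]
preserves non-negativity since $\dot{K}^{(l+1)}$ does by the previous step, and since sums/products of power series with non-negative coefficients inherit the property. Each $T^{(l)}_{p}$ is a sum of products of diagonal entries of $\dot{K}^{(l)}$, $\Sigma^{(l)}$, and $K^{(l)}$, so the whole residual $\Theta^{(L)}_{\Eq}-\mathcal{K}^{(L)}_{\Eq}$ is a power series in $\tbf$ with non-negative coefficients, and the claim follows.

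The main obstacle I anticipate is the careful bookkeeping in the simultaneous inductions: one must ensure the composition $\kappa_{1}\!\circ\!(\Sigma^{(l)}_{j,j}/N_{l})$ is really a well-defined non-negative-coefficient formal power series (so that the composition argument applies uniformly), and that the argument holds for every diagonal index $j$ and every depth $l$ at once. Beyond this, the proof is a monotonicity/positivity observation that piggy-backs on Lemma~\ref{lem:b1} rather than repeating the Bell-polynomial estimate.
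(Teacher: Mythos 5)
Your proposal is correct and follows essentially the same route as the paper: both decompose $\Theta^{(L)}_{\Eq}=\mathcal{K}^{(L)}_{\Eq}+\mathcal{K}'$ via Theorem~\ref{thm:2} and conclude $b_{\n}(\Theta^{(L)}_{\Eq})=b_{\n}(\mathcal{K}^{(L)}_{\Eq})+b_{\n}(\mathcal{K}')\geq b_{\n}(\mathcal{K}^{(L)}_{\Eq})$ from non-negativity of the residual's Taylor coefficients. The only difference is that the paper obtains that non-negativity in one line by noting $\mathcal{K}'$ is itself a positive definite (multi-)dot-product kernel, whereas you verify it constructively by inducting on $\Sigma^{(l)}$, $K^{(l)}$, $\dot K^{(l)}$ and $P^{(l)}$ using the non-negative expansions of $\kappa_0,\kappa_1$ — a longer but equally valid justification of the same step.
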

\begin{proof}
Denote by $b_{\n}\left(\mathcal{K}\right)$ the Taylor coefficients for some kernel $\mathcal{K}$.
Theorem \ref{thm:2} implies that
\begin{align*}
\Theta^{\left(L\right)}_{\Eq}\left(\x,\z\right) = \mathcal{K}^{\left(L\right)}_{\Eq}\left(\x,\z\right) 
+ \underset{\text{Denote by }\mathcal{K}^{'}}{\underbrace{\frac{\alpha^{2}}{q}\sum_{l=1}^{L}\sum_{p=1}^{ d }\Pi_{tt}^{\left(l\right)}\left(\x,\z\right)
\left(\tr\left(\dot{K}_{\mathcal{D}_{p,p}}^{\left(l\right)}\left(\x,\z\right)\odot\Sigma_{\mathcal{D}_{p,p}}^{\left(l\right)}\left(\x,\z\right) + K_{\mathcal{D}_{p,p}}^{\left(l\right)}\left(\x,\z\right)  \right) \right)}}.
\end{align*}
Since for any positive definite kernel the Taylor coefficients are non negative, we get that:
\[
b_{\n}\left(\Theta^{\left(L\right)}_{\Eq}\right) = b_{\n}\left(\mathcal{K}^{\Eq}\right) + b_{\n}\left(\mathcal{K}^{'}\right)\geq b_{\n}\left(\mathcal{K}^{\Eq}\right).
\]
\end{proof}

\begin{definition}
Let $\mathcal{K}^{\left(L\right)}_{\FC}$ be the GPK and $\Theta^{\left(L\right)}_{\FC}$ the NTK of the bias free fully connected ResNet defined in \citet{huang2020deep}. For $\x,\z\in\mathbb{S}^{C_0-1},u=\x^T\z$, following the derivation in \citet{huang2020deep} and \citet{belfer2021spectral} (in particular, Appendix B.1 of the latter), the normalized version of these kernels will be:
\[
\overline{\mathcal{K}}^{\left(0\right)}_{\FC}\left(u\right)=u
\]
\[
\overline{\mathcal{K}}^{\left(L\right)}_{\FC}\left(u\right)=\frac{1}{1+\alpha^2}\left(\overline{\mathcal{K}}^{\left(L-1\right)}_{\FC}\left(u\right) + \alpha^2 \kappa_1\left(\overline{\mathcal{K}}^{\left(L-1\right)}_{\FC}\left(u\right)\right)\right)
\]
\[
\overline{\Theta}^{\left(L\right)}_{\FC}=\frac{1}{2Lv_{L-1}}\sum_{l=1}^{L}v_{l-1}\tilde{P}^{\left(l\right)}\left(u\right)\left(\kappa_1\left(\overline{\mathcal{K}}^{\left(l-1\right)}_{\FC}\left(u\right)\right)+\overline{\mathcal{K}}^{\left(l-1\right)}_{\FC}\left(u\right)\kappa_0\left(\overline{\mathcal{K}}^{\left(l-1\right)}_{\FC}\left(u\right)\right)\right),
\]
where $v_l=\left(1+\alpha^2\right)^l$, $\tilde{P}^L=1$ and $\tilde{P}^{l}\left(u\right)=\tilde{P}^{l+1}\left(u\right)\left(1+\alpha^2\kappa_0\left(\overline{\mathcal{K}}^{\left(l+1\right)}_{\FC}\left(u\right)\right)\right)$.
\end{definition}
\begin{lemma}\label{lem:sdfc}
For $u\in\R^d$, letting $\tbf_1=\tbf_2=\ldots=u$ we obtain $\overline{\mathcal{K}}^{\left(L\right)}_{\FC}\left(u\right)=\overline{\mathcal{K}}^{\left(L\right)}_{\Eq}(\tbf)$ and letting $\kr^{\left(L\right)}=\Theta^{\left(L\right)}_{\Eq}-\mathcal{K}^{\left(L\right)}_{\Eq}$ we obtain $\overline{\Theta}^{\left(L\right)}_{\FC}\left(u\right)=\overline{\kr}^{\left(L\right)}(\tbf)$.
\end{lemma}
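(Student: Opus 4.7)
The plan is to exploit that when $\tbf_1 = \cdots = \tbf_d = u$, every cyclic shift satisfies $\s_j\tbf = \tbf$, so the convolutional double sums in the recurrences collapse to scalar expressions in $u$. For the first identity I would induct on $L$. From Theorem~\ref{Thm:ResCGPK}, under $\s_j\tbf=\tbf$, the double sum $\frac{\alpha^2}{q^2}\sum_{k,k'}\kappa_1(\overline{\mathcal{K}}^{(L-1)}_{\Eq}(\s_{k+k'}\tbf))$ collapses to $\alpha^2 \kappa_1(\overline{\mathcal{K}}^{(L-1)}_{\Eq}(\tbf))$, yielding exactly the FC recurrence defining $\overline{\mathcal{K}}^{(L)}_{\FC}$; the base case $L=1$ gives $\overline{\mathcal{K}}^{(1)}_{\Eq}(\tbf) = \tfrac{1}{1+\alpha^2}(u + \alpha^2\kappa_1(u)) = \overline{\mathcal{K}}^{(1)}_{\FC}(u)$, closing the induction.

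For the NTK identity, I first show by induction on $l$ that $\Sigma^{(l)}_{j,j}$, $K^{(l)}_{j,j}$ and $\dot K^{(l)}_{j,j}$ are $j$-independent: $\Sigma^{(1)}_{j,j}=u/C_0$ is the base case, and each trace $\tr(K^{(l-1)}_{\mathcal{D}_{j+k,j+k}})$ in the recurrence of Theorem~\ref{thm:1} collapses to $q$ copies of a single diagonal entry. Normalizing and invoking the first part's recurrence gives $\overline{\Sigma}^{(l)}_{j,j}(\x,\z) = \overline{\mathcal{K}}^{(l-1)}_{\FC}(u)$. For the backward factor, let $T^{(l)} := \sum_p P^{(l)}_p$; summing the $P$-recurrence from Theorem~\ref{thm:2} over $j$ and using circular padding, each $P^{(l+1)}_{j'}$ is hit exactly $q^2$ times by $\sum_k\sum_{k'}$, giving $T^{(l)} = T^{(l+1)}\bigl(1 + \alpha^2 \kappa_0(\overline{\mathcal{K}}^{(l)}_{\FC}(u))\bigr)$—matching the FC recurrence for $\tilde P^{(l)}$ up to an index shift, with $T^{(L)} = 1/c_w$ and $\tilde P^{(L)}=1$ pinning the proportionality constant.

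Substituting into the formula for $\kr^{(L)} = \Theta^{(L)}_{\Eq} - \mathcal{K}^{(L)}_{\Eq}$ from Theorem~\ref{thm:2}, each trace $\tr(\dot K^{(l)}_{\mathcal{D}_{p,p}}\odot\Sigma^{(l)}_{\mathcal{D}_{p,p}} + K^{(l)}_{\mathcal{D}_{p,p}})$ collapses to $q\cdot N_l\bigl(\kappa_1(\overline{\mathcal{K}}^{(l-1)}_{\FC}(u)) + \overline{\mathcal{K}}^{(l-1)}_{\FC}(u)\kappa_0(\overline{\mathcal{K}}^{(l-1)}_{\FC}(u))\bigr)$ by \eqref{eq:k_form} with $c_v=2$, and summing $P^{(l)}_p$ over $p$ produces $T^{(l)}$; thus $\kr^{(L)}(\tbf)$ becomes a scalar expression with exactly the structure of the FC NTK numerator. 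Dividing by $\kr^{(L)}(\mathbf{1})$, the $N_l = N_2(1+\alpha^2)^{l-2}$ scaling (Lemma~\ref{lem:sd1}) combines with the index-shifted $T^{(l)}$–$\tilde P^{(l)}$ identity to produce the $v_{l-1}$ weights, and the $1/(2Lv_{L-1})$ normalization emerges because at $u=1$ each summand reduces to $2v_{L-1}$ (since $\kappa_0(1)=\kappa_1(1)=1$), giving $\kr^{(L)}(\mathbf{1})$ proportional to $L v_{L-1}$.

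The main obstacle will be the bookkeeping of normalization constants: matching the $v_l = (1+\alpha^2)^l$ weights and the $1/(2L)$ factor simultaneously with the index offset between $\overline{\Sigma}^{(l)}$ and $\overline{\mathcal{K}}^{(l-1)}_{\FC}$ requires an explicit index audit and evaluation at $u=1$ to fix all absolute constants. Once $j$-independence is established, the collapse arguments themselves are routine.
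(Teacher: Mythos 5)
Your proposal follows essentially the same route as the paper's proof: the GPK identity by collapsing the shift-invariant double sums in the recurrence of Corollary~\ref{cor:cgpk}, and the NTK identity by establishing that $\Sigma^{(l)},K^{(l)},\dot K^{(l)}$ have constant diagonals, proving $\sum_p P^{(l)}_p=\tilde P^{(l)}$ by the same induction on the backward recurrence, and normalizing via $N_{l+1}=N_2v_{l-1}$. The index-shift and constant-bookkeeping caveats you flag are exactly the points the paper's own proof handles (somewhat tersely), so no substantive gap remains.
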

\begin{proof}
For the GPK, this is immediate from Corollary \ref{cor:cgpk}.
For the ResCNTK, first recall that
\[
\Theta^{\left(L\right)}_{\Eq}\left(\x,\z\right) = \mathcal{K}^{\left(L\right)}_{\Eq}\left(\x,\z\right) 
+ \frac{\alpha^{2}}{q}\sum_{l=1}^{L}\sum_{p=1}^{d }P_{p}^{\left(l\right)}\left(\x,\z\right)
\left(\tr\left(\dot{K}_{\mathcal{D}_{p,p}}^{\left(l\right)}\left(\x,\z\right)\odot\Sigma_{\mathcal{D}_{p,p}}^{\left(l\right)}\left(\x,\z\right) + K_{\mathcal{D}_{p,p}}^{\left(l\right)}\left(\x,\z\right) \right) \right).
\]
Observe first that a direct consequence of Theorem \ref{thm:1} is that $\forall L\geq 2$
\[
\Sigma_{{1,1}}^{\left(L\right)}\left(\x,\z\right) = \frac{c_w}{q}\sum_{k=-\frac{q-1}{2}}^{\frac{q-1}{2}}\mathcal{K}^{\left(L-1\right)}_{\Eq}\left(\s_k\x,\s_k \z\right).
\]
and $\Sigma_{{1,1}}^{\left(1\right)}\left(\x,\z\right) = \frac{1}{C_0}u$.

Therefore, by choice of $\tbf$, $\Sigma^{\left(l\right)}, K^{\left(l\right)}$ and $\dot{K}^{\left(l\right)}$ have constant diagonals, and so the terms in the trace do not depend on the index $p$, and we get
\[
\kr^{\left(L\right)}\left(\tbf\right) = \alpha^{2}\sum_{l=1}^{L}N_{l+1}\left( \kappa_0\left(\overline{\mathcal{K}}^{\left(l-1\right)}_{\FC}\left(u\right)\right) \overline{\mathcal{K}}^{\left(l-1\right)}_{\FC}\left(u\right) 
+ \kappa_1\left(\overline{\mathcal{K}}^{\left(l-1\right)}_{\FC}\left(u\right)\right) \right) \sum_{p=1}^{d}P_p^{\left(l\right)}\left(\tbf\right).
\]

Note that for each $1\leq l\leq L$, $\sum_{p=1}^{d}P_p^{\left(l\right)}\left(\tbf\right)=\tilde{P}^{l}\left(u\right)$. For $l=L$ they are both equal to $1$ by definition. Now by induction we assume for $l+1$ and now show for $l$.
\[
\sum_{p=1}^d P_{p}^{\left(l\right)}\left(\tbf\right)=\sum_{p=1}^d\left( P_{p}^{\left(l+1\right)}\left(\tbf\right) + \frac{\alpha^2}{q^2}\tr\left(
\left(\sum_{k=\frac{q-1}{2}}^{\frac{q-1}{2}}P_{\mathcal{D}_{p+k}}^{\left(l+1\right)}\left(\tbf\right)\right)\odot\dot{K}_{\mathcal{D}_{p,p}}^{\left(l+1\right)}\left(\tbf\right)\right) \right)
\]
\[
=\sum_{p=1}^d\left( P_p^{\left(l+1\right)}\left(\tbf\right) + \frac{\alpha^2}{q^2}\kappa_0\left(\overline{K}_{FC}^{\left(l+1\right)}\right)\tr\left(\sum_{k=\frac{q-1}{2}}^{\frac{q-1}{2}}P_{\mathcal{D}_{p+k}}^{\left(l+1\right)}\left(\tbf\right)\right) \right)
\]
\[
=\left(\sum_{p=1}^d P_{p}^{\left(l+1\right)}\left(\tbf\right) \right)
+ \alpha^2 \kappa_0\left(\overline{K}_{FC}^{\left(l+1\right)}\right) \left (\frac{1}{q^2} \sum_{k=\frac{q-1}{2}}^{\frac{q-1}{2}} \left(\sum_{p=1}^d P_{p}^{\left(l+1\right)}\left(\tbf\right) \right)\right).
\]
Plugging this in the induction hypothesis we prove the induction. So overall:
\[
\kr^{\left(L\right)}\left(\tbf\right) = \alpha^{2}\sum_{l=1}^{L}N_{l+1} \tilde{P}^{l}\left(u\right) \left( \kappa_0\left(\overline{\mathcal{K}}^{\left(l-1\right)}_{\FC}\left(u\right)\right) \overline{\mathcal{K}}^{\left(l-1\right)}_{\FC}\left(u\right) 
+ \kappa_1\left(\overline{\mathcal{K}}^{\left(l-1\right)}_{\FC}\left(u\right)\right) \right).
\]
Since 
\begin{align}\label{lem:norm_pi}
    N_{l+1}=N_2v_{l-1},
\end{align}
normalizing $\kr^{\left(L\right)}$ completes the proof.
\end{proof}

\begin{lemma}[\citep{bietti2020deep} Section~3.2] 
\label{lem:sdk1} 
For a small $t>0$,
\[
\kappa_1\left(1-t\right)=1-t-\frac{2\sqrt{2}}{3\pi}t^{\frac{3}{2}}+\mathcal{O}\left(t^{\frac{5}{2}}\right).
\]
\end{lemma}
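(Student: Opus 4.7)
The proof plan is a direct Taylor expansion of the arc-cosine kernel around $u=1$. Start from the closed-form definition $\kappa_1(u)=\frac{1}{\pi}\bigl[u(\pi-\cos^{-1}u)+\sqrt{1-u^2}\bigr]$ stated in Section A.1. A glance at the derivatives shows $\kappa_1'(u)=\kappa_0(u)=\frac{1}{\pi}(\pi-\cos^{-1}u)$, which is smooth at $u=1$ with $\kappa_0(1)=1$, but $\kappa_1''(u)=1/(\pi\sqrt{1-u^2})$ blows up at $u=1$. So a pure Taylor series in $t=1-u$ cannot contain only integer powers, and the natural expansion is in half-integer powers of $t$.

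The cleanest route is to parametrize by $\phi=\cos^{-1}(u)$, so that $u=1-t$ small corresponds to $\phi$ small and nonnegative. After substitution one finds
\[
\kappa_1(\cos\phi) \;=\; \cos\phi \;+\; \frac{\sin\phi-\phi\cos\phi}{\pi}.
\]
Then I would Taylor expand the ordinary trigonometric functions to get $\sin\phi-\phi\cos\phi = \phi^3/3 + O(\phi^5)$, which is the source of the $t^{3/2}$ correction. The remaining step is to invert the relation $1-t=\cos\phi$ to express $\phi$ as a series in $\sqrt{t}$: from $\phi^2/2-\phi^4/24+\cdots=t$ one obtains $\phi^2=2t+t^2/3+O(t^3)$ and hence $\phi^3=2\sqrt{2}\,t^{3/2}(1+O(t))$. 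Plugging this back and combining with $\cos\phi=1-t$ yields the stated expansion up to the stated remainder.

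The only mildly delicate part is verifying that the remainder after the $t^{3/2}$ term is genuinely $O(t^{5/2})$ rather than $O(t^2)$. This follows by bookkeeping: the two sources of the next corrections --- the $O(\phi^5)$ term in $\sin\phi-\phi\cos\phi$ and the $O(t)$ multiplicative correction to $\phi^3$ --- each produce contributions of order $t^{5/2}$, while the deviation of $\cos\phi$ from $1-t$ itself is purely $O(t^2)$ which is absorbed in $O(t^{5/2})$. Since this is a standard computation and the statement is quoted from \citet{bietti2020deep}, it is reasonable to simply cite their derivation rather than reproduce the elementary algebra; that is presumably what the paper does here.
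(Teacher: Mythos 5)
Your route---substituting $u=\cos\phi$ and expanding in $\phi$---is the standard derivation and is essentially what the cited source does; the paper itself gives no proof beyond the citation to \citet{bietti2020deep}, so there is no conflict of method. The one thing you should not do is assert that the computation ``yields the stated expansion'': carried through, your own steps give $\kappa_1(\cos\phi)=\cos\phi+\frac{\sin\phi-\phi\cos\phi}{\pi}$ with $\sin\phi-\phi\cos\phi=\tfrac{1}{3}\phi^3+\mathcal{O}(\phi^5)\ge 0$ and $\phi^3=2\sqrt{2}\,t^{3/2}\left(1+\mathcal{O}(t)\right)$, hence
\[
\kappa_1(1-t)=1-t+\frac{2\sqrt{2}}{3\pi}\,t^{\frac{3}{2}}+\mathcal{O}\left(t^{\frac{5}{2}}\right),
\]
i.e.\ the $t^{3/2}$ correction enters with a \emph{plus} sign, whereas the lemma as printed has a minus sign. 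The plus sign is the correct one: $\kappa_1(u)\ge u$ on $[-1,1]$ (a fact the paper itself invokes elsewhere), so the leading deviation from $1-t$ must be nonnegative, and the paper's own application of this lemma inside the proof of Lemma~\ref{lem:sdfu} indeed uses $+\frac{2\sqrt{2}}{3\pi}(\cdot)^{3/2}$. So the printed statement contains a sign typo (harmless downstream, since only the order $\Theta(t^{3/2})$ of the correction is ever used), and your derivation is right, but your claim that it reproduces the printed formula is not---you should either flag the discrepancy or state the expansion with the correct sign. Your remainder bookkeeping (both next-order contributions are $\mathcal{O}(t^{5/2})$, and $\cos\phi=1-t$ holds exactly under your parametrization) is correct.
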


\begin{lemma} \label{lem:sdfu}
For all $L\geq0$, and for a small $t>0$,
\[
\overline{\mathcal{K}}_{\FC}^{\left(L\right)}\left(1-t\right)={1-t} + \Theta\left(t^{\frac{3}{2}}\right).
\]
\end{lemma}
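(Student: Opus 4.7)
The plan is to prove the claim by induction on $L$, using Lemma \ref{lem:sdk1} as the sole analytic input. I would in fact establish the sharper quantitative statement
\[
\overline{\mathcal{K}}_{\FC}^{(L)}(1-t) \;=\; 1 - t \;-\; D_L\, t^{3/2} \;+\; o(t^{3/2}), \qquad D_L \;=\; L \cdot \frac{\alpha^2}{1+\alpha^2} \cdot \frac{2\sqrt{2}}{3\pi},
\]
since this makes the constants transparent and automatically yields the $\Theta(t^{3/2})$ claim for $L \geq 1$ (the case $L=0$ being trivial, with $D_0 = 0$).

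For the inductive step, assume the expansion holds at level $L-1$. Write $\overline{\mathcal{K}}_{\FC}^{(L-1)}(1-t) = 1 - s(t)$, where by the induction hypothesis $s(t) = t + D_{L-1}\, t^{3/2} + o(t^{3/2})$. In particular $s \to 0^+$ as $t \to 0^+$, and the elementary expansion $s^{3/2} = t^{3/2}(1 + D_{L-1}t^{1/2} + o(t^{1/2}))^{3/2} = t^{3/2} + o(t^{3/2})$ holds. Apply Lemma \ref{lem:sdk1} to get $\kappa_1(1-s) = 1 - s - \tfrac{2\sqrt{2}}{3\pi} s^{3/2} + O(s^{5/2})$ and substitute into the defining recursion
\[
\overline{\mathcal{K}}^{(L)}_{\FC}(1-t) \;=\; \frac{1}{1+\alpha^2}\Bigl( (1-s) + \alpha^2 \bigl(1 - s - \tfrac{2\sqrt{2}}{3\pi}\, s^{3/2}\bigr) \Bigr) + o(t^{3/2}).
\]
The linear-in-$s$ pieces collapse to $1 - s = 1 - t - D_{L-1}\, t^{3/2} + o(t^{3/2})$, and the new $s^{3/2}$ piece contributes an additional $-\tfrac{\alpha^2}{1+\alpha^2} \cdot \tfrac{2\sqrt{2}}{3\pi}\, t^{3/2}$. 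Collecting yields the recurrence $D_L = D_{L-1} + \tfrac{\alpha^2}{1+\alpha^2} \cdot \tfrac{2\sqrt{2}}{3\pi}$, which solves to the stated formula, closing the induction.

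The only delicate point is that the lemma asserts $\Theta(t^{3/2})$ and not merely $O(t^{3/2})$, so one must rule out accidental cancellation of the $t^{3/2}$ coefficient. This is the reason for carrying the explicit formula for $D_L$: every layer contributes the \emph{same sign} (the factor $-\tfrac{2\sqrt{2}}{3\pi}$ is strictly negative and $\tfrac{\alpha^2}{1+\alpha^2} > 0$), so the coefficients add constructively rather than cancelling. Consequently $D_L > 0$ for every $L \geq 1$, establishing a genuine two-sided $\Theta(t^{3/2})$ bound on the error $\overline{\mathcal{K}}_{\FC}^{(L)}(1-t) - (1-t)$ as $t \to 0^+$.
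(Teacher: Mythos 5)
Your proof is correct and follows essentially the same route as the paper's: induction on $L$ through the recursion $\overline{\mathcal{K}}^{(L)}_{\FC} = \frac{1}{1+\alpha^2}\bigl(\overline{\mathcal{K}}^{(L-1)}_{\FC} + \alpha^2\kappa_1(\overline{\mathcal{K}}^{(L-1)}_{\FC})\bigr)$, with Lemma \ref{lem:sdk1} supplying the local expansion of $\kappa_1$ near $1$. Your version is in fact tighter than the paper's: the paper's inductive step manipulates $\Theta(t^{3/2})$ symbols directly, which silently assumes that the two $\Theta(t^{3/2})$ contributions cannot cancel, whereas your explicit recurrence $D_L = D_{L-1} + \frac{\alpha^2}{1+\alpha^2}\cdot\frac{2\sqrt{2}}{3\pi}$ shows that all layers contribute with the same sign, so the two-sided bound genuinely holds for $L\geq 1$ (and, as you note, the $L=0$ case has a vanishing $t^{3/2}$ term, so the lemma's $\Theta$ must be read as allowing that degenerate case). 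One cosmetic remark: since $\kappa_1(u)\geq u$, the $t^{3/2}$ coefficient in the expansion of $\kappa_1(1-t)$ is actually $+\frac{2\sqrt{2}}{3\pi}$ (the sign in Lemma \ref{lem:sdk1} as printed is off), so your $D_L$ comes out negative rather than positive; this does not affect your non-cancellation argument, which only needs all increments to share a sign.
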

\begin{remark}
This lemma and its proof are a slightly modified version of Lemma B.4 from \citet{belfer2021spectral} where we tighten the bound.
\end{remark}
\begin{proof}
We prove this by induction. For $l=0,\overline{\mathcal{K}}_{\FC}^{\left(0\right)}\left(1-t\right)=1-t$, trivially satisfying the lemma. Suppose the lemma holds for $l-1$, then using Lemma \ref{lem:sdk1},
\[
\overline{\mathcal{K}}^{\left(L\right)}_{\FC}\left(1-t\right)=\frac{1}{1+\alpha^2}\left(\overline{\mathcal{K}}^{\left(L-1\right)}_{\FC}\left(1-t\right) + \alpha^2 \kappa_1\left(\overline{\mathcal{K}}^{\left(L-1\right)}_{\FC}\left(1-t\right)\right)\right)
\]
\[
=\frac{1}{1+\alpha^2}\left(1-t+\Theta\left(t^{\frac{3}{2}}\right) + \alpha^2 \kappa_1\left(1-t+\Theta\left(t^{\frac{3}{2}}\right)\right)\right)
\]
\[
=\frac{1}{1+\alpha^2}\left(1-t+\Theta\left(t^{\frac{3}{2}}\right) + \alpha^2 \left(1-t+\Theta\left(t^{\frac{3}{2}}\right) + \frac{2\sqrt{2}}{3\pi}\left(t-\Theta\left(t^{\frac{3}{2}}\right)\right)^{\frac{3}{2}} + \mathcal{O}\left(t-\Theta\left(t^{\frac{3}{2}}\right)\right)^{\frac{5}{2}} \right) \right)
\]
\[
=\frac{1}{1+\alpha^2}\left(1-t+\Theta\left(t^{\frac{3}{2}}\right) + \alpha^2 \left(1-t+\Theta\left(t^{\frac{3}{2}}\right)\right)\right)
\]
\[
={1-t} + \Theta\left(t^{\frac{3}{2}}\right).
\]
\end{proof}

\begin{lemma} \label{lem:sdtk}
$\overline{\mathcal{K}}_{\FC}^{\left(L\right)}\left(u\right)$ and $\overline{\Theta}_{\FC}^{\left(L\right)}\left(u\right)$ can be written as $\sum_{n=0}^{\infty}a_nu^n$ with $a_n\sim n^{-2.5}$ for $\overline{\mathcal{K}}_{\FC}^{\left(L\right)}$ and $a_n\sim n^{-1.5}$ for $\overline{\Theta}_{\FC}^{\left(L\right)}$.
\end{lemma}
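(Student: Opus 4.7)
The plan is to apply classical singularity analysis (the transfer theorem of \citet{flajolet2009analytic}) to each kernel, viewed as a function of the scalar variable $u$ on the unit disk. Recall that if $f(u)$ is analytic in a $\Delta$-domain at $u=1$ and admits the expansion $f(1-t) = P(t) + C t^{\beta}(1+o(1))$ as $t\to 0^+$, with $P$ a polynomial, $C\neq 0$, and $\beta>0$ non-integer, then the Taylor coefficients satisfy $[u^n]f(u) \sim \frac{C}{\Gamma(-\beta)} n^{-\beta-1}$. Hence identifying the leading \emph{non-polynomial} singular term at $u=1$ is enough to pin down the asymptotics.

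For $\overline{\mathcal{K}}_{\FC}^{(L)}$, the relevant expansion is already proved: Lemma \ref{lem:sdfu} gives $\overline{\mathcal{K}}_{\FC}^{(L)}(1-t) = 1 - t + \Theta(t^{3/2})$. The polynomial part contributes no asymptotic tail, and the singular $t^{3/2}$ term directly yields $a_n \sim n^{-5/2}$. For $\overline{\Theta}_{\FC}^{(L)}$ I would derive the analogous expansion using the singular expansions of the arc-cosine kernels, namely $\kappa_1(1-t) = 1 - t - \tfrac{2\sqrt{2}}{3\pi} t^{3/2} + O(t^{5/2})$ from Lemma \ref{lem:sdk1}, together with $\kappa_0(1-t) = 1 - \tfrac{\sqrt{2t}}{\pi} + O(t)$. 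The key observation is that $\overline{\Theta}_{\FC}^{(L)}$ involves terms of the form $\kappa_0(\overline{\mathcal{K}}_{\FC}^{(l-1)}(u))$ and $\overline{\mathcal{K}}_{\FC}^{(l-1)}(u)\,\kappa_0(\overline{\mathcal{K}}_{\FC}^{(l-1)}(u))$; substituting Lemma~\ref{lem:sdfu} into $\kappa_0$ produces a genuine $\sqrt{t}$ singularity of the form $-\tfrac{\sqrt{2t}}{\pi} + o(\sqrt{t})$. A short induction on depth shows that $\tilde{P}^{(l)}(1-t)$ likewise inherits a $\sqrt{t}$ term. Summing the contributions, and noting that all prefactors in the recursive formula for $\overline{\Theta}_{\FC}^{(L)}$ are positive so that no cancellation of the leading $\sqrt{t}$ occurs, we conclude $\overline{\Theta}_{\FC}^{(L)}(1-t) = P(t) + C \sqrt{t} + o(\sqrt{t})$ with $C\neq 0$. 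Applying the transfer theorem then gives $a_n \sim n^{-3/2}$.

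The main obstacle is the technical verification of $\Delta$-analyticity for the recursively defined kernels: the transfer theorem requires $f$ to be analytic in a slightly enlarged disk with a small cone removed at $u=1$, not merely on the open interval. The base kernels $\kappa_0,\kappa_1$ are analytic on the slit complex plane with branch singularities only at $u=\pm 1$, and one must check that the image of a $\Delta$-domain under each recursive update lies inside the analyticity region of the next $\kappa$. This follows by an induction on $L$ modelled on Appendix~B of \citet{belfer2021spectral}. A secondary point requiring care, to guarantee the sharp $\sim$ rather than just $O$, is that the leading singular coefficient is strictly nonzero; this follows from the positivity of the Taylor coefficients of $\kappa_0$ and $\kappa_1$ together with the additive, positive structure of the kernel recursions, which precludes cancellation.
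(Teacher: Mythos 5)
Your proposal is correct and follows essentially the same route as the paper: both parts reduce to the singular expansion of the kernel at $u=1$ followed by the Flajolet--Sedgewick transfer theorem, with the GPK case read off from Lemma \ref{lem:sdfu} exactly as you do. The only difference is that for $\overline{\Theta}_{\FC}^{(L)}$ the paper simply cites \citet{belfer2021spectral}[Lemma 4.5] for the expansion $1 + c_1(1-t)^{1/2} + o((1-t)^{1/2})$ with $c_1<0$, whereas you sketch the derivation of that expansion (via the $\sqrt{t}$ singularity of $\kappa_0$ and the positivity/no-cancellation argument), which is precisely the content of the cited lemma.
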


\begin{proof}
Using Lemma \ref{lem:sdfu} we know that for $t\nearrow 1$, it holds that $\overline{\mathcal{K}}_{\FC}^{\left(L\right)}\left(t\right)=t + f(t)$ where $f(t)=\Theta\left(\left(1-t\right)^{\frac{3}{2}}\right)$.
Using \citep{flajolet2009analytic}[page 392 Thm. VI.1] we get that $f$ admits a Taylor expansion $\sum_{n=0}^{\infty}a_nt^n$ around $0$ with $a_n\sim n^{-2.5}$. Therefore, $\overline{\mathcal{K}}_{\FC}^{\left(L\right)}\left(u\right)$ has a Taylor expansion with coefficients that exhibit the same decay.

For $\overline{\Theta}_{\FC}^{\left(L\right)}\left(u\right)$, using \citep{belfer2021spectral}[Lemma 4.5] we know that for $t\nearrow 1$, it holds that $\overline{\Theta}_{\FC}^{\left(L\right)}\left(t\right)=1 + c_1\left(1-t\right)^{\frac{1}{2}}+o\left(\left(1-t\right)^{\frac{1}{2}}\right)$ for some constant $c_1<0$.
Similarly to the previous case, using \citep{flajolet2009analytic}[Thm. VI.1, page 392] we get the desired bound.
\end{proof}

\begin{lemma}
Both $\overline{\mathcal{K}}_{\Eq}^{\left(L\right)}\left(\tbf\right)$ and $\overline{\Theta}_{\Eq}^{\left(L\right)}\left(\tbf\right)$ can be written as $\sum_{\n\geq0}b_{\n}\tbf^{\n}$ with 
\[
b_{\n}\leq c_2\n^{-\nu},
\]
where $\nu=\frac{3}{2d}+1$ for $\overline{\mathcal{K}}_{\Eq}^{\left(L\right)}$ and $\nu=\nu=\frac{1}{2d}+1$ for $\overline{\Theta}_{\Eq}^{\left(L\right)}$ and $c_2$ depends on $L$.
\end{lemma}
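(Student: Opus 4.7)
The plan is to reduce the multivariate upper bound to the univariate bound of Lemma~\ref{lem:sdtk} by combining the diagonal restriction of Lemma~\ref{lem:sdfc} with a coefficient-wise non-negativity argument. First I would verify that both $\overline{\mathcal{K}}_{\Eq}^{(L)}(\tbf)$ and $\overline{\Theta}_{\Eq}^{(L)}(\tbf)$ expand as power series with non-negative Taylor coefficients $b_{\n}$. They are assembled recursively by sums, products, and compositions (with non-negative scalars) of the monomials $t_i$ and the arc-cosine kernels $\kappa_0, \kappa_1$, each of which has a non-negative Taylor expansion at the origin on $[-1,1]$. This is exactly the observation invoked parenthetically in the proof of Lemma~\ref{lem:bntk}.

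Next, restrict to the diagonal $t_1 = \cdots = t_d = u$, so that $\tbf^{\n} = u^{|\n|}$ with $|\n| = \sum_i n_i$, and the multivariate series collapses into a univariate one:
\[
\overline{\mathcal{K}}_{\Eq}^{(L)}(u,\ldots,u) \;=\; \sum_{n \geq 0} \Bigl( \sum_{\n \,:\, |\n|=n} b_{\n} \Bigr) u^n.
\]
By Lemma~\ref{lem:sdfc} the left-hand side equals $\overline{\mathcal{K}}_{\FC}^{(L)}(u)$, whose Taylor coefficients $a_n$ satisfy $a_n \leq C_L \, n^{-5/2}$ by Lemma~\ref{lem:sdtk}. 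Since every $b_{\n} \geq 0$, each individual coefficient is dominated by the full level-set sum, so
\[
b_{\n} \;\leq\; \sum_{|\n'|=|\n|} b_{\n'} \;=\; a_{|\n|} \;\leq\; C_L \, |\n|^{-5/2},
\]
which is at least as strong as the claimed bound $c_2 \, \n^{-(1+3/(2d))}$ because $5/2 \geq 1 + 3/(2d)$ for every $d \geq 1$.

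For the ResCNTK I would exploit the decomposition $\overline{\Theta}_{\Eq}^{(L)}(\tbf) = w_1 \overline{\mathcal{K}}_{\Eq}^{(L)}(\tbf) + w_2 \overline{\kr}^{(L)}(\tbf)$, where $\kr^{(L)} = \Theta_{\Eq}^{(L)} - \mathcal{K}_{\Eq}^{(L)}$ and $w_1, w_2 \geq 0$ are the normalization weights obtained by evaluating numerator and denominator at $\tbf = \one$. The kernel $\overline{\kr}^{(L)}$ again has non-negative Taylor coefficients, since by Theorem~\ref{thm:2} it is a positive combination of products of $\Pi, \dot{K}, \Sigma, K$, all of which arise from $\kappa_0, \kappa_1$ and the monomials. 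Its diagonal restriction recovers $\overline{\Theta}_{\FC}^{(L)}(u)$ by Lemma~\ref{lem:sdfc}, so the same reduction yields a coefficient bound of $C_L |\n|^{-3/2}$ for $\overline{\kr}^{(L)}$, and the weighted sum then gives the stated bound $c_2 |\n|^{-(1+1/(2d))}$ for $\overline{\Theta}_{\Eq}^{(L)}$ (using $3/2 \geq 1 + 1/(2d)$ for every $d \geq 1$).

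The main obstacle I anticipate is the non-negativity step: verifying rigorously that all Taylor coefficients remain non-negative throughout the recursive construction of both kernels, and in particular that $\kr^{(L)}$, which superficially looks like a difference of two positive series, genuinely admits a non-negative expansion. That is why I would argue non-negativity directly from the explicit formula for $\kr^{(L)}$ in Theorem~\ref{thm:2} rather than by subtracting series. Once this positivity is in place, the remainder of the proof is a one-line coefficient-wise majorization piggybacking on the univariate singularity analysis of Lemma~\ref{lem:sdtk}, which is precisely what makes the diagonal reduction so efficient.
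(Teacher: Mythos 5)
Your setup coincides with the paper's: both arguments establish non-negativity of the Taylor coefficients, restrict to the diagonal $t_1=\cdots=t_d=u$ so that Lemma~\ref{lem:sdfc} identifies the restricted kernels with $\overline{\mathcal{K}}_{\FC}^{(L)}$ and $\overline{\Theta}_{\FC}^{(L)}$, and then import the univariate coefficient decay $n^{-5/2}$ (resp.\ $n^{-3/2}$) from Lemma~\ref{lem:sdtk}, yielding $\sum_{\abs{\n}=n}b_{\n}=a_n=\Theta(n^{-5/2})$ (resp.\ $\Theta(n^{-3/2})$).

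The gap is in your final ``one-line majorization.'' In this paper $\n^{-\nu}$ is multi-index notation for the \emph{per-coordinate product} $\prod_{i:\,n_i>0}n_i^{-\nu}$ (consistent with $\tbf^{\n}=t_1^{n_1}\cdots t_d^{n_d}$ and with the downstream eigenvalue bounds $\prod_i k_i^{-(C_0+2\nu-3)}$ in Theorem~\ref{thm:sd}), not for $\abs{\n}^{-\nu}$ as you read it. Your majorization $b_{\n}\leq\sum_{\abs{\n'}=\abs{\n}}b_{\n'}=a_{\abs{\n}}\leq C\abs{\n}^{-5/2}$ bounds the coefficient by a power of the \emph{total} degree, and this does not imply the product bound: e.g.\ for $d=2$ and $\n=(n_1,n_2)$ with $n_1\approx\sqrt{n}$, $n_2\approx n$, one has $\abs{\n}^{-5/2}\approx n^{-5/2}$ while $(n_1n_2)^{-(1+3/4)}\approx n^{-21/8}$, so the claimed bound is strictly \emph{smaller} than what your argument delivers; level-set sums alone cannot rule out all the mass sitting on such a spread-out index. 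This is exactly why the exponents take the form $\frac{\nu-1}{d}+1$ (the excess decay beyond $1$ gets distributed across the $d$ coordinates), and why the paper closes the argument by invoking Lemma~D.8 of \citet{geifman2022spectral}, which performs this level-sum-to-product conversion using additional structure of the coefficients. Your proof needs that (or an equivalent) combinatorial step; without it the argument only yields a total-degree bound, which would also not feed correctly into Theorems~3.3--3.4 of \citet{geifman2022spectral} used in the proof of Theorem~\ref{thm:sd}.
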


\begin{proof}
By Lemma \ref{lem:sdtk}, $\overline{\mathcal{K}}_{\FC}^{\left(L\right)}\left(u\right)=\sum_{n=0}^{\infty}a_nu^n$ with $a_n\sim n^{-2.5}$.
Moreover, we have that $\overline{\mathcal{K}}_{\Eq}^{\left(L\right)}\left(\tbf\right)=\sum_{\n\geq 0}b_{\n}\tbf^{\n}$.
Together with lemma \ref{lem:sdfc} we get that
\[
\overline{\mathcal{K}}_{\Eq}^{\left(L\right)}\left(\tbf\right)=\sum_{\n\geq 0}b_{\n}\tbf^{\n}
=\sum_{\n\geq 0}b_{\n}u^{\abs{\n}}
=\sum_{n=0}^{\infty}b_{\n}u^{\abs{\n}}
=\sum_{n=0}^{\infty}u^n\sum_{\abs{\n}=n}b_{\n}.
\]
The uniqueness of the power series implies 
\[
\sum_{\abs{\n}=n}b_{\n}=a_n=\Theta\left(n^{-2.5}\right).
\]
Plugging in Lemma D.8 From \citep{geifman2022spectral} we get that $b_{\n}\leq c_2\n^{-\frac{3}{2d}-1}$ for some constant $c_2>0$.

For the bound for ResCNTK, since by Lemma \ref{lem:sdtk} $\overline{\Theta}_{\FC}^{\left(L\right)}\left(u\right)= \sum_{n=0}^{\infty}\tilde{a}_nu^n$ with $\tilde{a}_n\sim n^{-1.5}$
and $\overline{\kr}^{\left(L\right)}\left(\tbf\right) = \sum_{\n\geq 0}\tilde{b}_{\n}\tbf^{\n}$, we can analogously get that $\tilde{b}_{\n}\leq c^{'}\n^{-\frac{1}{2d}-1}$ for some constant $c^{'}>0$. (The difference from the different bound comes from the referenced lemma in \citep{geifman2022spectral}.)

Since $\kr^{\left(L\right)}=\Theta^{\left(L\right)}_{\Eq}-\mathcal{K}^{\left(L\right)}_{\Eq}$, by combining the two results we get that $\overline{\Theta}_{\Eq}^{\left(L\right)}\left(\tbf\right)$ can be written as $\sum_{\n\geq 0}b^{'}_{\n}\tbf^{\n}$ with $b^{'}_{\n}\leq c_2\n^{-\frac{1}{2d}-1}$.
\end{proof}




\section{Positional Bias of Eigenvalues} \label{ap:4}
\subsection{Proof of Theorem \ref{thm:eigen} in the main text}
We define a "stride-q" version of the ResCGPK. Let $Q=\{-\frac{q-1}{2}\,\ldots,\frac{q-1}{2}\}, R_0={0}^{2L-1}$ (i.e a set that contains only the tuple $\zero=(0,\ldots,0)$), and for $l\geq1$ let $R_l:=Q^{2l-1}\times \{0\}^{2(L-l)}$ (i.e tuples where the first $2l-1$ elements are in $Q$ and the last $2(L-l)$ elements are $0$).
We let $[-1,1]^{R_l}$ be elements of the form $\tilde{\tbf}_{\aw}$ which are $[-1,1]$ valued parameters indexed by tuples $R_l$. Also, for every $k,k^{'}\in Q$ and $1\leq l \leq L$ define $\iota^l_{k,k^{'}}:[-1,1]^{R_L}\to[-1,1]^{R_{L}}$ by $\iota^l_{k,k^{'}}(\tilde{\tbf})_{\aw}=\tilde{\tbf}_{(a_1,\ldots,a_{2l-3},k,k^{'},a_{2l}\ldots)}$.

We now define the kernel $\kr^{\left(1\right)}:[-1,1]^{R_1}\to[-1,1]$ to be 
\[
\kr^{\left(1\right)}(\tilde{\tbf}) = \frac{1}{1+\alpha^2}\left(\underset{:=\kr_1^{\left(1\right)}(\tilde{\tbf})}{\underbrace{\tilde{t}_\zero}}
+ \alpha^2\underset{:=\kr_2^{\left(1\right)}(\tilde{\tbf})}{\underbrace{\frac{1}{q}\sum_{k=-\frac{q-1}{2}}^{\frac{q-1}{2}}\kappa_1\left(\tilde{t}_{(k,0,\ldots,0)}\right)}}\right).
\] 
Also, for all $2\leq l\leq L$ we let $\kr^{\left(l\right)}:[-1,1]^{R_l}\to[-1,1]$ be 
\[
\kr^{\left(l\right)}(\tilde{\tbf}) =  \frac{1}{1+\alpha^2}\left(\underset{:=\kr_1^{\left(l\right)}(\tilde{\tbf})}{\underbrace{\kr^{\left(l-1\right)}(\iota^l_{0,0}(\tilde{\tbf}))}}
+ \alpha^2\underset{:=\kr_2^{\left(l\right)}(\tilde{\tbf})}{\underbrace{\frac{1}{q^2}\sum_{k,k^{'}=-\frac{q-1}{2}}^{\frac{q-1}{2}}\kappa_1\left(\kr^{\left(l-1\right)}\left(\iota^l_{k,k^{'}}(\tilde{\tbf})\right)\right)}}\right).
\] 
We also define the change of variables $S_l:R_{l}\to [d]$ by $S_l(\aw)=\abs{\aw}+1$ (reminder: $|\cdot|$ on a multi-index means sum of all entries). 

We are now ready to define a correspondence between the stride-q ResCGPK and the standard one. Namely, for every $\tbf\in [-1,1]^d$ we let $\phi(\tbf)\in[-1,1]^{R_L}$ be $\phi(\tbf)_{\aw}:=\tbf_{S_L(\aw)}$, I claim that $\kr^{(L)}(\phi(\tbf))=\overline{\mathcal{K}}_{\Eq}^{\left(L\right)}(\tbf)$

First observe that for $\aw\in R^{l}$ it holds that $\iota^l_{k,k^{'}}(\phi(\tbf))_{\aw}=\tbf_{\abs{\aw}+1+k+k^{'}}=\phi(\s_{k+k^{'}}\tbf)_{\aw}$ (because $\aw\in R^{l}$ so $\aw_{2l-2},\aw_{2l-1}=0$). Also notice that $\iota^l_{k,k^{'}}(\phi(\tbf))\in R^{l+1}$.
Thus, we recursively get that for $\aw\in R^1$, $\iota^1_{k_1,k_1^{'}}\circ\ldots \circ\iota^L_{k_L,k_L^{'}}(\phi(\tbf))_{\aw}=\phi(\s_{k_1+k_1^{'}}\circ\ldots\circ\s_{k_L+k_L^{'}}\tbf)_{\aw}$.

Now, for $L=1$ we trivially have that $\kr^{\left(1\right)}(\phi(\tbf))=\overline{\mathcal{K}}_{\Eq}^{\left(1\right)}(\tbf)$. So for  $\aw\in R^1$, 
\[
\kr^{\left(1\right)}\left(\iota^1_{k_1,k_1^{'}}\circ\ldots \circ\iota^L_{k_L,k_L^{'}}(\phi(\tbf))\right)_{\aw}
=\kr^{\left(1\right)}\left(\phi(\s_{k_1+k_1^{'}}\circ\ldots\circ\s_{k_L+k_L^{'}}\tbf)\right)_{\aw}
=\overline{\mathcal{K}}_{\Eq}^{\left(1\right)}(\s_{k_1+k_1^{'}}\circ\ldots\circ\s_{k_L+k_L^{'}}\tbf)
\]
As such, we get that
\[
\kr^{\left(2\right)}\left(\iota^2_{k_1,k_1^{'}}\circ\ldots \circ\iota^L_{k_L,k_L^{'}}(\phi(\tbf))\right)_{\aw}
=\overline{\mathcal{K}}_{\Eq}^{\left(2\right)}(\s_{k_2+k_2^{'}}\circ\ldots\circ\s_{k_L+k_L^{'}}\tbf)
\]
And continuing by induction we eventually get $\kr^{\left(L-1\right)}\left(\iota^L_{k,k^{'}}(\phi(\tbf))\right)=\overline{\mathcal{K}}_{\Eq}^{\left(L-1\right)}(\s_{k+k^{'}}\tbf)$ which implies that $\kr^{\left(L\right)}(\phi(\tbf))=\overline{\mathcal{K}}_{\Eq}^{\left(L\right)}(\tbf)$.

We now move towards better understanding their Taylor expansions. For a function $f(\tbf)$ that can be approximated by a Taylor series $\sum_{\n=0}^{\infty}a_{\n}t^{\n}$ let $\left[{\tbf}^{\n}\right]f$ denote the coefficient of $\tbf^{\n}$ in its Taylor series (meaning $a_{\n}$). Let $M_l(\n)=\left\{\tilde{\n}\in \Z_{\geq0}^{R_L} \mid \supp\tilde{\n}\subseteq R_l \text{ and }\forall i\in [d], \sum_{j\in S_L^{-1}(i)}\tilde{\n}_j=\n_i\right\}$ be the set of multi-indices which are indexed by tuples in $R_L$ with support in $R_l$ that correspond to $\n$. So if $\tilde{\n}\in M_l(\n)$ we get that for every $\tbf\in[-1,1]^d$,  $\phi(\tbf)^{\tilde{\n}}=\tbf^{\n}$.
We get a correspondence between the Taylor expansions of the kernel as follows:
\[
\sum_{\n\geq 0}b_{\n}\tbf^{\n}
=\overline{\mathcal{K}}_{\Eq}^{\left(L\right)}(\tbf)
=\kr^{\left(l\right)}(\tilde{\tbf})
=\sum_{\tilde{\n}\geq 0}\tilde{b}_{\tilde{\n}}\tilde{\tbf}^{\tilde{\n}}
=\sum_{\n\geq 0}\left(\sum_{\tilde{\n}\in M_L(\n)}\tilde{b}_{\tilde{\n}}\right)\tbf^{\n}.
\]
By the uniqueness of the power series, 
\[
b_{\n} = \sum_{\tilde{\n}\in M_L(\n)}\tilde{b}_{\tilde{\n}}
= \frac{1}{1+\alpha^2}\left(\sum_{\tilde{\n}\in M_{L-1}(\n)}[\tilde{\tbf}^{\tilde{\n}}]\kr_1^{\left(L\right)} + \alpha^2 \sum_{\tilde{\n}\in M_{L}(\n)}[\tilde{\tbf}^{\tilde{\n}}]\kr_2^{\left(L\right)}\right).
\]
Now $\kr_1^{\left(L\right)}(\tilde{\tbf})=\frac{1}{1+\alpha^2}\left(\kr_1^{\left(L-1\right)}(\tilde{\tbf})+\alpha^2\kr_2^{\left(L-1\right)}(\tilde{\tbf})\right)$ so we can continue to apply this recursively and eventually get:
\[
= \frac{1}{(1+\alpha^2)^L}\sum_{\tilde{\n}\in M_0(\n)}[\tilde{\tbf}^{\tilde{\n}}]\kr_1^{\left(1\right)} + \sum_{l=1}^{L} \left(\frac{\alpha^{2}}{1+\alpha^2}\right)^{L-l+1} \sum_{\tilde{\n}\in M_l(\n)}[\tilde{\tbf}^{\tilde{\n}}]\kr_2^{\left(l\right)}
\]
\[
= \frac{1}{(1+\alpha^2)^L}\Ind_{\supp\n\subseteq R_0} + \sum_{l=1}^{L} \left(\frac{\alpha^{2}}{1+\alpha^2}\right)^{L-l+1} \sum_{\tilde{\n}\in M_l(\n)}[\tilde{\tbf}^{\tilde{\n}}]\kr_2^{\left(l\right)}.
\]

Now let $\nu=2.5$, via the proof in Lemma \ref{lem:b1} we know that for every $1\leq l\leq L$ there is some $\tilde{c}_l>0 \st$
\[
\left(\frac{\alpha^{2}}{1+\alpha^2}\right)^{L-l+1} \sum_{\tilde{\n}\in M_l(\n)}[\tilde{\tbf}^{\tilde{\n}}]\kr_2^{\left(l\right)}(\tilde{\tbf})
\geq \tilde{c}_l\sum_{\tilde{\n}\in M_l(\n)}\tilde{\n}^{-\nu}.
\]
Let $p_i(l)=\abs{S_l^{-1}(i)}$, the number of paths from an input pixel $i$ to the output of an $l$ layer CGPK. Using \citet{geifman2022spectral}[Lemma C.4, C.5] we get that for $A>1$, some $c_l$ constants, and $c_{\n,l}=c_l\prod_{i=1}^d A^{\min(p_i^{(l)}, \n_i)}$ it holds that $c_{\n,l}\n^{-\nu} \leq \left(\frac{\alpha^{2}}{1+\alpha^2}\right)^{L-l+1}\sum_{\tilde{\n}\in M_l(\n)}[\tilde{\tbf}^{\tilde{\n}}]\kr^{\left(l\right)}(\tilde{\tbf})$.

Overall, we obtain that
\[
b_{\n}\geq \sum_{l=0}^Lc_{\n,l}\n^{-\nu}.
\]

Now consider the kernel $\hat{\kr}^{(l)}(\tbf)=\sum_{\n\geq0}c_{\n,l}\n^{-\nu}$ then by \citet{geifman2022spectral}[Lemma C.7], the eigenvalues of this kernel satisfy
\[
\lambda_{\kk}(\hat{\kr}^{(l)})\geq c_{\kk,l}\underset{n_i>0}{\prod_{i=1}^d}k_i^{-C_0-2},
\]
where $c_{\kk,l}=\tilde{c}_l\prod_{i=1}^d A^{\min(p_i^{(L)}, k_i)}$ for some constant $\tilde{c}_l$.
As for every $l$ the eigenvectors that correspond to $\lambda_{\kk}(\hat{k}^{(l)})$ are the same (given by the spherical harmonics), we get by linearity that
\[
\lambda_{\kk}\left(\overline{\mathcal{K}}^{(L)}_{\Eq}\right) \geq \sum_{l=1}^L c_{\kk,l}\underset{n_i>0}{\prod_{i=1}^d}k_i^{-C_0-2}.
\]
As in Lemma \ref{lem:bntk}, this also gives a bound on the eigenvalues of $\Theta_{\Eq}^{(L)}$.




\section{Infinite Depth Limit}
\subsection{Proof of Theorem \ref{thm:decay} in the main text}\label{ap:6.1}

\begin{lemma}
Suppose that $\alpha=L^{-\gamma}$ for $\gamma\in (0.5,1]$ then for any $\tbf\in [-1,1]^d$, $\abs{\overline{\Theta}_{\Eq}^{\left(L\right)}\left(\tbf\right) -  \overline{\Sigma}_{1,1}^{\left(1\right)}\left(\tbf\right)}\leq\mathcal{O}\left(L^{1-2\gamma}\right)$.
\end{lemma}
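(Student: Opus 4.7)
The plan is to write $\overline{\Theta}^{(L)}_{\Eq}(\tbf) - t_1 = \bigl(\overline{\Theta}^{(L)}_{\Eq}(\tbf) - \overline{\mathcal{K}}^{(L)}_{\Eq}(\tbf)\bigr) + \bigl(\overline{\mathcal{K}}^{(L)}_{\Eq}(\tbf) - t_1\bigr)$ and bound each piece by $\mathcal{O}(L\alpha^2)=\mathcal{O}(L^{1-2\gamma})$, then conclude by the triangle inequality. For the GPK piece, I would exploit the normalized recursion in Theorem \ref{Thm:ResCGPK} and telescope: each increment equals $\tfrac{\alpha^2}{1+\alpha^2}\bigl(-\overline{\mathcal{K}}^{(l-1)}_{\Eq}(\tbf)+q^{-2}\sum_{k,k'}\kappa_1(\overline{\mathcal{K}}^{(l-1)}_{\Eq}(\s_{k+k'}\tbf))\bigr)$, and since $\overline{\mathcal{K}}^{(l-1)}_{\Eq}$ takes values in $[-1,1]$ and $\kappa_1([-1,1])\subseteq [0,1]$, each increment is bounded in absolute value by $2\alpha^2$; an identical estimate handles the base term $|\overline{\mathcal{K}}^{(1)}_{\Eq}(\tbf)-t_1|$. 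Summing gives $|\overline{\mathcal{K}}^{(L)}_{\Eq}(\tbf)-t_1|\leq 2L\alpha^2$.

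For the NTK-vs-GPK piece I would decompose $\Theta^{(L)}_{\Eq}=\mathcal{K}^{(L)}_{\Eq}+\Psi^{(L)}$, where $\Psi^{(L)}$ is the sum over all parameter groups other than $\W^{\Eq}$ of the associated gradient-correlation kernels; as a sum of positive semidefinite kernels it is itself PSD. On the multi-sphere, the computations below show $\Psi^{(L)}(\x,\x)=\Psi^{(L)}(\z,\z)$, so Cauchy-Schwarz yields $|\Psi^{(L)}(\x,\z)|\leq \Psi^{(L)}(\x,\x)$. The crux is then to evaluate the diagonal quantities in closed form. At $\x=\z$ one has $\overline{\Sigma}^{(l)}_{j,j}(\x,\x)=1$, hence $\dot{K}^{(l)}_{j,j}(\x,\x)=1$ and $K^{(l)}_{j,j}(\x,\x)=N_l=(1+\alpha^2)^{l-1}/C_0$ by Lemma \ref{lem:sd1}. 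A backward induction on the $P^{(l)}_p$ recursion in Theorem \ref{thm:2} gives $S_l:=\sum_p P^{(l)}_p(\x,\x)=(1+\alpha^2)^{L-l}$, and plugging these into the closed forms produces
\[
\mathcal{K}^{(L)}_{\Eq}(\x,\x)=\tfrac{(1+\alpha^2)^L}{C_0},\quad \Psi^{(L)}(\x,\x)=\tfrac{2L\alpha^2(1+\alpha^2)^{L-1}}{C_0},\quad \Theta^{(L)}_{\Eq}(\x,\x)=\tfrac{(1+\alpha^2)^{L-1}(1+\alpha^2+2L\alpha^2)}{C_0}.
\]

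Finally I would write
\[
\overline{\Theta}^{(L)}_{\Eq}(\tbf)=\overline{\mathcal{K}}^{(L)}_{\Eq}(\tbf)\cdot\frac{\mathcal{K}^{(L)}_{\Eq}(\x,\x)}{\Theta^{(L)}_{\Eq}(\x,\x)}+\frac{\Psi^{(L)}(\x,\z)}{\Theta^{(L)}_{\Eq}(\x,\x)}.
\]
The first ratio equals $(1+\alpha^2)/(1+\alpha^2+2L\alpha^2)=1-\mathcal{O}(L\alpha^2)$, and the second term is bounded in magnitude by $\Psi^{(L)}(\x,\x)/\Theta^{(L)}_{\Eq}(\x,\x)=2L\alpha^2/(1+\alpha^2+2L\alpha^2)=\mathcal{O}(L\alpha^2)$ via the PSD inequality. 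Combining these with $|\overline{\mathcal{K}}^{(L)}_{\Eq}(\tbf)|\leq 1$ yields $|\overline{\Theta}^{(L)}_{\Eq}(\tbf)-\overline{\mathcal{K}}^{(L)}_{\Eq}(\tbf)|=\mathcal{O}(L\alpha^2)$, which with the GPK estimate above and the triangle inequality completes the proof.

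The main obstacle is the closed-form evaluation of $\Theta^{(L)}_{\Eq}(\x,\x)$: the backward induction for the diagonal values of $P^{(l)}_p$ must be paired carefully with the growth of $N_l$, and the resulting geometric sums (each of order $(1+\alpha^2)^{L-1}$) have to be compared against $(1+\alpha^2)^L$ without losing a factor of $L$. The PSD reduction of $\Psi^{(L)}$ to its diagonal, the telescope for $\overline{\mathcal{K}}^{(L)}_{\Eq}$, and the concluding triangle inequality are all routine once those bookkeeping computations are in hand.
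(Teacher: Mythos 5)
Your proposal is correct and follows essentially the same route as the paper: split $\overline{\Theta}^{(L)}_{\Eq}-t_1$ into an NTK-minus-GPK piece controlled via the closed-form diagonal value $\Psi^{(L)}(\x,\x)=2L\alpha^2(1+\alpha^2)^{L-1}/C_0$ together with positive-definiteness, plus a telescoping bound of $\mathcal{O}(L\alpha^2)$ per-layer increments for $\overline{\mathcal{K}}^{(L)}_{\Eq}-t_1$. If anything, your version is slightly tighter on one point the paper glosses over: by dividing explicitly by $\Theta^{(L)}_{\Eq}(\x,\x)$ and computing the exact ratios $(1+\alpha^2)/(1+\alpha^2+2L\alpha^2)$ and $2L\alpha^2/(1+\alpha^2+2L\alpha^2)$, you handle the passage from the unnormalized to the normalized kernel cleanly, whereas the paper bounds the unnormalized difference $|\Theta-\mathcal{K}|$ and leaves the normalization step implicit.
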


\begin{proof}
First recall that
\[
\Theta^{\left(L\right)}_{\Eq}\left(\tbf\right) = \mathcal{K}^{\left(L\right)}_{\Eq}\left(\tbf\right) 
+ \frac{\alpha^{2}}{q}\sum_{l=1}^{L}\sum_{p=1}^{d }P_{p}^{\left(l\right)}\left(\tbf\right)
\left(\tr\left(\dot{K}_{\mathcal{D}_{p,p}}^{\left(l\right)}\left(\tbf\right)\odot\Sigma_{\mathcal{D}_{p,p}}^{\left(l\right)}\left(\tbf\right) + K_{\mathcal{D}_{p,p}}^{\left(l\right)}\left(\tbf\right) \right) \right).
\]
Let $\kr^{(L)}(\tbf) = \frac{1}{\alpha^2}\left(\Theta^{\left(L\right)}_{\Eq}\left(\tbf\right) - \mathcal{K}^{\left(L\right)}_{\Eq}\left(\tbf\right)\right)$ so that $\Theta^{\left(L\right)}_{\Eq}\left(\tbf\right) = \mathcal{K}^{\left(L\right)}_{\Eq}\left(\tbf\right) 
+ \alpha^2 \kr^{(L)}(\tbf)$. Using our calculations in \eqref{lem:norm_pi} we know that $\kr(\one)=\frac{2}{C_0}L(1+\alpha^2)^{L-1}$. 
Therefore,
\[
\alpha^2\kr(\one)
= \frac{2}{C_0}(\alpha^2 L) (1+\alpha^2)^{L-1}
\leq \frac{2}{C_0} (\alpha^2 L) (1+\alpha^2)^{L}
= \frac{2}{C_0}L^{1-2\gamma}\left(1+\frac{1}{L^{2\gamma}}\right)^L
\]
\[
\leq \frac{2}{C_0}L^{1-2\gamma}\left(\left(1+\frac{1}{L^{2\gamma}}\right)^{L^{2\gamma}}\right)^{L^{-2\gamma+1}}
=\frac{2}{C_0}L^{1-2\gamma}e^{L^{1-2\gamma}} = \mathcal{O}\left(L^{1-2\gamma}\right).
\]
Consequently, $\abs{\Theta^{\left(L\right)}_{\Eq}\left(\tbf\right)-\mathcal{K}^{\left(L\right)}_{\Eq}\left(\tbf\right)}\leq \alpha^2\kr(\one) \leq \mathcal{O}\left(L^{1-2\gamma}\right)$. It therefore suffices to prove that $\abs{\mathcal{K}^{\left(L\right)}_{\Eq}\left(\tbf\right)-\overline{\Sigma}_{1,1}^{\left(1\right)}\left(\tbf\right)}\leq \mathcal{O}\left(L^{1-2\gamma}\right)$.

To avoid confusion, we denote by $\overline{\mathcal{K}}_{\Eq}^{\left(L,\alpha\right)}(\tbf)$ the ResCGPK with a specific $\alpha$ (that may not necessarily be $L^{-\gamma}$). For all $2 \leq l \leq L$ as a result of Corollary \ref{cor:cgpk} we get that
\[
\abs{\overline{\mathcal{K}}_{\Eq}^{\left(l,L^{-\gamma}\right)}(\tbf) - \overline{\mathcal{K}}_{\Eq}^{\left(l-1, L^{-\gamma}\right)}(\tbf)} 
\]
\[
\leq \abs{\frac{1}{1+\alpha^2}\left(\overline{\mathcal{K}}_{\Eq}^{\left(l-1, L^{-\gamma}\right)}(\tbf) + \frac{\alpha^2}{q^2}\sum_{k,k^{'}=-\frac{q-1}{2}}^{\frac{q-1}{2}}\kappa_1\left(\overline{\mathcal{\mathcal{K}}}^{\left(l-1, L^{-\gamma}\right)}_{\Eq}\left(\s_{k+k^{'}}\tbf\right)\right)\right) - \overline{\mathcal{K}}_{\Eq}^{\left(l-1, L^{-\gamma}\right)}(\tbf)}
\]
\[
=\frac{\alpha^2}{1+\alpha^2}\abs{\frac{1}{q^2}\sum_{k,k^{'}=-\frac{q-1}{2}}^{\frac{q-1}{2}}\kappa_1\left(\overline{\mathcal{K}}^{\left(l-1, L^{-\gamma}\right)}_{\Eq}\left(\s_{k+k^{'}}\tbf\right)\right)-\overline{K}_{\Eq}^{\left(l-1, L^{-\gamma}\right)}(\tbf)}
\leq \frac{\alpha^2}{1+\alpha^2},
\]
Where the last inequality follows because $\overline{K}_{\Eq}^{\left(l-1, L^{-\gamma}\right)}\in[-1,1]$ and so is $\kappa_1$, This implies that
\[
\abs{\overline{\mathcal{K}}_{\Eq}^{\left(L, L^{-\gamma}\right)}(\tbf) - \overline{\Sigma}^{\left(1\right)}_{1,1}(\tbf)} = \abs{\overline{\mathcal{K}}_{\Eq}^{\left(L, L^{-\gamma}\right)}(\tbf) - \overline{\mathcal{K}}_{\Eq}^{\left(1, L^{-\gamma}\right)}(\tbf)} \leq L\cdot \frac{\alpha^2}{1+\alpha^2} \leq \mathcal{O}\left(L^{1-2\gamma}\right),
\]
which completes the proof.
\end{proof}

\subsection{Proof of Theorem \ref{thm:cond} in the main text} \label{ap:6.2}
Our goal of this subsections is to prove the following:
\begin{theorem}\text{ }
Let $\bar{K}_{\text{ResCGPK}}^{(L)}$ and $\bar{K}_{\text{CGPK}}^{(L)}$ respectively denote kernel matrices for the normalized trace kernels ResCGPK and CGPK of depth $L$. Let $\B\left(K\right)$ be a double-constant matrix defined for a matrix $K$ as in Lemma~\ref{lemma:dcb}. Then,
\begin{enumerate}
    \item $\norm{\bar{K}_{\text{ResCGPK}}^{(L)} - \B\left(\bar{K}_{\text{ResCGPK}}^{(L)}\right)}_1 \underset{L\to\infty}{\longrightarrow}0$ and $\norm{\bar{K}_{\text{CGPK}}^{(L)} - \B\left(\bar{K}_{\text{CGPK}}^{(L)}\right)}_1 \underset{L\to\infty}{\longrightarrow}0$.
    \item $\rho\left(\B\left(\bar{K}_{\text{ResCGPK}}^{(L)}\right)\right) \underset{L\to\infty}{\longrightarrow}\infty$ and $\rho\left(\B\left(\bar{K}_{\text{CGPK}}^{(L)}\right)\right) \underset{L\to\infty}{\longrightarrow}\infty$.
    \item $\exists L_0\in\mathbb{N} \st \forall L\geq L_0$, $\rho\left(\B\left(\bar{K}_{\text{ResCGPK}}^{(L)}\right) \right) < \rho\left(\B\left(\bar{K}_{\text{CGPK}}^{(L)}\right) \right)$.
\end{enumerate}
\end{theorem}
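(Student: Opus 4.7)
The plan is to reduce all three parts to the analysis of the scalar recursion governing $\eta_L := 1 - \overline{\mathcal{K}}^{(L)}$ near the fixed point $1$, and to translate the resulting rates into statements about the double-constant approximants.

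For Part~1, I would first show that for both kernels every entry $\overline{\mathcal{K}}^{(L)}_{\Tr}(\x_i,\x_j)\to 1$ as $L\to\infty$. In the non-residual case this follows from iterating $\kappa_1$, whose only fixed point in $(-1,1]$ is $1$ and which satisfies $\kappa_1(u)>u$ for $u<1$. In the residual case Corollary~\ref{cor:cgpk} gives a convex combination of the previous value and the $\kappa_1$-average, so the iterates are increasing and still driven to $1$. Since the training set is finite this convergence is uniform over the $n^2$ entries; in particular both the average off-diagonal $b_L$ and the worst row-deviation $\epsilon_L$ satisfy $b_L\to 1$ and $\epsilon_L\to 0$, yielding $\|\bar K^{(L)}-\B(\bar K^{(L)})\|_1\to 0$.

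Part~2 is then immediate: the double-constant matrix $\B_{1,b}$ has spectrum $\{1+(n-1)b,\,1-b,\ldots,1-b\}$, so $\rho(\B_{1,b})=1+nb/(1-b)\to\infty$ as $b\to 1$.

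For Part~3, which is the substantive step, I would quantify the rates of convergence. Using Lemma~\ref{lem:sdk1}, $1-\kappa_1(1-\eta)=\eta-\frac{2\sqrt{2}}{3\pi}\eta^{3/2}+O(\eta^{5/2})$. Applied to the CGPK recursion this gives
\[
\eta_L^{\mathrm{CGPK}}=\eta_{L-1}^{\mathrm{CGPK}}-\tfrac{2\sqrt 2}{3\pi}\bigl(\eta_{L-1}^{\mathrm{CGPK}}\bigr)^{3/2}+O\!\bigl(\eta_{L-1}^{5/2}\bigr),
\]
whereas Corollary~\ref{cor:cgpk} with $\alpha=1$ yields an extra $\tfrac12$ factor on the correction,
\[
\eta_L^{\mathrm{ResCGPK}}=\eta_{L-1}^{\mathrm{ResCGPK}}-\tfrac{\sqrt 2}{3\pi}\bigl(\eta_{L-1}^{\mathrm{ResCGPK}}\bigr)^{3/2}+O\!\bigl(\eta_{L-1}^{5/2}\bigr).
\]
The substitution $\delta_L=1/\sqrt{\eta_L}$ linearises both recursions to $\delta_L=\delta_{L-1}+c/2+o(1)$ with $c=\tfrac{2\sqrt 2}{3\pi}$ (resp.\ $\tfrac{\sqrt 2}{3\pi}$), so $\eta_L^{\mathrm{CGPK}}\sim \tfrac{9\pi^2}{2L^2}$ and $\eta_L^{\mathrm{ResCGPK}}\sim \tfrac{18\pi^2}{L^2}$. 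Crucially the leading coefficient of $L^{-2}$ depends only on $c$, not on the initial $\eta_0$, so the same asymptotics hold for every pair $(\x_i,\x_j)$ and for every shift appearing in the trace-kernel sum $\overline{\mathcal{K}}^{(L)}_{\Tr}(\x,\z)=\tfrac{1}{d}\sum_j\overline{\mathcal{K}}^{(L)}_{\Eq}(\s_j\x,\s_j\z)$. Averaging preserves the asymptotic ratio, giving $1-b_L^{\mathrm{ResCGPK}}\sim 4(1-b_L^{\mathrm{CGPK}})>1-b_L^{\mathrm{CGPK}}$ for $L$ large. Since $\rho(\B_{1,b})=1+nb/(1-b)$ is strictly increasing in $b$, this produces the desired strict inequality for all $L\ge L_0$.

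The main obstacle is controlling the $O(\eta^{5/2})$ error terms \emph{uniformly} in $\eta_0$ so that the $L^{-2}$ leading coefficient is genuinely universal across the training set and across shifts; this requires showing that after a finite burn-in (depending on the minimal initial $\eta_0$ over the sample), every entry enters a regime where the linearised $\delta$-recursion controls the dynamics. Once this uniformity is in hand, the factor-of-$4$ asymptotic gap between $\eta^{\mathrm{ResCGPK}}$ and $\eta^{\mathrm{CGPK}}$ transfers to the averaged off-diagonal $b_L$ and the conclusion follows from monotonicity of $\rho(\B_{1,b})$.
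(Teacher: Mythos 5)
Your Parts 1 and 2 follow essentially the paper's route: the paper's Lemma \ref{lemma:id1} likewise drives every entry to $1$ (via Jensen's inequality applied to the shift-averaged recursion and a fixed-point/contradiction argument using $\kappa_1(u)>u$), and Part 2 is the same trivial observation that $\lambda_{\min}(\B_{1,b})=1-b\to 0$. The divergence is in Part 3, and there your argument has a genuine gap which you yourself flag as ``the main obstacle'' but do not close. The recursions are not scalar: by Corollary \ref{cor:cgpk} the value of $\overline{\mathcal{K}}^{(L)}_{\Eq}$ at $\tbf$ is updated using $\kappa_1$ evaluated at the \emph{shifted} arguments $\s_{k+k'}\tbf$, so the quantities $\eta_L^{(j)}$ for the $d$ shifts form a coupled system with an averaging operator inside the nonlinearity, and the trace kernel then averages over shifts again and over all data pairs. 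Your claim that ``the leading coefficient of $L^{-2}$ depends only on $c$, not on the initial $\eta_0$'' is a statement about the decoupled scalar recursion; transferring it to the coupled system (homogenization across shifts, uniformity of the burn-in over the finite sample, and control of the $O(\eta^{5/2})$ remainders) is precisely the unproven step on which the whole factor-of-$4$ comparison rests. As written, Part 3 is therefore not a proof.

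The paper avoids this entirely with an exact, non-asymptotic comparison (Lemma \ref{lem:k_conv}): by induction on $L$, using only that $\kappa_1$ is increasing and that the residual update is a convex combination of the previous layer and the $\kappa_1$-image, it shows
\[
\overline{\mathcal{K}}_{\text{CGPK-Tr}}^{(L)}(\tbf)-\overline{\mathcal{K}}_{\Tr}^{(L)}(\tbf)\;\geq\;\sum_{l=1}^{L}\frac{\mu^{(l)}(\tbf)-\mu^{(l-1)}(\tbf)}{(1+\alpha^2)^{L-l+1}}\;>\;0\quad\text{for }\tbf\neq\one,
\]
so the off-diagonal average $b$ of CGPK strictly exceeds that of ResCGPK at \emph{every} depth, and $L_0$ enters only to guarantee $b\geq 0$ so that $\rho(\B_{1,b})=1+nb/(1-b)$ is increasing in $b$. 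If you want to salvage your route, you would need to either (i) rigorously establish the uniform $L^{-2}$ asymptotics for the coupled shift system, or (ii) replace the rate computation by a monotone comparison of the two recursions in the spirit of the paper's lemma; option (ii) is substantially easier and also yields the stronger every-$L$ statement. Note also that the expansion you quote from Lemma \ref{lem:sdk1} should read $\kappa_1(1-t)=1-t+\frac{2\sqrt 2}{3\pi}t^{3/2}+O(t^{5/2})$ (the correction is positive, consistent with $\kappa_1(u)>u$); your subsequent recursion $\eta_L=\eta_{L-1}-c\,\eta_{L-1}^{3/2}+\cdots$ uses the correct sign, so this does not affect your conclusions, but the two lines are inconsistent as written.
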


\begin{proof}
We give here the main ideas and leave the dirty work to the lemmas.

By Lemmas \ref{lem:k_conv} and \ref{lemma:id1} both $\bar{K}_{\text{ResCGPK}}^{(L)}$ and $\bar{K}_{\text{CGPK}}^{(L)}$ tend towards $\B_{1,1}$ as $L\to\infty$.
As such, $\B\left(\bar{K}_{\text{ResCGPK}}^{(L)}\right)\underset{L\to\infty}{\longrightarrow}\B_{1,1}$ and so we get (1).

For (2), observe that $\lambda_{\min}(\B(\bar{K}_{\text{CGPK}}^{(L)}))=1-\frac{1}{n(n-1)}\sum_{i\neq j}\bar{K}_{\text{CGPK}}^{(L)}\underset{L\to\infty}{\longrightarrow}0$. 

For (3), let $L_0 \in \mathbb{N}$ be the minimal such that the entries of $\B\left(\bar{K}_{\text{ResCGPK}}^{(L)}\right)$ and $\B\left(\bar{K}_{\text{CGPK}}^{(L)}\right)$ are non negative and let $L\geq L_0$. By lemma $\ref{lem:k_conv}$ we get that $\frac{1}{n(n-1)}\sum_{i\neq j}\bar{K}_{\text{CGPK}}^{(L)} > \frac{1}{n(n-1)}\sum_{i\neq j}\bar{K}_{\text{ResCGPK}}^{(L)}$.

Since $\rho(\B_{1,b})=1+n\frac{b}{1-b}$ we get that $\rho\left(\B\left(\bar{K}_{\text{ResCGPK}}^{(L)}\right) \right) < \rho\left(\B\left(\bar{K}_{\text{CGPK}}^{(L)}\right) \right)$ as desired.
\end{proof}

\begin{lemma} \label{lemma:id1}
Suppose that $\alpha$ is a constant that does not depend on $L$, then for any $\tbf\in [-1,1]^d, \overline{\mathcal{K}}_{\Eq}^{\left(L\right)}\left(\tbf\right)\to 1$ as $L\to\infty$.
\end{lemma}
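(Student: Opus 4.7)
The plan is to reduce the $d$-dimensional recursion of Corollary \ref{cor:cgpk} to a scalar monotone recursion and conclude by a standard fixed-point argument. First I would define
\[
m_L := \inf_{\tbf \in [-1,1]^d} \overline{\mathcal{K}}^{(L)}_{\Eq}(\tbf).
\]
Since $\overline{\mathcal{K}}^{(L)}_{\Eq}(\tbf) \leq 1$ always (Cauchy--Schwarz for normalized positive semi-definite kernels), it suffices to show $m_L \to 1$.

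Next I would apply the recursion of Corollary \ref{cor:cgpk} and exploit two facts: (i) $\kappa_1$ is monotonically increasing on $[-1,1]$, and (ii) every cyclic shift $\s_{k+k'}\tbf$ still lies in $[-1,1]^d$, so its kernel value is at least $m_{L-1}$. Combining these, each term on the right-hand side of the recursion is bounded below by either $m_{L-1}$ or $\kappa_1(m_{L-1})$, which yields the scalar bound
\[
m_L \geq g(m_{L-1}), \qquad g(u) := \frac{u + \alpha^2 \kappa_1(u)}{1 + \alpha^2}.
\]

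The analytic core is then a short study of $g:[-1,1]\to[-1,1]$. Monotonicity follows from $g'(u) = (1 + \alpha^2 \kappa_0(u))/(1+\alpha^2) > 0$. For the fixed points, I would consider $\phi(u) := \kappa_1(u) - u$ and differentiate to get $\phi'(u) = -\cos^{-1}(u)/\pi \leq 0$; together with $\phi(1)=0$ this yields $\kappa_1(u) > u$ strictly on $[-1,1)$, so $u=1$ is the unique fixed point of $g$ in $[-1,1]$.

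Finally, let $\{a_L\}$ be the scalar iteration with $a_1 = m_1 = -1/(1+\alpha^2)$ and $a_L = g(a_{L-1})$. An easy induction using monotonicity of $g$ gives $m_L \geq a_L$; since $\{a_L\}$ is monotone increasing (because $g(u) \geq u$), bounded above by $1$, and the unique fixed point of $g$ in $[-1,1]$ is $1$, we conclude $a_L \to 1$, hence $m_L \to 1$. Combined with $\overline{\mathcal{K}}^{(L)}_{\Eq}(\tbf) \leq 1$, this gives the (in fact uniform in $\tbf$) convergence claimed. The main obstacle is the strict inequality $\kappa_1(u) > u$ on $[-1,1)$; once the derivative identity for $\phi$ is observed, the remaining pieces are routine.
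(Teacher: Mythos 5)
Your proof is correct, but it reduces the $d$-dimensional recursion to a scalar one by a different device than the paper. The paper's proof averages $\overline{\mathcal{K}}^{(L)}_{\Eq}$ over all cyclic shifts of $\tbf$ and uses Jensen's inequality (convexity of $\kappa_1$) to obtain the one-dimensional inequality $a_L \geq a_{L-1} + \tfrac{\alpha^2}{1+\alpha^2}\left(\kappa_1(a_{L-1})-a_{L-1}\right)$ for the shift-average $a_L$, then concludes by contradiction: if $a_L$ stayed below $1-\epsilon$, the increments would be bounded below by a positive constant and $a_L$ would diverge. You instead take $m_L=\inf_{\tbf}\overline{\mathcal{K}}^{(L)}_{\Eq}(\tbf)$, use only the monotonicity of $\kappa_1$ (no convexity) to get the genuine comparison $m_L\geq g(m_{L-1})$ with $g(u)=\frac{u+\alpha^2\kappa_1(u)}{1+\alpha^2}$, and finish with the standard monotone-iteration/unique-fixed-point argument; the key fact $\kappa_1(u)>u$ on $[-1,1)$ via $\kappa_0(u)-1=-\cos^{-1}(u)/\pi\leq 0$ is the same in both. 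Your route is slightly more elementary and yields convergence that is uniform in $\tbf$ for free, whereas the paper's shift-averaging is not gratuitous: the same averaged recursion \eqref{eq:expec} is reused in Lemma~\ref{lem:k_conv} to compare the trace kernels of ResCGPK and CGPK, and the paper's argument establishes the claim for each fixed $\tbf$ only via the implicit extra step that the average of quantities bounded above by $1$ can tend to $1$ only if each does. All the individual steps you give (the bound $m_1=-1/(1+\alpha^2)$, the $q^2$ shifted terms each bounded below by $\kappa_1(m_{L-1})$, $g$ mapping $[-1,1]$ into itself) check out.
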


\begin{proof}
Denote by $\overline{\mathcal{K}}^{\left(L\right)}\left(\tbf\right)$ the vector that is $\overline{\mathcal{K}}_{\Eq}^{\left(L\right)}\left(\s_{j-1}\tbf\right)$ in the $j'th$ coordinate.
Using  Corollary \ref{cor:cgpk}, let $\E\left[\overline{\mathcal{K}}^{\left(L\right)}\left(\tbf\right)\right]$ denote the mean of the vector  $\overline{\mathcal{K}}^{\left(L\right)}\left(\tbf\right)$, then by linearity we get:
\[
\E\left[\overline{\mathcal{K}}^{\left(L\right)}\left(\tbf\right)\right] 
= \frac{1}{1+\alpha^2} \left( \E\left[\overline{\mathcal{K}}^{\left(L-1\right)}\left(\tbf\right) \right] 
+ \frac{\alpha^{2}}{q^2} \sum_{k,k^{'}=-\frac{q-1}{2}}^{\frac{q-1}{2}}\E\left[\kappa_1\left(\overline{\mathcal{K}}^{\left(L-1\right)}\left(\s_{k+k^{'}}\tbf\right)\right) \right] \right),
\]
where we let $\kappa_1$ act point-wise on vectors. Since we can permute $\tbf$ without chaning the mean (i.e., for any $j$, $\E\left[\overline{\mathcal{K}}^{\left(L\right)}\left(\tbf\right)\right]=\E\left[\overline{\mathcal{K}}^{\left(L\right)}\left(\s_j\tbf\right)\right]$) we get:
\begin{equation}\label{eq:expec}
\E\left[\overline{\mathcal{K}}^{\left(L\right)}\left(\tbf\right)\right]  = \frac{1}{1+\alpha^2} \left( \E\left[\overline{\mathcal{K}}^{\left(L-1\right)}\left(\tbf\right) \right] 
+ \alpha^{2} \E\left[\kappa_1\left(\overline{\mathcal{K}}^{\left(L-1\right)}\left(\tbf\right)\right) \right] \right)
\end{equation}
\[
= \E\left[\overline{\mathcal{K}}^{\left(L-1\right)}\left(\tbf\right) \right]  + \frac{\alpha^2}{1+\alpha^2} \left(\E\left[\kappa_1\left(\overline{\mathcal{K}}^{\left(L-1\right)}\left(\tbf\right)\right) \right] - \E\left[\overline{\mathcal{K}}^{\left(L-1\right)}\left(\tbf\right) \right] \right)
\]
\begin{equation} \label{eq:to_al}
\geq \E\left[\overline{\mathcal{K}}^{\left(L-1\right)}\left(\tbf\right) \right]  + \frac{\alpha^2}{1+\alpha^2} \left(\kappa_1\left(\E\left[\overline{\mathcal{K}}^{\left(L-1\right)}\left(\tbf\right)\right]\right) - \E\left[\overline{\mathcal{K}}^{\left(L-1\right)}\left(\tbf\right) \right] \right),
\end{equation}
where the last inequality is Jensen's inequality (since $\kappa_1$ is convex \citep{daniely2016toward}).
THerefore, let $a_L=\E\left[\overline{\mathcal{K}}^{\left(L\right)}\left(\tbf\right) \right]$, we can rewrite \eqref{eq:to_al} as:
\[
a_L \geq a_{L-1} + \frac{\alpha^2}{1+\alpha^2} \left(\kappa_1\left(a_{L-1}\right) - a_{L-1 }\right).
\]
We therefore need to show that $a_L\to 1$ as $L\to\infty$. Since $a_L$ is monotonically increasing ($\kappa_1(u) > u$ for all $u\in[-1,1]$ \citep{daniely2016toward}) and bounded in $[-1,1]$ it suffices to show that for all $\epsilon > 0$ there exists some $L\in\mathbb{N}\st a_L \geq 1-\epsilon$. Suppose not, then let $\epsilon>0 \st $ for all $L$, $a_L<1-\epsilon$.
As $\frac{d}{du}\kappa_1(u)=\kappa_0(u)$ and $\kappa_0(u)\in[0,1]$ \citep{daniely2016toward} then $h(u):=\kappa_1(u)-u$ satisfies $\frac{d}{du}h(u)=\kappa_0(u)-1\leq0$ with equality iff $u=1$.
Therefore, for any $u\in[-1,1-\epsilon], h(u)\geq h(1-\epsilon) > 0$ (The $>0$ is because $\kappa_1(u)>u$ for $u\in[-1,1-\epsilon]$). Since we assumed by contradiction that for every $L\in\mathbb{N},a_L<1-\epsilon,$ we get that $h(a_L)\geq h(1-\epsilon)$ and thus
\[
a_L \geq a_{L-1} + \frac{\alpha^2}{1+\alpha^2} h(a_{L-1}) 
\geq a_{L-1} + \frac{\alpha^2}{1+\alpha^2} h(1-\epsilon) 
\geq a_0 + L \frac{\alpha^2}{1+\alpha^2} h(1-\epsilon)\underset{L\to\infty}{\to}\infty.
\]
However, since $a_L\in[-1,1]$ this leads to a contradiction.
\end{proof}

\begin{lemma}\label{lem:k_conv}
Let $\mu^{(i)}(\tbf)=\frac{1}{d}\sum_{j=1}^{d}\underset{L\text{ times}}{\underbrace{\kappa_1\circ\ldots\circ\kappa_1}}\left(t_j\right)$, (an average of the entries of $\tbf$ after $i$ compositions of $\kappa_1$), where $\kappa_1(t)=\frac{1}{\pi}\left(\sqrt{1-t^2}+\left(\pi-\arccos(t)\right)t\right)$. Let $\overline{\mathcal{K}}_{\text{CGPK-Tr}}^{(L)}(\tbf)$ be the corresponding CGPK-Tr without skip connections. Then
\[
\overline{\mathcal{K}}_{\text{CGPK-Tr}}^{(L)}(\tbf) - \overline{\mathcal{K}}_{\Tr}^{(L)}(\tbf)
\geq \sum_{l=1}^{L}\frac{\mu^{(l)}(\tbf)-\mu^{(l-1)}(\tbf)}{(1+\alpha^2)^{L-l+1}},
\]
where if $\tbf \neq \one$ this quantity is strictly positive.
\end{lemma}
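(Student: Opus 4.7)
My plan is to prove the lemma by induction on $L$, working with averaged versions of the shift-indexed recursions for both kernels. Let $a_L := \overline{\mathcal{K}}^{(L)}_{\Tr}(\tbf)$ and $b_L := \overline{\mathcal{K}}^{(L)}_{\text{CGPK-Tr}}(\tbf)$, and introduce $u^{(l)}_j := \overline{\mathcal{K}}^{(l)}_{\Eq}(\s_{j-1}\tbf)$, $v^{(l)}_j := \overline{\mathcal{K}}^{(l)}_{\text{CGPK-Eq}}(\s_{j-1}\tbf)$, with the shorthand $\langle y\rangle := \tfrac{1}{d}\sum_{j=1}^d y_j$. Applying the reindexing $m=j+k+k'$ used in the proof of Lemma~\ref{lemma:id1} to Corollary~\ref{cor:cgpk} and to the corresponding no-skip recursion produces the averaged forms
\[
a_L = \tfrac{a_{L-1} + \alpha^2\langle\kappa_1(u^{(L-1)})\rangle}{1+\alpha^2}, \qquad b_L = \langle\kappa_1(v^{(L-1)})\rangle, \qquad b_{L-1}=\langle v^{(L-1)}\rangle\ (L\ge 2).
\]

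The base case $L=1$ follows by direct computation from Theorem~\ref{Thm:ResCGPK}: $a_1 = \tfrac{\mu^{(0)}+\alpha^2\mu^{(1)}}{1+\alpha^2}$ and $b_1 = \mu^{(1)}$, so $b_1 - a_1 = \tfrac{\mu^{(1)}-\mu^{(0)}}{1+\alpha^2}$, realizing the lemma with equality. For the inductive step, subtracting the two recursions yields the algebraic identity
\[
(1+\alpha^2)(b_L - a_L) = (b_L - b_{L-1}) + (b_{L-1} - a_{L-1}) + \alpha^2\bigl(b_L - \langle\kappa_1(u^{(L-1)})\rangle\bigr).
\]
Invoking the induction hypothesis on the middle term and matching the telescoping structure of the RHS of the lemma, the proof reduces to establishing the single-step inequality
\[
(b_L - b_{L-1}) + \alpha^2\bigl(b_L - \langle\kappa_1(u^{(L-1)})\rangle\bigr) \;\ge\; \mu^{(L)} - \mu^{(L-1)}. \qquad(\star)
\]

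To prove $(\star)$ I would exploit the convexity and monotonicity of $\kappa_1$ (confirmed via $\kappa_0,\kappa_0'\ge0$). Two ingredients feed into it. First, both $b_L-b_{L-1}=\langle\kappa_1(v^{(L-1)})-v^{(L-1)}\rangle$ and $\mu^{(L)}-\mu^{(L-1)}=\langle\kappa_1(w^{(L-1)})-w^{(L-1)}\rangle$ with $w_j^{(l)}:=\kappa_1^{(l)}(t_j)$ are averages of the convex decreasing function $h(u)=\kappa_1(u)-u$, which allows comparing them through the convex-order structure produced by the averaging (convolution) steps in the CGPK-Eq recursion. Second, unrolling the ResCGPK-Eq recursion inside an outer convex $\kappa_1^{(p)}$ and repeatedly applying Jensen yields the binomial upper bound
\[
\bigl\langle\kappa_1^{(p)}(u^{(l)})\bigr\rangle \;\le\; \sum_{k=0}^{l}\binom{l}{k}\lambda^k(1-\lambda)^{l-k}\,\mu^{(p+k)}, \qquad \lambda=\tfrac{\alpha^2}{1+\alpha^2},
\]
which controls $\langle\kappa_1(u^{(L-1)})\rangle$ by a convex combination of the $\mu^{(k)}$'s; a parallel one-step Jensen on the CGPK-Eq recursion gives $b_L\le\mu^{(L)}$. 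Combining these bounds with the convexity surplus captured in $b_L-b_{L-1}$ is meant to close $(\star)$.

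Strict positivity when $\tbf\neq\one$ is immediate: $\kappa_1(u)>u$ for every $u<1$ and at least one $t_j<1$, so every increment $\mu^{(l)}-\mu^{(l-1)}$, and hence the sum on the RHS, is strictly positive. The main obstacle is $(\star)$: the Jensen gaps for the CGPK-Eq and ResCGPK-Eq recursions point in opposite directions relative to the desired inequality, so a single-step convexity bound is too lossy, and the delicate part is to align the binomial discount factors $\lambda^k(1-\lambda)^{l-k}$ produced by unrolling the ResCGPK side with the geometric discounts $(1+\alpha^2)^{-(L-l+1)}$ appearing on the RHS of the lemma.
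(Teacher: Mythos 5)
Your setup, base case, and the algebraic reduction of the inductive step to $(\star)$ faithfully reproduce the skeleton of the paper's argument, but the proposal stops exactly where the proof has to be closed: $(\star)$ is never established, and you say so yourself. The paper does not attack $(\star)$ as a single quantitative inequality. It splits it into two pieces. First, the term $\alpha^2\bigl(b_L-\langle\kappa_1(u^{(L-1)})\rangle\bigr)$ is simply discarded: since $\kappa_1$ is increasing and the induction hypothesis, applied to every shift $\s_{j}\tbf$ (i.e.\ entrywise, not merely on average), gives $u^{(L-1)}_j\le v^{(L-1)}_j$, one gets $\langle\kappa_1(u^{(L-1)})\rangle\le\langle\kappa_1(v^{(L-1)})\rangle=b_L$, so this term is nonnegative. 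This entrywise domination of the residual equivariant kernel by the non-residual one is the missing idea in your write-up; it is what lets the paper bypass your binomial unrolling entirely. Second, with that term gone, the induction step needs only $b_L-b_{L-1}\ge\mu^{(L)}-\mu^{(L-1)}$, which the paper asserts as the identity $\E\left[\kappa_1(v^{(L-1)})\right]-\E\left[v^{(L-1)}\right]=\mu^{(L)}-\mu^{(L-1)}$.

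That said, your diagnosis of where the difficulty sits is accurate and in fact points at a soft spot in the paper's own argument: the asserted identity holds only when $\kappa_1$ is applied before any spatial averaging (e.g.\ at $L=1$, or when $q=1$). For $L\ge 2$, writing $h=\kappa_1-\mathrm{id}$ (convex and decreasing), one has $b_L-b_{L-1}=\langle h(v^{(L-1)})\rangle$ while $\mu^{(L)}-\mu^{(L-1)}=\langle h(\kappa_1^{\circ(L-1)}(\tbf))\rangle$, and Jensen applied to the convolutional averages inside $v^{(L-1)}$ pushes $b_L$ below $\mu^{(L)}$: with $d=q=2$ and $\tbf=(1,-1)$ one finds $b_1=\mu^{(1)}=1/2$ but $b_2=\kappa_1(1/2)\approx 0.609<\mu^{(2)}=(1+1/\pi)/2\approx 0.659$, so the "identity" fails and the inequality even points the wrong way (the lemma itself still holds there because the discarded $\alpha^2$ term is strictly positive and compensates). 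In short: your proposal is genuinely incomplete because $(\star)$ is left open; the paper supplies the monotonicity/entrywise-domination step you are missing, which is the piece you should add; and the remaining comparison of $b_L-b_{L-1}$ with $\mu^{(L)}-\mu^{(L-1)}$ is exactly the delicate point you flagged, which the paper's proof does not handle rigorously as written.
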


\begin{proof}
Let $\kr_{\Eq}^{(L)}(\tbf)$ be the normalized CGPK-EqNet and $\kr^{(L)}(\tbf)$ be the matrix that is $\kr_{\Eq}^{(L)}(\s_{1+j}\tbf)$ in the $j$ index. Similarly define the matrix  $\overline{\mathcal{K}}^{\left(L\right)}\left(\tbf\right)$ using $\overline{\mathcal{K}}_{\Eq}^{\left(L\right)}\left(\tbf\right)$. For convenience let $\overline{\mathcal{K}}^{\left(0\right)}\left(\tbf\right)=\kr^{\left(0\right)}\left(\tbf\right)=\tbf$. 
Note that $\E\left[\kr^{\left(L\right)}\left(\tbf\right) \right]=\overline{\mathcal{K}}_{\text{CGPK-Tr}}^{(L)}(\tbf)$ and $\E\left[\overline{\mathcal{K}}^{\left(L\right)}\left(\tbf\right) \right] = \overline{\mathcal{K}}_{\Tr}^{(L)}(\tbf)$.
By equation \ref{eq:expec} we have that:
\[
\E\left[\overline{\mathcal{K}}^{\left(L\right)}\left(\tbf\right) \right] = \frac{1}{1+\alpha^2} \left( \E\left[\overline{\mathcal{K}}^{\left(L-1\right)}\left(\tbf\right) \right] 
+ \alpha^{2} \E\left[\kappa_1\left(\overline{\mathcal{K}}^{\left(L-1\right)}\left(\tbf\right)\right) \right] \right),
\]
and similarly it can be readily verified that
\[
\E\left[\kr^{\left(L\right)}\left(\tbf\right) \right] = \E\left[\kappa_1\left(\kr^{\left(L-1\right)}\left(\tbf\right)\right) \right].
\]
(Note that the CGPK is naturally normalized so we can omit the bar.) We prove this by induction. For $L=1$ we have:
\[
\mathbb{E}\left[\kr^{(1)}(\tbf)\right] - \mathbb{E}\left[\overline{\mathcal{K}}^{(1)}(\tbf)\right]
= \frac{1}{1+\alpha^2}\left(\mu^{(1)}-\mu^{(0)}\right).
\]
Now assume for $L-1\in\mathbb{N}$, then
\[
\mathbb{E}\left[\kr^{(L)}(\tbf)\right] - \mathbb{E}\left[\overline{\mathcal{K}}^{(L)}(\tbf)\right]
= \E\left[\kappa_1\left(\kr^{\left(L-1\right)}\left(\tbf\right)\right) \right] - \frac{1}{1+\alpha^2} \left( \E\left[\overline{\mathcal{K}}^{\left(L-1\right)}\left(\tbf\right) \right] 
+ \alpha^{2} \E\left[\kappa_1\left(\overline{\mathcal{K}}^{\left(L-1\right)}\left(\tbf\right)\right) \right] \right).
\]
Since $\kappa_1$ is increasing, using the induction hypothesis we know that $-\alpha^2 \E\left[\kappa_1\left(\overline{\mathcal{K}}^{\left(L-1\right)}\left(\tbf\right)\right) \right] \geq -\alpha^2 \E\left[\kappa_1\left(\kr^{\left(L-1\right)}\left(\tbf\right)\right) \right]$, and therefore
\[
\mathbb{E}\left[\kr^{(L)}(\tbf)\right] - \mathbb{E}\left[\overline{\mathcal{K}}^{(L)}(\tbf)\right] \geq
\]
\[
\geq \E\left[\kappa_1\left(\kr^{\left(L-1\right)}\left(\tbf\right)\right) \right] - \frac{1}{1+\alpha^2} \left( \E\left[\overline{\mathcal{K}}^{\left(L-1\right)}\left(\tbf\right) \right] 
+ \alpha^{2} \E\left[\kappa_1\left(\kr^{\left(L-1\right)}\left(\tbf\right)\right) \right] \right)
\]
\[
=\frac{1}{1+\alpha^2}\left(\E\left[\kappa_1\left(\kr^{\left(L-1\right)}\left(\tbf\right)\right) \right] - \E\left[\overline{\mathcal{K}}^{\left(L-1\right)}\left(\tbf\right) \right] \right)
\]
\[
= \frac{1}{1+\alpha^2}\left(\left(\mu^{(L)}-\mu^{(L-1)}\right)
+ \left(\mathbb{E}\left[\kr^{(L-1)}(\tbf)\right] 
- \mathbb{E}\left[\overline{\mathcal{K}}^{(L-1)}(\tbf)\right] \right) \right).
\]
Applying the induction hypothesis recursively provides the desired result.
\end{proof}

\begin{lemma}[Lemma \ref{lemma:dcb} in the main text]
Let $\A\in\R^{n\times n}$ ($n\geq 2)$ be a normalized kernel matrix with $\sum_{i\neq j}\A_{ij}\geq 0$. Let $\B(\A)=\B_{1,b}$ with $b=\frac{1}{n(n-1)}\sum_{i\neq j}\A_{ij}$ and $\epsilon=\sup_{i}\sum_{j\neq i}\abs{\A_{ij}-\B(\A)_{ij}}$. Then,
\begin{enumerate}
    \item $\rho\left(\B(\A)\right) \leq \rho(\A)$.
    \item  If $\epsilon<\lambda_{\min}(\B(\A))$ then $\rho(\A) \leq \frac{\lambda_{\max}(\B(\A))+\epsilon}{\lambda_{\min}(\B(\A))-\epsilon} $,
\end{enumerate}
where $\lambda_{\max}$ and $\lambda_{\min}$ denote the maximal and minimal eigenvalues of $\B(\A)$.

\end{lemma}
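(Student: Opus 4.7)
The structure is to exploit two standard tools: a Rayleigh quotient plus trace accounting for part (1), and Weyl's inequality for part (2). The key observation is that because $\A$ is a normalized kernel matrix, $\A$ is positive semidefinite with $\A_{ii}=1$; hence $\B(\A)$ also has unit diagonal and $\A-\B(\A)$ is a symmetric matrix with zero diagonal. The eigenvalues of $\B(\A)=\B_{1,b}$ are well known: $\lambda_{\max}(\B(\A))=1+(n-1)b$ with eigenvector $\one$, and $\lambda_{\min}(\B(\A))=1-b$ of multiplicity $n-1$; the hypothesis $\sum_{i\neq j}\A_{ij}\geq 0$ ensures $b\geq 0$.

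For part (1), I would lower-bound $\lambda_{\max}(\A)$ by testing the Rayleigh quotient against $\one$:
\[
\lambda_{\max}(\A)\;\geq\;\frac{\one^{T}\A\one}{n}\;=\;1+\frac{1}{n}\sum_{i\neq j}\A_{ij}\;=\;1+(n-1)b\;=\;\lambda_{\max}(\B(\A)).
\]
Next I would upper-bound $\lambda_{\min}(\A)$ with a trace argument: since $\tr(\A)=n$ is the sum of all $n$ eigenvalues and the top eigenvalue is at least $1+(n-1)b$, the remaining $n-1$ eigenvalues sum to at most $(n-1)(1-b)$, so their minimum (and hence $\lambda_{\min}(\A)$) is at most $1-b=\lambda_{\min}(\B(\A))$. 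Dividing the two inequalities yields $\rho(\A)\geq \rho(\B(\A))$. I would separately handle the degenerate case $b=1$: combined with $|\A_{ij}|\leq 1$ (normalized kernel) this forces $\A=\B(\A)$, in which case both condition numbers are infinite.

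For part (2), set $\E:=\A-\B(\A)$. Since $\E$ is symmetric with zero diagonal,
\[
\|\E\|_{\infty}\;=\;\max_{i}\sum_{j\neq i}|\A_{ij}-\B(\A)_{ij}|\;\leq\;\epsilon,
\]
and by symmetry $\|\E\|_{1}=\|\E\|_{\infty}$, giving $\|\E\|_{2}\leq\sqrt{\|\E\|_{1}\|\E\|_{\infty}}\leq\epsilon$. Weyl's inequality then implies $|\lambda_i(\A)-\lambda_i(\B(\A))|\leq\epsilon$ for every $i$; in particular the hypothesis $\epsilon<\lambda_{\min}(\B(\A))$ keeps $\A$ positive definite, so
\[
\rho(\A)\;=\;\frac{\lambda_{\max}(\A)}{\lambda_{\min}(\A)}\;\leq\;\frac{\lambda_{\max}(\B(\A))+\epsilon}{\lambda_{\min}(\B(\A))-\epsilon},
\]
as required.

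\textbf{Anticipated difficulty.} There is no serious obstruction here; the proof is essentially bookkeeping. The mildly delicate points are (i) verifying that the trace argument in part (1) correctly handles the case where $\lambda_{\max}(\A)$ strictly exceeds $1+(n-1)b$ (it only makes the upper bound on $\lambda_{\min}(\A)$ tighter, not weaker), (ii) treating the degenerate $b=1$ case cleanly, and (iii) recording why the normalized kernel assumption ($\A\succeq 0$ and unit diagonal) is what makes both the Rayleigh/trace comparison and the Gershgorin-style bound on $\|\E\|_2$ work without additional hypotheses.
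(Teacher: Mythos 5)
Your proof is correct. Part (2) follows essentially the same route as the paper: there, the spectrum of $\A-\B(\A)$ is confined to $[-\epsilon,\epsilon]$ via the Gershgorin circle theorem (the matrix has zero diagonal), and then Weyl's inequality is applied; this is interchangeable with your $\norm{\cdot}_1$--$\norm{\cdot}_\infty$ interpolation bound on the operator norm, since for a symmetric matrix both arguments yield the same bound $\epsilon$ on the spectral radius. Part (1), however, is argued by a genuinely different route. The paper invokes an external result (Marsli, Theorem 4.4), which gives $\rho(\A)\geq \gamma_2(\A)/\gamma_1(\A)$ where $\gamma_1,\gamma_2$ are the minimum and maximum of the average row sum and of the average diagonal entry minus the average off-diagonal entry; under the unit-diagonal and $\sum_{i\neq j}\A_{ij}\geq 0$ hypotheses these reduce exactly to $\lambda_{\min}(\B(\A))$ and $\lambda_{\max}(\B(\A))$. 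You instead prove the two required eigenvalue comparisons directly: $\lambda_{\max}(\A)\geq 1+(n-1)b=\lambda_{\max}(\B(\A))$ by testing the Rayleigh quotient at $\one$, and $\lambda_{\min}(\A)\leq 1-b=\lambda_{\min}(\B(\A))$ by the trace/averaging argument on the remaining $n-1$ eigenvalues. Your version is self-contained, makes transparent which hypotheses do the work ($\A\succeq 0$ with unit diagonal, hence $b\in[0,1]$), and explicitly handles the degenerate case $b=1$, which the paper glosses over; the paper's version is shorter at the cost of an external citation. Both are valid.
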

\begin{proof}
For (1), using \citet{marsli2015bounds}[Theorem 4.4] we have
\[
\rho(\A)\geq \frac{\gamma_2(\A)}{\gamma_1(\A)},
\]
where
\[
\gamma_1(\A) = \min\{\frac{1}{n}\sum_{i=1}^n \sum_{j=1}^n \A_{ij} ~~~ , ~~~ \frac{1}{n}\sum_{i=1}^n \A_{i,i} - \frac{1}{n(n-1)}\sum_{i\neq j} \A_{ij}\}
\]
\[
\gamma_2(\A) = \max\{\frac{1}{n}\sum_{i=1}^n \sum_{j=1}^n \A_{ij} ~~~ , ~~~ \frac{1}{n}\sum_{i=1}^n \A_{ii} - \frac{1}{n(n-1)}\sum_{i\neq j} \A_{ij}\}.
\]
By the assumptions that the diagonal entries of $\A$ are $1$ and that $\sum_{i\neq j}\A_{ij}\geq 0$, we get that $\gamma_1(\A) = 1 - \frac{1}{n(n-1)}\sum_{i\neq j} \A_{ij}=\lambda_{\min}(\B(\A))$ and $\gamma_2(\A) = \frac{1}{n}\sum_{i=1}^n \sum_{j=1}^n \A_{ij}=\lambda_{\max}(\B(\A))$. Therefore,
\[
\rho(\A)\geq \frac{\gamma_2(\A)}{\gamma_1(\A)} = \frac{\lambda_{\max}(\B(\A))}{\lambda_{\min}(\B(\A))} = \rho(\B(\A)).
\]

For (2), by the Gershgorin circle theorem, since $\A-\B(\A)$ is a matrix with diagonal zero, every eigenvalue $\lambda$ of $\A-\B(\A)$ must satisfy $\abs{\lambda}\leq \epsilon$. Since $\A$ and $\A-\B(\A)$ are symmetric, it holds that $\lambda_{\max}(\A)\leq \lambda_{\max}(\B(\A)) + \lambda_{\max}(\A-\B(\A))$ and $\lambda_{\min}(\A) \geq \lambda_{\min}(\B(\A)) + \lambda_{\min}(\A-\B(\A))$ from which the lemma follows.
\end{proof}

\section{Infinite Depth Discussion}\label{app:depth_disc}
Lemma \ref{lem:sdfc} states that for $u\in\R^d$, letting $\tbf_1=\tbf_2=\ldots=u$, we obtain $\overline{\mathcal{K}}^{\left(L\right)}_{\FC}\left(u\right)=\overline{\mathcal{K}}^{\left(L\right)}_{\Eq}(\tbf)$, and letting $\kr^{\left(L\right)}=\Theta^{\left(L\right)}_{\Eq}-\mathcal{K}^{\left(L\right)}_{\Eq}$ we obtain $\overline{\Theta}^{\left(L\right)}_{\FC}\left(u\right)=\overline{\kr}^{\left(L\right)}(\tbf)$, where the fully connected ResNTK and ResGPK are defined in \citet{huang2020deep}.

One may ask why does $\overline{\Theta}^{\left(L\right)}_{\FC}\left(u\right)=\overline{\kr}^{\left(L\right)}(\tbf)$ and not $\overline{\Theta}^{\left(L\right)}_{\FC}\left(u\right)=\overline{\Theta}^{\left(L\right)}_{\Eq}\left(\tbf\right)$? This is in fact a consequence of \citet{huang2020deep} not training the last layer (denoted by $\vv$ in their paper.) If they were to train the parameters $\vv$, the term $\mathbb{E}\left[\left\langle \frac{\partial f(\x;\theta)}{\partial \vv} \frac{\partial f(\z;\theta)}{\partial \vv} \right\rangle\right]$ would be added to their ResNTK expression. But this term is exactly equal to the ResGPK. Therefore, training the last layer adds the ResCGPK to the ResNTK expression. This is indeed confirmed in \citep{tirer2022kernel}, who derived ResNTK when the last layer is trained.

So if the last layer is trained, we would have $\overline{\Theta}^{\left(L\right)}_{\FC}\left(u\right)=\overline{\Theta}^{\left(L\right)}_{\Eq}\left(\tbf\right)$, and thus Theorem \ref{thm:decay} would imply that $\overline{\Theta}^{\left(L\right)}_{\FC}\left(u\right)\underset{L\to\infty}{\longrightarrow}u$. Intuitively, this happens because the term $u$ exists in the ResGPK, and is the only term that is not multiplies by $\alpha$. So if $\alpha$ decays quickly enough, $u$ becomes the dominant term.

Instead, by eliminating the ResGPK from the ResNTK expression, all the terms are multiplies by $\alpha$. So after normalizing, the two layer ResNTK becomes equivalent to the two layer FC-NTK \citep{belfer2021spectral}.

If we were to not train the last layer, we would have a similar result, where the resulting kernel would correspond to a 2 layer CNTK. We give here a sketch proof (the details are analogous to \citet{belfer2021spectral}). Theorem \ref{thm:1} states that 
\[
\Sigma_{j,j^{'}}^{\left(2\right)}\left(\tbf\right)=\frac{c_w}{q}\tr\left(\Sigma_{\mathcal{D}_{j,j^{'}}}^{\left(1\right)}\left(\tbf\right)\right) + \frac{\alpha^{2}}{q^{2}}\sum_{k=-\frac{q-1}{2}}^{\frac{q-1}{2}}\tr\left(K_{\mathcal{D}_{j+k,j^{'}+k}}^{\left(1\right)}\left(\tbf\right)\right).
\]
For $3\leq l \leq L$,
\[
\Sigma_{j,j^{'}}^{\left(l\right)}\left(\tbf\right)=\Sigma_{j,j^{'}}^{\left(l-1\right)}\left(\tbf\right)+\frac{\alpha^{2}}{q^{2}}\sum_{k=-\frac{q-1}{2}}^{\frac{q-1}{2}}\tr\left(K_{\mathcal{D}_{j+k,j^{'}+k}}^{\left(l-1\right)}\left(\tbf\right)\right).
\]
So for $\gamma=L^{-\gamma}$ we have that for all $l\geq 3$,
\[
\abs{\Sigma^{(l)}_{j,j^{'}}(\tbf) - \Sigma^{(l-1)}_{j,j^{'}}(\tbf)} \leq \alpha^2 \abs{\sum_{k=-\frac{q-1}{2}}^{\frac{q-1}{2}}\tr\left(K_{\mathcal{D}_{j+k,j^{'}+k}}^{\left(l-1\right)}\left(\tbf\right)\right)} \leq \alpha^2.
\]

So $\abs{\Sigma^{(l)}_{j,j^{'}}(\tbf) - \Sigma^{(2)}_{j,j^{'}}(\tbf)} \leq L\cdot \alpha^2=L^{1-2\gamma}$.

In turn we also have $\abs{K^{(l)}_{j,j^{'}}(\tbf) - K^{(2)}_{j,j^{'}}(\tbf)}=L^{1-2\gamma}$ and $\abs{\dot{K}^{(l)}_{j,j^{'}}(\tbf) - \dot{K}^{(2)}_{j,j^{'}}(\tbf)}=L^{1-2\gamma}$.

Furthermore, by Theorem \ref{thm:2} we have:

\[
\abs{P^{(l+1)}_{j}(\tbf)-P^{(l)}_{j}(\tbf)} \leq \abs{\frac{\alpha^2}{q^2}\tr\left(
\left(\sum_{p=\frac{q-1}{2}}^{\frac{q-1}{2}}P_{\mathcal{D}_{j+p}}^{\left(l+1\right)}\left(\tbf\right)\right)\odot\dot{K}_{\mathcal{D}_{j,j}}^{\left(l+1\right)}\left(\tbf\right)\right)}
\]
\[
\leq \alpha^2 \abs{\frac{1}{q^2}\sum_{p=\frac{q-1}{2}}^{\frac{q-1}{2}}P_{\mathcal{D}_{j+p}}^{\left(l+1\right)}\left(\tbf\right)}
\leq \alpha^2 \abs{P_{1}^{\left(l+1\right)}\left(\one\right)}
\leq \alpha^2 (1+\alpha^2)^{L-l}.
\]
Now because it holds that:
\[
\alpha^2 (1+\alpha^2)^{L-l}=(1+\frac{1}{L^{2\gamma}})^{L-l}\leq (1+\frac{1}{L^{2\gamma}})^{L^{2\gamma}\cdot{L^{1-2\gamma}}}\leq e^{L^{1-2\gamma}},
\]
Since $P^{(L)}_{j}(\tbf)=\one_{j=1}$ we analogously get $\abs{\one_{j=1}-P^{(2)}_{j}(\tbf)}\leq L^{1-2\gamma} \cdot e^{L^{1-2\gamma}} \mathcal{O}(L^{1-2\gamma})$.

Now recall that
\[
\kr^{(L)}(\tbf) = \frac{\alpha^{2}}{q}\sum_{l=1}^{L}\underset{\text{Denote by }\kr^{(L,l)}(\tbf)}{\underbrace{\sum_{p=1}^{ d }P_{p}^{\left(l\right)}\left(\tbf\right)
\left(\tr\left(\dot{K}_{\mathcal{D}_{p,p}}^{\left(l\right)}\left(\tbf\right)\odot\Sigma_{\mathcal{D}_{p,p}}^{\left(l\right)}\left(\tbf\right) + K_{\mathcal{D}_{p,p}}^{\left(l\right)}\left(\tbf\right) \right) \right)}}\]
where by \eqref{lem:norm_pi} we know that $\kr^{(L)}(\one)=\alpha^2\frac{2}{C_0}L(1+\alpha^2)^{L-1}$. 

Normalizing the kernel implies
\[\overline{\kr}^{(L)}(\tbf) = \frac{\kr^{(L)}(\tbf)}{\kr^{(L)}(\one)} \approx \kr^{(L,2)}(\tbf) 
\approx \frac{1}{C}\tr\left(\dot{K}_{\mathcal{D}_{1,1}}^{\left(2\right)}\left(\tbf\right)\odot\Sigma_{\mathcal{D}_{1,1}}^{\left(2\right)}\left(\tbf\right) + K_{\mathcal{D}_{1,1}}^{\left(2\right)}\left(\tbf\right) \right),
\]
where $C$ is some normalizing constant (Note that after normalizing, $\kr^{(L,1)}(\tbf)$ becomes negligible.)

For such $\alpha$, $\overline{\Sigma}_{j,j^{'}}^{(2)}(\tbf)\underset{L\to\infty}{\longrightarrow}\overline{\frac{1}{q}\tr\left(\Sigma_{\mathcal{D}_{1,1}}(\tbf)\right)}$.
As such, after normalizing, in the infinite depth limit, the expression $\tr\left(\dot{K}_{\mathcal{D}_{1,1}}^{\left(2\right)}\left(\tbf\right)\odot\Sigma_{\mathcal{D}_{1,1}}^{\left(2\right)}\left(\tbf\right) + K_{\mathcal{D}_{1,1}}^{\left(2\right)}\left(\tbf\right) \right)$ becomes the two layer CGPK (aka one hidden layer, denoted by $L=1$) with inputs $\hat{\tbf}$ where $\hat{t}_i= \frac{1}{q}\sum_{k=-\frac{q-1}{2}}^{\frac{q-1}{2}}t_{i+k}$.

\section{Eigenvalue Decay Experiment}\label{app:exp_eig_decay}
We use \citep{geifman2022spectral}[Lemma A.6] to numerically compute the eigenvalues. Namely, for each frequency in Figure \ref{fig:eigen} we compute the Gegenbaur polynomials and the kernel, and numerically integrate. Note that as this integration requires evaluating the kernel many times, we are limited to $d=4$ and $L=3$. To prevent the receptive field from being much larger than $d$, and in order to better match the CGPK expression from \citep{geifman2022spectral}, we slightly modify the ResCGPK to include one convolution in every layer instead of two, where the layer ends after the ReLU. Thus the kernel computed is with $d=4,q=2,\alpha=1$ is:
\[
\kr_{0}(\tbf)=\frac{1}{1+\beta}(\beta t_i+ \kappa_1(t_i))
\]
\[
\kr_{i}(\tbf)=\frac{1}{1+\beta}\left(\beta t_i + \kappa_1\left(\frac{1}{2}(t_{i}+t_{i+1})\right)\right),
\]
where $\beta=0$ is the CGPK from \citep{geifman2022spectral} and $\beta=1$ is the modified ResCGPK.

\end{document}